\newcolumntype{?}{!{\vrule width 1pt}}
\title{\papertitle}
\author{Yishay Mansour$^\dagger$ \quad Richard Nock$^\dagger$ \quad Robert C. Williamson$^\ddagger$\\

  $^\dagger$Google Research, $^\ddagger$University of T{\"u}bingen\\

{\normalsize \texttt{$\{$mansour,richardnock$\}$@google.com,bob.williamson@uni-tuebingen.de}}
}
\begin{document}

\date{}

\maketitle

\begin{abstract}
  A landmark negative result of Long and Servedio established a worst-case spectacular failure of a supervised learning trio (loss, algorithm, model) otherwise praised for its high precision machinery. Hundreds of papers followed up on the two suspected culprits: the loss (for being convex) and/or the algorithm (for fitting a classical boosting blueprint). Here, we call to the half-century+ founding theory of losses for class probability estimation (properness), an extension of Long and Servedio's results and a new general boosting algorithm to demonstrate that the real culprit in their specific context was in fact the (linear) model class. We advocate for a more general stanpoint on the problem as we argue that the source of the negative result lies in the dark side of a pervasive -- and otherwise prized -- aspect of ML: \textit{parameterisation}.

\end{abstract}

\section{Introduction}\label{sec-intro}

In a now very influential paper cumulating hundreds of citations on Google Scholar, Long and Servedio \cite{lsRC-conf,lsRC} made a series of observations on how simple symmetric label noise can "wipe out" the edge of a learner against the fair coin. The negative result is extreme in the sense that without noise, the learner fits a large margin, 100$\%$ accurate classifier but as soon as noise afflicts labels, \textit{regardless of its magnitude}, the learner ends up with a classifier only as good as the fair coin; importantly, the result also holds if we remove the algorithm from the equation and just focus on the loss' minimizer. The paper has been the source of considerable attention, especially stirring up research in the field of loss functions and robust boosting algorithms. It would not do justice to the many citing references to sample a few of them to fit in there, so we have summarised dozens of them, \textit{inclusive of the context of citation}, in the \supplement, Section \ref{sec-wtps}. Notwithstanding mentions in the original papers \cite{lsRC-conf,lsRC} of the existence of noise-tolerant boosting algorithms \cite{ksBI,lsAM} operating on different models\footnote{but whose boosting blueprint does not openly follow the "master routine" in \cite[Section 1.1]{lsRC}, perhaps explaining the path chosen for the citing history of \cite{lsRC-conf,lsRC}.}, almost all citing papers have converged to the high-level tagline that noise defeats convex loss boosters, usually omitting the reference to models in the trio (algorithm, loss, model) that Long and Servedio focused on.

Our paper starts with an apparent and striking paradox based on this synopsis. When they are symmetric, proper losses -- loss functions eliciting Bayes optimal prediction and overwhelmingly popular in ML (log-, square-, Matusita losses) \cite{rwCB,sEO} -- have a dual surrogate form which exactly fits to Long and Servedio's margin loss blueprint \cite{nmSL}. The paradox comes from the fact that on their data, such losses end up eliciting nothing better than a fair coin -- quite arguably far from even the noise-dependent optimal prediction !\\
\textit{Our first contribution} shows that the picture is even worse looking as Long and Servedio's results survive to dropping the "symmetry" constraint on the loss, thus extending their result to any proper loss not necessarily admitting a margin form (yet satisfying differentiability and lower-boundedness of the partial losses, which are weak constraints). So, \textit{where is the glitch} ?\\
\textit{Our second contribution} provides a clue where to look as we show that boosting is neither to blame: we introduce a simple and general "model-adaptive" boosting algorithm (\topdowngen), following the "boosting blueprint" \cite{lsRC} and able to boost a very general class of models generalizing, among others, decision trees, linear separators, alternating decision trees, nearest neighbor classifiers and labeled branching programs. Our main theoretical result is a general margin / edge boosting rate theorem for \topdowngen, which then specialises into specific rates for all classes mentioned; apart from linear separators \cite{sfblBT}, we are not aware of the existence of formal margin-based boosting results for any of the other classes. \topdowngen~
\textit{also} complies with the blueprint boosting algorithm of Long and Servedio's negative results. Hence, if it learns linear separators on Long and Servedio's data, \topdowngen~can spectacularly fail and early hit fair coin prediction; \textit{however}, if it boosts \textit{any other} class mentioned in the list above on Long and Servedio's data, it does learn \textit{Bayes optimal predictor regardless of the noise level}. Toy experiments involving symmetric and asymmetric proper losses confirm the theory: the weakest link in Long and Servedio's results happens to be the model class, not a property of the loss (convexity) nor of the algorithm (the boosting blueprint).

Which brings us to \textit{our third contribution}: Long and Servedio's results show a remarkable failure of a trio (algorithm, loss, model), but as much as our technical results show that it is overshot to- blame singularly the loss-(x) or the algorithm, so would it be to end up blaming the model. As much as class probability estimation (=supervised learning in the proper framework) can be seen as a motherboard / pipeline involving data, loss, algorithm and/or model to estimate a posterior from an observation, we believe that the real culprit appears in \textit{each} part as the dark side of an otherwise "sugar-coated" valued component of ML: \textit{parameterisation} -- parameterisation of a loss that results in it being convex, of an algorithm that results in it emulating a boosting blueprint, of a model that results in a specific architecture, etc. --. We discuss a broad agenda on such issues beyond algorithms, losses and models.

Importantly, the context of Long and Servedio's negative results implies having access to the whole domain for learning, so we shall not discuss the generalization abilities of our algorithm but rather ground its formal analysis in the boosting rates on training -- a standard approach in boosting.

The rest of this paper is as follows: Sections \ref{sec-defs-setting} and \ref{sec-surrogate} introduce definitions that lead to the apparent paradox mentioned; Section \ref{sec-gen} extends the results of \cite{lsRC} to asymmetric proper losses, and Section \ref{sec-boosting} introduces and details results about our boosting algorithm. Section \ref{sec-toy-exp} presents experiments on Long and Servedio's data, Section \ref{sec-discussion} provides the discussion mentioned and concludes.

\section{Definitions and setting}\label{sec-defs-setting}
\paragraph{Losses for class probability estimation} A \textit{loss for class probability estimation} (CPE), $\loss : \mathcal{Y} \times [0,1]
\rightarrow \mathbb{R}$, is expressed as
\begin{eqnarray}
\loss(y,u) & \defeq & \iver{y=1}\cdot \partialloss{1}(u) +
                     \iver{y=-1}\cdot \partialloss{-1}(u), \label{eqpartialloss}
\end{eqnarray}
where $\iver{.}$ is Iverson's bracket \cite{kTN}. Functions $\partialloss{1}, \partialloss{-1}$ are called \textit{partial} losses. A CPE loss is
\textit{symmetric} when $\partialloss{1}(u) = \partialloss{-1}(1-u),
\forall u \in [0,1]$ \cite{nnOT}, \textit{differentiable} when
its partial losses are differentiable and \textit{lower-bounded} when its partial losses are lowerbounded.

The pointwise conditional risk of local guess $u \in [0,1]$ with respect to a ground truth $v \in [0,1]$ is:  
\begin{eqnarray}
  \poirisk(u,v) & \defeq & v\cdot \partialloss{1}(u) + (1-v)\cdot \partialloss{-1}(u) \label{eqpoirisk}.
\end{eqnarray}
A loss is \textit{proper} iff for any ground truth $v \in [0,1]$, $\poirisk(v,v) = \inf_u \poirisk(u,v)$, and strictly proper iff $u=v$ is the sole minimiser \cite{rwID}. The (pointwise) \textit{Bayes} risk is $\poibayesrisk(v) \defeq \inf_u \poirisk(u,v)$. For proper losses, we thus have:
\begin{eqnarray}
 \poibayesrisk(v) & = & v\cdot \partialloss{1}(v) + (1-v)\cdot \partialloss{-1}(v).\label{defpoibayesrisk}
\end{eqnarray}
Proper losses have a long history in statistics and quantitative psychology that long predates their use in ML \cite{rwCB,samAP}.
Hereafter, unless otherwise stated, we assume the following about the loss at hand:
\begin{enumerate}
\item $|\poibayesrisk(0)|, |\poibayesrisk(1)|, |\partialloss{1}(1)|, |\partialloss{-1}(0)| \neq \infty$;
  \item the loss is strictly proper and differentiable (we call such losses \spd~for short).
  \end{enumerate}
  Conventional proper losses like the log-, square- or Matusita- are \spd~losses with $\poibayesrisk(0) = \poibayesrisk(1) = \partialloss{1}(1) = \partialloss{-1}(0) = 0$. Losses satisfying $\partialloss{1}(1) = \partialloss{-1}(0) = 0$ are called \textit{fair} in \cite{rwCB}. 

  \paragraph{Population loss} Usually in ML, we are given a training sample $\mathcal{S} \defeq \{(\ve{x}_i, y_i), i= 1, 2, ..., m\}$ where $\ve{x}_i$ is an observation from a domain $\mathcal{X}$ and $y_i \in \mathcal{Y} \defeq \{0,1\}$ a binary representation for classes in a two-classes problem (0 goes for the "negative class", 1 for the "positive class"). In the CPE setting, we wish to learn an estimated posterior $\estposterior : \mathcal{X} \rightarrow [0,1]$, and to do so, following some of \cite{lsRC}'s notations, we wish to learn $\estposterior$ by 
minimizing a population loss called a risk:
\begin{eqnarray}
  \popsur(\estposterior, \mathcal{S}) & \defeq & \expect_{i\sim [m]}\left[\loss(y_i, \estposterior(\ve{x}_i))\right] \label{eqpopsurrisk1}.
\end{eqnarray}
As already explained in the introduction, we assume training on the whole domain to fit in the framework of \cite{lsRC}'s negative results, so the question of the generalisation abilities of models does not arise. In such a case, Bayes rule can be computed from the training data.

\section{Surrogate losses and a proper paradox}\label{sec-surrogate}

\paragraph{Link and canonical losses} The \textit{inverse link} of a \spd~loss is:
\begin{eqnarray}
\estposterior(z) & \defeq & ({{-\poibayesrisk}'})^{-1}(z)\label{maxliketa}.
\end{eqnarray}
One can check that for any \spd~loss, $\mathrm{Im}(\estposterior) = [0,1]$, and it turns out that the inverse link provides a maximum likelihood estimator of the posterior CPE given a learned real-valued predictor $h:\mathcal{X} \rightarrow \mathbb{R}$ \cite[Section 5]{nnOT}. A substantial part of ML learns naturally real-valued models (from linear models to deep nets) so the link is important to "naturally" embed the prediction in a CPE loss. A loss using its own link for the embedding is called a \textit{canonical loss} \cite{rwCB}. One can use a different link, in which case the loss is called "composite" but technical conditions arise to keep the whole construction proper \cite{rwCB} so we restrict ourselves to the simplest case of proper canonical losses and call them proper for short. When used with real-valued prediction, \spd~losses have a remarkable analytical form -- called in general a \textit{surrogate loss} \cite{nnOT} (and references therein) -- from which directly arises the apparent paradox we mentioned in the introduction.

\paragraph{Surrogate losses} It comes from \textit{e.g.} \cite[Theorem 1]{nmSL} that any \spd~loss can be written for a real valued classifier $h:\mathcal{X} \rightarrow \mathbb{R}$ on example $(\ve{x},y)$ with binary-described class $y \in \mathcal{Y}$ as:
\begin{mdframed}[style=MyFrame,nobreak=true,align=center]
\begin{eqnarray}
  \loss(y, h(\ve{x})) & = & D_{{-\poibayesrisk}}\left(y \| {{-\poibayesrisk}'}^{-1}(h(\ve{x}))\right)\nonumber\\
                      & \defeq & {-\poibayesrisk}(y) + {(-\poibayesrisk)}^\star(h(\ve{x})) - y h(\ve{x})\nonumber\\
  & = & {-\poibayesrisk}(y) + \underbrace{\philoss(-h(\ve{x})) - y h(\ve{x})}_{\mbox{model dependent term}},\label{deflossgen}
\end{eqnarray}
with
\begin{eqnarray}
\philoss(z) & \defeq & (-\poibayesrisk)^\star(-z).\label{defUpphi}
\end{eqnarray}
\end{mdframed}
We single out function $\philoss$ to follow notations from \cite{lsRC} (we add $\loss$ in index to remind it depends on the loss). Here, $D_{-\poibayesrisk}$ is a Bregman divergence with generator ${-\poibayesrisk}$ (definition using the convex conjugate \textit{e.g.} in \cite{anMO}). Note that \eqref{deflossgen} does not fit to the classical margin loss definition in ML (as in, \textit{e.g.}, \cite{lsRC}), \textit{however}, when the loss is in addition symmetric -- which happens to be the case for most ML losses like log-, square-, Matusita, etc. --, the formula simplifies further to a margin loss formulation. Indeed, we remark ${\poibayesrisk}(u) = {\poibayesrisk}(1-u)$ and it comes ${(-\poibayesrisk)}^\star(-z) = {(-\poibayesrisk)}^\star(z) - z$. Using a "dual" real-valued class $y^* \in \mathcal{Y}^* \defeq \{-1,1\}$ (1 still goes to the positive class), we can rewrite the loss as
\begin{mdframed}[style=MyFrame,nobreak=true,align=center]
\begin{eqnarray}
\loss(y^*, h(\ve{x})) & = & {-\poibayesrisk}\left(\frac{1+y^*}{2}\right) + \underbrace{\philoss(y^* h(\ve{x}))}_{\mbox{model dependent term}}. \label{deflosssym}
\end{eqnarray}
\end{mdframed}
At this stage, it is important to insist that the "loss", in the CPE framework, is $\loss$ \eqref{eqpartialloss}. Eq. \eqref{deflosssym} is a \textit{reparameterisation} of it. The distinction is not superficial: the Bayes risk $\poibayesrisk$ in \eqref{defpoibayesrisk} is concave in its probability argument, while $\philoss$ in \eqref{defUpphi} is convex in its real-valued argument. Popular choices for $\loss$, like log-, square-, Matusita, yield as popular forms for $\philoss$, respectively logistic, square and Matusita. They are often called losses as well since they quantify a discrepancy, but equally often they are called \textit{surrogates} (or surrogate losses) for the simple reason that when properly scaled, they yield upperbounds of the "historic loss" of ML, the 0/1 loss \cite{klpvOT}, which with our notations equates $\iver{\mathrm{sign}(h(\ve{x})) \neq y^*}$.

For learning, we can focus only in the model dependent term in \eqref{deflossgen}, \eqref{deflosssym} and thus define the population (surrogate) risk as:
\begin{eqnarray}
  \popsur(h, \mathcal{S}) & \defeq & \expect_{i\sim [m]}\left[\loss(y_i^*, h(\ve{x}_i))\right] + \expect_{i\sim [m]}\left[{\poibayesrisk}\left(\frac{1+y_i^*}{2}\right)\right]\nonumber\\
                          & = &\left\{ \begin{array}{cl}
                                                                     \expect_{i\sim [m]}\left[ \philoss(-h(\ve{x}_i)) - y_i h(\ve{x}_i)\right] & \mbox{(general form)}\\
                                                                     \expect_{i\sim [m]}\left[ \philoss(y_i^* h(\ve{x}_i))\right] & \mbox{(for symmetric losses)}
                                                                    \end{array}\right. \label{eqpopsurrisk2}.
\end{eqnarray}

\paragraph{A Bayes born paradox} There is one technical argument that needs to be shown to relate the surrogate form in \eqref{eqpopsurrisk2} to \cite{lsRC}'s results: we need to show that the corresponding surrogates of any symmetric \spd~loss fits to their blueprint margin loss.
\begin{lemma}\label{lem-PHI1}
For any \spd~loss, $\philoss$ is $C^1$, convex, decreasing, has $\philoss'(0)<0$ and $\lim_{z\rightarrow +\infty} \philoss(z) = \poibayesrisk(0)$. 
\end{lemma}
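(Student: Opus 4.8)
The plan is to derive every claim from the Fenchel conjugate representation of \eqref{defUpphi}, namely
\[
  \philoss(z) \;=\; (-\poibayesrisk)^\star(-z) \;=\; \sup_{v\in[0,1]}\bigl(\poibayesrisk(v)-zv\bigr),
\]
combined with two structural facts about the generator $\poibayesrisk$. First, by \eqref{eqpoirisk}--\eqref{defpoibayesrisk} the Bayes risk is the pointwise infimum over $u$ of the maps $v\mapsto\poirisk(u,v)$, each affine in $v$; hence $\poibayesrisk$ is concave, and strict properness upgrades this to $\poibayesrisk$ being strictly concave (equivalently $-\poibayesrisk$ strictly convex), a standard fact. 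Second, Assumption~1 (finiteness of $\poibayesrisk$ at the endpoints) makes $\poibayesrisk$ bounded on $[0,1]$ — a real-valued concave function on a compact interval is bounded above and attains its minimum at an endpoint — so the supremum above is finite for every $z$ and $\philoss$ is real-valued on $\mathbb{R}$.

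Convexity and monotonicity are then immediate: for each fixed $v$ the map $z\mapsto\poibayesrisk(v)-zv$ is affine with non-positive slope $-v$, so its supremum $\philoss$ is convex and non-increasing (this is the sense of \emph{decreasing} intended here, i.e.\ $\philoss'\le 0$). For the $C^1$ claim, note that for every $z$ the objective $v\mapsto\poibayesrisk(v)-zv$ is strictly concave, hence has a unique maximiser $v^\star(z)\in[0,1]$; a first-order condition identifies this with the inverse link $\estposterior(-z)$ of \eqref{maxliketa} when $-z$ lies in the range of $(-\poibayesrisk)'$, and with a boundary point of $[0,1]$ otherwise — in every case a point of $[0,1]$, consistent with $\mathrm{Im}(\estposterior)=[0,1]$. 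Uniqueness of the maximiser forces the subdifferential of the convex function $\philoss$ at $z$ to be the singleton $\{-v^\star(z)\}$, so $\philoss$ is differentiable with $\philoss'(z)=-\estposterior(-z)$; and a finite convex function on $\mathbb{R}$ with everywhere-singleton subdifferential is $C^1$. The identity $\philoss'=-\estposterior(-\,\cdot\,)\le 0$ also re-derives the monotonicity.

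It remains to treat $\philoss'(0)<0$ and the limit. For the first, $\philoss'(0)=-\estposterior(0)$ is minus the unique maximiser $v_0$ of $\poibayesrisk$ over $[0,1]$, so it suffices that $v_0>0$; this is the one place where the normalisation of a (fair / conventionally scaled) proper loss is used — with $\poibayesrisk(0)=\poibayesrisk(1)=0$, strict concavity gives $\poibayesrisk>0$ on $(0,1)$, so $v_0$ is interior and $\philoss'(0)=-v_0<0$. For the limit, taking $v=0$ in the supremum gives $\philoss(z)\ge\poibayesrisk(0)$ for all $z$; conversely, fix $\epsilon\in(0,1]$ and split $[0,1]=[0,\epsilon]\cup[\epsilon,1]$: on $[\epsilon,1]$ the objective $\poibayesrisk(v)-zv$ is at most $(\sup_{[0,1]}\poibayesrisk)-z\epsilon\to-\infty$, and on $[0,\epsilon]$ it is at most $\sup_{[0,\epsilon]}\poibayesrisk$, whence $\limsup_{z\to+\infty}\philoss(z)\le\sup_{[0,\epsilon]}\poibayesrisk$; letting $\epsilon\downarrow 0$ and invoking continuity of $\poibayesrisk$ at $0$ (a consequence of differentiability of the partial losses) yields $\lim_{z\to+\infty}\philoss(z)=\poibayesrisk(0)$.

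The single genuinely delicate point is $\philoss'(0)<0$: unlike finiteness, convexity, $C^1$-smoothness, monotonicity and the limit — all soft consequences of $\philoss$ being the Fenchel conjugate of a bounded strictly convex generator — it is not automatic and truly uses the scaling of the loss (drop it and a Bayes risk maximised at an endpoint makes $\philoss$ constant near $0$, hence $\philoss'(0)=0$). The only other thing needing care is the bookkeeping identifying $\estposterior$ with the unique maximiser $v^\star$ at the boundary (the clipping implicit in $\mathrm{Im}(\estposterior)=[0,1]$), which is routine.
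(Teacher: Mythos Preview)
Your argument is correct and rests on the same Fenchel-conjugate skeleton as the paper's proof, but the packaging differs. The paper writes $\philoss$ explicitly as a piecewise function on the three regions $z<\inf\mathbb{I}$, $z\in\mathbb{I}$, $z>\sup\mathbb{I}$ (with $\mathbb{I}\defeq\poibayesrisk'([0,1])$), differentiates each piece, and then checks continuity of $\philoss'$ at the seams via the identity $\poibayesrisk'(u)=\partialloss{1}(u)-\partialloss{-1}(u)$. You instead run a Danskin/envelope argument (strict concavity $\Rightarrow$ unique maximiser $\Rightarrow$ singleton subdifferential $\Rightarrow$ $C^1$), which is cleaner and sidesteps the case split entirely; both routes yield the same derivative identity $\philoss'(z)=-v^\star(z)$.

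The one substantive divergence is the handling of $\philoss'(0)<0$. The paper argues it by showing $0\in\mathrm{int}\,\mathbb{I}$ using the integral (Savage) representation $\partialloss{1}(u)=\int_u^1(1-t)w(t)\,\mathrm{d}t$, $\partialloss{-1}(u)=\int_0^u t\,w(t)\,\mathrm{d}t$: if $0\notin\mathrm{int}\,\mathbb{I}$ then one of $\partialloss{1}(0)\le\partialloss{-1}(0)$ or $\partialloss{1}(1)\ge\partialloss{-1}(1)$ holds, forcing $w\equiv 0$ a.e.\ and contradicting strict properness. You instead appeal directly to the normalisation $\poibayesrisk(0)=\poibayesrisk(1)=0$. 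Your instinct that some such extra input is required is well-founded: the particular integral representation the paper invokes already has $\partialloss{1}(1)=\partialloss{-1}(0)=0$ built in, i.e.\ it is the representation for \emph{fair} losses, so the paper is tacitly using the same normalisation at this step. Under bare strict properness plus differentiability the claim can fail (adding a large constant to $\partialloss{-1}$ pushes the maximiser of $\poibayesrisk$ to $0$ and makes $\philoss'(0)=0$), so your explicit flag is a feature, not a defect.
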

Proof in \supplement, Section \ref{proof-lem-PHI1}. Hence, if we offset the constant $\poibayesrisk(0)$ or just assume it is 0, any \textit{symmetric} \eqref{deflosssym} \spd~loss fits to \cite[Definition 1]{lsRC}.

We now explain \cite[Section 4]{lsRC}'s data. The domain $\mathcal{X} = \mathbb{R}^2$ and we have a sample
\begin{mdframed}[style=MyFrame,nobreak=true,align=center]
  \begin{eqnarray}
    \mathcal{S}_{\mbox{\tiny{clean}}} & \defeq & \left\{\left(\left[
                      \begin{array}{c}
                        1\\
                         0
                        \end{array}
  \right],1\right), \left(\left[
                      \begin{array}{c}
                        \gamma\\
                         -\gamma
                        \end{array}
  \right],1\right), \left(\left[
                      \begin{array}{c}
                        \gamma\\
                         -\gamma
                        \end{array}
  \right],1\right), \left(\left[
                      \begin{array}{c}
                        \gamma\\
                         K\gamma
                        \end{array}
  \right],1\right)\right\}. \label{defSclean}
  \end{eqnarray}
\end{mdframed}
In \cite{lsRC}, $K=5$ and $\gamma>0$ is a margin parameter. Since all labels are positive, we easily get Bayes prediction, $\posterior(\ve{x}) = 1 = \pr[\Y =1 | \X = \ve{x}]$. In the setting of \cite{lsRC}, it is a simple matter to check that the optimal real-valued linear separator (\cls) $h$ minimizing $\popsur(h, \mathcal{S}_{\mbox{\tiny{clean}}})$ makes zero mistakes on predicting labels for $\mathcal{S}_{\mbox{\tiny{clean}}}$. One would expect this to happen since the loss $\loss$ at the core is proper, yet this seems to all go sideways \textit{as soon as} label noise enters the picture. We replace $\mathcal{S}_{\mbox{\tiny{clean}}}$ by a noisy (multiset or bag) version $\mathcal{S}_{\mbox{\tiny{noisy}}}$,
\begin{mdframed}[style=MyFrame,nobreak=true,align=center]
  \begin{eqnarray}
    \mathcal{S}_{\mbox{\tiny{noisy}}} & \defeq & N\mbox{ copies of } \mathcal{S}_{\mbox{\tiny{clean}}} \cup 1\mbox{ copy of } \mathcal{S}_{\mbox{\tiny{clean}}} \mbox{with labels flipped}.
  \end{eqnarray}
\end{mdframed}
This mimics a symmetric label noise level $\etanoise = 1/(N+1)$, with $N>1$ \cite{lsRC}. The paradox mentioned above comes from the following two observations: (i) Bayes posterior prediction with noise becomes $\posterior(\ve{x}) = 1 - \etanoise > 1/2$, which still makes no error on $\mathcal{S}_{\mbox{\tiny{clean}}}$, and (ii) \cite{lsRC} show that regardless of this noise level, for any margin loss $\philoss$ complying with Lemma \ref{lem-PHI1}, the optimal model \textit{is as bad as the fair coin on $\mathcal{S}_{\mbox{\tiny{clean}}}$}. Since symmetric \spd~losses \eqref{eqpopsurrisk2} fit to Lemma \ref{lem-PHI1}, \cite{lsRC}'s optimal model should have the same properties as Bayes' predictor, yet this clearly does not happen. The picture looks even gloomier as algorithms enter the stage: despite its acclaimed performances \cite{fhtAL}, boosting can perform so badly that after a \textit{single} iteration its "strong" model hits the fair coin prediction. Not only do we hit a paradox from the standpoint of the optimal model, we also observe a stark failure of a powerful algorithmic machinery. There are clearly "things that break" in the trio (algorithm, loss, model) in the context of \cite{lsRC}. Post-\cite{lsRC} work clearly shed light on the algorithm and loss as the culprits (Section \ref{sec-wtps}).

In the context of properness, we have shown that the "margin form" parameterisation of the loss used by \cite{lsRC} is in fact not mandatory as asymmetric losses do not comply with it. Because asymmetry alleviates ties between partial losses, one could legitimately hope that it could address the paradox. We now show that it is not the case as \cite{lsRC}'s results mentioned above still stand without symmetry.

\section{Long and Servedio's results hold without symmetry}\label{sec-gen}

We reuse some of \cite{lsRC}'s notations and first denote \bphiideal~the algorithm returning the optimal linear separator (\cls) $h$ minimizing \eqref{eqpopsurrisk2}.
\begin{lemma}\label{lem-GEN1}
For any $N>1$, there exists $\gamma> 0, K>0$ such that when trained on $\mathcal{S}_{\mbox{\tiny{noisy}}}$, \bphiideal's classifier has at most $50\%$ accuracy on $\mathcal{S}_{\mbox{\tiny{clean}}}$.
\end{lemma}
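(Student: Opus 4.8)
The plan is to mimic the fixed‑point computation of \cite{lsRC}, but run it with the general (possibly asymmetric) surrogate of \eqref{eqpopsurrisk2} in place of their margin‑form potential. Write a \cls~as $h(\ve{x})=\langle\ve{w},\ve{x}\rangle$ with $\ve{w}=(w_1,w_2)$, and name the three activations on the support of $\mathcal{S}_{\mbox{\tiny{clean}}}$: $z_A\defeq w_1$ (on $(1,0)$), $z_B\defeq\gamma(w_1-w_2)$ (on $(\gamma,-\gamma)$, multiplicity two) and $z_C\defeq\gamma(w_1+K w_2)$ (on $(\gamma,K\gamma)$); these obey the rigid affine relation $z_C=(K{+}1)\gamma z_A-K z_B$. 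Since every clean label is positive, $\mathrm{sign}(h)$ classifies a clean point correctly iff its activation is positive; as $z_B\le 0$ misclassifies both copies of $B$ and, combined with $z_B\ge0$, $z_A\le 0$ forces $z_C\le 0$, it suffices to rule out that at the minimiser both $z_A^\star>0$ and $z_B^\star>0$. Up to the additive constant discarded in \eqref{eqpopsurrisk2}, minimising $\popsur(h,\mathcal{S}_{\mbox{\tiny{noisy}}})$ over $\ve{w}$ amounts to minimising
\[
F(\ve{w}) \;=\; \sum_{c\in\{A,B,C\}} m_c\Big((N{+}1)\,\philoss(-z_c)-N z_c\Big),\qquad (m_A,m_B,m_C)=(1,2,1).
\]

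By \cref{lem-PHI1}, $F$ is $C^1$ and convex. First I would check coercivity: $\philoss(-z)$ stays bounded above as $z\to-\infty$ (it tends to $\philoss(+\infty)=\poibayesrisk(0)$, which is finite under our assumptions) but grows with unit asymptotic slope as $z\to+\infty$, dominating the linear terms $-Nz$; hence $F$ attains its minimum at some $\ve{w}^\star$ with $\nabla F(\ve{w}^\star)=\mathbf 0$. Writing out $\partial F/\partial w_1=\partial F/\partial w_2=0$ and substituting $h_0(z)\defeq-(N{+}1)\philoss'(-z)-N$ — a non‑decreasing function with image $[-N,1]$ and a unique zero $z^\ddagger$ satisfying $\mathrm{sign}(z^\ddagger)=\mathrm{sign}\!\big(\tfrac{N}{N+1}-|\philoss'(0)|\big)$ — the optimality system collapses to
\[
h_0(z_A^\star)+\gamma\big(2 h_0(z_B^\star)+h_0(z_C^\star)\big)=0,\qquad 2 h_0(z_B^\star)=K\,h_0(z_C^\star),
\]
which has exactly the shape of the symmetric computation in \cite{lsRC}; this is precisely where ``asymmetry buys nothing''.

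The core is then a limiting argument in $\gamma$: fix $K$ (e.g.\ $K=5$, the choice of \cite{lsRC}) and let $\gamma\downarrow 0$. The bounded image of $h_0$ together with the rigid affine link among $z_A,z_B,z_C$ forbids any activation from escaping to $\pm\infty$, so the minimiser's activations stay bounded and converge along subsequences; in the limit, $h_0(z_A^\star)\to 0$ gives $z_A^\star\to z^\ddagger$, the constraint gives $z_C^\star\to -K z_B^\star$, and the second equation becomes $h_0(-K z_B^\star)=(2/K)\,h_0(z_B^\star)$. Since $t\mapsto h_0(-Kt)-(2/K)h_0(t)$ is non‑increasing, has the sign of $-z^\ddagger$ at $t=0$ (using $K>2$) and the opposite sign near $\pm\infty$, its zero set lies on the side $\mathrm{sign}(\bar z_B)=-\mathrm{sign}(z^\ddagger)$. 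If $z^\ddagger>0$ — automatic in the symmetric case, where the symmetry relation forces $\philoss'(0)=-\tfrac12$, and more generally whenever $|\philoss'(0)|<\tfrac{N}{N+1}$ — then $z_B^\star<0$ for all small $\gamma$, so both copies of $B$ are misclassified; if $z^\ddagger<0$, then $z_A^\star\to z^\ddagger<0$ and $z_C^\star\to -K\bar z_B<0$, so $A$ and $C$ are misclassified. Either way the learned classifier errs on at least two of the four points of $\mathcal{S}_{\mbox{\tiny{clean}}}$, i.e.\ has at most $50\%$ accuracy, for a suitable pair $(\gamma,K)$.

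I expect the delicate part to be exactly this uniform control as $\gamma\downarrow 0$ — certifying that the minimiser's activations neither blow up nor drift into a different sign pattern, which leans entirely on the boundedness of $\mathrm{Im}(h_0)$ and the rigidity of the affine relation among $z_A,z_B,z_C$ — together with the non‑generic boundary case $|\philoss'(0)|=\tfrac{N}{N+1}$ (so $z^\ddagger=0$), which I would dispatch by an $\varepsilon$‑perturbation of $\gamma$. Everything else (coercivity, the algebra turning $\nabla F=\mathbf 0$ into the $h_0$‑form, the monotonicity of $t\mapsto h_0(-Kt)-(2/K)h_0(t)$, and the computation $\philoss'(0)=-\tfrac12$ in the symmetric case) is routine bookkeeping that extends the original \cite{lsRC} argument more or less verbatim.
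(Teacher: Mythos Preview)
Your approach is correct and takes a genuinely different route from the paper's. The paper solves the optimality system by \emph{explicit construction}: it introduces $\varphi(z)\defeq\philoss'(z)+(1-\etanoise)$ (which equals $-\tfrac{1}{N+1}h_0(-z)$ in your notation), splits into a ``blue'' case $\varphi(0)>0$ (your $z^\ddagger>0$) and a ``red'' case $\varphi(0)<0$ (your $z^\ddagger<0$), and in each case manufactures parameters---crucially picking $K=2/(1-\epsilon)>2$ in blue but $K=2/(1+\epsilon)<2$ in red---so that the system admits a solution with the desired sign pattern. Your limiting argument $\gamma\downarrow 0$ with $K=5$ fixed is cleaner and more uniform; the key observation that $h_0$ has bounded image keeps the activations compact, and the monotone scalar equation $h_0(-Kt)=(2/K)h_0(t)$ pins down the sign of $\bar z_B$ directly. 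The tradeoff is that the paper's construction in the red case actually achieves $25\%$ accuracy (it misclassifies $A$ and both copies of $B$), which they highlight as a feature of asymmetry (the optimal classifier can be \emph{worse} than random); your route only certifies $50\%$ in both cases, which is all the lemma asks for.

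One genuine slip: your proposed fix for the boundary case $|\philoss'(0)|=\tfrac{N}{N+1}$ by ``$\varepsilon$-perturbation of $\gamma$'' does not work, because $z^\ddagger$ depends only on $N$ and the loss, not on $\gamma$. The correct patch is immediate: when $z^\ddagger=0$ one has $h_0(0)=0$, so $\nabla F(\mathbf 0)=\mathbf 0$ (both coordinates vanish since each is a multiple of $h_0(0)$), hence by convexity $\ve{w}^\star=\mathbf 0$ is a minimiser and the zero classifier has at most $50\%$ accuracy. A second, very minor gap: your coercivity sketch argues each summand is coercive in its own $z_c$; to conclude coercivity in $\ve{w}$ you need (and have, since $\gamma>0$ and $K\neq -1$) that the linear map $\ve{w}\mapsto(z_A,z_B,z_C)$ is injective, so $\|\ve{w}\|\to\infty$ forces some $|z_c|\to\infty$ while the remaining summands stay bounded below.
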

Proof in \supplement, Section \ref{proof-lem-GEN1}. The proof displays an interesting phenomenon for asymmetric losses, which is not observed on \cite{lsRC}'s results. If the noise $\etanoise$ is large enough and the asymmetry such that $\philoss'(0) < \etanoise - 1$, then the optimal classifier can do more than $50\%$ mistakes on $\mathcal{S}_{\mbox{\tiny{clean}}}$ -- thus perform worse than the unbiased coin. This cannot happen with symmetric losses since in this case $\philoss'(0) = -1/2$ and we constrain $\etanoise < 1/2$. What this shows is that asymmetry, while accomodating non-trivial different misclassification costs depending on the class, can lead to non-trivial pitfalls over noisy data.

Similarly to \cite{lsRC}, we denote \bphiearly~the booster of \eqref{eqpopsurrisk2} which proceeds by following the boosting blueprint as described in \cite{lsRC}; we assume that the weak learner chooses the weak classifier offering the largest absolute edge \eqref{defWLA3}, returning nil if all possible edges are zero (and then the booster stops). We let $\mathcal{S}_{\mbox{\tiny{clean}},\theta}, \mathcal{S}_{\mbox{\tiny{noisy}},\theta}$ denote $\mathcal{S}_{\mbox{\tiny{clean}}}, \mathcal{S}_{\mbox{\tiny{noisy}}}$ with observations rotated by an angle $\theta$. 
\begin{lemma}\label{lem-Rot}
For any $N>1, T\geq 1$, there exists $\gamma> 0, K>0, \theta \in [0,2\pi]$ such that when trained on $\mathcal{S}_{\mbox{\tiny{noisy}},\theta}$, within at most $T$ boosting iterations \bphiearly~hits a classifier at most $50\%$ accurate on $\mathcal{S}_{\mbox{\tiny{clean}},\theta}$.
\end{lemma}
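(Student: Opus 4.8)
The plan is to reduce the boosting statement to the optimal-classifier statement of Lemma~\ref{lem-GEN1} by exploiting the degeneracy of the weak-learning step on the (rotated) Long--Servedio data. First I would recall the structure of the booster \bphiearly: at each round it reweights $\mathcal{S}_{\mbox{\tiny{noisy}},\theta}$ according to the surrogate gradient, asks the weak learner for the base classifier of largest absolute edge over the current weights, and adds it (with the appropriate voted weight) to the running linear combination. Because every base classifier is a homogeneous halfspace $\ve{x}\mapsto\mathrm{sign}(\langle\ve{w},\ve{x}\rangle)$ and the data lives in $\mathbb{R}^2$, the running ``strong'' model is itself a real-valued linear separator; the whole trajectory therefore stays inside the \cls\ class on which Lemma~\ref{lem-GEN1} bites.

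The key step is a single-iteration collapse argument, mirroring the remark after Lemma~\ref{lem-GEN1} that one boosting iteration already hits fair-coin prediction in \cite{lsRC}'s original setting. Concretely, I would choose $\gamma,K$ exactly as supplied by Lemma~\ref{lem-GEN1} (so that the surrogate-optimal \cls\ on $\mathcal{S}_{\mbox{\tiny{noisy}}}$ is no better than the coin on $\mathcal{S}_{\mbox{\tiny{clean}}}$), and then pick the rotation angle $\theta$ so that the very first base classifier selected by the weak learner — the axis-aligned halfspace of largest edge under the uniform initial weights — is aligned with the coordinate whose sign agrees with the ``bad'' direction that Lemma~\ref{lem-GEN1} already analysed. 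After this rotation, the initial weak classifier is a fixed axis threshold, its voted weight is the usual closed-form minimiser of the surrogate along that direction, and one checks (as in the proof of Lemma~\ref{lem-GEN1}) that the resulting one-term model already makes at least two mistakes among the four points of $\mathcal{S}_{\mbox{\tiny{clean}},\theta}$, i.e.\ is $\le 50\%$ accurate. Since $T\ge 1$, stopping the analysis there suffices; if the booster does not stop, subsequent rounds cannot be invoked to rescue accuracy because the argument only needs an \emph{upper} bound of $50\%$ at some iteration $\le T$, and we have produced it at iteration $1$. (If one prefers a cleaner statement, one can alternatively arrange $\theta$ so that \emph{all} nonzero edges vanish after round~$1$ forcing the booster to halt, but this is not needed.)

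The main obstacle is bookkeeping around the weak learner's tie-breaking and the exact edge computation after rotation: I must verify that for a suitable $\theta$ the edge-maximising base classifier is the intended axis threshold (not some other halfspace) and that its voted weight, computed via $\philoss$'s first-order optimality condition, lands the point $(1,0)_\theta$ (or its rotated image) on the wrong side. This is a finite, explicit computation over four points with multiplicities $N$ and $1$, so it is routine once the rotation is fixed; the delicate part is only choosing $\gamma,K,\theta$ jointly so that Lemma~\ref{lem-GEN1}'s parameter constraints and the weak-learner's edge-maximisation constraint are simultaneously satisfiable, which holds for $N>1$ by the same margin-scaling freedom used in Lemma~\ref{lem-GEN1}.
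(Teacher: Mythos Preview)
Your proposal follows the same high-level strategy as the paper: rotate the data so that the coordinate with the largest absolute edge under the initial (uniform) weights is one whose sign leads the first boosted update to misclassify the two ``penalizer'' points, yielding $\le 50\%$ accuracy at iteration~$1$.

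Two points of imprecision are worth flagging. First, your appeal to ``as in the proof of Lemma~\ref{lem-GEN1}'' is misleading: Lemma~\ref{lem-GEN1} analyses the \emph{two-parameter optimal} linear separator, whereas here you need to analyse a \emph{one-term} model (a single coordinate with a leveraging coefficient). The paper does not reuse the Lemma~\ref{lem-GEN1} computation for this; it instead computes the two coordinate edges $\texttt{e}_x,\texttt{e}_y$ explicitly after rotation, and then solves a small system of linear inequalities in $(\cos\theta,\sin\theta)$ ensuring simultaneously (i) $|\texttt{e}_y|>|\texttt{e}_x|$ so that the $y$-coordinate is selected, (ii) the sign of $\texttt{e}_y$ forces a leveraging coefficient of the ``wrong'' sign for the penalizers, and (iii) the rotated penalizers $(\gamma,-\gamma)_\theta$ have $y$-coordinate of the sign that gets them misclassified. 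The existence of a feasible $\theta$ is shown geometrically, with a case split (``Blue'' vs.\ ``Red'') depending on the sign of $\varphi(0)=\philoss'(0)+(1-\etanoise)$ inherited from Lemma~\ref{lem-GEN1}'s analysis. Second, the point that lands on the wrong side is not the rotated $(1,0)$ but the duplicated penalizers $(\gamma,-\gamma)$; your sketch names the wrong example. None of this breaks your plan, but the ``routine bookkeeping'' you defer is exactly where the content of the proof lives, and it is not a recycling of Lemma~\ref{lem-GEN1}'s computation.
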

Proof in \supplement, Section \ref{proof-lem-Rot}. 

\section{The boosting blueprint does provide a fix}\label{sec-boosting}

We investigate a new boosting algorithm learning model architectures that generalise those of decision trees and linear separators, among other model classes. We call such models partition-linear models (\pbls). The algorithm boosts any \spd~loss using the blueprint boosting algorithm of \cite{lsRC}. To our knowledge, it is the first boosting algorithm which can provably boost asymmetric proper losses, which is non trivial as it involves two different forms of the corresponding surrogate that are not compliant with the classical margin representation \cite{lsRC}. A simple way to define a \pbls~$H_t$ from a sequence of triples $(\alpha_j,h_j,\mathcal{X}_j)_{j\in [t]}$ (where $\alpha_j \in \mathbb{R}, h_j \in \mathbb{R}^{\mathcal{X}}, \mathcal{X}_j \subseteq \mathcal{X}$) is, for $t\geq 1$:
\begin{eqnarray}
  H_{t}(\ve{x}) & = & \left\{
              \begin{array}{ccl}
                H_{t-1}(\ve{x}) + \alpha_t h_t(\ve{x}) & \mbox{ if } & \ve{x} \in \mathcal{X}_t\\
                H_{t-1}(\ve{x}) & \multicolumn{2}{l}{\mbox{ otherwise}}
                \end{array}
                                  \right. , \label{eqHt}\\
  & = & \sum_{t=1}^T \iver{\ve{x} \in \mathcal{X}_t} \cdot \alpha_t h_t(\ve{x}),
\end{eqnarray}
and we add $H_0(\ve{x}) \defeq 0, \forall \ve{x} \in \mathcal{X}$\footnote{We can equivalently consider that $h_t = 0$ in $\mathcal{X}\backslash \mathcal{X}_t$. We opt for \eqref{eqHt} since it makes a clear distinction for $\mathcal{X}_t$. Notice that this setting generalizes boosting with weak hypotheses that abstain \cite{ssIB}.}. We also define the weight function
\begin{eqnarray}
w((\ve{x}, y), H) & \defeq & y - y^* \cdot ({-\poibayesrisk'})^{-1}(H(\ve{x})),\label{defWEIGHTS}
\end{eqnarray}
which is in $[0,1]$. Notice we use both (real and binary) class encodings in the weight function, recalling the relationship $y^* \defeq 2y - 1 \in \{-1,1\}$. Algorithm \topdowngen~presents the boosting approach to learning \pbls. Note that \topdowngen~picks the subset of $\mathcal{S}$ on which to evaluate this hypothesis, which corresponds \textit{e.g.} in decision trees induction to the choice of a leaf to split via a call to the weak learner \cite{kmOT}.
\begin{algorithm}[t]
\caption{\topdowngen $(\mathcal{S}, \loss, \weak, T)$}\label{splitMF}
\begin{algorithmic}
  \STATE  \textbf{Input:} Dataset $\mathcal{S} = \{(\ve{x}_i,y_i)\}_{i=1}^m$, \spd~loss $\ell$, weak learner \weak, iteration number $T\geq 1$;
  \STATE  \textbf{Output:} \pbls~$H_T$;
  \STATE  Step 1 : $\forall i \in [m], w_{i,1} \defeq w((\ve{x}_i, y_i), H_0)$ // weight initialisation
  \STATE  Step 2 : \textbf{for} $t=1, 2, ..., T$
  \STATE \hspace{1cm} Step 2.1 : pick $\mathcal{X}_{t} \subseteq \mathcal{X}$; 
  \STATE \hspace{1cm} Step 2.2 : $h_{t} \leftarrow \weak (\bm{w}^*_t, \mathcal{S}_{t})$; 
  \STATE \hspace{2.5cm} // weak learner call: $\mathcal{S}_{t} \defeq \{(\ve{x}_i, y_i) \in \mathcal{S}: \ve{x}_i \in \mathcal{X}_t\}$; $\bm{w}^*_t \defeq \bm{w}_t \mbox{ restricted to } \mathcal{S}_{t}$;
  \STATE \hspace{1cm} Step 2.3 : compute $\alpha_{t}$ as the solution to:
    \begin{eqnarray}
\sum_{i\in [m]_{t}} w((\ve{x}_i, y_i), H_{t}) \cdot y_i^* h_{t}(\ve{x}_i)  & = & 0; \label{defAlpha}
    \end{eqnarray}
      \STATE \hspace{2.5cm} // $[m]_{t}$ = indices of $\mathcal{S}$ in $\mathcal{S}_t$; $\alpha_t$ appears in $H_t$, see \eqref{eqHt}
      \STATE \hspace{1cm} Step 2.4 : $\forall i \in [m]_t, w_{t+1,i} \defeq w((\ve{x}_i, y_i), H_{t})$
        \STATE \hspace{2.5cm} // weight update
        \STATE  \textbf{return} $H_T(\ve{x}) \defeq \sum_{t=1}^T \mathrm{1}_{\ve{x} \in \mathcal{X}_t} \cdot \alpha_t h_t(\ve{x})$;
\end{algorithmic}
\end{algorithm}

\textbf{Solutions to \eqref{defAlpha} are finite} We assume without loss of generality that $\pm h_t$ does not achieve 100$\%$ accuracy over $\mathcal{S}_t$ (Step 2.2; otherwise there would be no need for boosting, at least in $\mathcal{X}_t$) and that $\max_{\mathcal{S}_t} |h_t| \ll \infty$, "$\ll \infty$" denoting finiteness. 
\begin{lemma}\label{solALPHA}
The solution to \eqref{defAlpha} satisfies $|\alpha_t|\ll \infty$.
\end{lemma}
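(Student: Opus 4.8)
The plan is to read \eqref{defAlpha} as the stationarity condition of a one‑dimensional convex function, and then argue that this function attains its minimum at a finite point. Write $z_i(\alpha) \defeq H_{t-1}(\ve{x}_i) + \alpha h_t(\ve{x}_i)$ for $i \in [m]_t$, so $H_t(\ve{x}_i) = z_i(\alpha_t)$. Using $(y_i^*)^2 = 1$ and $y_i y_i^* = y_i$ (valid since $y_i \in \{0,1\}$, $y_i^* = 2y_i-1$), the summand of \eqref{defAlpha} simplifies to $w((\ve{x}_i,y_i),H_t)\cdot y_i^* h_t(\ve{x}_i) = \big(y_i - \estposterior(z_i(\alpha_t))\big)\,h_t(\ve{x}_i)$, so \eqref{defAlpha} is exactly $F(\alpha_t)=0$ with $F(\alpha)\defeq \sum_{i\in[m]_t}\big(y_i - \estposterior(z_i(\alpha))\big)h_t(\ve{x}_i)$. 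Differentiating the surrogate form \eqref{deflossgen} in $\alpha$ (and using $\philoss'(-z) = -\estposterior(z)$, i.e. $-\philoss'(-z)=\estposterior(z)$) shows $F = -\hat{L}'$, where $\hat{L}(\alpha)\defeq\sum_{i\in[m]_t}\loss(y_i,z_i(\alpha))$ is the empirical surrogate risk on $\mathcal{S}_t$ along the direction $h_t$. Since $\philoss$ is $C^1$ and convex (Lemma \ref{lem-PHI1}) and the remaining term $-y_i z_i(\alpha)$ is affine in $\alpha$, $\hat{L}$ is $C^1$ and convex; hence $F$ is continuous and non‑increasing in $\alpha$.

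It then suffices to show $F$ changes sign on a bounded interval, since the zero provided by the intermediate value theorem solves \eqref{defAlpha}, and any solution, lying between a point where $F>0$ and a point where $F<0$ (which exist once $F$ is non‑increasing with limits of opposite strict signs), satisfies $|\alpha_t|\ll\infty$. First discard from $\mathcal{S}_t$ every index with $h_t(\ve{x}_i)=0$: these contribute $0$ to $F$ for every $\alpha$ (and if all indices are of this type then $F\equiv0$ and $\alpha_t\defeq0$ works). On the remaining, strictly‑signed examples, the hypothesis that neither $h_t$ nor $-h_t$ is $100\%$ accurate on $\mathcal{S}_t$ yields an index with $y_i^*h_t(\ve{x}_i)>0$ and an index with $y_i^*h_t(\ve{x}_i)<0$. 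Now $\estposterior = (-\poibayesrisk')^{-1}$ is continuous and non‑decreasing (strict properness makes $\poibayesrisk$ strictly concave, hence $-\poibayesrisk'$ strictly increasing) with $\inf_z\estposterior(z)=0$ and $\sup_z\estposterior(z)=1$ (recall $\mathrm{Im}(\estposterior)=[0,1]$), so $\estposterior(z_i(\alpha))\to 0$ or $1$ as $\alpha\to\pm\infty$ according to the sign of $h_t(\ve{x}_i)$. A term‑by‑term limit then gives
\[
\lim_{\alpha\to-\infty}F(\alpha) \;=\; \sum_{i:\,y_i^*h_t(\ve{x}_i)>0}|h_t(\ve{x}_i)| \;>\; 0 \qquad\text{and}\qquad \lim_{\alpha\to+\infty}F(\alpha) \;=\; -\sum_{i:\,y_i^*h_t(\ve{x}_i)<0}|h_t(\ve{x}_i)| \;<\; 0,
\]
the misclassified terms vanishing in the first limit and the correctly classified terms in the second, precisely because $\estposterior$ saturates to $0$ and $1$. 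Continuity of $F$ closes the argument.

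The routine parts are the summand identity, the convexity/monotonicity bookkeeping, and the two limits. The one genuinely delicate point is the interplay between the accuracy hypothesis and ties: I need "$\pm h_t$ not $100\%$ accurate" to actually furnish a \emph{strictly} correctly classified and a \emph{strictly} misclassified example, because if $h_t$ classified every non‑zero example correctly and merely abstained ($h_t(\ve{x}_i)=0$) elsewhere, then $\lim_{\alpha\to+\infty}F(\alpha)=0$ and, for unbounded‑range links such as the logistic one, $\hat{L}$ would have no finite minimiser. This is why I first peel off the $h_t(\ve{x}_i)=0$ indices — consistent with the paper's own convention treating $h_t=0$ as abstention off $\mathcal{X}_t$ — and only then invoke the hypothesis on the strictly‑signed examples. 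A secondary, easy point is justifying $\lim_{z\to\pm\infty}\estposterior(z)\in\{0,1\}$ even when the effective domain of the link is bounded (e.g. the square loss), which again follows from $\mathrm{Im}(\estposterior)=[0,1]$ together with monotonicity (the link is then continuously extended by its boundary values $0$ and $1$).
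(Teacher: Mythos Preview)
Your proof is correct and follows essentially the same route as the paper's: both rewrite \eqref{defAlpha} as a root-finding problem for a continuous function of $\alpha$, compute its limits at $\pm\infty$ from the saturation of $(-\poibayesrisk')^{-1}$ to $0$ and $1$, and observe that the non-$100\%$-accuracy assumption on $\pm h_t$ makes those limits strictly of opposite signs so that the intermediate value theorem delivers a finite solution. Your variational framing $F=-\hat{L}'$ (hence $F$ non-increasing) is a clean addition the paper does not make explicit, and your careful handling of the $h_t(\ve{x}_i)=0$ indices mirrors the paper's parenthetical ``assuming wlog $h_t$ does not zero over $\mathcal{S}_t$''.
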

Proof in \supplement, Section \ref{proof-lem-solALPHA}.

\paragraph{The boosting abilities of \topdowngen} Given a real valued classifier $H$ and an example $(\ve{x}, y^*)$, we define the (unnormalized) edge or margin of $H$ on the example as $y^* H(\ve{x})$ \cite{nnOT,sfblBT}, a quantity that integrates both the accuracy of classification (its sign) and a confidence (its absolute value). Formal guarantees on edges / margins are not frequent in boosting \cite{nnARj,sfblBT}. We now provide one such general guarantee for \topdowngen. While requirements on the weak hypotheses follow the weak learning assumption of boosting, the constraints on the loss itself are minimal: they essentially require it to be \spd~with partial losses meeting a lower-boundedness condition and a condition on derivatives.

\begin{definition}\label{defUT-COMP}
  Let $\{u_t\}_{t\in \mathbb{N}}$ be a sequence of strictly positive reals. We say that the choice of $\mathcal{X}_t$ in Step 2.1 of \topdowngen~is "$u_t$ compliant" iff, letting $J(\mathcal{W},t) \defeq \mathrm{Card}(\mathcal{W}) \cdot (\expect_{i\sim \mathcal{W}}[w_{t,i}])^2$ where $\mathcal{W} \subseteq [m]$, Step 2.1 guarantees:
\begin{eqnarray}
  J([m]_t,t) & \geq & u_t \cdot J([m],t), \forall t = 1, 2, ... \label{condUT},
\end{eqnarray}
\end{definition}
Notice that the sum of terms $\sum_{t=1}^T u_t$ is strictly increasing and thus invertible. Let $U:\mathbb{N}_* \rightarrow \mathbb{R}_+$ such that $U(T) \defeq \sum_{t=1}^T u_t$. The role of $J$ is fundamental in our results and can guide the choice of $\mathcal{X}_t$ in Step 2.1: in short, the larger $u_t$, the better the rates. Lemma \ref{lem-J} gives a concrete and intuitive simplification of $J$ in the case of decision trees. In the most general case, it is good to keep in mind the intuition of boosting that the weight of an example is larger as the outcome of the current classifier gets \textit{worse}. Hence, \eqref{condUT} encourages focus on $\mathcal{X}_t$ with a large number of examples ($\mathrm{Card}([m]_t)$) \textit{and} with large weights ($\expect_{i\sim [m]_t}[w_{t,i}]$) -- hence with subpar current classification.
\begin{theorem}\label{th-boost-pbls}
Suppose the following assumptions are satisfied on the loss and weak learner:
  \begin{itemize}
  \item [\textbf{LOSS}] the loss is strictly proper differentiable; its partial losses are such that $\exists \kappa > 0, C \in \mathbb{R}$,
    \begin{eqnarray}
     \partialloss{-1}(0), \partialloss{1}(1) & \geq & C,\\
\inf \{\partialloss{-1}'-\partialloss{1}' \} & \geq & \kappa.
    \end{eqnarray}
    \item [\textbf{WLA}] There exists a constant $\gammawla >0$ such that at each iteration $t\in [T]$, the weak hypothesis $h_t$ returned by $\weak$ satisfies\footnote{The quantity in the absolute value is sometimes called the (normalized) edge of $h_t$; it takes values in $[-1,1]$.}
  \begin{eqnarray}
\left|\sum_{i\in [m]_t} \frac{w_{t,i}}{\sum_{j\in [m]_t} w_{t,j}} \cdot y^*_i \cdot \frac{h_{t}(\ve{x}_i)}{\max_{j\in [m]_t}|h_t(\ve{x}_j)|}\right| & \geq & \gammawla .\label{defWLA3}
  \end{eqnarray}
\end{itemize}
Then for any $\theta \geq 0, \varepsilon > 0$, letting $\underline{w}(\theta) \defeq \min\{1 - ({-\poibayesrisk'})^{-1}(\theta), ({-\poibayesrisk'})^{-1}(-\theta)\}$, if \topdowngen~is run for at least
  \begin{eqnarray}
T & \geq & U^{-1}\left(\frac{2\left(\popsur(H_{0}, \mathcal{S}) - C\right)}{\kappa \cdot \varepsilon^2 \underline{w}(\theta)^2 \gammawla^2}\right)\label{eqWITHPHIX}
  \end{eqnarray}
iterations, then we are guaranteed
\begin{eqnarray}
\pr_{i\sim [m]}[y^*_i H_T(\ve{x}_i) \leq \theta] & < & \varepsilon.\label{boundEdges}
\end{eqnarray}
Here, $U$ is built from a sequence of $u_t$ such that the choice of $\mathcal{X}_t$ in Step 2.1 is $u_t$ compliant.
  \end{theorem}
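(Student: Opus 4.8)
The plan is to run the standard boosting potential-decrease argument, but using the population surrogate risk $\popsur(H_t,\mathcal S)$ of \eqref{eqpopsurrisk2} as the potential, and to exploit the Bregman-divergence structure of \eqref{deflossgen} together with the weight function \eqref{defWEIGHTS}. First I would show that at each iteration $t$, the restricted update on $\mathcal X_t$ produces a strict decrease
\[
\popsur(H_{t-1},\mathcal S) - \popsur(H_{t},\mathcal S) \;\geq\; \frac{\kappa}{2}\cdot \frac{1}{m}\cdot J([m]_t,t)\cdot \gammawla^2 ,
\]
or something of this shape. The key algebraic identity is that the choice of $\alpha_t$ in Step 2.3, equation \eqref{defAlpha}, is exactly the first-order stationarity condition $\partial_{\alpha}\popsur = 0$ for the one-dimensional fit on $\mathcal X_t$: differentiating $\popsur$ in $\alpha$ and using $w((\ve x_i,y_i),H) = y - y^*({-\poibayesrisk'})^{-1}(H(\ve x_i))$ yields precisely $\sum_{i\in[m]_t} w((\ve x_i,y_i),H_t)\, y_i^* h_t(\ve x_i)=0$. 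So $H_t$ is the minimiser along that line, and the decrease can be lower-bounded by a second-order (Bregman) term. Here the \textbf{LOSS} assumption $\inf\{\partialloss{-1}'-\partialloss{1}'\}\geq\kappa$ enters: it gives strong convexity of $\philoss$ (equivalently a lower bound on $-\poibayesrisk''$ via $({-\poibayesrisk'})^{-1}$), which converts the vanishing-gradient condition into a quadratic lower bound on the drop, with the quadratic being (roughly) $\kappa/(2m)$ times $\bigl(\sum_{i\in[m]_t} w_{t,i} y_i^* h_t(\ve x_i)/\max|h_t|\bigr)^2 \big/ \mathrm{Card}([m]_t)$; Cauchy–Schwarz / the WLA bound \eqref{defWLA3} turns that into $\kappa\gammawla^2 J([m]_t,t)/(2m)$, and Lemma \ref{solALPHA} guarantees $\alpha_t$ is finite so the step is legitimate.

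Next I would telescope: summing over $t=1,\dots,T$,
\[
\popsur(H_0,\mathcal S) - \popsur(H_T,\mathcal S) \;\geq\; \frac{\kappa\,\gammawla^2}{2m}\sum_{t=1}^T J([m]_t,t)\;\geq\;\frac{\kappa\,\gammawla^2}{2m}\sum_{t=1}^T u_t\, J([m],t),
\]
using $u_t$-compliance \eqref{condUT}. Since $\popsur(H_T,\mathcal S)\geq \expect_i[{-\poibayesrisk}(y_i)] + \expect_i[\philoss(\cdot)] \geq C$ by the lower bound $\partialloss{-1}(0),\partialloss{1}(1)\geq C$ (I will need to check this correctly bounds the surrogate risk from below — that $\inf\philoss = \poibayesrisk(0)\geq$ something and the Bregman term is nonnegative, giving $\popsur(H_T,\mathcal S)\geq C$ in the notation used), we get $\sum_{t=1}^T u_t J([m],t) \leq 2m(\popsur(H_0,\mathcal S)-C)/(\kappa\gammawla^2)$. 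Then I would argue that if the edge bound \eqref{boundEdges} fails, i.e. $\pr_{i\sim[m]}[y_i^* H_T(\ve x_i)\leq\theta]\geq\varepsilon$, then $J([m],t)$ stays bounded below by $\varepsilon^2\underline w(\theta)^2$ for every $t$: indeed $w_{t,i} = w((\ve x_i,y_i),H_{t-1})$ and, when $y_i^* H_{t-1}(\ve x_i)\leq\theta$, monotonicity of $({-\poibayesrisk'})^{-1}$ forces $w_{t,i}\geq \underline w(\theta)$ (this is exactly why $\underline w(\theta)$ is defined as the min of $1-({-\poibayesrisk'})^{-1}(\theta)$ and $({-\poibayesrisk'})^{-1}(-\theta)$, covering the $y_i^*=+1$ and $y_i^*=-1$ cases). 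Hence $\expect_{i\sim[m]}[w_{t,i}]\geq \varepsilon\underline w(\theta)$, so $J([m],t)=m(\expect_i w_{t,i})^2\geq m\varepsilon^2\underline w(\theta)^2$, and $\sum_{t=1}^T J([m],t)\geq T m\varepsilon^2\underline w(\theta)^2$. Combining with the telescoped inequality and $U(T)=\sum u_t$ — here I would need to be slightly careful to match the $\sum u_t J([m],t)$ form with $U(T)$ times the uniform lower bound $m\varepsilon^2\underline w(\theta)^2$ — gives $U(T)\,m\varepsilon^2\underline w(\theta)^2 \leq 2m(\popsur(H_0,\mathcal S)-C)/(\kappa\gammawla^2)$, i.e. $U(T)\leq 2(\popsur(H_0,\mathcal S)-C)/(\kappa\varepsilon^2\underline w(\theta)^2\gammawla^2)$, contradicting \eqref{eqWITHPHIX} once $T\geq U^{-1}(\cdot)$ (using monotonicity and invertibility of $U$ noted after Definition \ref{defUT-COMP}). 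That yields \eqref{boundEdges}.

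The main obstacle I anticipate is the per-iteration decrease lemma, specifically making the second-order lower bound clean in the general (asymmetric) case. The subtlety is that \eqref{deflossgen} is a Bregman divergence in the probability parameterisation while the update acts additively in the real-valued parameterisation $H$, so the drop is most naturally a Bregman divergence $D_{{-\poibayesrisk}}$ between consecutive posterior estimates $({-\poibayesrisk'})^{-1}(H_{t-1})$ and $({-\poibayesrisk'})^{-1}(H_t)$ on $\mathcal X_t$; I must lower-bound this by the squared-edge quantity using $\kappa$, which means controlling $-\poibayesrisk''$ uniformly — the assumption $\inf\{\partialloss{-1}'-\partialloss{1}'\}\geq\kappa$ is presumably exactly $\inf(-\poibayesrisk'')\geq\kappa$ after differentiating \eqref{defpoibayesrisk}, and I would verify that identity first. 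A secondary technical point is handling the normalisation by $\max_{j\in[m]_t}|h_t(\ve x_j)|$ consistently between the WLA \eqref{defWLA3} (normalised edge) and the unnormalised quantity appearing in $\alpha_t$'s defining equation \eqref{defAlpha}; scaling $h_t$ does not change the line it defines, so WLOG $\max|h_t|=1$ on $\mathcal X_t$, which reconciles the two. Finally, the case where $\weak$ returns nil (all edges zero) does not arise under \textbf{WLA} since $\gammawla>0$, so the algorithm runs the full $T$ iterations.
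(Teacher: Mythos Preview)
Your overall route is the paper's: express the per-step drop $\popsur(H_{t-1},\mathcal S)-\popsur(H_t,\mathcal S)$ as a Bregman divergence $D_{-\poibayesrisk}$ between consecutive posterior estimates on $[m]_t$, invoke $(-\poibayesrisk)''=\partialloss{-1}'-\partialloss{1}'\ge\kappa$ for a quadratic lower bound, then Cauchy--Schwarz together with the stationarity condition \eqref{defAlpha} and \textbf{WLA} to obtain a decrease $\ge(\kappa\gammawla^2/2m)\,J([m]_t,t)$, telescope through $u_t$-compliance, and bound $\popsur$ below by $C$. All of this matches the paper's Lemma/Theorem chain (including your anticipated identity for $(-\poibayesrisk)''$ and the harmless normalisation $\max|h_t|=1$).

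There is, however, one genuine gap. You assert that if \eqref{boundEdges} fails, i.e.\ $\pr_{i\sim[m]}[y_i^*H_T(\ve x_i)\le\theta]\ge\varepsilon$, then $J([m],t)\ge m\,\varepsilon^2\underline w(\theta)^2$ for \emph{every} $t\le T$. This does not follow: $w_{t,i}$ is a function of $H_{t-1}$, not of $H_T$, so a margin statement about $H_T$ gives no control on $\expect_{i\sim[m]}[w_{t,i}]$ at earlier iterations. The paper handles exactly this point by a two-case split on whether the weights are ``$\zeta$-dense'' ($\overline w_t\ge\varepsilon\,\underline w(\theta)$) at every iteration. If density ever fails at some $t+1$, a direct Markov-type argument on the weight function gives $\pr[y_i^*H_t\le\theta]<\varepsilon$ immediately. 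If density holds throughout, your telescoped inequality applies and yields $\popsur(H_T,\mathcal S)\le\Phi$; but to pass from this to the margin statement \emph{about $H_T$} the paper invokes a separate lemma bounding $\popsur(H_T,\mathcal S)$ from below by $\varepsilon\,\underline\philoss(\theta)+(1-\varepsilon)\min_i\underline\philoss(y_i^*H_T(\ve x_i))$ whenever $\pr[y_i^*H_T\le\theta]>\varepsilon$, where $\underline\philoss(z)=\min\{\philoss(z),\philoss(-z)-z\}$. Your scheme can be repaired along these lines, but as written the step ``failure at $T$ $\Rightarrow$ uniform weight density for all $t$'' is the missing bridge.
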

  Proof in \supplement, Section \ref{proof-th-boost-pbls}. The proof of the Theorem involves as intermediate step the proof that the surrogate $\popsur(H_T, \mathcal{S})$ is also boosted, which is of independent interest given \cite{lsRC}'s framework and the potential asymmetry of the loss (Theorem \ref{th-boost-1} in \supplement). Figure \ref{f-all-functions} depicts some key functions used in \topdowngen~and Theorem \ref{th-boost-pbls}.
\begin{figure}[t]
\begin{center}
\includegraphics[trim=30bp 280bp 500bp 20bp,clip,width=0.45\linewidth]{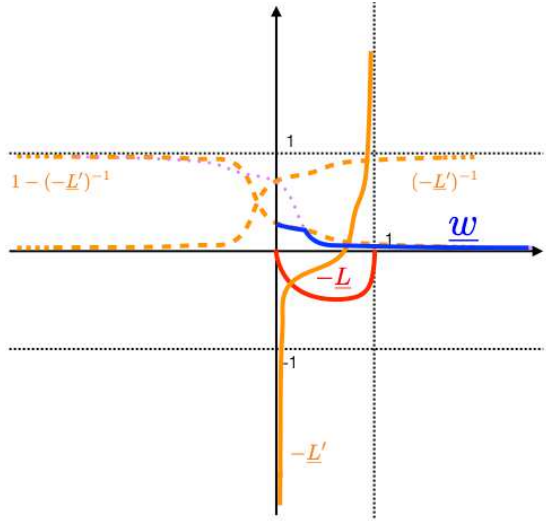} 
\end{center}
\caption{Schematic depiction of key functions used for weights \eqref{defWEIGHTS} and Theorem \ref{th-boost-pbls} on an example of loss.}
  \label{f-all-functions}
\end{figure}
  
  \begin{remark}
    The \textbf{LOSS} requirements are weak. It can be shown that strict properness implies $\inf \{\partialloss{-1}'-\partialloss{1}' \} > 0$ \cite[Theorem 1]{rwCB}; since the domain of the partial losses is closed, we are merely naming the strictly positive infimum with condition $\inf \{\partialloss{-1}'-\partialloss{1}' \} \geq \kappa > 0$. The "extremal" value condition for partial losses ($\partialloss{-1}(0), \partialloss{1}(1) \geq C$) is also weak as if it did not hold, partial losses would not be lower-bounded on each's respective best possible prediction, which would make little sense. Usually, $C=0$ (the best predictions occur no loss) and such losses, that include many popular choices like square-, log-, Matusita, are called "fair" \cite{rwCB}.
\end{remark}

We now give five possible instantiations of \topdowngen, each with separate discussion about $u_t$ compliance and boosting rates. We start by the two most important ones (linear separators and decision trees), providing additional details for decision trees on how \topdowngen~emulates well known algorithms.

\paragraph{Application of \topdowngen~$\#1$: linear separators (\cls)} This is a trivial use of \topdowngen.

\noindent $\triangleright$ \underline{\textit{$u_t$ compliance and the weak learner}}: $\mathcal{X}_t = \mathcal{X}, \forall t$ so we trivially have $u_t = 1 (\forall t)$ compliance and the weak learner returns an index of a feature to leverage.

\noindent $\triangleright$ \underline{\textit{Boosting rate}}: we have the guarantee that $\pr_{i\sim [m]}[y^*_i H_T(\ve{x}_i) \leq \theta] <  \varepsilon$ if
      \begin{eqnarray}
T & \geq & \underbrace{\frac{2\left(\popsur(H_{0}, \mathcal{S}) - C\right)}{\kappa \cdot \varepsilon^2 \underline{w}(\theta)^2 \gammawla^2}}_{\defeq \bcls} = \tilde{O} \left(\frac{1}{\varepsilon^2 \gammawla^2}\right),\label{boost-rate-LS}
      \end{eqnarray}
      a dependence (the tilda removes dependences in other factors) that fits to the general optimal lower-bound in $\gammawla$ \cite{aghmBS} but is suboptimal in $\varepsilon$, albeit not far from the lowerbound of $O(1/\varepsilon)$ \cite{tAP}. Both the algorithm and its analysis generalises a previous one for linear separators and symmetric losses \cite{nnOT}.

      \noindent $\triangleright$ \underline{\textit{Effect of Long and Servedio's data}}: Since \topdowngen~falls in the negative result's boosting blueprint of \cite[Section 2.5]{lsRC}, it does face the negative result of \cite{lsRC}. In fact, we can show a more impeding result directly in the setting of Lemma \ref{lem-GEN1}, \textit{i.e.} without the rotation trick of Lemma \ref{lem-Rot}, as with the square loss (which allows to compute quantities in closed form), \topdowngen~hits a classifier as bad as the fair coin on $\mathcal{S}_{\mbox{\tiny{clean}}}$ in at most 2 iterations. This is shown and discussed in \supplement, Section \ref{proof-sidenegative}.

\paragraph{Application of \topdowngen~$\#2$: decision trees (\cdt)} This is a slightly more involved use of \topdowngen, from the "location" of the weak learner to the perhaps surprising observation that in this case, \topdowngen~emulates and generalizes well known top-down induction schemes.

\noindent $\triangleright$ \underline{\textit{$u_t$ compliance and the weak learner}}: we investigate $u_t$ compliance from the general case where $[m]_t \in \mathcal{P}([m])$, where $\mathcal{P}([m])$ is a partition of $[m]$ in $N_t$ subsets. Jensen's inequality brings
        \begin{eqnarray*}
          \sum_{\mathcal{W} \in \mathcal{P}([m])} J(\mathcal{W},t) &= & m\cdot \sum_{\mathcal{W} \in \mathcal{P}([m])} \frac{J(\mathcal{W},t)}{m} \\
          & = & m \cdot \expect_{\mathcal{W} \sim \mathcal{P}([m])} \left(\expect_{i\sim \mathcal{W}}[w_{t,i}]\right)^2 \\
          & \geq & m \cdot \left(\expect_{\mathcal{W} \sim \mathcal{P}([m])} \left[\expect_{i\sim \mathcal{W}}[w_{t,i}]\right]\right)^2\nonumber\\
          & & = m \cdot \left(\expect_{i\sim [m]}[w_{t,i}]\right)^2 =  J([m],t),
        \end{eqnarray*}
        therefore there exists $\mathcal{W}^* \in \mathcal{P}([m])$ such that $J(\mathcal{W}^*,t) \geq (1/N_t) \cdot J([m],t)$ and picking any such "heavy" subset of indices $[m]_t = \mathcal{W}^*$ guarantees $u_t$ compliance for $u_t = 1/N_t$. Applied to a decision tree with $t$ leaves, we see that we can guarantee $u_t \geq 1/t$, implying $\mathcal{X}_t$ is the domain of a leaf of the current tree and the weak learner is used to find splits. 

        \noindent \textit{Regarding the weak learner}, \topdowngen~iteratively replaces a leaf in the current tree by a decision stump. There are two strategies for that: the first consists in asking the weak learner for one complete split, just like in \cite{kmOT}, but \topdowngen~would then fit a single correction (leveraging coefficient $\alpha_.$) for both leaves and this would be suboptimal. To correct every single leaf prediction separately, we let the weak learner return a split and a corresponding real-valued prediction for \textit{half the split}, \textit{e.g.} for "split$\_$predicate = true". Quite remarkably, we show that if this meets the \textbf{WLA}, then so does \textit{the other half} (for "split$\_$predicate = false"). In other words, we get two \textbf{WLA} compliant weak hypotheses for the price of a single query to the weak learner, and both turn out to define the split sought. This is formalized in the following Lemma, which assumes wlog that the split variable is $x_i$, continuous.
\begin{lemma}\label{lem-split-gives-wla}
Suppose the weak learner returns $\mathrm{1}_{x_i \geq a} \cdot h_t$ ($h_t \in \mathbb{R}_*$ constant) that meets the \textbf{WLA} for the half split. Then the "companion" hypothesis $h'_t(\ve{x}) \defeq \mathrm{1}_{x_i < a} \cdot (-h_t)$ satisfies the \textbf{WLA}.
\end{lemma}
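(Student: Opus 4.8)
The plan is to peel the weak-learning condition \eqref{defWLA3} off both the returned hypothesis and its companion, rewriting each as the absolute value of a \emph{signed} weighted sum over one side of the split divided by the total leaf weight, and then to invoke the leveraging step \eqref{defAlpha} to show the two sides carry opposite such sums. Concretely, fix the leaf $L$ being split, write $\mathcal{X}_L$ for its region, $[m]_L$ for the indices of $\mathcal{S}$ it contains (so $\mathcal{S}_t=\mathcal{S}_L$, $[m]_t=[m]_L$), $W\defeq\sum_{j\in[m]_L}w_{t,j}>0$ for the total weight the weak learner sees, and let $A$ (resp.\ $B$) collect the indices in $[m]_L$ on the true (resp.\ false) side of $x_i\geq a$. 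Since the returned hypothesis equals the nonzero constant $h_t$ on $A$ and $0$ on $B$, the normaliser $\max_{j\in[m]_L}|h_t(\ve{x}_j)|$ equals $|h_t|$ and cancels, so the left-hand side of \eqref{defWLA3} for it is $\bigl|\sum_{j\in A}w_{t,j}y^*_j\bigr|/W$; likewise $h'_t$ equals the nonzero constant $-h_t$ on $B$ and $0$ on $A$, so the left-hand side of \eqref{defWLA3} for it (over the same leaf and weights) is $\bigl|\sum_{j\in B}w_{t,j}y^*_j\bigr|/W$. Thus the hypothesis of the Lemma reads $\bigl|\sum_{j\in A}w_{t,j}y^*_j\bigr|\geq\gammawla W$ and the goal is $\bigl|\sum_{j\in B}w_{t,j}y^*_j\bigr|\geq\gammawla W$. (Here $A\neq\emptyset$, else the returned hypothesis is identically $0$ on $\mathcal{S}_L$ and cannot meet \eqref{defWLA3}; $B\neq\emptyset$ by the identity below; and I use $w((\ve{x}_j,y_j),H)\cdot y^*_j=y_j-({-\poibayesrisk'})^{-1}(H(\ve{x}_j))$, which follows from \eqref{defWEIGHTS} together with $y^*_j=2y_j-1$ and $(y^*_j)^2=1$.)

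The key is the balance identity $\sum_{j\in[m]_L}w_{t,j}y^*_j=0$. The weights $\{w_{t,j}\}_{j\in[m]_L}$ are exactly those written by Step 2.4 at the earlier iteration $s$ that attached $L$ as a child when $L$'s parent was split, and they have not changed since: every boosting step strictly between $s$ and $t$ splits a leaf whose region is disjoint from $\mathcal{X}_L$ (the leaves of a tree partition $\mathcal{X}$), and Step 2.4 rewrites only indices inside the split region. At iteration $s$ the hypothesis attaching $L$ is a nonzero constant on $\mathcal{X}_L=\mathcal{X}_s$, so the leveraging equation \eqref{defAlpha} there reduces to $\sum_{j\in[m]_L}w((\ve{x}_j,y_j),H_s)\,y^*_j=0$, i.e.\ $\sum_{j\in[m]_L}w_{s+1,j}y^*_j=0$ (equivalently, $H_s$ predicts the empirical positive rate on $L$ through the link); hence $\sum_{j\in[m]_L}w_{t,j}y^*_j=0$. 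The sole boundary case is $L$ being the root, which I would handle by initialising $H_0$ to the base-score constant $-\poibayesrisk'(\bar y)$ with $\bar y$ the empirical positive rate of $\mathcal{S}$ — equivalently, by prepending one degenerate iteration that line-searches the constant model over all of $\mathcal{X}$, which by convexity of $\philoss$ (Lemma \ref{lem-PHI1}) only lowers $\popsur$ and is invisible to the boosting analysis — so that $\sum_{j\in[m]}w_{1,j}y^*_j=0$ holds at the root too (with $H_0=0$ it already holds whenever the loss is symmetric and $\mathcal{S}$ is class-balanced).

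Given the identity, the Lemma is immediate: $\sum_{j\in B}w_{t,j}y^*_j=-\sum_{j\in A}w_{t,j}y^*_j$, hence $\bigl|\sum_{j\in B}w_{t,j}y^*_j\bigr|=\bigl|\sum_{j\in A}w_{t,j}y^*_j\bigr|\geq\gammawla W$, which is \eqref{defWLA3} for $h'_t$ with the same edge $\gammawla$. (If one prefers to check \eqref{defWLA3} for $h'_t$ at the moment it is actually attached, over the sub-region $\{x_i<a\}$ of $L$ alone, the conclusion only strengthens: the weights on $B$ are still $w_{t,\cdot}$ because that region is disjoint from the one where the $h_t$-correction was applied, and the normalising weight shrinks from $W$ to $\sum_{j\in B}w_{t,j}\leq W$.) The one genuinely delicate ingredient is the balance identity, which rests on the bookkeeping that the weak learner at $L$ sees precisely the weights left by Step 2.3's line search at $L$'s creation — leaf regions being pairwise disjoint — plus the base-score treatment of the root; the remainder (peeling the normalisers off \eqref{defWLA3}, reading off the two sums) is a one-line computation.
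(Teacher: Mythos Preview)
Your proof is correct and follows the same route as the paper's: both hinge on the balance identity at the leaf being split, which you write as $\sum_{j\in[m]_L}w_{t,j}y^*_j=0$ and the paper writes as $W^+=W^-$ (total positive-class weight equals total negative-class weight). Once this is in hand, the equality of the two absolute edges is a one-line computation in either presentation.

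You actually go further than the paper in one respect: the paper simply asserts $W^+=W^-$ as ``observation (1)'' without justification, whereas you derive it from the leveraging equation \eqref{defAlpha} at the iteration that created $L$, together with the disjointness of leaf regions (so intervening iterations do not touch the weights on $L$). You also correctly flag the root as a boundary case where the identity need not hold under the initialization $H_0\equiv 0$ (unless, e.g., the loss is symmetric and the data class-balanced), and your proposed fix---prepending one iteration that line-searches a constant over all of $\mathcal{X}$---is exactly what the paper implicitly assumes elsewhere (cf.\ the remark after Lemma~\ref{lem-BOO2} that ``the root prediction of a \cdt, after leveraging in Step 2.3'' equals the link of the empirical positive rate). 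One small slip: you write $\mathcal{X}_L=\mathcal{X}_s$, but at iteration $s$ the region $\mathcal{X}_s$ is the \emph{parent} of $L$, not $L$ itself; the argument is unaffected because $h_s$ vanishes outside $L$, so \eqref{defAlpha} still collapses to $\sum_{j\in L}w_{s+1,j}y^*_j=0$.
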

Proof in \supplement, Section \ref{proof-lem-split-gives-wla}. The choice of the leaf to split is simple: denote $\leafset(H)$ the set of leaves of \cdt~$H$ and $\leaf$ a general leaf. Since the leaves of a \cdt~induce a partition of the tree, we denote $J(\leaf)$ the expression of $J(\mathcal{W},t)$ for $\mathcal{W} = \{i:\ve{x}_i \mbox{ reaches }\leaf\}$, omitting index $t$ for readability. Let us analyze what $\mathcal{W}^*$ would satisfy in this case.
      \begin{lemma}\label{lem-J}
        We have
        \begin{eqnarray}
          J(\leaf) & \propto & p_\leaf \cdot (\underbrace{p^+_\leaf (1-p^+_\leaf)}_{=\bayessqrisk(p^+_\leaf)})^2,
        \end{eqnarray}
        where $p_\leaf \defeq m_\leaf / m, p^+_\leaf \defeq m^+_\leaf / m_\leaf$, $m_\leaf \defeq \mathrm{Card}(\{i:\ve{x}_i \mbox{ reaches }\leaf\}), m^+_\leaf \defeq \mathrm{Card}(\{i:\ve{x}_i \mbox{ reaches }\leaf \wedge y_i = 1\})$ and $\bayessqrisk(u) = u(1-u)$ is Bayes risk for the square loss.
      \end{lemma}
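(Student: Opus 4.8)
The plan is to compute $J(\leaf)$ straight from its definition, using the fact that \topdowngen, when it grows a \cdt, makes the estimated posterior constant on every leaf and equal to that leaf's empirical positive rate. Write $\mathcal{W} = \{i : \ve{x}_i \text{ reaches }\leaf\}$, so $\mathrm{Card}(\mathcal{W}) = m_\leaf = m\,p_\leaf$ and $J(\leaf) = m\,p_\leaf\cdot(\expect_{i\sim\mathcal{W}}[w_{t,i}])^2$; since $m$ is fixed it suffices to show $\expect_{i\sim\mathcal{W}}[w_{t,i}] \propto p^+_\leaf(1-p^+_\leaf)$. From \eqref{defWEIGHTS}, $w((\ve{x}_i,y_i),H) = y_i - y^*_i\cdot({-\poibayesrisk'})^{-1}(H(\ve{x}_i))$; because $H(\ve{x}_i)$ is the same number for every $i\in\mathcal{W}$, the quantity $\hve{\eta}_\leaf \defeq ({-\poibayesrisk'})^{-1}(H(\ve{x}_i))$ is a single value attached to $\leaf$, and the weight equals $1-\hve{\eta}_\leaf$ on the positive examples of the leaf ($y_i=1,\ y^*_i=1$) and $\hve{\eta}_\leaf$ on the negative ones ($y_i=0,\ y^*_i=-1$). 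Hence $\expect_{i\sim\mathcal{W}}[w_{t,i}] = p^+_\leaf(1-\hve{\eta}_\leaf) + (1-p^+_\leaf)\hve{\eta}_\leaf$.

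The crux is the claim $\hve{\eta}_\leaf = p^+_\leaf$. Here the plan is to examine the iteration $t$ at which $\leaf$ is created as one of the two children of a split: its prediction is fixed by a weak hypothesis of the form $h_t = \iver{x_i\geq a}\cdot c$ on $\mathcal{X}_t$ equal to the parent's domain (or, symmetrically, by the companion $\iver{x_i<a}\cdot(-c)$ supplied by Lemma \ref{lem-split-gives-wla}), together with the leveraging coefficient $\alpha_t$ solving \eqref{defAlpha}. Since $h_t(\ve{x}_i)$ is the nonzero constant $c$ precisely on the indices $i\in\mathcal{W}$ and $0$ elsewhere in $[m]_t$, equation \eqref{defAlpha} collapses to $\sum_{i\in\mathcal{W}} w((\ve{x}_i,y_i),H_t)\cdot y^*_i = 0$, i.e. $\sum_{i\in\mathcal{W}}(y_i - y^*_i\hve{\eta}_\leaf)\,y^*_i = 0$. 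Using the binary-encoding identities $y_iy^*_i = y_i$ and $(y^*_i)^2 = 1$, this is $\sum_{i\in\mathcal{W}}(y_i - \hve{\eta}_\leaf) = 0$, i.e. $m^+_\leaf = m_\leaf\,\hve{\eta}_\leaf$, so $\hve{\eta}_\leaf = m^+_\leaf/m_\leaf = p^+_\leaf$. I would also record that this value is never disturbed later: a subsequent iteration either splits a different leaf (leaving $H$ unchanged on $\leaf$) or splits $\leaf$ itself, in which case $\leaf$ is no longer a leaf; so the value read off the current tree at a present leaf is the one fixed at its creation.

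Plugging $\hve{\eta}_\leaf = p^+_\leaf$ back in gives $\expect_{i\sim\mathcal{W}}[w_{t,i}] = p^+_\leaf(1-p^+_\leaf) + (1-p^+_\leaf)p^+_\leaf = 2\,p^+_\leaf(1-p^+_\leaf) = 2\,\bayessqrisk(p^+_\leaf)$, whence $J(\leaf) = 4m\cdot p_\leaf\cdot\bayessqrisk(p^+_\leaf)^2 \propto p_\leaf\cdot\bayessqrisk(p^+_\leaf)^2$, which is the claim. The only genuinely delicate step is the identification $\hve{\eta}_\leaf = p^+_\leaf$: it amounts to recognizing that \eqref{defAlpha}, specialized to a constant half-split hypothesis, is exactly the first-order (properness) stationarity condition that forces a leaf's posterior estimate to agree with its empirical class frequency; the rest is bookkeeping. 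A minor side check is the base case — the root, where $H_0\equiv 0$ — but there the same argument runs verbatim for the first two half-splits, since $\mathcal{X}_t$ is then the whole domain.
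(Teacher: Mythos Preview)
Your proof is correct and follows essentially the same route as the paper's: both compute $J(\leaf)=m_\leaf\bigl(\expect_{i\sim\leaf}[w_{t,i}]\bigr)^2$ and use that $({-\poibayesrisk'})^{-1}(H_\leaf)=p^+_\leaf$ to reduce the average weight to $2p^+_\leaf(1-p^+_\leaf)$. The only difference is organisational: the paper obtains $({-\poibayesrisk'})^{-1}(H_\leaf)=p^+_\leaf$ by forward-citing \eqref{eqvalH} from the proof of Lemma~\ref{lem-BOO2}, whereas you re-derive it inline from \eqref{defAlpha} specialised to a constant half-split hypothesis --- which is exactly the computation done in Lemma~\ref{lem-BOO2}'s proof, so your version is simply more self-contained.
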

      Proof in \supplement, Section \ref{proof-lem-J}. Hence, the leaf to split in Step 2.1 has a good compromise between its "weight" ($p_\leaf$) and its local error (since $2 p^+_\leaf (1-p^+_\leaf) \geq \min\{p^+_\leaf,1-p^+_\leaf\}$). In traditional "tree-based" boosting papers (from \cite{kmOT} to \cite{nwLO}), one usually picks the heaviest leaf ($=\arg\max_\leaf p_\leaf$) but it may well be a leaf with zero error -- thus preventing boosting through splitting. Inversely, focusing only on large error to pick a leaf might point to leaves with too small weights to bring overall boosting compliance. Criterion $J(.)$ strikes a balance weight vs error in the choice.\\
\noindent $\triangleright$ \underline{\textit{Boosting rate}}: We have $u_t \geq 1/t$, with $\sum_{t=1}^T u_t \geq \int_0^T \mathrm{d}z/(1+z) = \log(1+T) \defeq U(T)$, and so we are guaranteed that $\pr_{i\sim [m]}[y^*_i H_T(\ve{x}_i) \leq \theta] <  \varepsilon$ if
      \begin{eqnarray}
T & \geq & \underbrace{\exp\left(\frac{2\left(\popsur(H_{0}, \mathcal{S}) - C\right)}{\kappa \cdot \varepsilon^2 \underline{w}(\theta)^2 \gammawla^2}\right)}_{\defeq \bcdt} = \exp(\bcls) = \exp \tilde{O} \left(\frac{1}{\varepsilon^2 \gammawla^2}\right)\label{boost-rate-DT}
      \end{eqnarray}
     ($\bcls$ defined in \eqref{boost-rate-LS}), which is comparable at $\theta = 0$ to the bound of \cite[Theorem 1]{kmOT} for CART and otherwise generalizes their results to margin/edge-based bounds. \\
      \noindent $\triangleright$ \underline{\textit{Miscellaneous}}:  we finish by a last analogy between \topdowngen~and classical \cdt~induction algorithms: there is a simple closed form solution for the leveraging coefficients $\alpha_.$ that simplifies the loss.
\begin{lemma}\label{lem-BOO2}
  Running \topdowngen~to learn a decision tree $H$ gives
  \begin{eqnarray}
\popsur(H, \mathcal{S}) & = & \expect_{\leaf \sim \leafset(H)}\left[\poibayesrisk (p^+_\leaf) \right], \label{eqCC}
  \end{eqnarray}
  where we recall $p^+_\leaf \defeq m^+_\leaf / m_\leaf$ and the weight of $\leaf$ is $m_\leaf / m$. Furthermore, the \topdowngen~prediction computed at leaf $\leaf$, $H_\leaf$, is:
  \begin{eqnarray}
    H_{\leaf} & = & ({-\poibayesrisk'}) \left(p^+_\leaf\right).
  \end{eqnarray}
\end{lemma}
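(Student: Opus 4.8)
I would prove the two displayed identities in order, the per-leaf prediction first since the risk identity rests on it. Throughout, let $\leaf$ denote a generic leaf of the \cdt~$H$ returned by \topdowngen, $\mathcal{W}_\leaf \defeq \{i : \ve{x}_i \text{ reaches } \leaf\}$ (the $\mathcal{W}_\leaf$ partition $[m]$), $H_\leaf$ the constant value of $H$ on $\mathcal{W}_\leaf$, and $q_\leaf \defeq ({-\poibayesrisk'})^{-1}(H_\leaf)$ the induced posterior estimate at $\leaf$. Strict properness makes $-\poibayesrisk$ strictly convex, hence $-\poibayesrisk'$ strictly increasing and $({-\poibayesrisk'})^{-1}$ (the inverse link) a well-defined increasing bijection; in particular the $\alpha_t$ solving \eqref{defAlpha} in the decision-tree case is unique, so ``the'' prediction at each leaf is well-defined.

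\emph{Step 1 (per-leaf prediction).} The plan is an induction on the growth of the tree. The base case is the root, whose prediction is set by an iteration with $\mathcal{X}_t = \mathcal{X}$ and $h_t$ a nonzero constant; the inductive step is a split of a leaf $\leaf$ on a predicate such as $x_i \geq a$, which (by Lemma~\ref{lem-split-gives-wla} and the \cdt~instantiation) corresponds to two iterations of \topdowngen~sharing a single query to \weak, each having $h_t$ a nonzero constant on the reach-set of one child and vanishing on the other, and leaving all other leaves untouched. For any such iteration and the child $\leaf'$ it affects, I would divide \eqref{defAlpha} by the constant value of $h_t$ and substitute $w((\ve{x}_i,y_i),H_t) = y_i - y_i^* q_{\leaf'}$; the elementary identities $y_i y_i^* = y_i$ and $(y_i^*)^2 = 1$ (valid since $y_i^* = 2y_i - 1$, $y_i \in \{0,1\}$) then collapse \eqref{defAlpha} to $\sum_{i \in \mathcal{W}_{\leaf'}} (y_i - q_{\leaf'}) = 0$, i.e. $q_{\leaf'} = m^+_{\leaf'}/m_{\leaf'} = p^+_{\leaf'}$, equivalently $H_{\leaf'} = ({-\poibayesrisk'})(p^+_{\leaf'})$. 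This preserves the invariant ``$q_\leaf = p^+_\leaf$ at every current leaf'' and gives the second displayed identity.

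\emph{Step 2 (risk identity).} From the surrogate form \eqref{eqpopsurrisk2}, $\popsur(H,\mathcal{S}) = \expect_{i\sim[m]}[\philoss(-H(\ve{x}_i)) - y_i H(\ve{x}_i)]$. I would rewrite the integrand via \eqref{deflossgen}: with $q = ({-\poibayesrisk'})^{-1}(H(\ve{x}))$, the model-dependent term equals $D_{-\poibayesrisk}(y \,\|\, q) + \poibayesrisk(y)$, and expanding the Bregman divergence of generator $-\poibayesrisk$ cancels the $\poibayesrisk(y)$ contributions and leaves the clean expression $\poibayesrisk(q) + \poibayesrisk'(q)\cdot(y - q)$ (so no fairness assumption on the loss is needed). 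Grouping the average over $[m]$ by leaves and invoking Step 1, on $\mathcal{W}_\leaf$ we have $q = p^+_\leaf$ while the leaf-average of $y_i - p^+_\leaf$ equals $m^+_\leaf/m_\leaf - p^+_\leaf = 0$, killing the linear term; what remains is $\sum_\leaf (m_\leaf/m)\,\poibayesrisk(p^+_\leaf) = \expect_{\leaf\sim\leafset(H)}[\poibayesrisk(p^+_\leaf)]$, which is the first displayed identity.

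\emph{Main obstacle.} Both steps are, once set up, routine algebra; the single delicate point is the bookkeeping of the induction in Step 1 --- handling the base case at the root and, above all, recognising that a split corresponds to two invocations of the booster (one query, two hypotheses via Lemma~\ref{lem-split-gives-wla}), each re-solving \eqref{defAlpha} on one child, with the relevant $h_t$ a nonzero constant on that child's reach-set so that \eqref{defAlpha} may be divided through. Once the invariant $q_\leaf = p^+_\leaf$ is correctly maintained, both identities follow immediately.
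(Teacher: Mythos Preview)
Your proposal is correct and follows essentially the same approach as the paper's proof: both solve \eqref{defAlpha} at a newly created leaf to obtain $({-\poibayesrisk'})^{-1}(H_\leaf)=p^+_\leaf$ and then substitute this into the surrogate to recover $\poibayesrisk(p^+_\leaf)$. The differences are only presentational --- you use the identities $y_i y_i^* = y_i$, $(y_i^*)^2=1$ where the paper splits by class, and you simplify the surrogate via the Bregman expansion $\poibayesrisk(q)+\poibayesrisk'(q)(y-q)$ where the paper applies the Fenchel equality $(-\poibayesrisk)^\star(-\poibayesrisk'(u)) = -u\,\poibayesrisk'(u)+\poibayesrisk(u)$ directly; both routes are the same identity.
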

Proof in \supplement, Section \ref{proof-lem-BOO2}. We conclude that running \topdowngen~to learn a decision tree is largely equivalent to the minimisation of classical \cdt~induction criteria \cite{bfosCA,qC4,kmOT,nwLO}, and our boosting rate analysis generalizes those to asymmetric losses and edge / margin bounds. One can also finally notice that we can easily transform a \cdt~learned using \topdowngen~to a classical \cdt~by "percolating" values down to the leaves, see Figure \ref{f-DT-transformation}. such a connection between both types of models is not new as it dates back to \cite{hnnRB} and was later exploited in various work (\textit{e.g.} \cite{lguedjsfsvBM}).
\begin{figure}[t]
  \begin{center}
    \begin{tabular}{c|c|c}
      \includegraphics[trim=5bp 350bp 370bp 10bp,clip,height=0.27\linewidth]{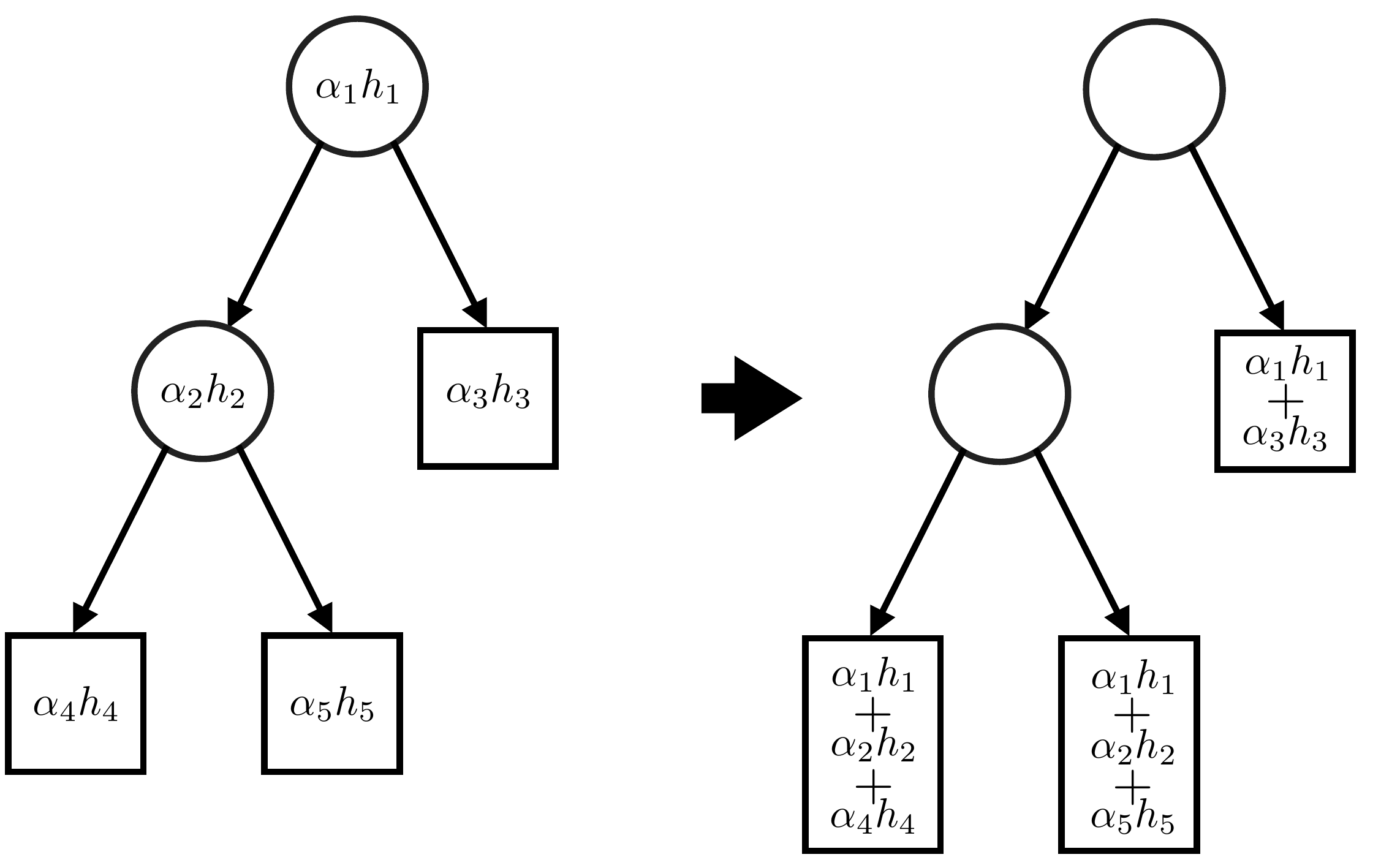} & \includegraphics[trim=5bp 390bp 680bp 20bp,clip,height=0.27\linewidth]{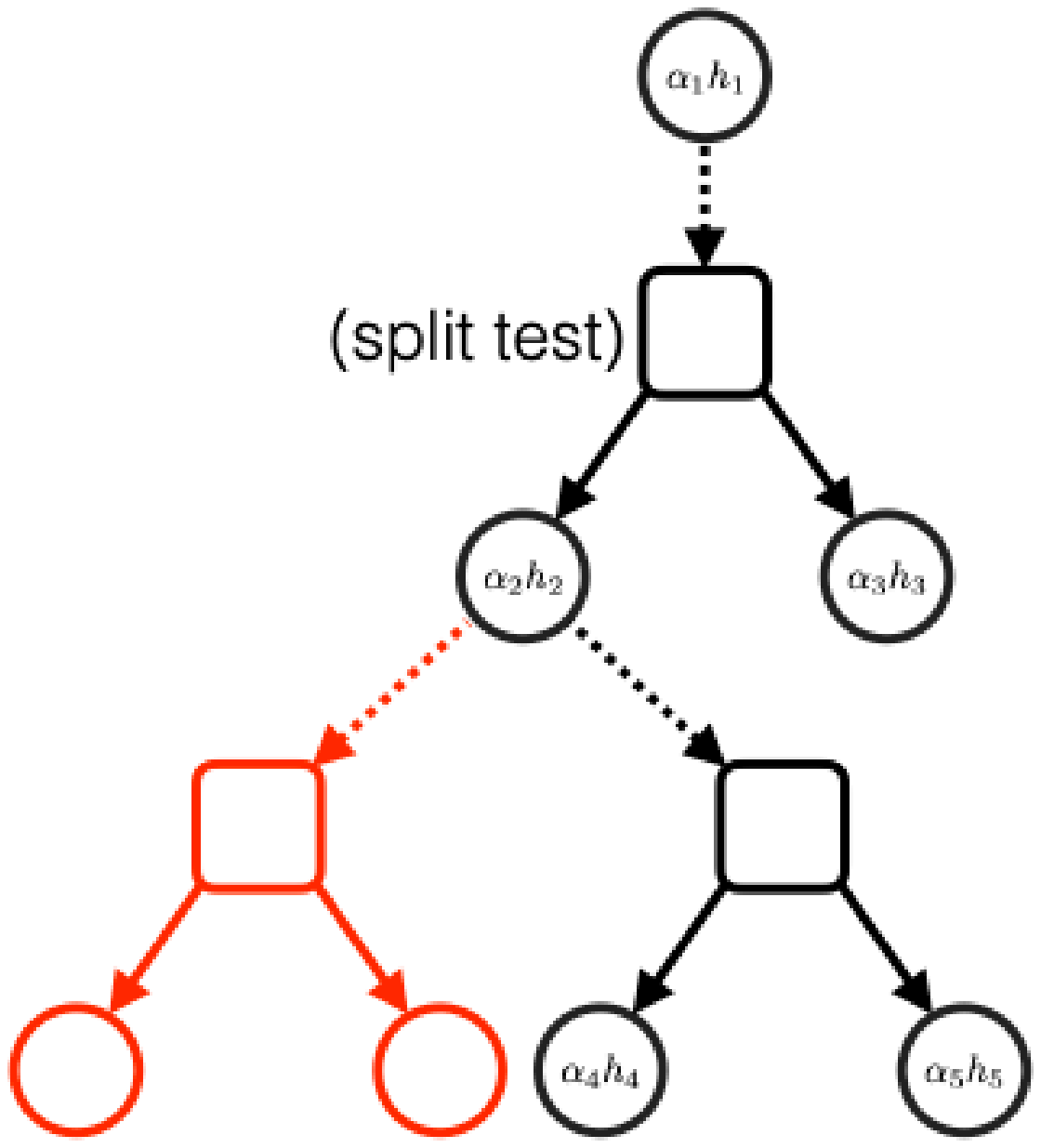} & \includegraphics[trim=50bp 380bp 630bp 20bp,clip,height=0.27\linewidth]{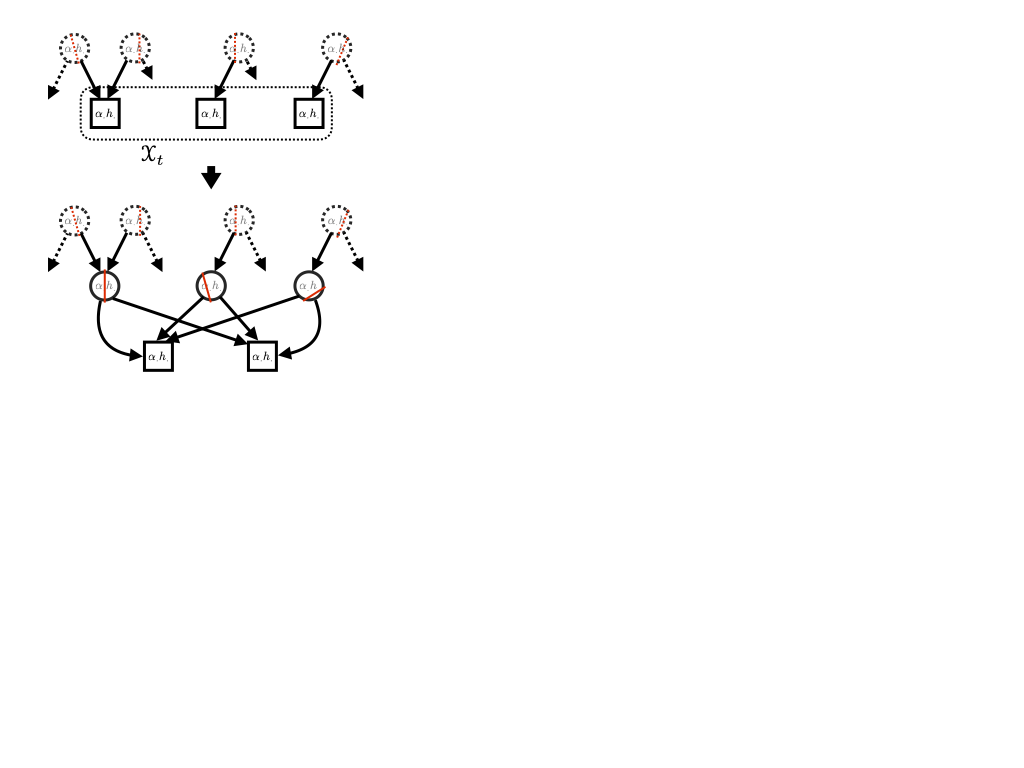} \\
      \end{tabular}
\end{center}
\caption{\textit{Left}: \cdt~learned using \topdowngen~and its "classical" equivalent \cdt. \textit{Center}: an equivalent representation using an alternating decision tree (black) and a more general \cadt~(black + red). \textit{Right}: Using \topdowngen~to learn a \clbp: the main difference with \cdt~is the current $\mathcal{X}_t$ is the union of the domains of several leaves, yielding larger $u_t$s and more efficient boosting.}
  \label{f-DT-transformation}
\end{figure}
      
\noindent $\triangleright$ \underline{\textit{Effect of Long and Servedio's data}}: consider the more general setting where we have noise-free data with $\{0,1\}$ posterior values, and then labels are flipped independently with probability $\etanoise < 1/2$. The noisy proportion of positive examples at a leaf $\leaf$, $\tilde{p}_\leaf^+$, satisfies $\tilde{p}_\leaf^+ = \etanoise + (1-2\etanoise) p_\leaf^+$ ($p_\leaf^+$ = noise-free proportion). Assume for simplicity we know the noise rate in advance (like in \cite{ksBI}) and do not have generalisation issues -- we learn on the whole domain (like \cite[Theorems 4, 6]{ksBI} or on the dataset of \cite{lsRC}). We say that the decision tree learned by \topdowngen~with noise is \textit{not affected by noise} if the \textit{sign} of leaves would be the same as if they were computed without noise. In the context of Long and Servedio's data, this implies $100\%$ accuracy on on the noise-free data.
\begin{lemma}  \label{lem-NoNoise}
  If one of the two conditions is satisfied:
  \begin{itemize}
  \item [(\textbf{S})] the loss $\ell$ is symmetric and $\etanoise < 1/2$, or
  \item [(\textbf{A})] the loss $\ell$ is asymmetric, $\etanoise < \underline{w}(0)$ and leaves are split until ($p^* \defeq ({-\poibayesrisk'})^{-1}(0)$):
    \begin{eqnarray}
\forall \leaf \in \leafset(H), \left(p^+_\leaf \leq \min \left\{\frac{p^*-\etanoise}{1-2\etanoise}, \frac{1}{2}\right\}\right) \vee \left(1-p^+_\leaf \leq \min \left\{\frac{(1-p^*)-\etanoise}{1-2\etanoise}, \frac{1}{2}\right\}\right), \label{condP}
      \end{eqnarray}
    \end{itemize}
    then the \cdt~learned by \topdowngen~is not affected by noise.
  \end{lemma}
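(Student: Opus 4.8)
The plan is to read the leaf predictions off Lemma~\ref{lem-BOO2} and reduce ``not affected by noise'' to an elementary comparison of leaf proportions against thresholds. By Lemma~\ref{lem-BOO2} the value \topdowngen~assigns at a leaf $\leaf$ is $H_\leaf = (-\poibayesrisk')(p^+_\leaf)$, and since we learn over the whole domain the noisy run replaces $p^+_\leaf$ by $\tilde p^+_\leaf = \etanoise + (1-2\etanoise)\,p^+_\leaf$, so there it outputs $(-\poibayesrisk')(\tilde p^+_\leaf)$. First I would record that $-\poibayesrisk'$ is strictly increasing: differentiating \eqref{defpoibayesrisk} and using that $v$ is stationary for $\poirisk(\cdot,v)$ gives $\poibayesrisk'(v)=\partialloss{1}(v)-\partialloss{-1}(v)$, hence $-\poibayesrisk''(v)=\partialloss{-1}'(v)-\partialloss{1}'(v)>0$ by strict properness. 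Thus $p^\star\defeq(-\poibayesrisk')^{-1}(0)\in(0,1)$ is well defined (recall $\mathrm{Im}(\estposterior)=[0,1]$) and $\mathrm{sign}((-\poibayesrisk')(u))=\mathrm{sign}(u-p^\star)$ for $u\in[0,1]$. So the learned \cdt~is not affected by noise exactly when $\tilde p^+_\leaf$ and $p^+_\leaf$ lie on the same side of $p^\star$ for every leaf $\leaf$.

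Next I would linearise the noise map. In both cases $\etanoise<1/2$, so $1-2\etanoise>0$ and, writing $q\defeq(p^\star-\etanoise)/(1-2\etanoise)$, one has $\tilde p^+_\leaf-p^\star=(1-2\etanoise)(p^+_\leaf-q)$; hence the requirement becomes $\mathrm{sign}(p^+_\leaf-q)=\mathrm{sign}(p^+_\leaf-p^\star)$ for every leaf. Two one-line computations, $q-p^\star=\etanoise(2p^\star-1)/(1-2\etanoise)$ and $q-1/2=(p^\star-1/2)/(1-2\etanoise)$, show that $1/2$, $p^\star$ and $q$ sit on the line with $p^\star$ between $1/2$ and $q$ (all three coincide iff $p^\star=1/2$). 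For case~(\textbf{S}): symmetry gives $\poibayesrisk(u)=\poibayesrisk(1-u)$, hence $\poibayesrisk'(1/2)=0$, i.e.\ $p^\star=q=1/2$, and the sign identity is a tautology -- so $\etanoise<1/2$ alone finishes it.

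For case~(\textbf{A}) I would first note that, the algorithm being allowed to use the (known) noise rate in \eqref{condP}, the hypothesis $\etanoise<\underline{w}(0)=\min\{p^\star,1-p^\star\}$ is exactly $0<q<1$ (it says $\etanoise<p^\star$ and $\etanoise<1-p^\star$), and in particular forces $\etanoise<1/2$. Using $1-q=\bigl((1-p^\star)-\etanoise\bigr)/(1-2\etanoise)$, condition \eqref{condP} reads precisely: every leaf has $p^+_\leaf\le\min\{q,1/2\}$ or $p^+_\leaf\ge\max\{q,1/2\}$, i.e.\ $p^+_\leaf$ avoids the open interval with endpoints $1/2$ and $q$. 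Since $p^\star$ lies in the closed interval with those endpoints, a leaf with $p^+_\leaf\le\min\{q,1/2\}$ has $p^+_\leaf-q\le 0$ and $p^+_\leaf-p^\star\le 0$, while one with $p^+_\leaf\ge\max\{q,1/2\}$ has both differences $\ge 0$; either way $\mathrm{sign}(p^+_\leaf-q)=\mathrm{sign}(p^+_\leaf-p^\star)$, which is what was needed. (On $\{0,1\}$-posterior data one simply splits to pure leaves, $p^+_\leaf\in\{0,1\}$, where \eqref{condP} holds with strict inequalities; this also makes plain that $\etanoise<\underline{w}(0)$ is precisely what keeps a pure leaf's noisy proportion, $\etanoise$ or $1-\etanoise$, on the correct side of $p^\star$, and that the only borderline value $p^+_\leaf=q$ -- where the noisy prediction is exactly $0$ -- does not arise.)

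The one delicate step will be the little bit of geometry in the middle paragraph: confirming the interleaving of $1/2$, $p^\star$ and $q$, and checking that the particular ``$\min\{\cdot,1/2\}$'' shape of \eqref{condP} (a hair stronger than sign preservation strictly demands, and phrased against the universal $1/2$ rather than $p^\star$) is indeed the right hypothesis, the borderline $p^+_\leaf=q$ included. Everything else is the closed form of Lemma~\ref{lem-BOO2} together with the monotonicity of $-\poibayesrisk'$.
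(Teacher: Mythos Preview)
Your proof is correct and follows essentially the same approach as the paper: both reduce ``not affected by noise'' to checking that $p^+_\leaf$ and $\tilde p^+_\leaf=\etanoise+(1-2\etanoise)p^+_\leaf$ lie on the same side of $p^\star$, and then verify that the two branches of \eqref{condP} rule out a sign flip. Your version is somewhat more explicit---naming the threshold $q=(p^\star-\etanoise)/(1-2\etanoise)$ and recording the interleaving of $1/2$, $p^\star$, $q$---whereas the paper argues by contraposition through the two inequalities $p^+_\leaf>p^\star>\tilde p^+_\leaf$ and $p^+_\leaf<p^\star<\tilde p^+_\leaf$; the content is the same.
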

  Proof in \supplement, Section \ref{proof-lem-NoNoise}. We have left cases distinct for readability but in fact case (\textbf{A}) encompasses case (\textbf{S}), as this latter implies $\underline{w}(0) = 1/2 = p^*$, making \eqref{condP} always true. If we denote $\mathrm{E}(H)$ the \textit{error} of the tree $H$ learned by \topdowngen~without noise and $\tilde{\mathrm{E}}(H)$ its error over noisified data, then we obtain from (\textbf{S}) that at any stage of the induction,
  \begin{eqnarray}
\tilde{\mathrm{E}}(H) & = & \etanoise + (1-2\etanoise) \mathrm{E}(H),
  \end{eqnarray}
  which is optimal \cite[Section 5]{ksBI}. It is not hard to mirror the "extreme" negative result of Lemma \ref{lem-SQL-run} for \cls~into an extreme positive result for \cdt~on \cite{lsRC}'s data: one easily obtains (from Lemma \ref{lem-BOO2}) that the root prediction of a \cdt, after leveraging in Step 2.3 of \topdowngen~, is equal to Bayes posterior (which is constant over the whole domain). Hence, \topdowngen~converges to Bayes prediction in a \textit{single iteration}, which is confirmed experimentally (Section \ref{sec-toy-exp}).

  \paragraph{Application of \topdowngen~$\#3$: alternating decision trees (\cadt)} Alternating decision trees were introduced in \cite{fmTA}. An \cadt~roughly consists of a root constant prediction and a series of stumps branching from their leaf prediction nodes in a tree graph, see Figure \ref{f-DT-transformation}. The equivalent \cadt~representation of a \cdt~would have outgoing degree 1 for all these stumps' leaves. A general \cadt~makes this outdegree variable and a prediction is just the sum of the prediction along all paths an observation can follow from the \cadt's root node. If a stumps' leaf branches on $N$ stumps, then we sum the $N$ corresponding predictions (and not just 1 for a \cdt).   While using such models is interesting in terms of model's parameterisation, one also sees advantages in terms of boosting, since summing boosted predictions \eqref{boost-rate-LS} is more efficient than branching on boosted predictions \eqref{boost-rate-DT}, but the paper of \cite{fmTA} contains no such rate (note that the loss optimized here is AdaBoost's exponential loss, which is not proper canonical).

  \noindent $\triangleright$ \underline{\textit{$u_t$ compliance and the weak learner}}: these are just combinations of those for \cls~(when increasing a stump's leaf outgoing degree with a new stump) and \cdt~(when finding the test of a stump). Denote $N$-\cadt~the set of \cadt s where non-leaf prediction nodes' outdegree is fixed to be $N$ (inclusive of the root node). Notice that we can then boost while guaranteeing that $u_t = 1$ for $N$ boosting iterations (at the root), then $u_t \geq 1/2$ for $N$ boosting iterations and so on until the last $N$ iterations with $u_t \geq N/T$.

  \noindent $\triangleright$ \underline{\textit{Boosting rate}}: assuming wlog $T$ a multiple of $N$, we have thus $\sum_{t=1}^T u_t \geq N \cdot \sum_{t=1}^{T/N} 1/t \geq N \cdot \int_0^{T/N} \mathrm{d}z/(1+z) = N \cdot \log(1+(T/N)) \defeq U(T)$, and so we are guaranteed that $\pr_{i\sim [m]}[y^*_i H_T(\ve{x}_i) \leq \theta] <  \varepsilon$ if
      \begin{eqnarray}
T & \geq & \underbrace{N \cdot \exp\left(\frac{2\left(\popsur(H_{0}, \mathcal{S}) - C\right)}{N \kappa \cdot \varepsilon^2 \underline{w}(\theta)^2 \gammawla^2}\right)}_{\defeq \bcadt} = N\exp\left(\frac{\bcls}{N}\right) = N \exp \tilde{O} \left(\frac{1}{N \varepsilon^2 \gammawla^2}\right).\label{boost-rate-ADT}
      \end{eqnarray}
      This rate is really meaningful only when $N \leq \bcls$ since otherwise the \cadt~would meet \eqref{boundEdges} from the linear combinations of stumps branched at the root already. Bearing in mind that $\bcdt, \bcadt$ are non-tight lowerbounds, in such a regime, it is easy to see that an \cadt~can be exponentially more efficient than a \cdt, boosting-wise: for example, letting $N = \sqrt{\bcls}$, we obtain $\bcadt \leq \exp(-M\sqrt{\bcls}) \cdot \bcdt$ for some constant $M>0$.

      \noindent $\triangleright$ \underline{\textit{Effect of Long and Servedio's data}}: since learning a \cdt~achieves Bayes optimal prediction with a single root \cdt, the same happens for a single root \cadt, and we get that the \cadt~learned by \topdowngen~is not affected by noise.

      \paragraph{Application of \topdowngen~$\#4$: (leveraged) nearest neighbors (\cnn)} nearest neighbor classification is one of the oldest supervised learning techniques \cite{chNN}. Since we consider real-valued prediction, we implement \cnn~classification by summing a real constant prediction at one observation's neighbors and assume that tie neighbors are included in the voting sample (so one observation can end up with more than $K$ neighbors). Local predictions can have varying magnitudes, which represents a generalisation of nearest neighbor classification where magnitude is constant, but we still call such classifiers nearest neighbors, omitting the "leveraging" part.

      \noindent $\triangleright$ \underline{\textit{$u_t$ compliance and the weak learner}}: the weak learner returns an example to leverage and thus $\mathcal{X}_t$ is its \textit{reciprocal neighborhood} (the set of examples for which it belongs to the $K$-\cnn). We assume wlog there are no "outliers" for classification, so the minimum size of this neighborhood is some $K_{\mbox{\tiny rec}}>0$, yielding $u_t = K_{\mbox{\tiny rec}}/m, \forall t$.

      \noindent $\triangleright$ \underline{\textit{Boosting rate}}: we immediately get $\pr_{i\sim [m]}[y^*_i H_T(\ve{x}_i) \leq \theta] <  \varepsilon$ if
      \begin{eqnarray}
T & \geq & \underbrace{\frac{2m\left(\popsur(H_{0}, \mathcal{S}) - C\right)}{K_{\mbox{\tiny rec}} \kappa \cdot \varepsilon^2 \underline{w}(\theta)^2 \gammawla^2}}_{\defeq \bcnn} = \frac{m \bcls}{K_{\mbox{\tiny rec}}} = \tilde{O} \left(\frac{m}{K_{\mbox{\tiny rec}} \varepsilon^2 \gammawla^2}\right),\label{boost-rate-NN}
      \end{eqnarray}
a bound substantially better and more general than the one of \cite[Theorem 4]{nbanbGN}, which was established for $\theta = 0$ (namely, our assumptions are weaker, our result cover asymmetric losses and the dependency of \eqref{boost-rate-NN} in $K_{\mbox{\tiny rec}}$ is better). It is important to remind at this point that this result is relevant to training and the margin bound holds over the training sample.

 \noindent $\triangleright$ \underline{\textit{Effect of Long and Servedio's data}}: it is not hard to see that the problem is equivalent to leveraging a constant prediction using all examples with a specific observation and the leveraging coefficient is the same as for a cdt~where the root node's support is restricted to the given observation, yielding optimal leveraging (after leveraging, each example's new prediction is Bayes optimum) and so the \cnn~leveraging learned by \topdowngen~is not affected by noise. This applies for any choice of $K\geq 1$ neighbors for \cnn.

\paragraph{Application of \topdowngen~$\#5$: labeled branching programs (\clbp)} A labeled branching program is a branching program with prediction values at each node, just like our encoding of \cdt, with the same way of classifying an observation -- sum an observation's path values from the root to a leaf. The key difference with classical branching programs is that to one leaf can correspond as many possible predictions as there are paths leading to it. See Figure \ref{f-DT-transformation} for an example. 

\noindent $\triangleright$ \underline{\textit{$u_t$ compliance and the weak learner}}: the weak learner is the same as for \cdt, \textit{except} it looks for a split over the \textit{union} of a set of leaves in the current \clbp, with the constraint that this split has to cut every leaf's domain in two (this requirement can be removed if the user is comfortable that some inner nodes in the \clbp~may have out-degree 1). After the split is found, it is carried at each node and the outgoing arcs get to two new leaves only by merging the leaves of the stumps accordingly (call this procedure the \textit{split-merge} process), as displayed in Figure \ref{f-DT-transformation}. This makes the weak learner have the same properties as for \cdt, but of course, yields larger $u_t$ compliance than for \cdt, and so bring better boosting rates as we now show.

\noindent $\triangleright$ \underline{\textit{Boosting rate}}: suppose we run \topdowngen~as for \cdt~and start to merge nodes to always ensure $u_t\geq \upbeta$ for some $\upbeta>1/T$. We get $\sum_{t=1}^T u_t \geq \sum_{t=1}^{\lfloor 1/\upbeta\rfloor} (1/t) + \upbeta (T-\lfloor 1/\upbeta\rfloor) \geq \log(1+\lfloor 1/\upbeta\rfloor) + \upbeta T -1 \geq \log(1/\upbeta) + \upbeta T - 1$. The choice $\upbeta = T^{-c}$ for a constant $c\in (0,1)$ immediately leads that $\pr_{i\sim [m]}[y^*_i H_T(\ve{x}_i) \leq \theta] <  \varepsilon$ if
      \begin{eqnarray}
T & \geq & \underbrace{\left(\frac{2\left(\popsur(H_{0}, \mathcal{S}) - C\right)}{\kappa \cdot \varepsilon^2 \underline{w}(\theta)^2 \gammawla^2}\right)^{\frac{1}{1-c}}}_{\defeq \bclbp} = \left(\bcls\right)^{\frac{1}{1-c}} = \tilde{O} \left(\frac{1}{\varepsilon^{\frac{2}{1-c}} \gammawla^{\frac{2}{1-c}}}\right),\label{boost-rate-LBP}
      \end{eqnarray}
      a bound which is exponentially better than \eqref{boost-rate-DT} for \cdt. While it does extend previous boosting rates to margins / edges \cite{ksBI,mmBU}, \eqref{boost-rate-LBP} is suboptimal compared to the $\tilde{O}(\log^2 (1/\varepsilon))$ dependence of \cite{mmBU} shown for $\theta = 0$.

      \noindent $\triangleright$ \underline{\textit{Effect of Long and Servedio's data}}: since learning a \cdt~achieves Bayes optimal prediction with a single root \cdt, the same happens for a single root \clbp, and we get that the \clbp~learned by \topdowngen~is not affected by noise.

\section{Toy experiments}\label{sec-toy-exp}

We have implemented \topdowngen~and performed toy experiments it in the framework of Long and Servedio \cite{lsRC}, specifically train on $\mathcal{S}_{\mbox{\tiny{noisy}}}$ and compute, as a function of $\gamma$ in \eqref{defSclean} (i) the number of iterations of \topdowngen~until convergence, (ii) the accuracy on $\mathcal{S}_{\mbox{\tiny{clean}}}$ of the final classifier and (iii) the expected posterior of the classifier's prediction compared to Bayes'. (i) is equivalent to the number of calls to $\weak$ until the weak learning assumption "breaks", \textit{i.e.} until all possible weak hypotheses fail to meet \eqref{defWLA3} for some $\gammawla = 0.001$. We test three model classes that \topdowngen~is able to boost, \cls, \cdt~and \cnn~(with $K=1$), and consider four different proper loss functions in \topdowngen, three of which are symmetric and mainstream in ML (Matusita, log, square) and a fourth one, asymmetric, that we have engineered for the occasion. All details about those losses are in \supplement, Table \ref{fig:all-cbr}.

We have crammed the results in Table \ref{tab:all-results} to get the overall picture. Results are otherwise presented in split format in \supplement, Tables \ref{tab:iter-not-WLA}, \ref{tab:accuracy}, \ref{tab:postest}. As predicted by theory, there is a stark contrast between \cls~on one hand and \cdt~and \cnn~on the other hand: while \cdt~and \cnn~consistently achieve Bayes prediction~and thus are not affected by noise, \cls~learned always have very substantial degradation in their estimated posterior below a threshold $\gamma$, which translates to classifiers as accurate as the unbiased coin on $\mathcal{S}_{\mbox{\tiny{clean}}}$. Remarkably, the number of iterations until the weak learner is "exhausted" of options meeting \eqref{defWLA3} for $\gammawla = 0.001$ is very low: it takes just a few iterations for \topdowngen~to be stuck with a bad \cls, confirming a remark of Schapire \cite{sEA} (See \supplement, Section \ref{sec-wtps}). Notably, the use of an asymmetric proper loss does not break any of these patterns. There are interesting differences appearing among choices for the noise parameter $\etanoise$: increasing it tends to increase the threshold for phase transition in accuracy with \cls~and tends to reduce the number of calls to \weak~in this region for the square loss and our asymmetric loss. This last observation makes sense because increasing noise reduces the absolute edge of the weak classifiers. We also note that the result on \cnn~for the number of iterations until \weak~is "exhausted" displays that the dependency in $m$ in \eqref{boost-rate-NN}'s rate is pessimistic as it should depend on the number of distinct observations (=3 in \cite{lsRC}'s domain) rather than $m$.

  \newcommand{\picwidth}{0.104}
  \newcommand{\basicnegativespace}{\hspace{-0.2cm}}
  \newcommand{\basicnegativespacebefore}{\hspace{-0.6cm}}
\begin{table*}
  \centering
  \begin{tabular}{rccc||ccc||ccc}
    & \multicolumn{3}{c||}{Accuracy on $\mathcal{S}_{\mbox{\tiny{clean}}}$} & \multicolumn{3}{c||}{Expected posterior}& \multicolumn{3}{c}{$\#$ Calls to \weak}\\
    & \cls \basicnegativespacebefore & \basicnegativespacebefore \cdt \basicnegativespace & \basicnegativespacebefore 1-\cnn \basicnegativespace & \basicnegativespace \cls \basicnegativespacebefore & \basicnegativespacebefore \cdt\basicnegativespace & \basicnegativespacebefore 1-\cnn \basicnegativespace & \basicnegativespace \cls \basicnegativespacebefore & \basicnegativespacebefore \cdt \basicnegativespacebefore & \basicnegativespacebefore 1-\cnn \basicnegativespace\\
    \rotatebox{90}{{\tiny \texttt{Matusita loss}}} & \basicnegativespace \includegraphics[trim=50bp 0bp 10bp 10bp,clip,width=\picwidth\textwidth]{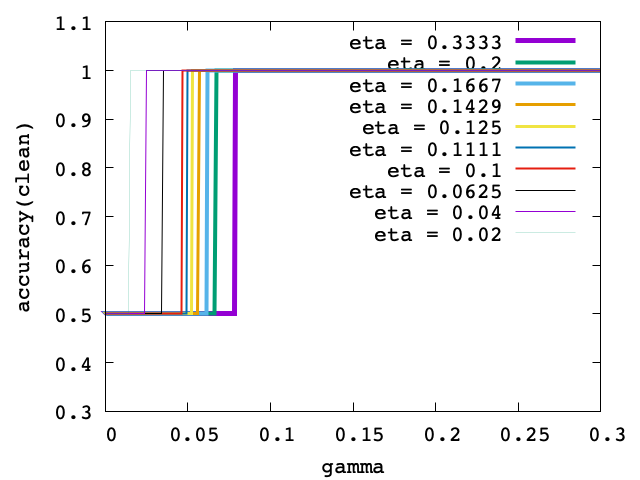} \basicnegativespacebefore & \basicnegativespacebefore \includegraphics[trim=50bp 0bp 10bp 10bp,clip,width=\picwidth\textwidth]{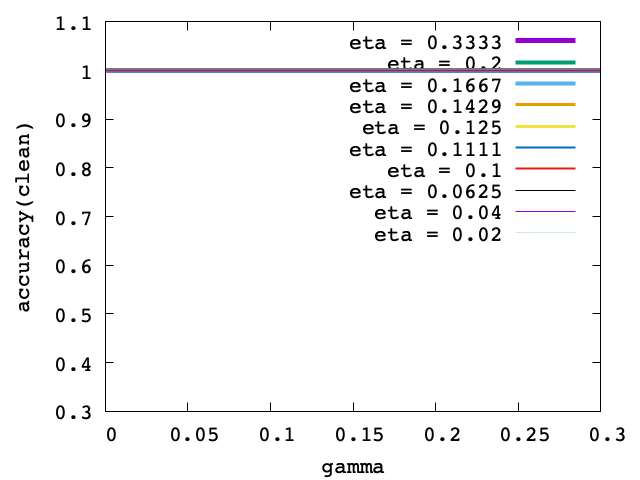} \basicnegativespacebefore & \basicnegativespacebefore \includegraphics[trim=50bp 0bp 10bp 10bp,clip,width=\picwidth\textwidth]{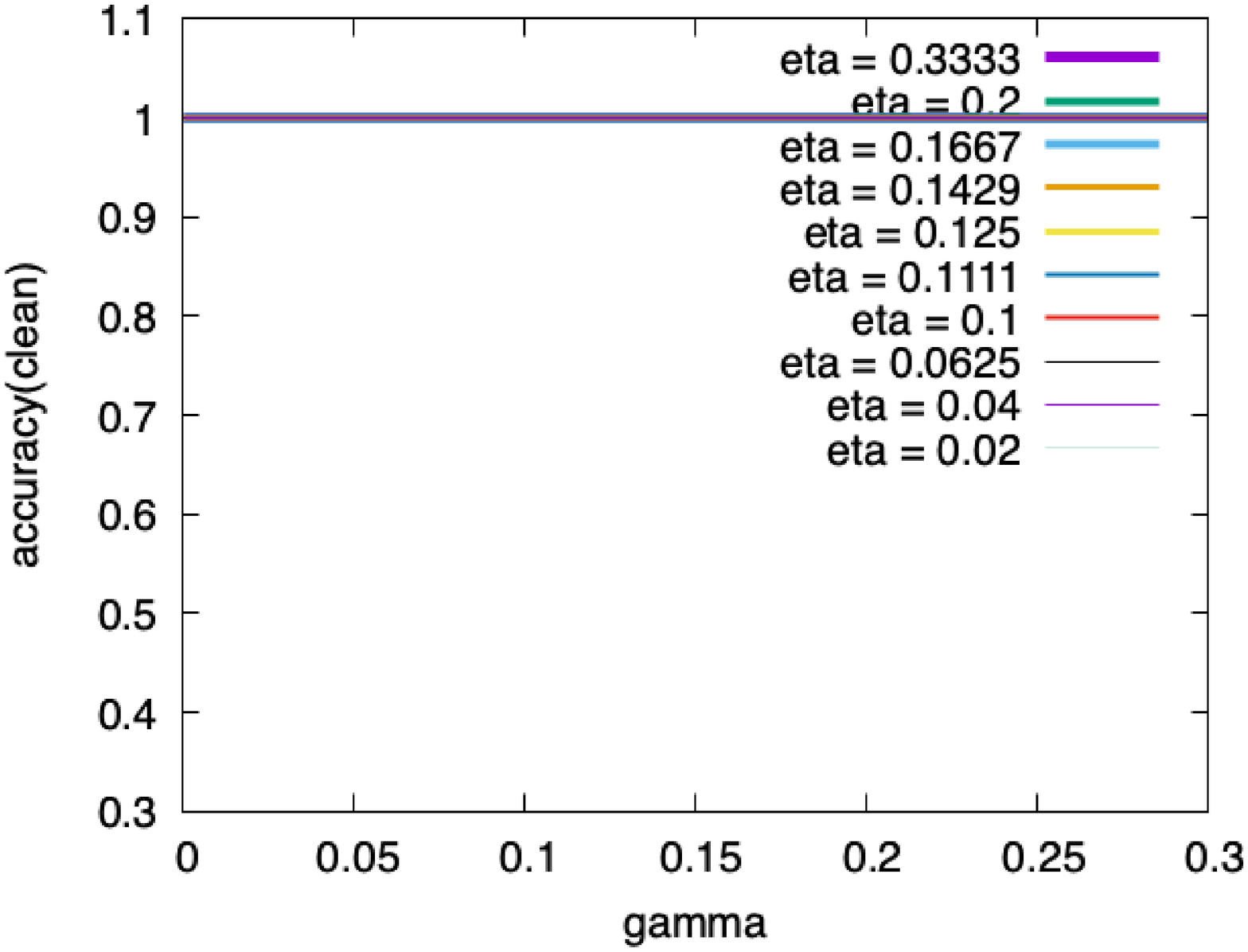} \basicnegativespacebefore & \basicnegativespace \includegraphics[trim=50bp 0bp 10bp 10bp,clip,width=\picwidth\textwidth]{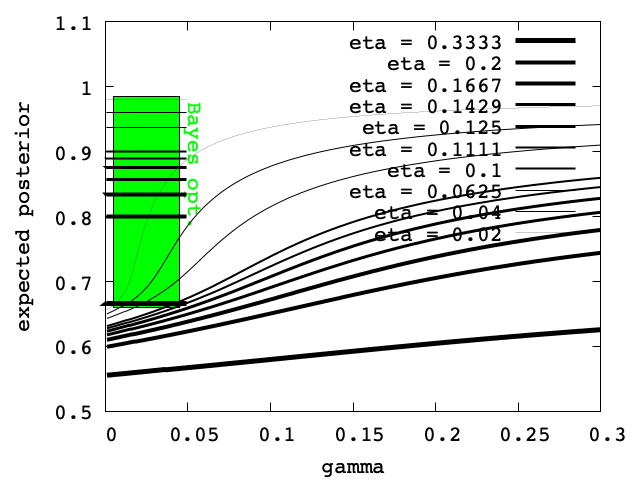} \basicnegativespacebefore & \basicnegativespacebefore \includegraphics[trim=50bp 0bp 10bp 10bp,clip,width=\picwidth\textwidth]{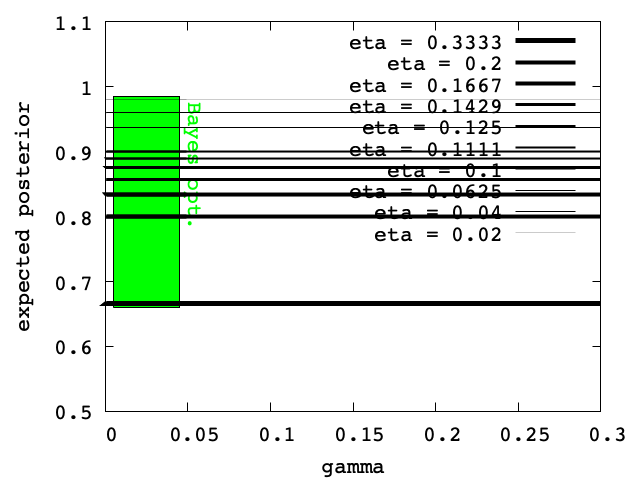} \basicnegativespace  & \basicnegativespacebefore \includegraphics[trim=50bp 0bp 10bp 10bp,clip,width=\picwidth\textwidth]{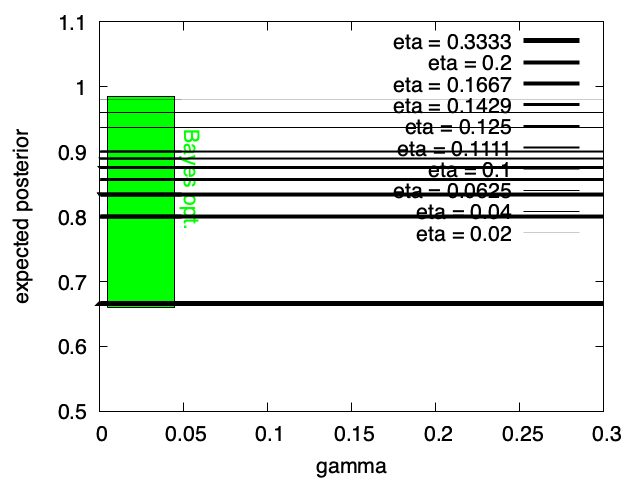} \basicnegativespace  & \basicnegativespace \includegraphics[trim=50bp 0bp 10bp 10bp,clip,width=\picwidth\textwidth]{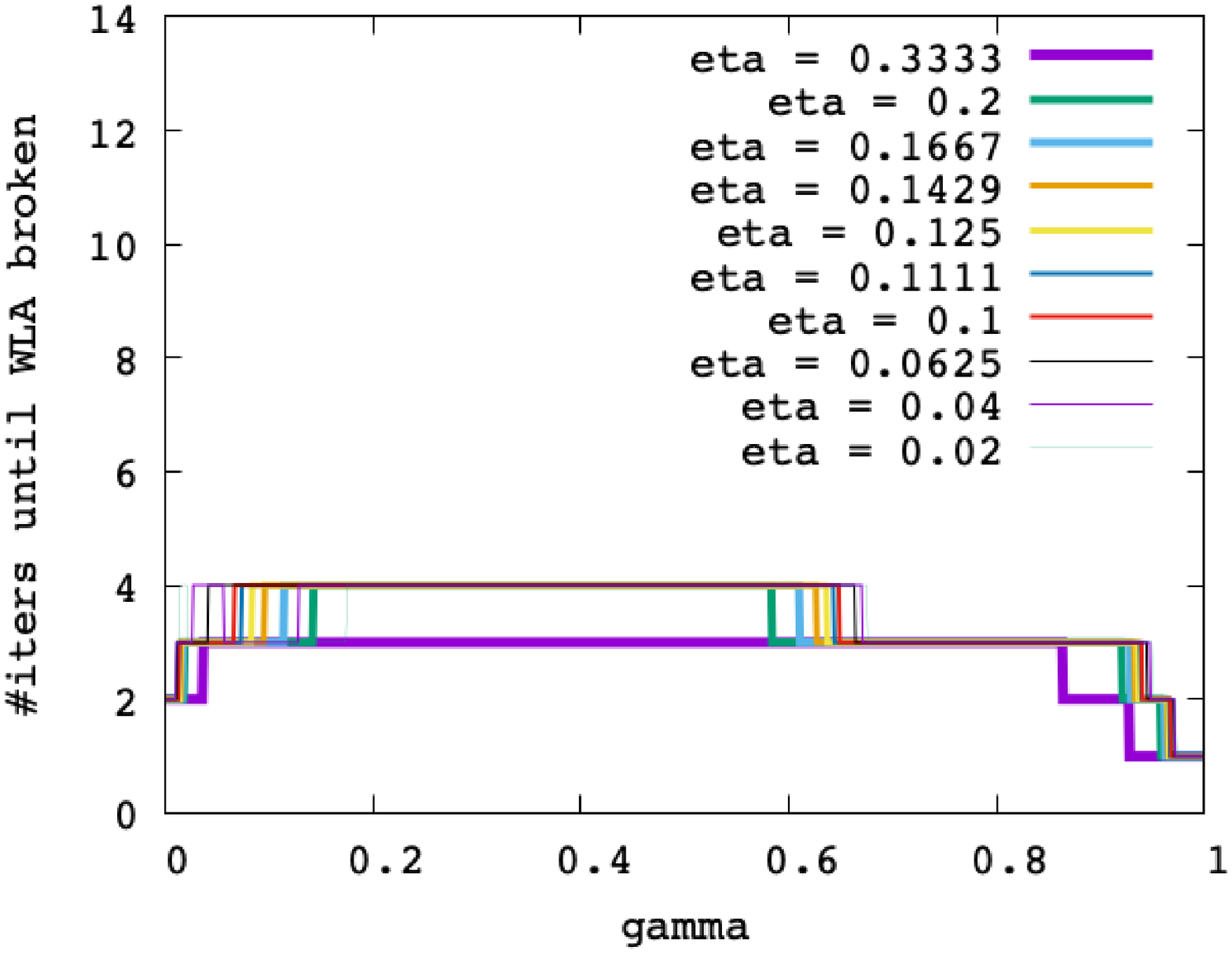} \basicnegativespacebefore & \basicnegativespacebefore \includegraphics[trim=50bp 0bp 10bp 10bp,clip,width=\picwidth\textwidth]{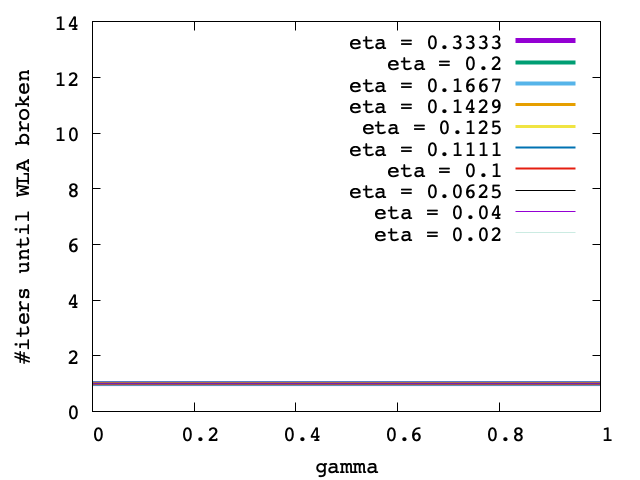} \basicnegativespace  & \basicnegativespacebefore \includegraphics[trim=50bp 0bp 10bp 10bp,clip,width=\picwidth\textwidth]{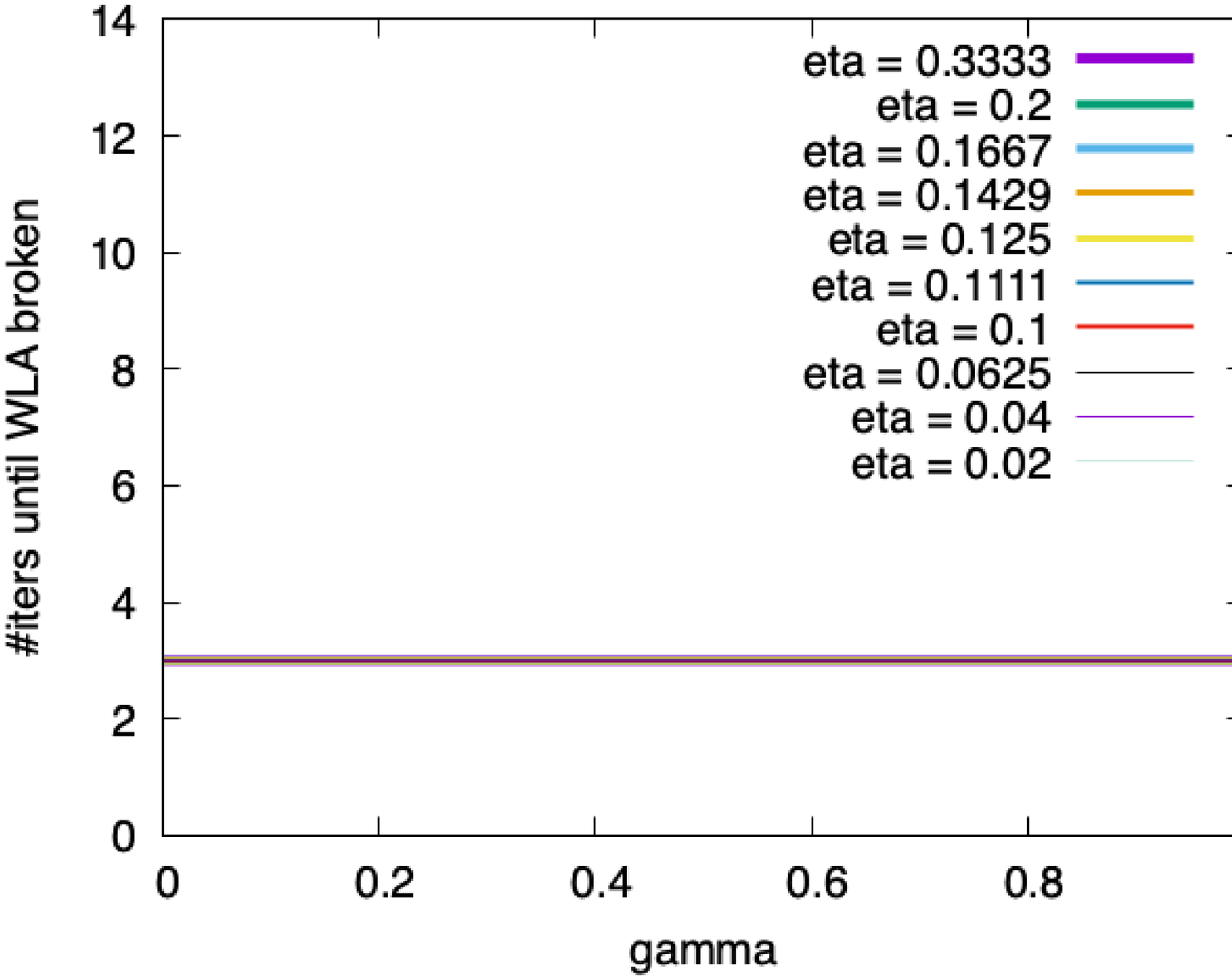} \basicnegativespace \\
    \rotatebox{90}{{\tiny \texttt{Log loss}}} & \basicnegativespace \includegraphics[trim=50bp 0bp 10bp 10bp,clip,width=\picwidth\textwidth]{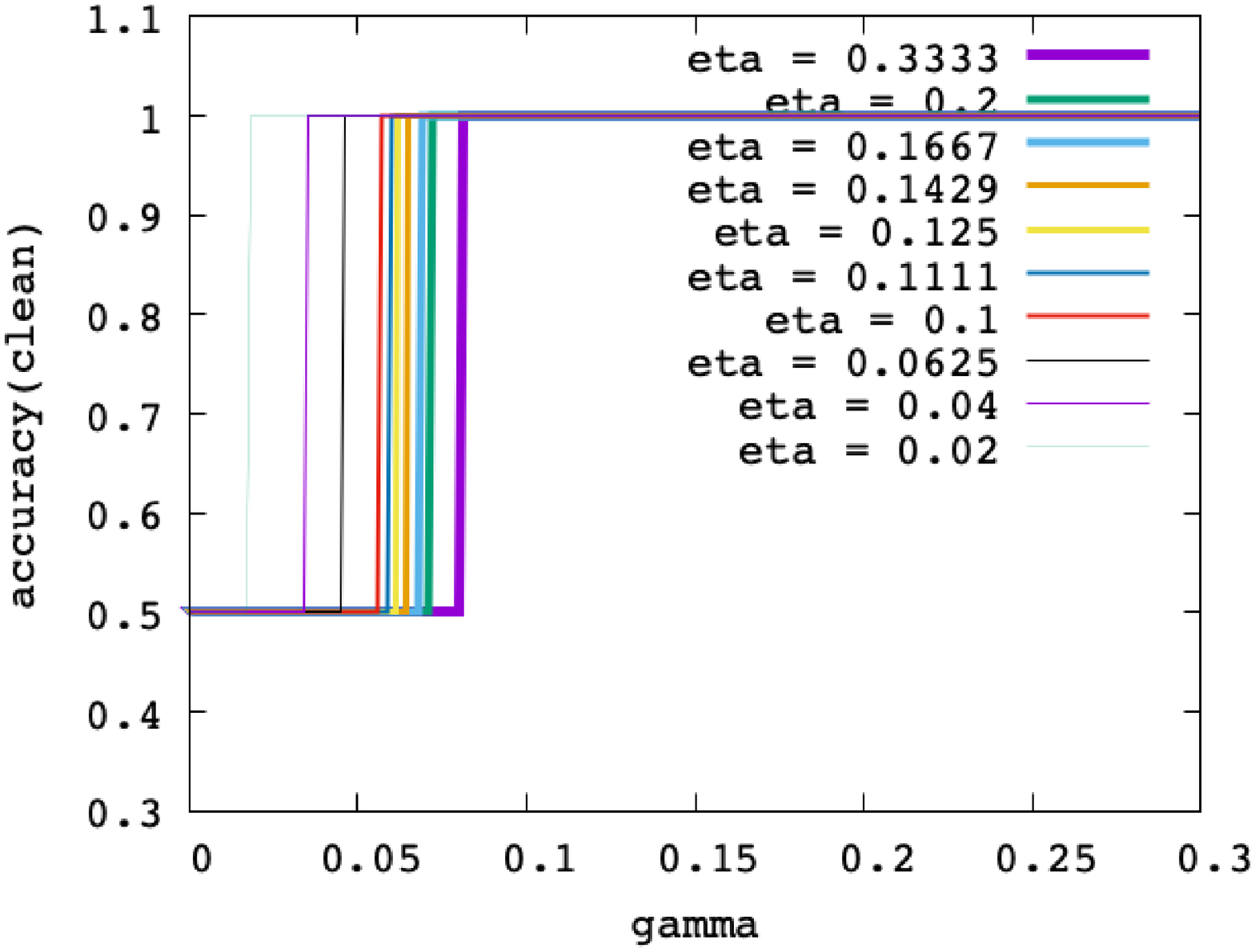} \basicnegativespacebefore & \basicnegativespacebefore \includegraphics[trim=50bp 0bp 10bp 10bp,clip,width=\picwidth\textwidth]{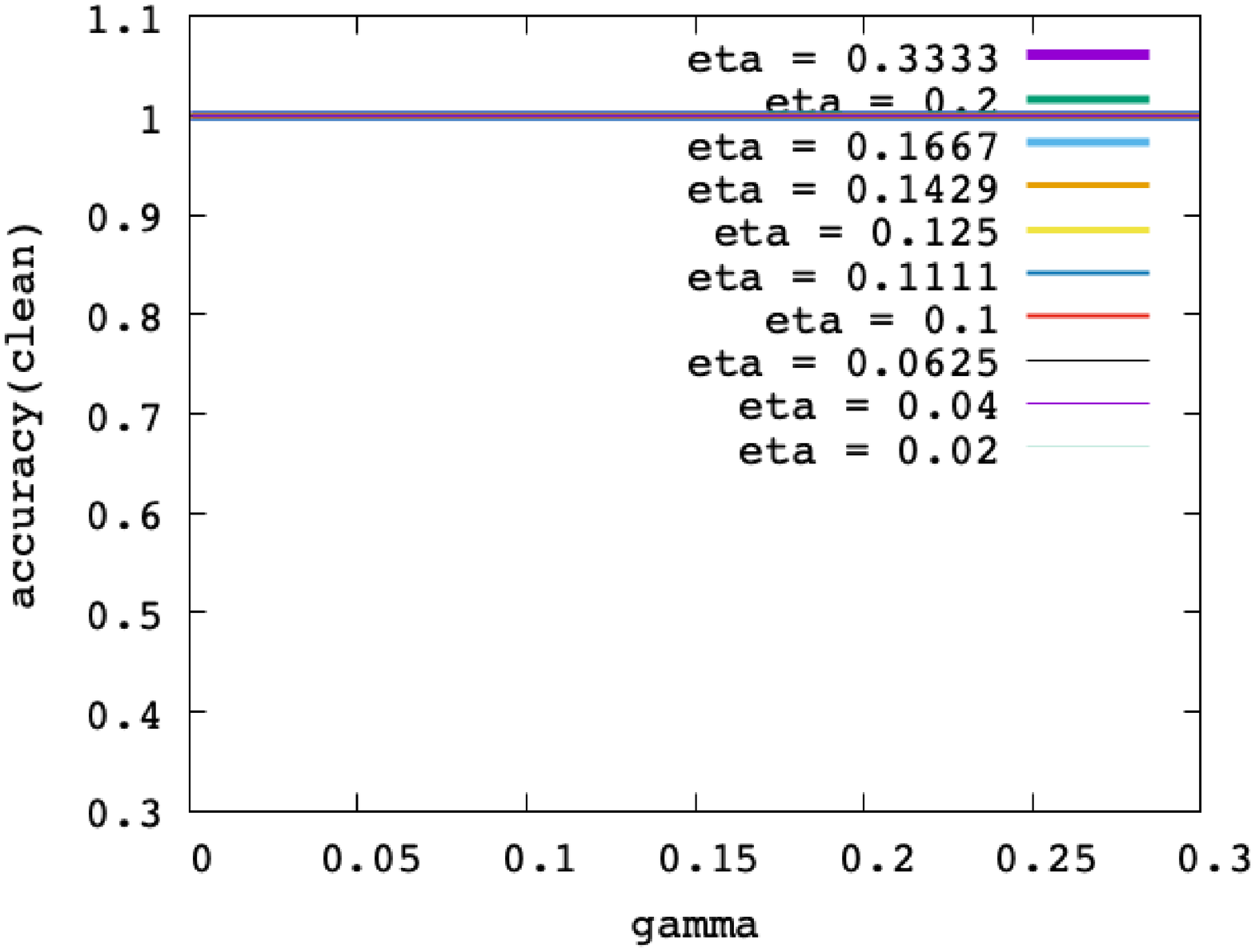} \basicnegativespacebefore & \basicnegativespacebefore \includegraphics[trim=50bp 0bp 10bp 10bp,clip,width=\picwidth\textwidth]{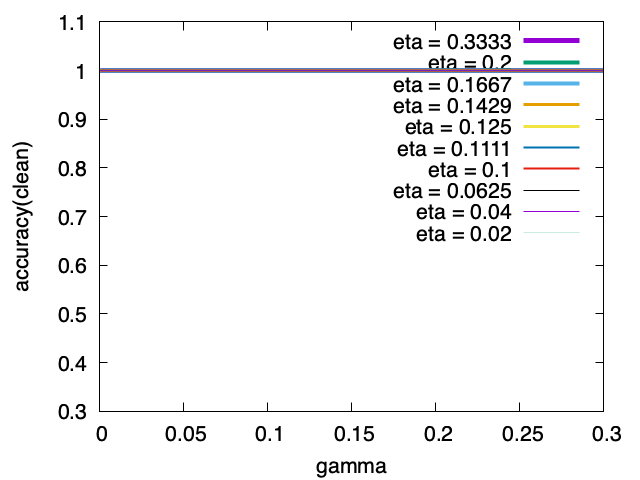} \basicnegativespacebefore & \basicnegativespace \includegraphics[trim=50bp 0bp 10bp 10bp,clip,width=\picwidth\textwidth]{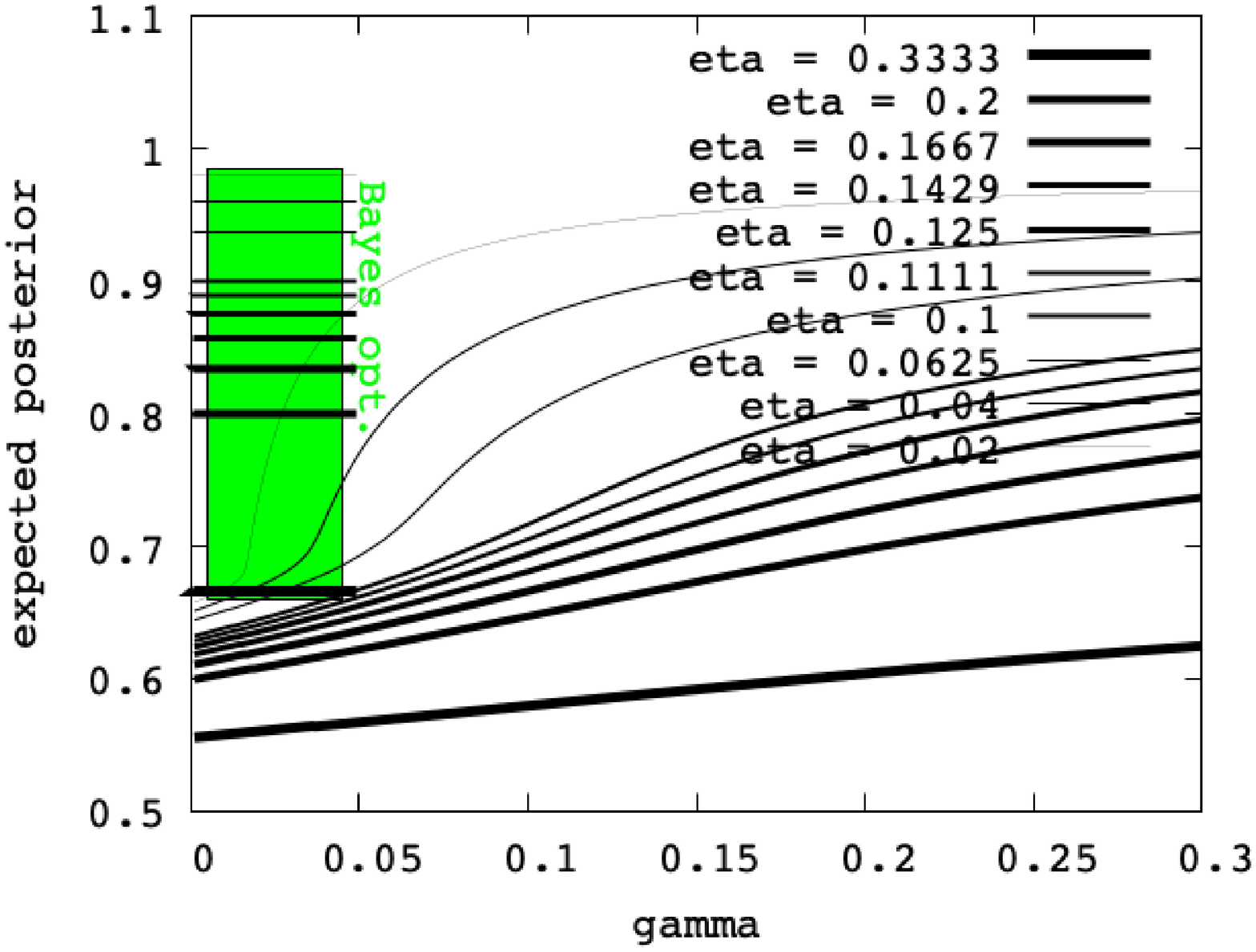} \basicnegativespacebefore & \basicnegativespacebefore \includegraphics[trim=50bp 0bp 10bp 10bp,clip,width=\picwidth\textwidth]{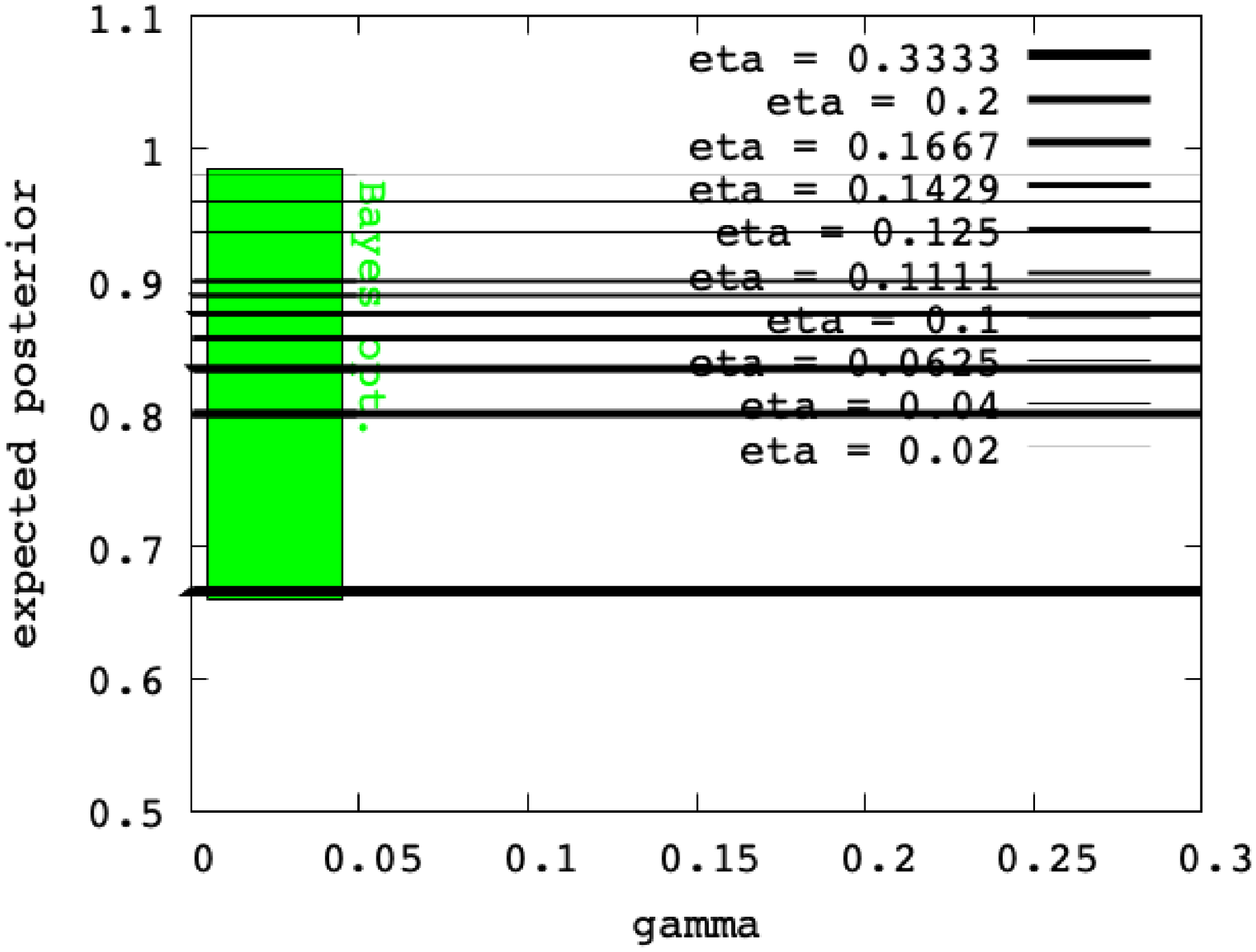} \basicnegativespace  & \basicnegativespacebefore \includegraphics[trim=50bp 0bp 10bp 10bp,clip,width=\picwidth\textwidth]{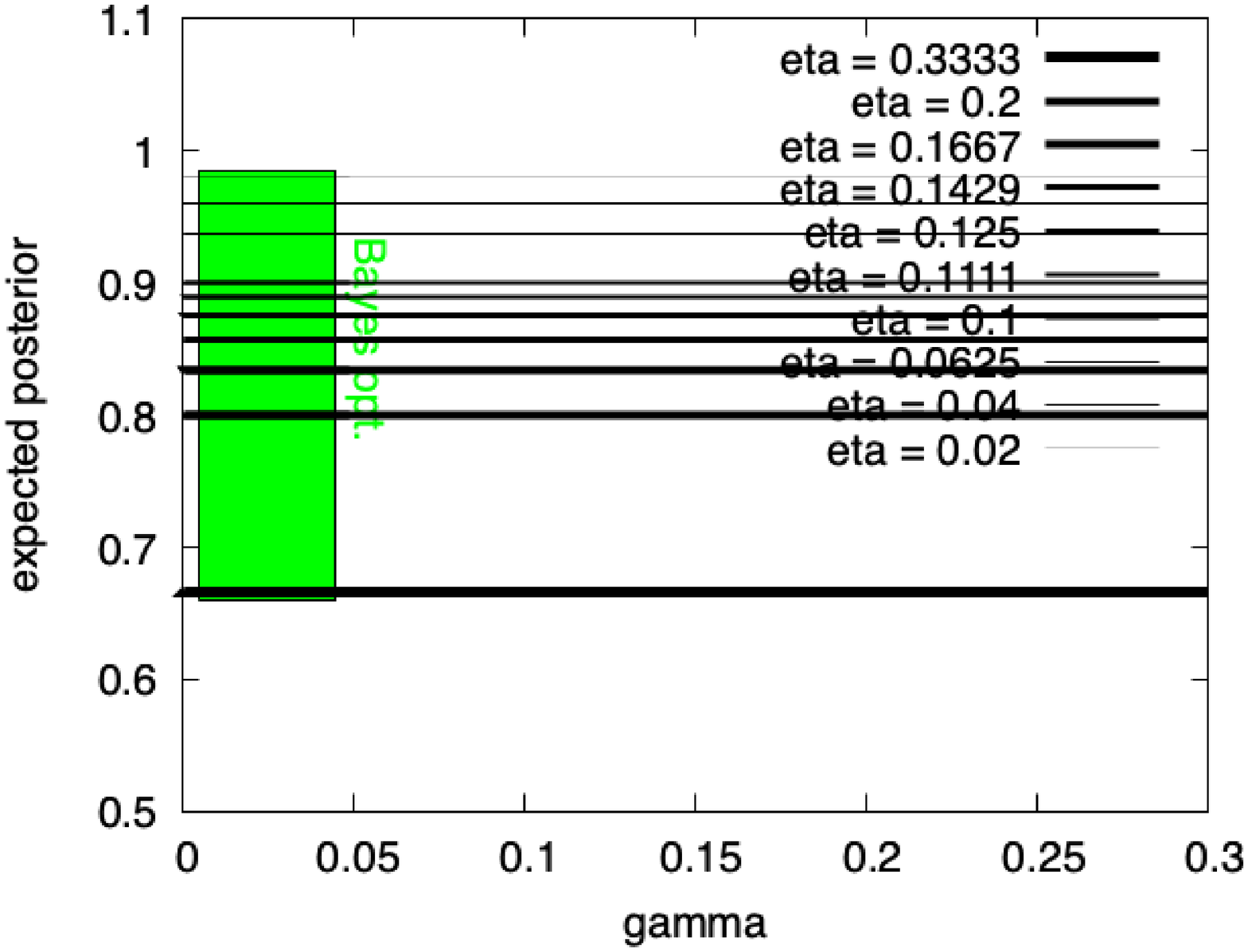} \basicnegativespace  & \basicnegativespace \includegraphics[trim=50bp 0bp 10bp 10bp,clip,width=\picwidth\textwidth]{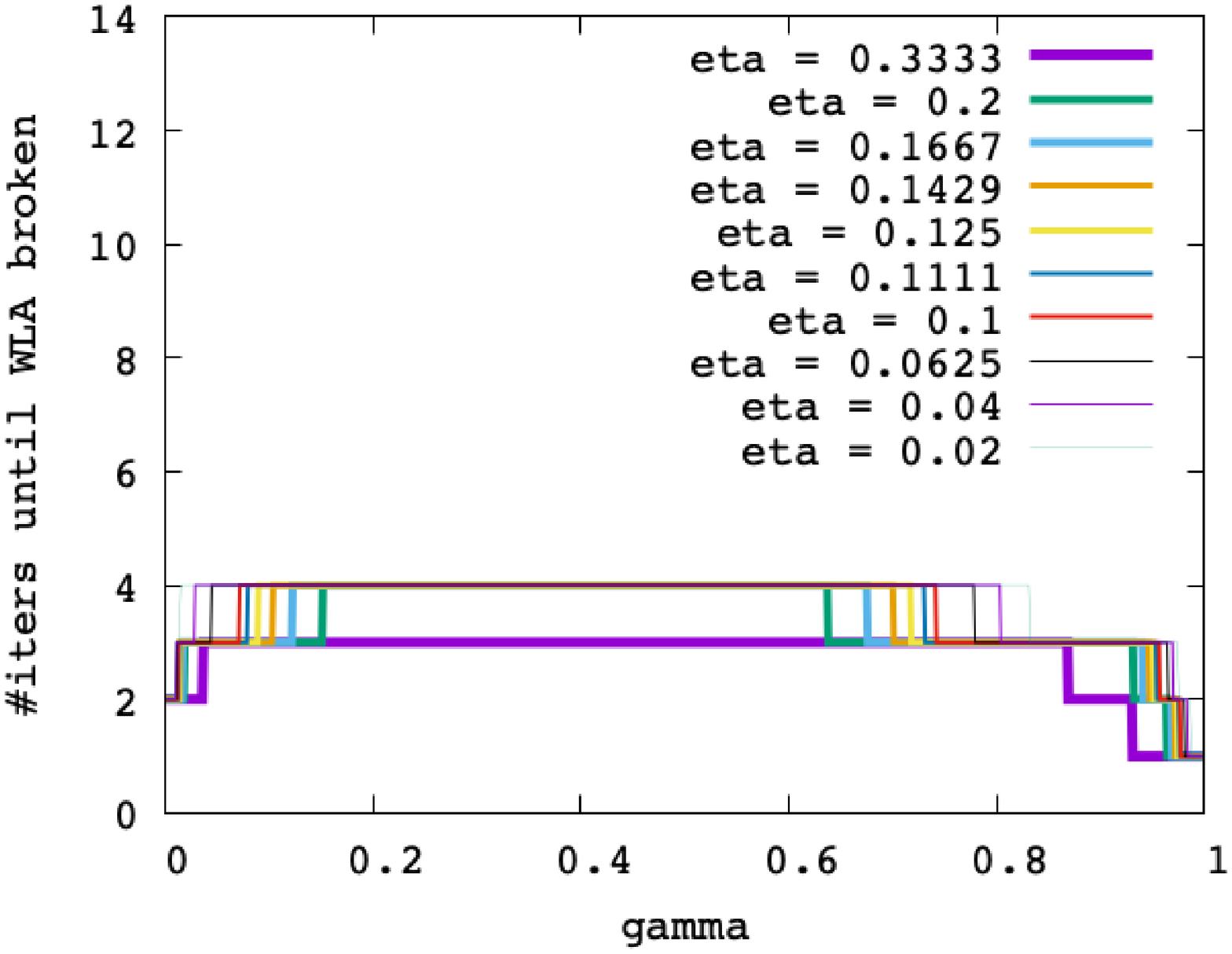} \basicnegativespacebefore & \basicnegativespacebefore \includegraphics[trim=50bp 0bp 10bp 10bp,clip,width=\picwidth\textwidth]{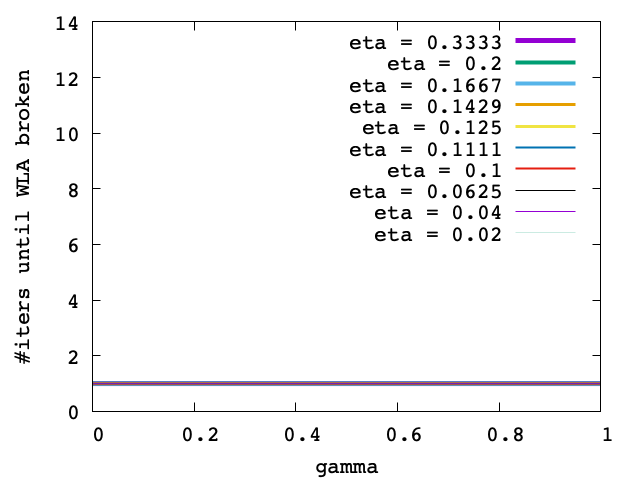} \basicnegativespace & \basicnegativespacebefore \includegraphics[trim=50bp 0bp 10bp 10bp,clip,width=\picwidth\textwidth]{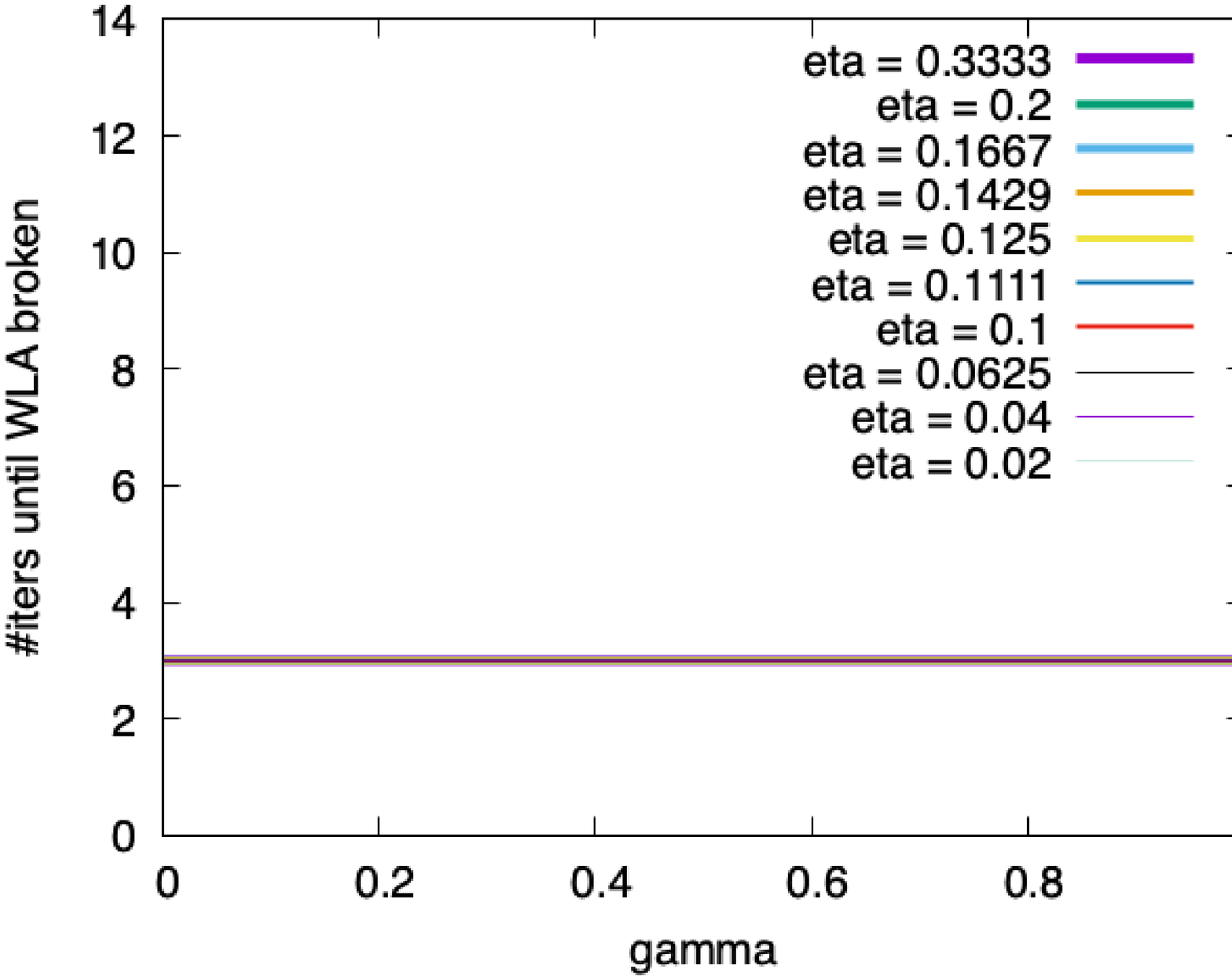} \basicnegativespace \\
    \rotatebox{90}{{\tiny \texttt{Square loss}}} & \basicnegativespace \includegraphics[trim=50bp 0bp 10bp 10bp,clip,width=\picwidth\textwidth]{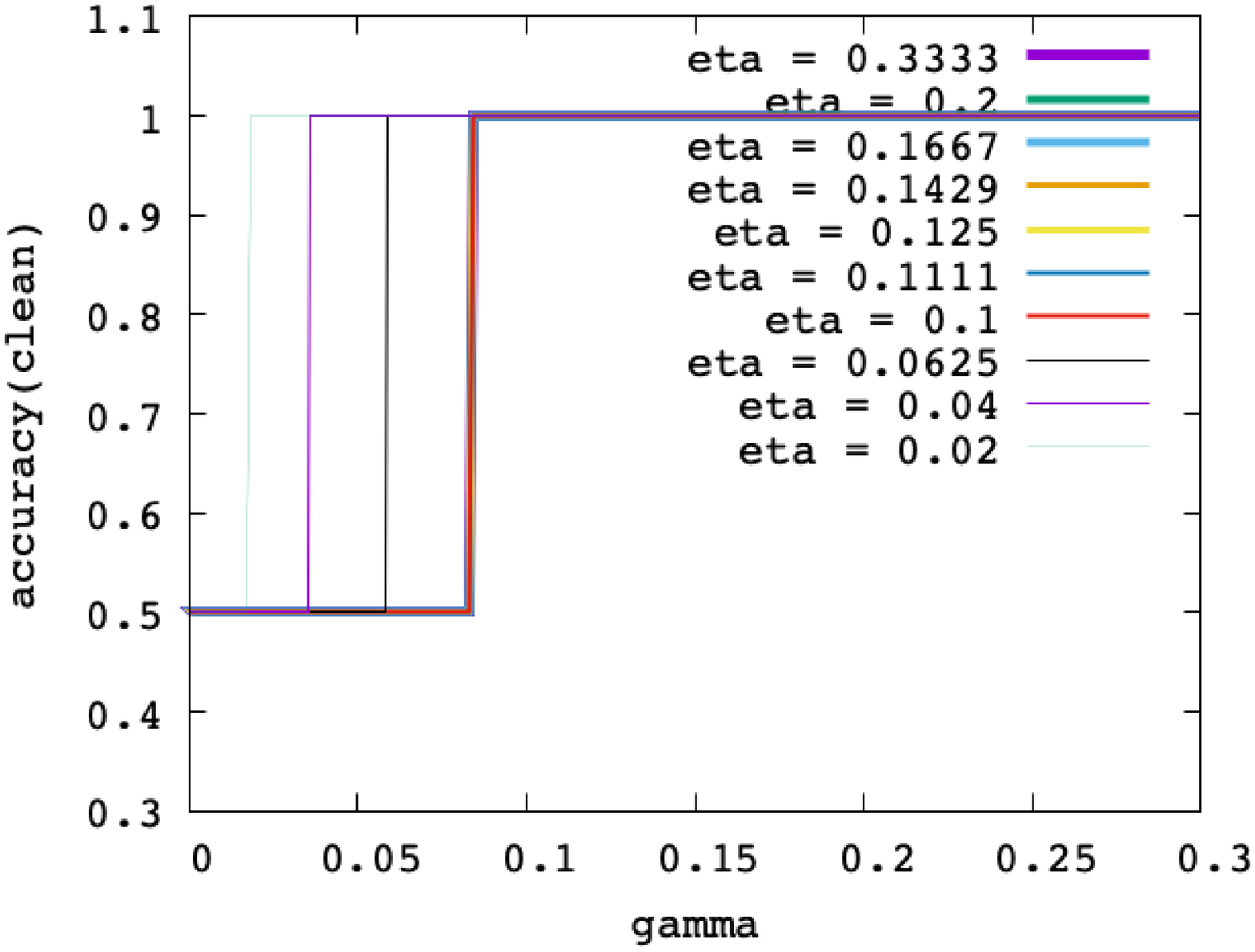} \basicnegativespacebefore & \basicnegativespacebefore \includegraphics[trim=50bp 0bp 10bp 10bp,clip,width=\picwidth\textwidth]{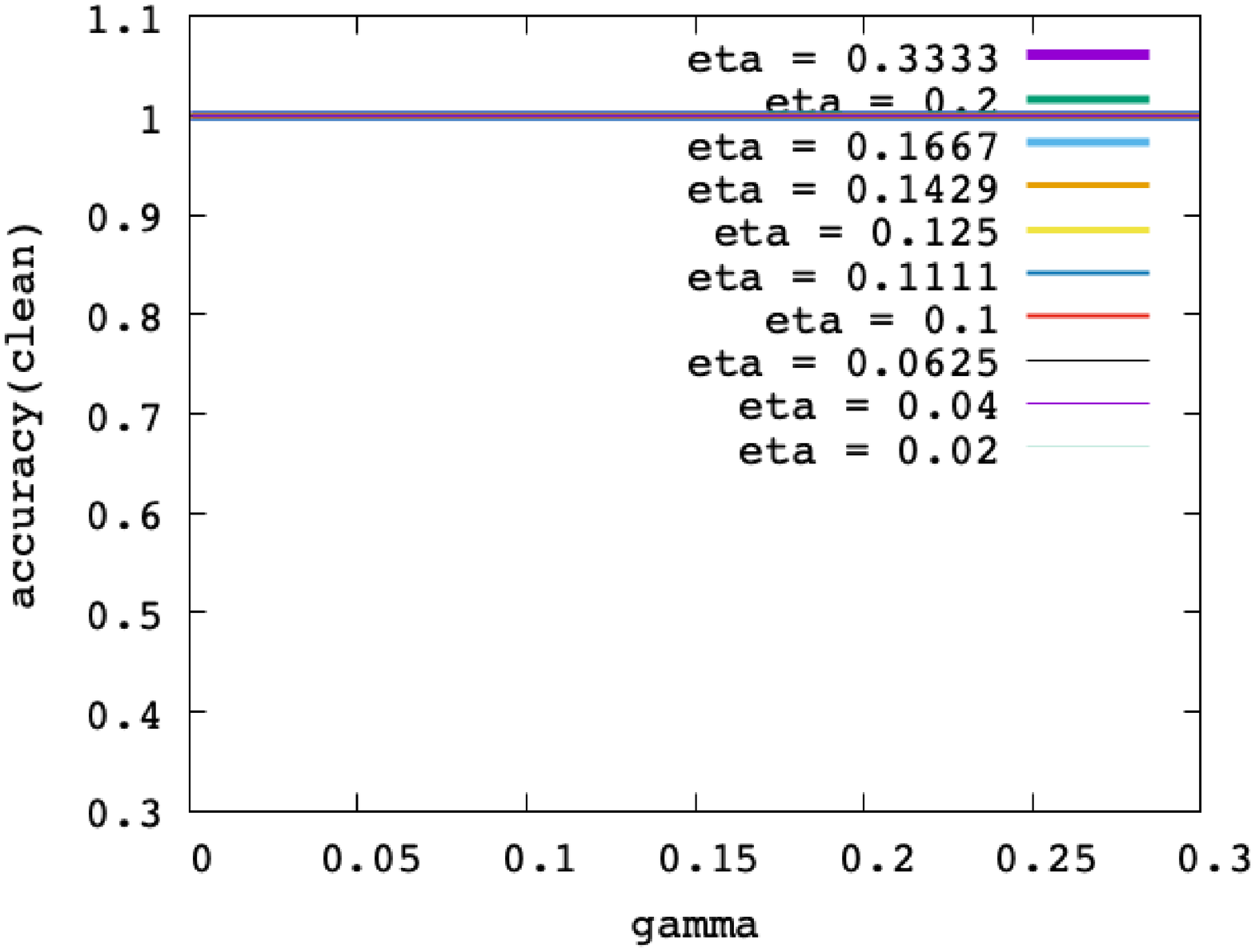} \basicnegativespacebefore& \basicnegativespacebefore \includegraphics[trim=50bp 0bp 10bp 10bp,clip,width=\picwidth\textwidth]{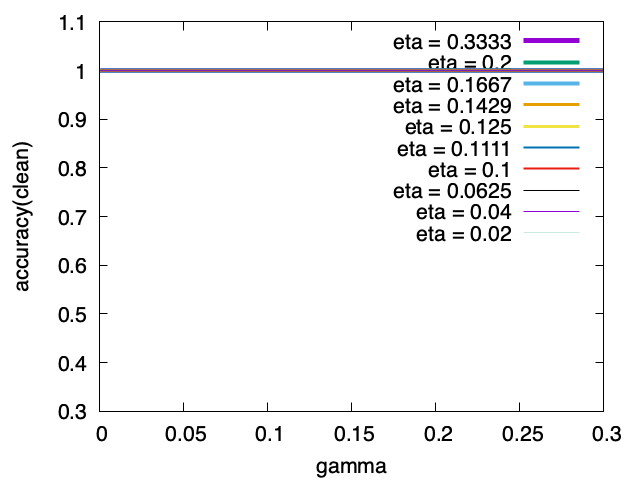} \basicnegativespacebefore & \basicnegativespace \includegraphics[trim=50bp 0bp 10bp 10bp,clip,width=\picwidth\textwidth]{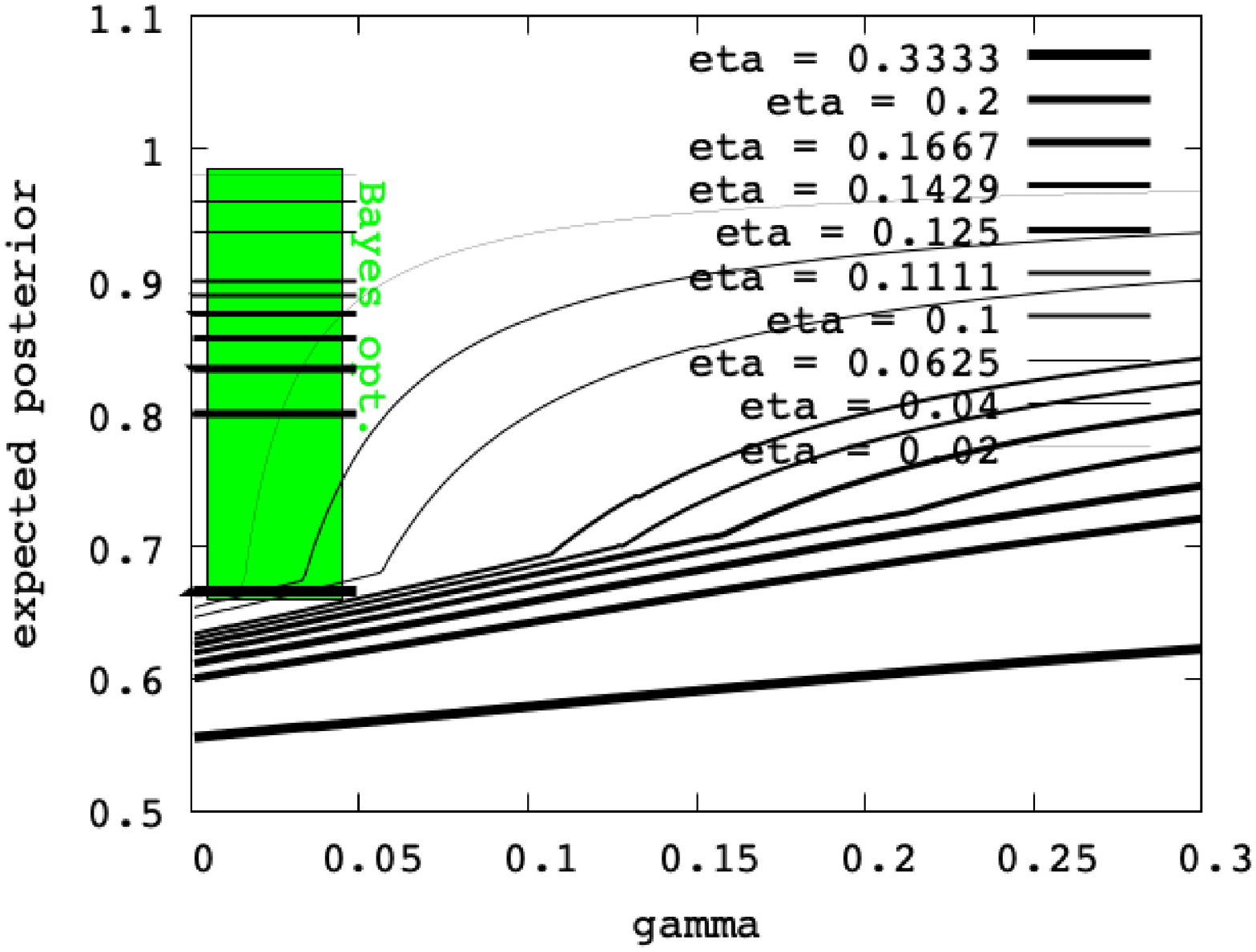} \basicnegativespacebefore & \basicnegativespacebefore \includegraphics[trim=50bp 0bp 10bp 10bp,clip,width=\picwidth\textwidth]{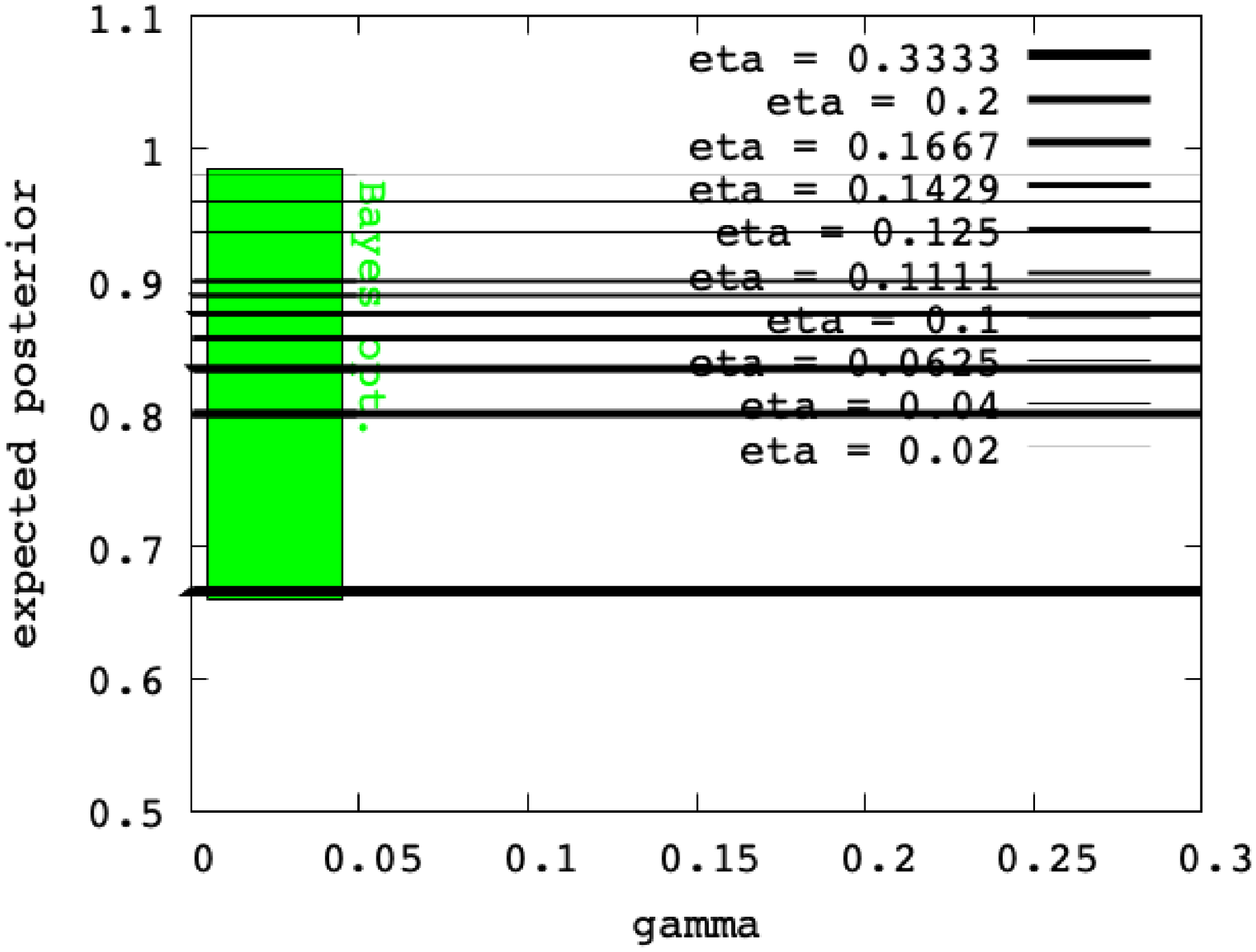} \basicnegativespace  & \basicnegativespacebefore \includegraphics[trim=50bp 0bp 10bp 10bp,clip,width=\picwidth\textwidth]{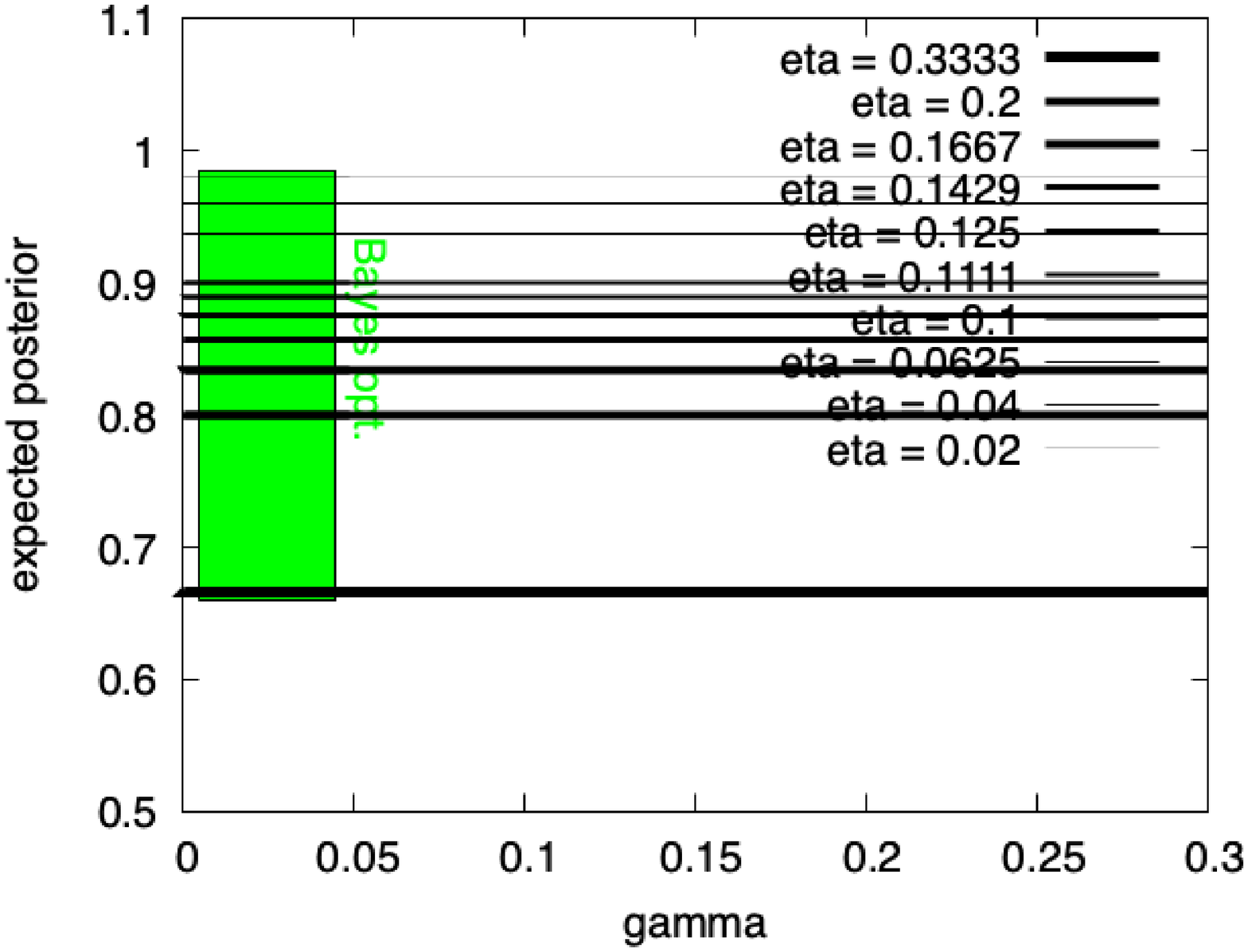} \basicnegativespace  & \basicnegativespace \includegraphics[trim=50bp 0bp 10bp 10bp,clip,width=\picwidth\textwidth]{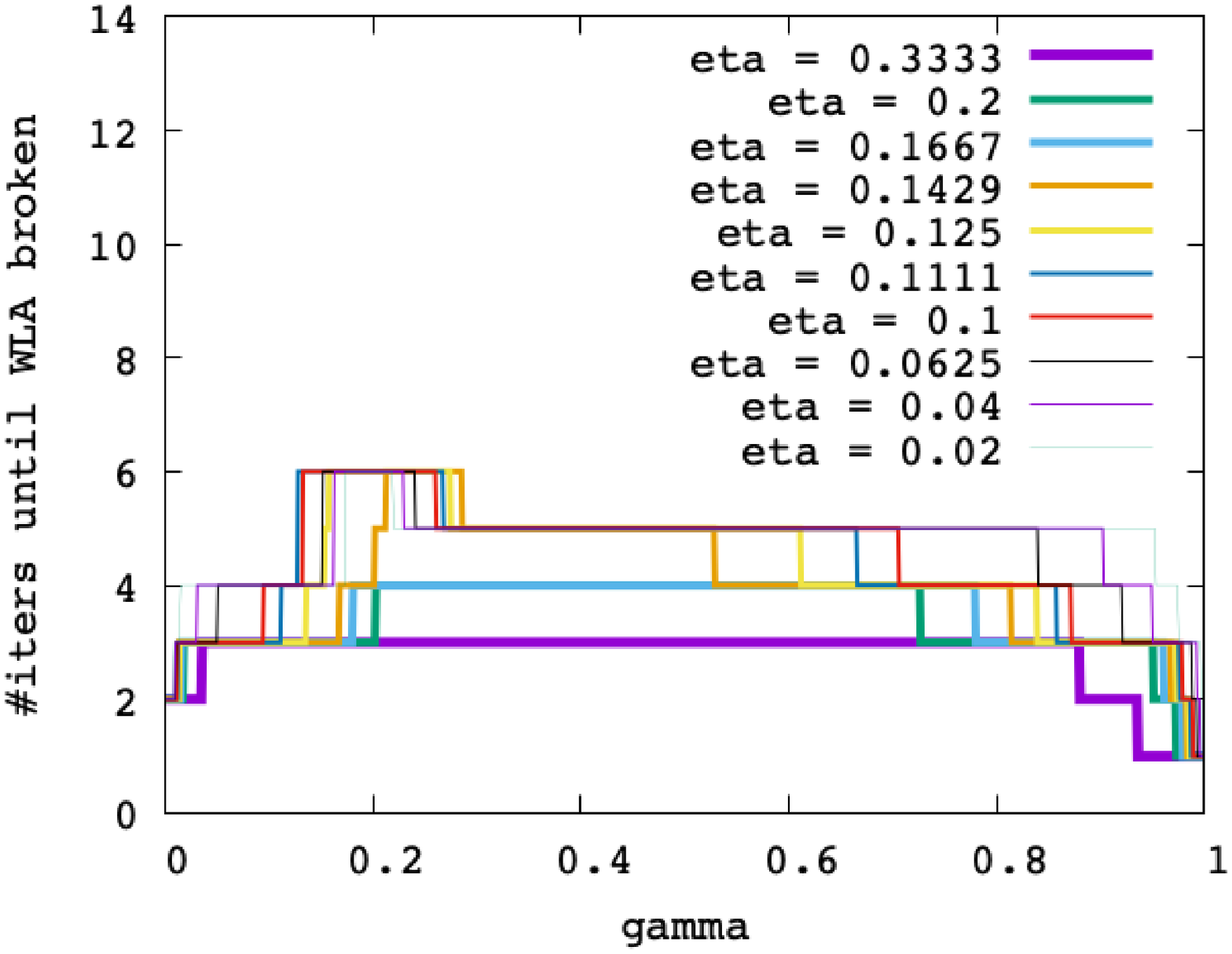} \basicnegativespacebefore & \basicnegativespacebefore \includegraphics[trim=50bp 0bp 10bp 10bp,clip,width=\picwidth\textwidth]{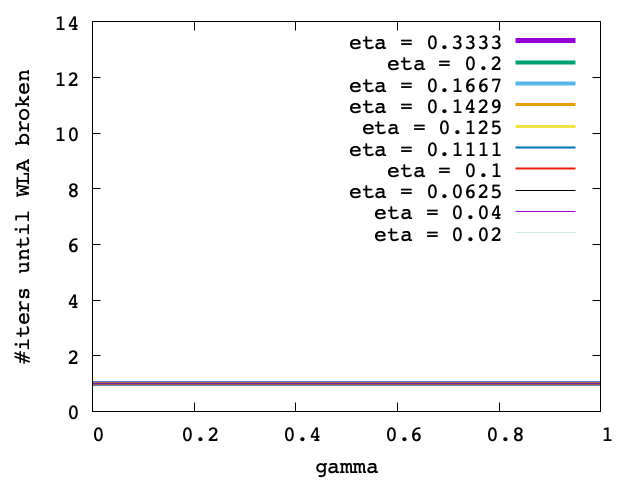} \basicnegativespace  & \basicnegativespacebefore \includegraphics[trim=50bp 0bp 10bp 10bp,clip,width=\picwidth\textwidth]{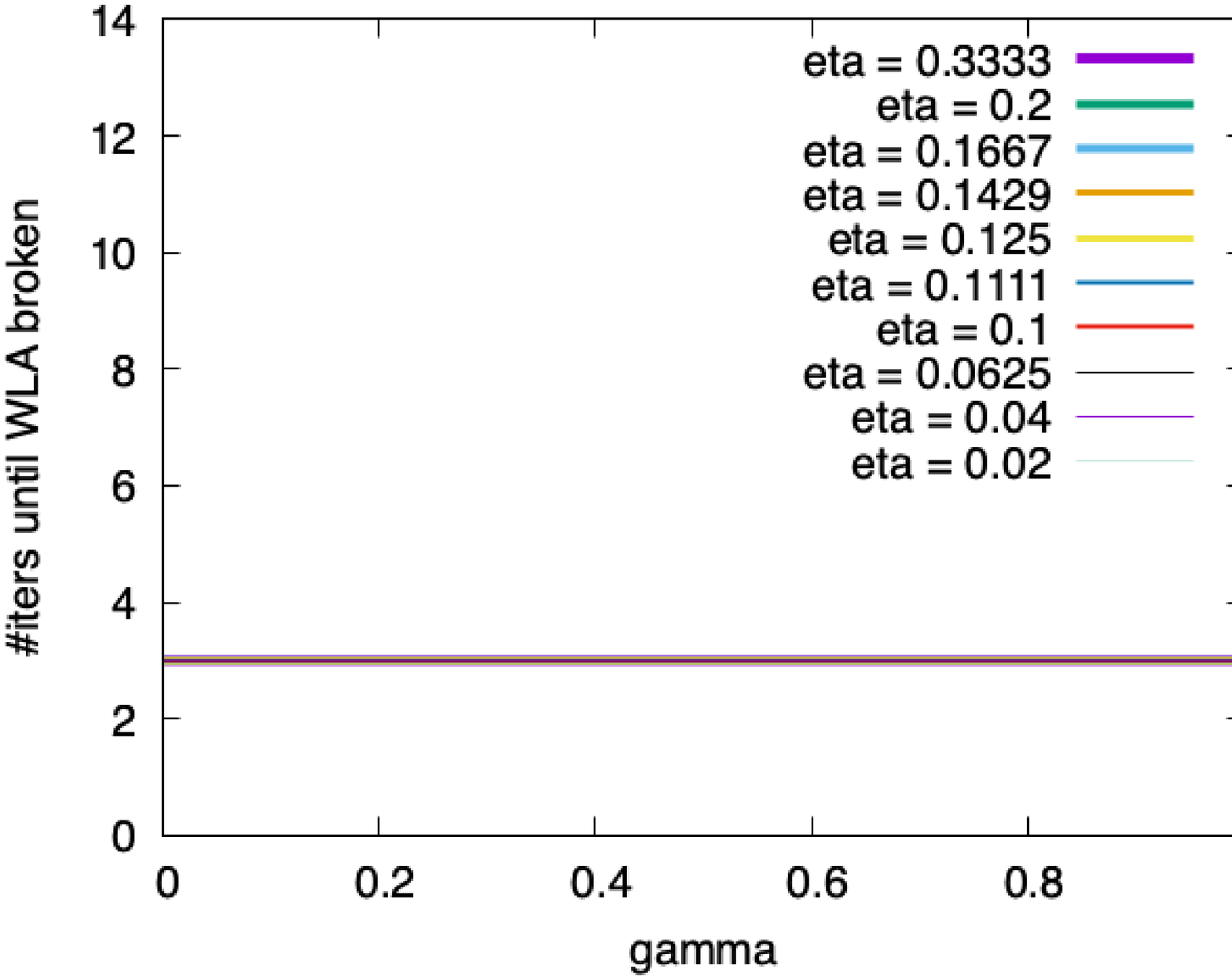} \basicnegativespace \\
    \rotatebox{90}{{\tiny \texttt{Asymmetric loss 1}}} & \basicnegativespace \includegraphics[trim=50bp 0bp 10bp 10bp,clip,width=\picwidth\textwidth]{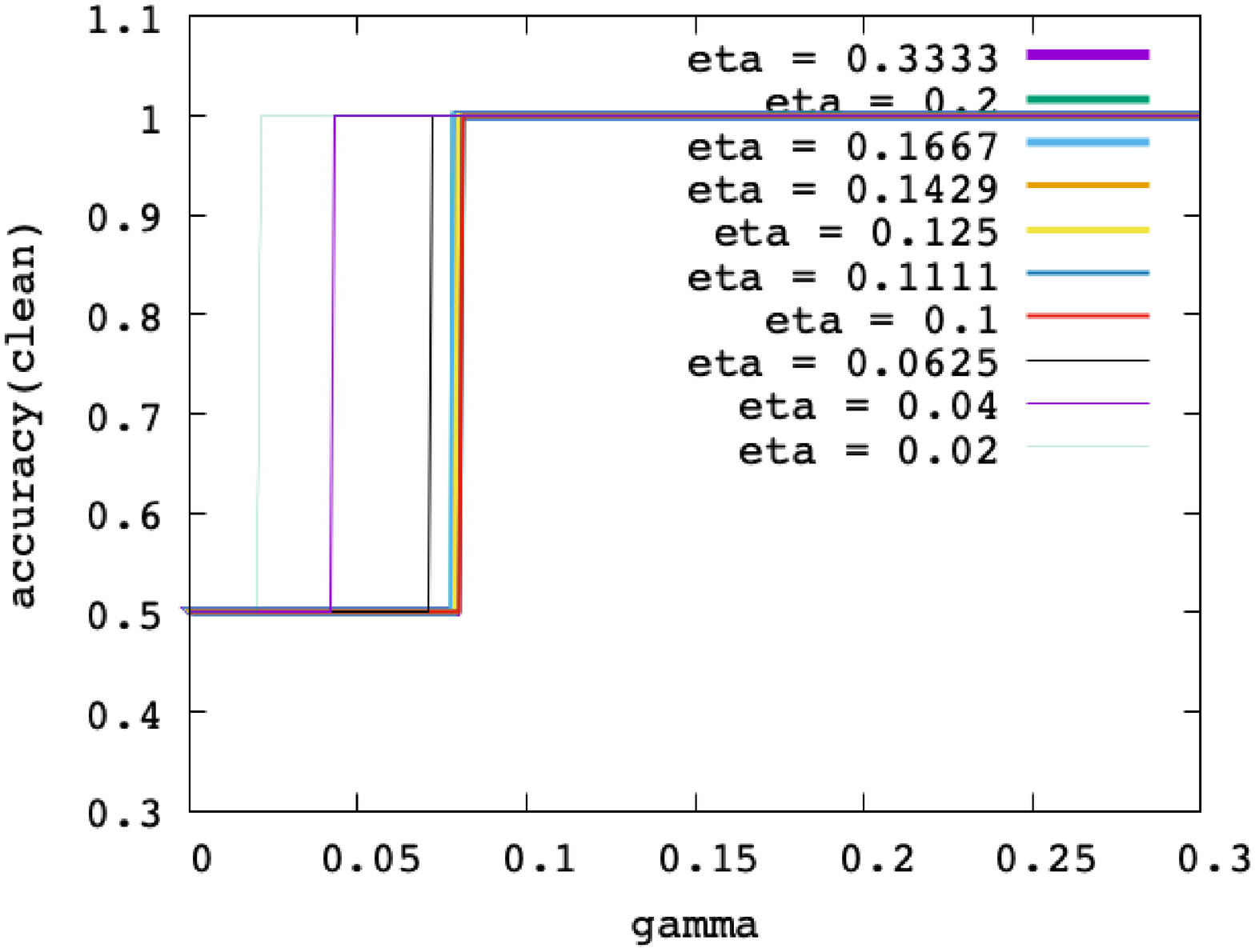} \basicnegativespacebefore & \basicnegativespacebefore \includegraphics[trim=50bp 0bp 10bp 10bp,clip,width=\picwidth\textwidth]{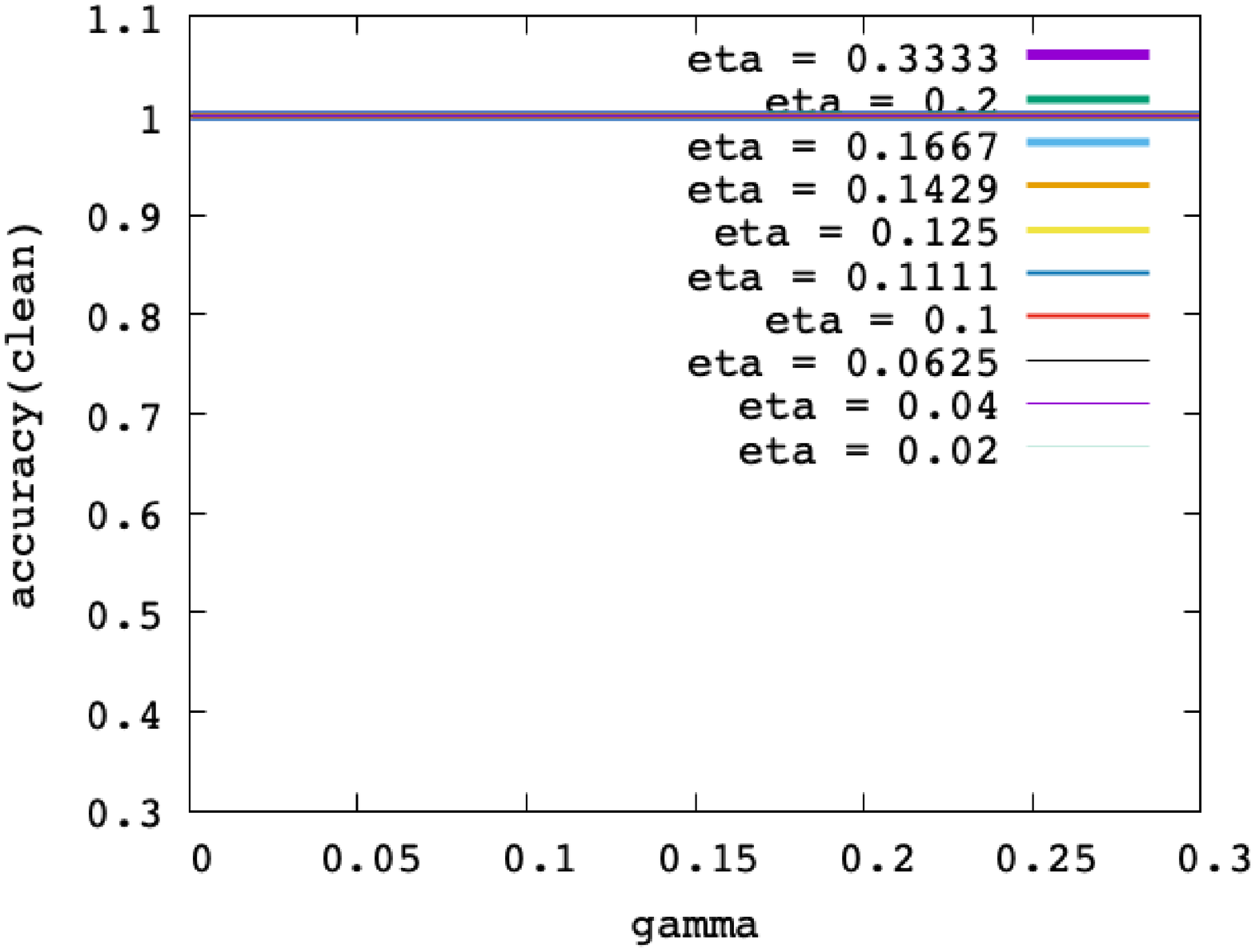} \basicnegativespacebefore & \basicnegativespacebefore \includegraphics[trim=50bp 0bp 10bp 10bp,clip,width=\picwidth\textwidth]{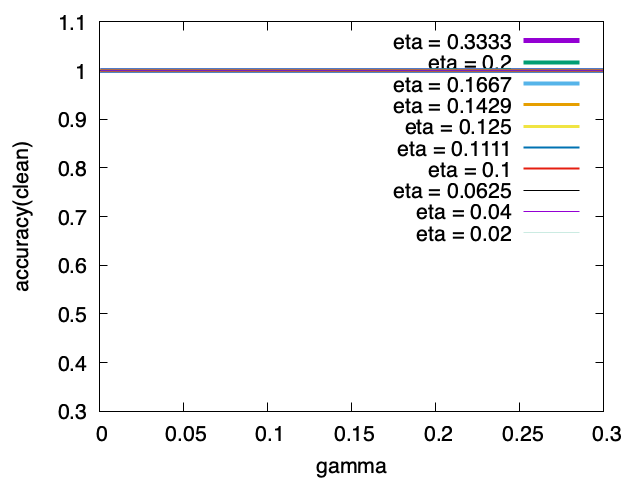} \basicnegativespacebefore & \basicnegativespace \includegraphics[trim=50bp 0bp 10bp 10bp,clip,width=\picwidth\textwidth]{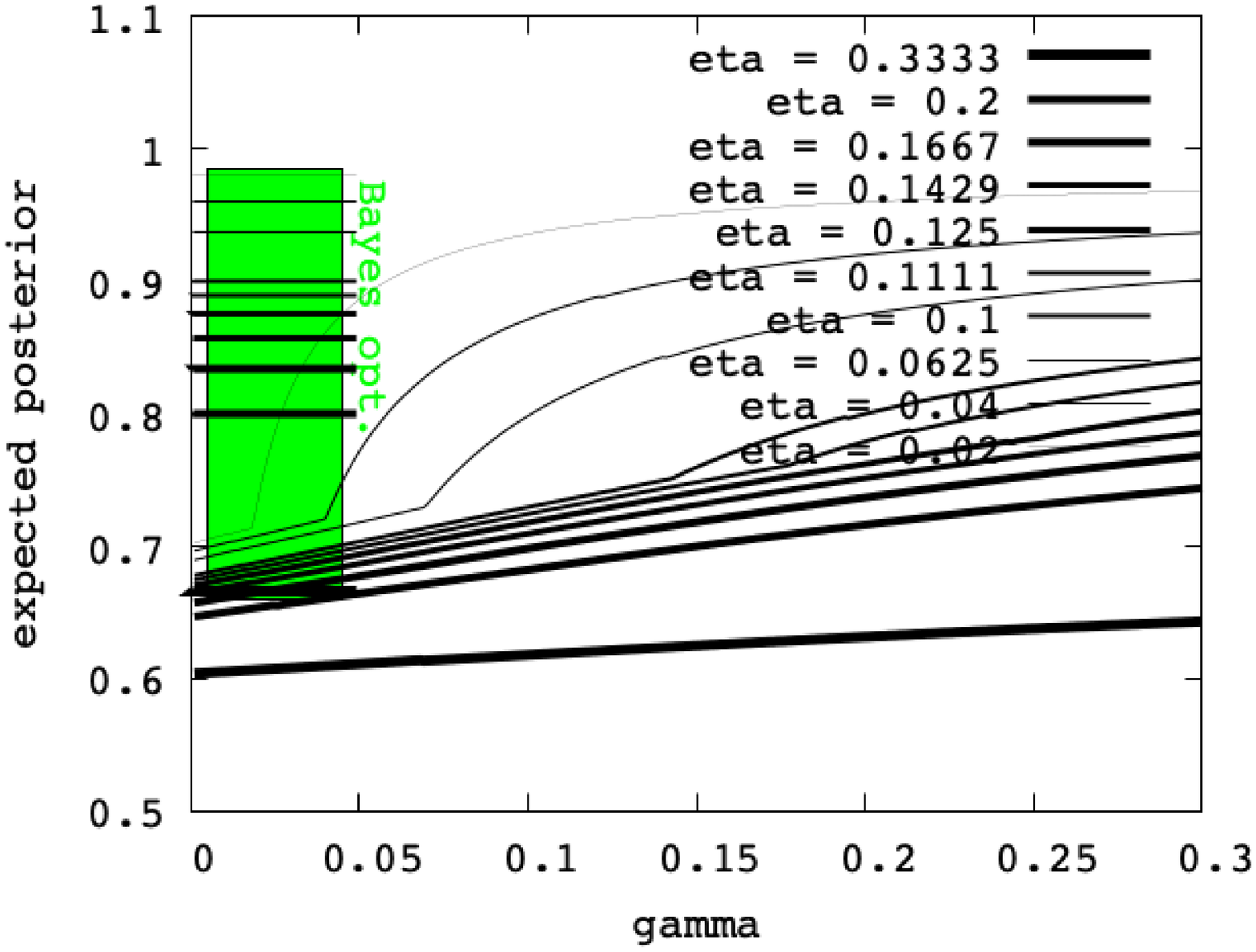} \basicnegativespacebefore & \basicnegativespacebefore \includegraphics[trim=50bp 0bp 10bp 10bp,clip,width=\picwidth\textwidth]{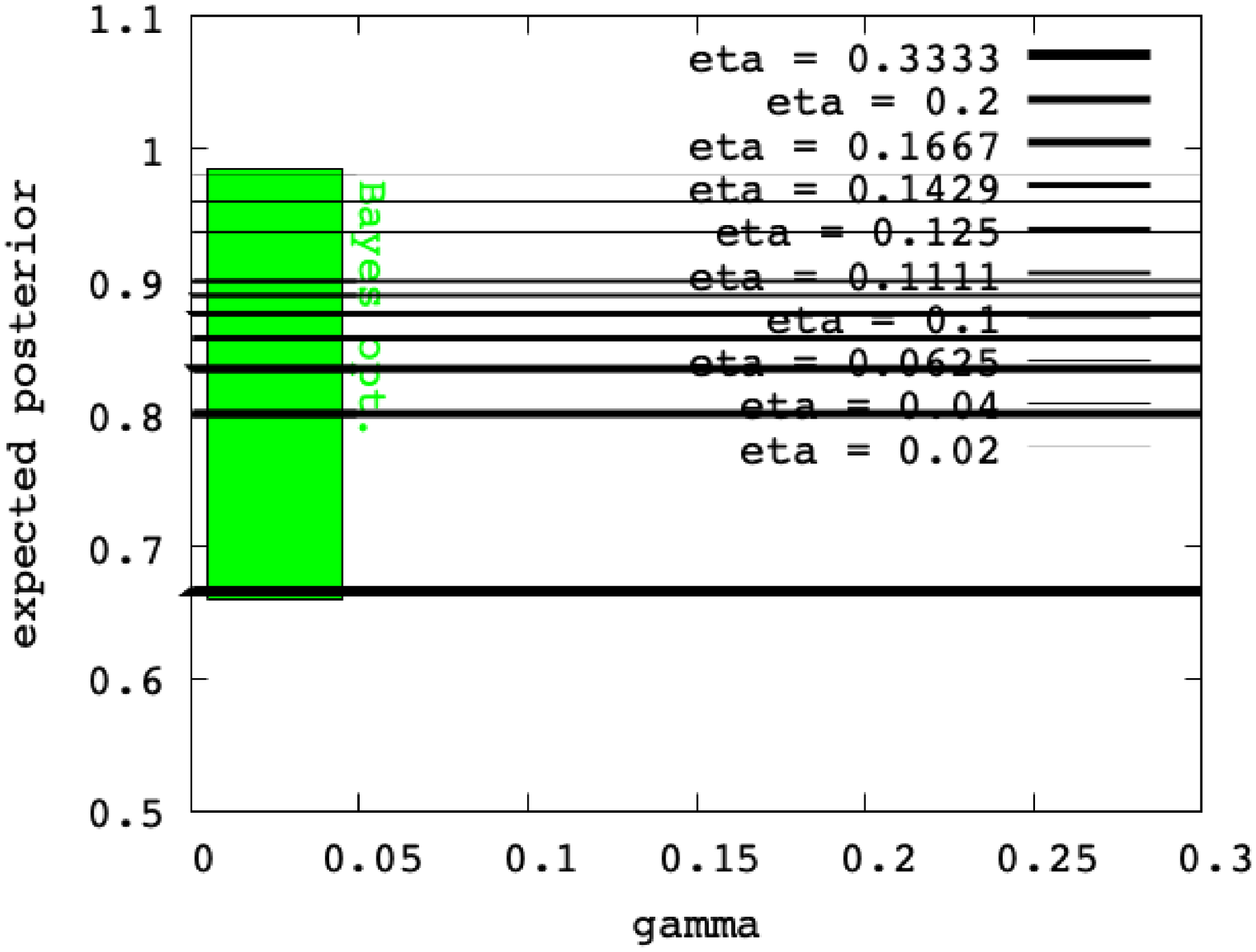} \basicnegativespace  & \basicnegativespacebefore \includegraphics[trim=50bp 0bp 10bp 10bp,clip,width=\picwidth\textwidth]{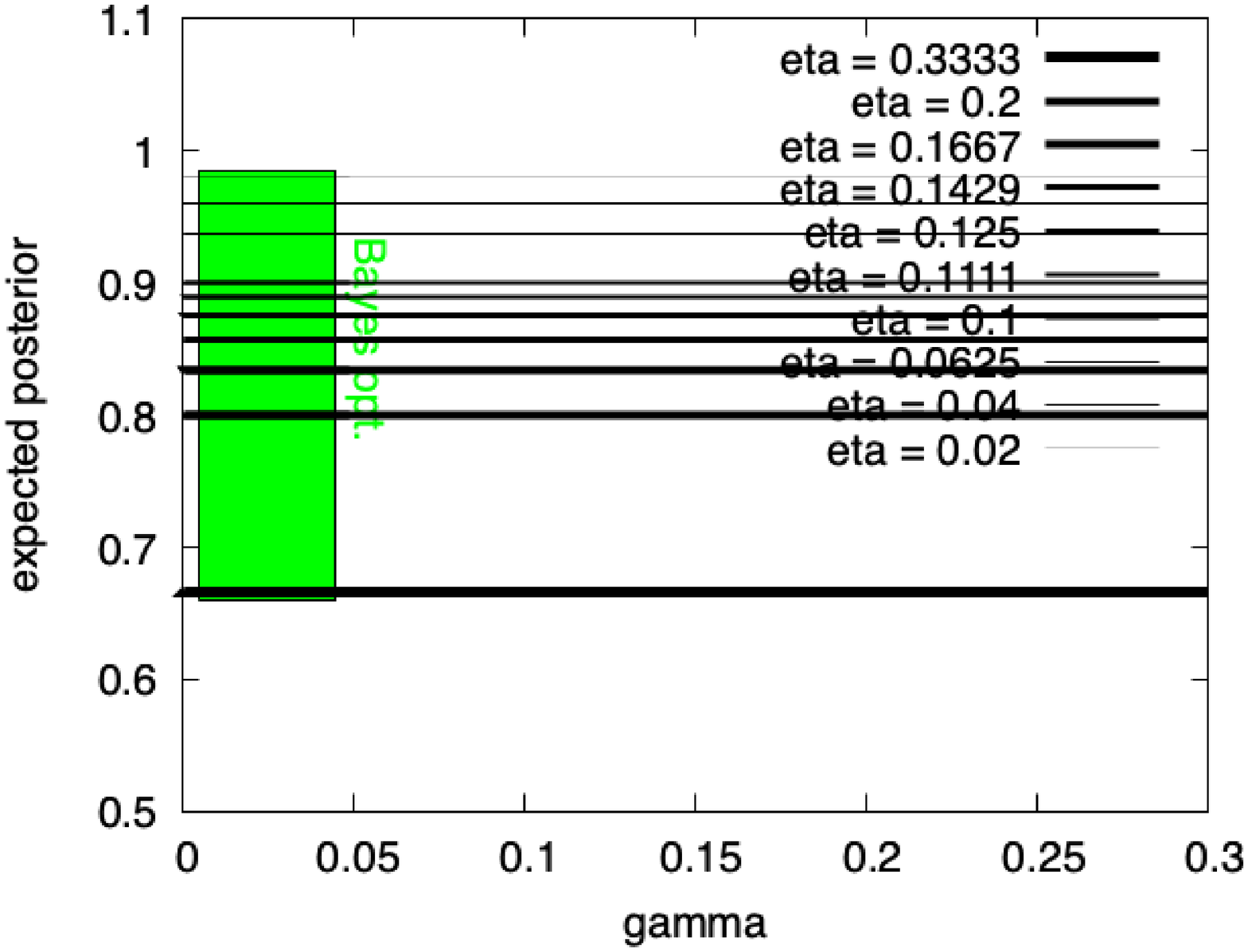} \basicnegativespace  & \basicnegativespace \includegraphics[trim=50bp 0bp 10bp 10bp,clip,width=\picwidth\textwidth]{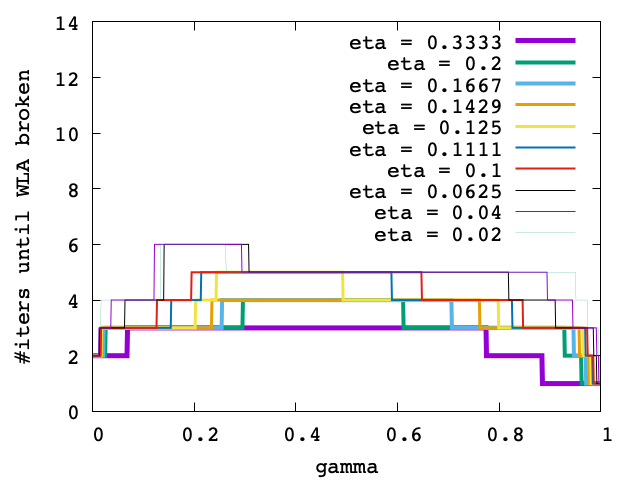} \basicnegativespacebefore & \basicnegativespacebefore \includegraphics[trim=50bp 0bp 10bp 10bp,clip,width=\picwidth\textwidth]{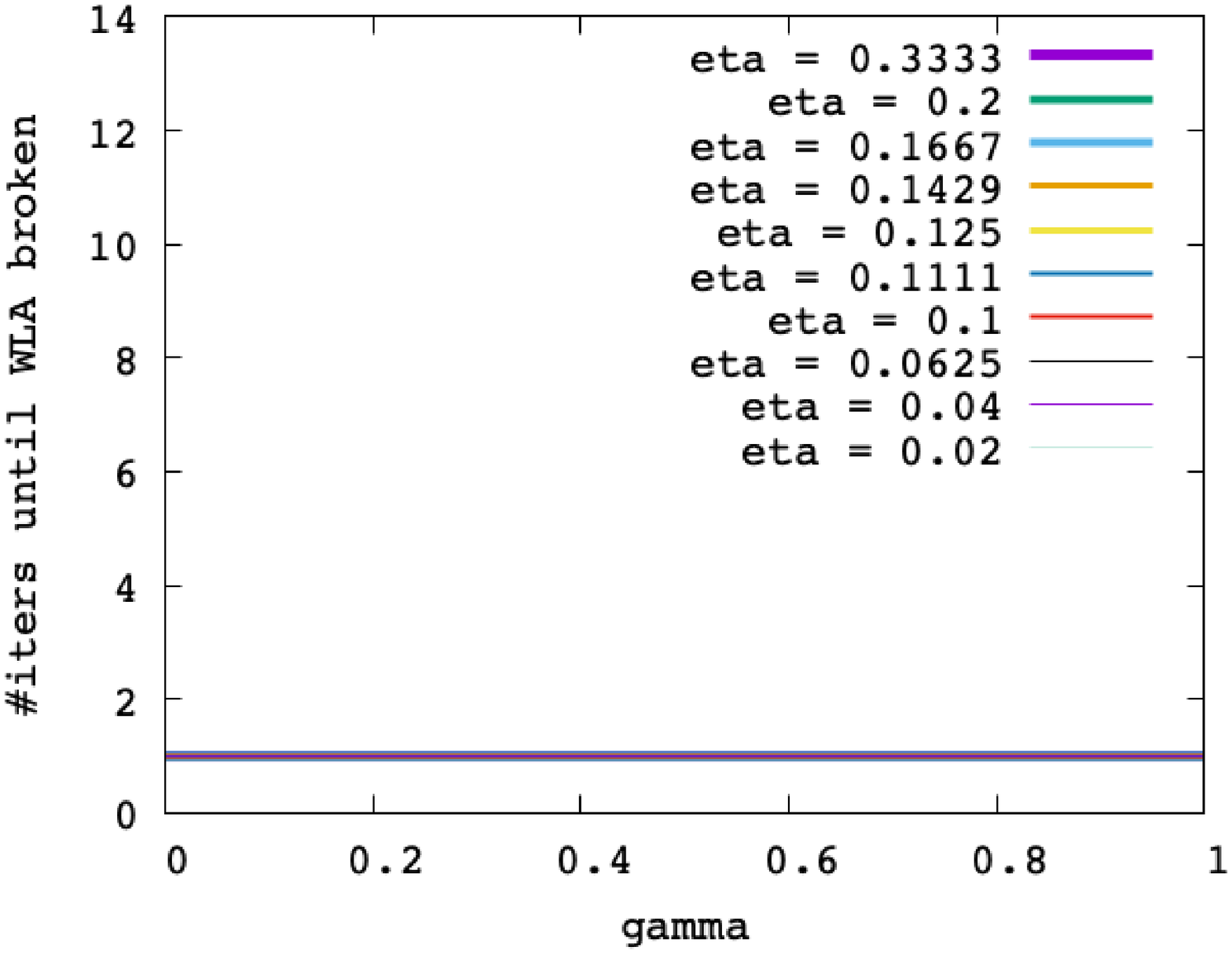} \basicnegativespace & \basicnegativespacebefore \includegraphics[trim=50bp 0bp 10bp 10bp,clip,width=\picwidth\textwidth]{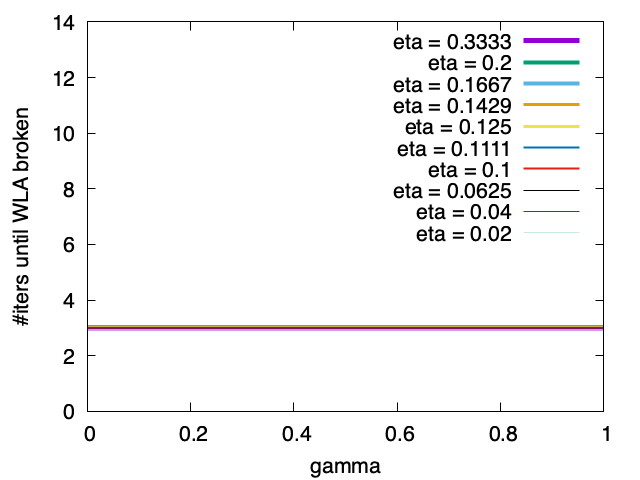} \basicnegativespace \\ 
\end{tabular}
\caption{Comparison of \topdowngen's results with the induction of linear separators (\cls), decision trees (\cdt) and 1-nearest neighbor (\cnn) after training on $\mathcal{S}_{\mbox{\tiny{noisy}}}$; each plot has parameter $\gamma$ in abscissa \eqref{defSclean} and in ordinate respectively the accuracy on $\mathcal{S}_{\mbox{\tiny{noisy}}}$ (left pane), the expected posterior estimation from $\mathcal{S}_{\mbox{\tiny{noisy}}}$ \eqref{maxliketa} (Bayes' optimum indicated in green rectangles, center pane), the number of iterations until the weak learner is "exhausted" (does not find a weak classifier with advantage $\gammawla \geq 0.001$, right pane). Different curves correspond to different values of the noise parameter $\etanoise$.}
    \label{tab:all-results}
  \end{table*}

\section{Discussion and conclusion: on parameterisation}\label{sec-discussion}

A partial explanation for the confusion about the results of \cite{lsRC} can be offered via the notion of \emph{parametrisation}. 
We elaborate this in terms of the three key ingredients of losses, models and algorithms. 
(We omit discussion of the other central ingredient of any learning problem, namely the data, or its 
theoretical representation as statistical experiments, which too, can be thought of parametrically \cite{torgersen1991cse}.)
Much ML research seems blind to the difference between a change of object, and a change in the parametrisation of 
an object. The clearest example of this is with loss functions, where it is known \cite{vanerven12a}  that the 
features of a loss function that govern its mixability depend only upon the induced geometry of its superprediction set.  
(Mixability controls whether one attains fast or slow learning in the online worst-case sequence prediction setting \cite{Vovk:1995}; 
there are generalisations of the notion that apply to the batch statistical setting \cite{vanerven15a}). 
From this perspective, losses are better thought of, and analysed in terms of the sets that they induce \cite{cranko2019proper}. 
The commonplace desire that the loss \emph{function} be convex (as a function) is controllable, independently, 
via a link function \cite{rwCB,Vernet2016}. Mathematically, the introduction 
of the link is tantamount to an (invertible, smooth) reparametrisation of the loss.  

There is one other point to be made about the loss function. The abstract idea of a loss function was developed by 
Wald \cite{Wald1950}  as a formalisation of the notion that when solving a data-driven problem, 
one ultimately has some goal in mind, and that can be captured by an outcome-contingent utility 
\cite{Berger1985}, or `loss'. 
Thus the loss is part of the problem statement. In contrast, in the ML literature, such as that arising from 
\cite{lsRC}, a loss function is considered as part of the specification of a `learning algorithm' (means of solving the problem).
From Wald's perspective, all of the work inspired by \cite{lsRC} is a perhaps not so surprising side-effect of 
attempting to solve one problem (classification using 0-1 loss) by using a method that utilises a 
\emph{different} loss. If one tries to repeat the negative example of \cite{lsRC} without the use of a 
surrogate, and always in terms of the Bayes optimal, there is nothing to see. When one adds some noise, 
the Bayes risk may change, but one will not see the apparent paradoxes of \cite{lsRC}. Recently, there has been a burst of research around new loss functions whole formulation aims to reduce the difficulty of the learning task, some becoming overwhelmingly popular \cite{lgghdFL}. One can see benefits of such a substantial shift from the normative view (of properness) to a more user-centric "\textit{\`a-la-Wald}'' design, but it usually comes with overloading loss functions with new hyperparameters. Technically, quantifing properties of the minimizers --- in effect, answering the question "\textit{what can be learned from this loss}" --- can be non-trivial \cite{snsBP} but it is an important task: Long and Servedio's result brightly demonstrates that we cannot reasonably stick with the choice "classifier = linear" and "loss = convex" (and eventually "algorithm = boosting") if the data is subject to corruption. One would have many reasons to stick with linear separators \textit{e.g.} for their simplicity and interpretability. In such a case, changing the loss, breaking properness and eventually convexity might just be a requirement.

A less widespread example is the reparametrisation of a model class. It is known (starting with Vapnik 
and a long line of refinements in the ML literature) that the statistical complexity  of a learning 
problem in the statistical batch setting is controlled by the complexity of the model class. This complexity 
is in terms of the class considered as a set, and is not influenced by how the elements of the class are 
parametrised. (The hardness is additionally influenced by the loss function, as per the above paragraph, 
however this is rarely made explicit; confer \cite{vanerven15a}). Thus the statistical complexity of learning 
with a model class comprising rational functions of degree $n$ will not depend upon whether the functions 
are parametrised in factored form, as partial fractions, or as ratios of polynomials in canonical sum form. 
Lest it be objected that no-one would use such a strange class, we note that in the simplest case 
analysable, classical sigmoidal neural networks can be reparametrised in terms of rational functions, 
and thus at least these three parametrisations are open to use \cite{williamson1995existence}. 
The parametrisation, whilst not changing 
the model class (or, say its VC dimension) \emph{will} change the behaviour of learning algorithms, 
in particular gradient based algorithms, which can misbehave due to attractors at infinity \cite{blackmore1996local}
--- a phenomenon caused by the choice of parametrisation.    

The final ingredient to consider is the algorithm. We have demonstrated that a boosting algorithm can 
be constructed that works successfully in the noisy situation (when using suitable model classes), 
but we have not really addressed head-on the perhaps more direct response to \cite{lsRC}, which is to 
challenge the definition of what is, and what is not, a `boosting algorithm.' There are several obvious 
ways to proceed here (e.g.~in terms of what the boosting algorithm is provided as input, in the form 
of weak learners). But all such attempts stumble over a more challenging issue: namely that there is no 
sensible way to compare algorithms --- we cannot even say `when is one algorithm equal to another?' \cite{blass2009two}. 
The irony is that the object that is most valorised in machine learning research, namely the algorithm, 
hardly satisfies the conceptual properties one demands of any `object' --- namely that we can tell 
when two objects are the same or different. We do not attempt to resolve this challenge here; 
indeed we think it is intrinsically unresolvable except up to a family of canonical isomorphisms, 
which need to be made explicit to really qualify as a legitimate answer \cite{Mazur:2008aa} -- perhaps this will give 
some insight into `natural' parametrisations of different learning algorithms. 
Of course, instead of attempting to do this, 
we could just focus our attention on other perspectives!

One promising perspective, from the algorithmic standpoint is, we believe, the need for boosting algorithms for more complex / overparameterized architectures. Quite
remarkably, in the dozens of references citing \cite{lsRC} we compile in \supplement, only one alludes to the key sufficient condition to solve \cite{lsRC}'s problem: Schapire mentions the potential lack of "richness" of hypotheses available to the weak learner in the context of AdaBoost \cite{sEA}. This resonates with a comment precisely in \cite{lsRC} whereby linear separators lack capacity to control confidences as would richer classes do, a model's flaw exploited by the negative results. Richness can be related to the choice in the weak hypotheses' architecture and thus their model's parameterisation; also, in the context of boosting, if we stick to the idea that the weak learner ultimately manipulates simple hypotheses -- which is important \textit{e.g.} in the context of \cdt~where this entails a substantial combinatorial search of splits --, then the question of \textit{where} to put the weak learning assumption arises. \topdowngen~shows that we can consider that boosting a linear combination of models with the logistic loss, each of which is a top-down \cdt~learned by minimizing C4.5's binary entropy, is in fact a \textit{recursive} application of \topdowngen~with the \textit{same loss} (log-loss) but different architectures, where the weak learning assumption only needs to be carried out at the tree's splits, then used to learn trees with \topdowngen, then used to learn the linear combination of trees also with \topdowngen. Hence, complex architectures could then be learned using recursive calls to boosters like \topdowngen, progressively building the architecture from basic building blocks at the bottom of the recursion searchable by a weak learner. This is convenient but still far from the "architecture's swiss army knife" optimization tool for ML that is stochastic gradient descent: can we make boosting "\textit{\`a-la-Kearns}" \cite{kTO} work for more architectures ?

To conclude, in this paper, we have used the founding theory of losses for class
probability estimation (properness) and a new boosting algorithm to demonstrate that the source of the negative result in the context of Long and Servedio's results is the model class. More than shining a new light on the model class' "responsibility" for the negative result, we believe our results rather show pitfalls of general parameterisations of a ML problem, including model class but also the learning algorithm and the loss function it optimizes. It remains an open question as to whether such stark "breakdowns" can be shown for highly ML-relevant triples (algorithm, loss, model) different from ours / \cite{lsRC}.
\section*{Acknowledgments}

Many thanks to Phil Long for stimulating discussions around the material presented and tipping us on the rotation argument for the proof of Lemma \ref{lem-Rot}.


\bibliographystyle{plain}
\bibliography{bibgen}

\begin{thebibliography}{10}

\bibitem{aghmBS}
N.~Alon, A.~Gonen, E.~Hazan, and S.~Moran.
\newblock Boosting simple learners.
\newblock In {\em STOC'21}, 2021.

\bibitem{anMO}
S.-I. Amari and H.~Nagaoka.
\newblock {\em Methods of Information Geometry}.
\newblock Oxford University Press, 2000.

\bibitem{awakRB}
E.~Amid, M.-K. Warmuth, R.~Anil, and T.~Koren.
\newblock Robust bi-tempered logistic loss based on bregman divergences.
\newblock In {\em NeurIPS*32}, pages 14987--14996, 2019.

\bibitem{awsTT}
E.~Amid, M.-K. Warmuth, and S.~Srinivasan.
\newblock Two-temperature logistic regression based on the {Tsallis}
  divergence.
\newblock In {\em 22$^{nd}$ AISTATS}, volume~89, pages 2388--2396, 2019.

\bibitem{bssCS}
H.~Bao, C.~Scott, and M.~Sugiyama.
\newblock Calibrated surrogate losses for adversarially robust classification.
\newblock In {\em 33$^{~rd}$ COLT}, volume 125, pages 408--451, 2020.

\bibitem{blssMT}
S.~Ben{-}David, D.~Loker, N.~Srebro, and K.~Sridharan.
\newblock Minimizing the misclassification error rate using a surrogate convex
  loss.
\newblock In {\em 29$^{~th}$ ICML}, 2012.

\bibitem{Berger1985}
James~O. Berger.
\newblock {\em Statistical Decision Theory and Bayesian Analysis}.
\newblock Springer, New York, 1985.

\bibitem{blackmore1996local}
Kim~L Blackmore, Robert~C Williamson, and Iven~MY Mareels.
\newblock Local minima and attractors at infinity for gradient descent learning
  algorithms.
\newblock {\em JOURNAL OF MATHEMATICAL SYSTEMS ESTIMATION AND CONTROL},
  6:231--234, 1996.

\bibitem{bfhps}
G.~Blanchard, M.~Flaska, G.~Handy, S.~Pozzi, and C.~Scott.
\newblock Classification with asymmetric label noise: consistency and maximal
  denoising.
\newblock {\em Electronic J. of Statistics}, 10:2780--2824, 2016.

\bibitem{blass2009two}
Andreas Blass, Nachum Dershowitz, and Yuri Gurevich.
\newblock When are two algorithms the same?
\newblock {\em Bulletin of Symbolic Logic}, 15(2):145--168, 2009.

\bibitem{bfosCA}
L.~Breiman, J.~H. Freidman, R.~A. Olshen, and C.~J. Stone.
\newblock {\em Classification and regression trees}.
\newblock Wadsworth, 1984.

\bibitem{bcsEN}
M.~Bun, M.-L. Carmosino, and J.~Sorrell.
\newblock Efficient, noise-tolerant, and private learning via boosting.
\newblock In {\em COLT'20}, Proceedings of Machine Learning Research, pages
  1031--1077. {PMLR}, 2020.

\bibitem{clsOS}
N.~Charoenphakdee, J.~Lee, and M.~Sugiyama.
\newblock On symmetric losses for learning from corrupted labels.
\newblock In {\em 36$^{th}$ ICML}, volume~97, pages 961--970, 2019.

\bibitem{cefNC}
S.~Cheamanunkul, E.~Ettinger, and Y.~Freund.
\newblock Non-convex boosting overcomes random label noise.
\newblock {\em CoRR}, abs/1409.2905, 2014.

\bibitem{cbcCE}
S.-T. Chen, M.-F. Balcan, and D.-H. Chau.
\newblock Communication efficient distributed agnostic boosting.
\newblock In {\em 19$^{th}$ AISTATS}, volume~51, pages 1299--1307, 2016.

\bibitem{clrtLW}
J.~Cheng, T.~Liu, K.~Ramamohanarao, and D.~Tao.
\newblock Learning with bounded instance and label-dependent label noise.
\newblock In {\em 37$^{th}$ ICML}, volume 119, pages 1789--1799, 2020.

\bibitem{cLO}
H.-I. Choi.
\newblock Lectures on machine learning, 2017.
\newblock Seoul National University.

\bibitem{chNN}
T.-M. Cover and P.~Hart.
\newblock Nearest neighbor pattern classification.
\newblock {\em IEEE Trans. IT}, 13:21--27, 1967.

\bibitem{cranko2019proper}
Zac Cranko, Robert~C Williamson, and Richard Nock.
\newblock Proper-composite loss functions in arbitrary dimensions.
\newblock {\em arXiv e-prints}, pages arXiv--1902, 2019.

\bibitem{diklstBI}
I.~Diakonikolas, R.~Impagliazzo, D.-M. Kane, R.~Lei, J.~Sorrell, and C.~Tzamos.
\newblock Boosting in the presence of {Massart} noise.
\newblock In {\em 34$^{~th}$ COLT}, volume 134, pages 1585--1644, 2021.

\bibitem{dvTL}
N.~Ding and S.-V.-N. Vishwanathan.
\newblock t-logistic regression.
\newblock In {\em NIPS*23}, pages 514--522, 2010.

\bibitem{fmTA}
Y.~Freund and L.~Mason.
\newblock The alternating decision tree learning algorithm.
\newblock In {\em Proc.\ of the 16$^{~th}$ International Conference on Machine
  Learning}, pages 124--133, 1999.

\bibitem{fhtAL}
J.~Friedman, T.~Hastie, and R.~Tibshirani.
\newblock {A}dditive {L}ogistic {R}egression : a {S}tatistical {V}iew of
  {B}oosting.
\newblock {\em Ann. of Stat.}, 28:337--374, 2000.

\bibitem{gwlzRM}
W.~Gao, L.~Wang, Y.-F. Li, and Z.-H. Zhou.
\newblock Risk minimization in the presence of label noise.
\newblock In {\em AAAI'16}, pages 1575--1581, 2016.

\bibitem{gSM}
M.~Geist.
\newblock Soft-max boosting.
\newblock {\em MLJ}, 100(2-3):305--332, 2015.

\bibitem{gksRL}
A.~Ghosh, H.~Kumar, and P.-S. Sastry.
\newblock Robust loss functions under label noise for deep neural networks.
\newblock In {\em AAAI'17}, pages 1919--1925, 2017.

\bibitem{gmsOT}
A.~Ghosh, N.~Manwani, and P.-S. Sastry.
\newblock On the robustness of decision tree learning under label noise.
\newblock In {\em PAKDD'17}, pages 685--697, 2017.

\bibitem{hnnRB}
C.~Henry, R.~Nock, and F.~Nielsen.
\newblock {I$\!$R}eal boosting \textit{a la Carte} with an application to
  boosting {Oblique Decision Trees}.
\newblock In {\em Proc.\ of the 21$^{~st}$ International Joint Conference on
  Artificial Intelligence}, pages 842--847, 2007.

\bibitem{kkPB}
A.~Kalai and V.~Kanade.
\newblock Potential-based agnostic boosting.
\newblock In {\em NIPS*22}, pages 880--888, 2009.

\bibitem{ksBI}
A.~Kalai and R.-A. Servedio.
\newblock Boosting in the presence of noise.
\newblock In {\em STOC'03}, pages 195--205. {ACM}, 2003.

\bibitem{kTO}
M.J. Kearns.
\newblock Thoughts on hypothesis boosting, 1988.
\newblock ML class project.

\bibitem{klpvOT}
M.J. Kearns, M.~Li, L.~Pitt, and L.~Valiant.
\newblock On the learnability of boolean formulae.
\newblock In {\em Proc.\ of the 19$^{~th}$ ACM Symposium on the Theory of
  Computing}, pages 285--295, 1987.

\bibitem{kmOT}
M.J. Kearns and Y.~Mansour.
\newblock On the boosting ability of top-down decision tree learning
  algorithms.
\newblock In {\em Proc.\ of the 28$^{~th}$ ACM STOC}, pages 459--468, 1996.

\bibitem{kTN}
D.-E. Knuth.
\newblock Two notes on notation.
\newblock {\em The American Mathematical Monthly}, 99(5):403--422, 1992.

\bibitem{lbBI}
A.-H. Li and J.~Bradic.
\newblock Boosting in the presence of outliers: Adaptive classification with
  nonconvex loss functions.
\newblock {\em Journal of the American Statistical Association},
  113(522):660--674, 2018.

\bibitem{lgghdFL}
T.-Y. Lin, P.~Goyal, R.-B. Girshick, K.~He, and P.~Doll{\'{a}}r.
\newblock Focal loss for dense object detection.
\newblock In {\em ICCV'17}, pages 2999--3007, 2017.

\bibitem{lpcLA}
X.~Liu, J.~Petterson, and T.-S. Caetano.
\newblock Learning as {MAP} inference in discrete graphical models.
\newblock In {\em NIPS*25}, pages 1979--1987, 2012.

\bibitem{lsAM}
P.-M. Long and R.-A. Servedio.
\newblock Adaptive martingale boosting.
\newblock In {\em NIPS*21}, pages 977--984, 2008.

\bibitem{lsRC-conf}
P.-M. Long and R.-A. Servedio.
\newblock Random classification noise defeats all convex potential boosters.
\newblock In {\em 25$^{th}$ ICML}, pages 608--615, 2008.

\bibitem{lsRC}
P.-M. Long and R.-A. Servedio.
\newblock Random classification noise defeats all convex potential boosters.
\newblock {\em MLJ}, 78(3):287--304, 2010.

\bibitem{lsLL}
P.-M. Long and R.-A. Servedio.
\newblock Learning large-margin halfspaces with more malicious noise.
\newblock In {\em NIPS*24}, pages 91--99, 2011.

\bibitem{lguedjsfsvBM}
J.-M. Luna, E.-D. Gennatas, L.-H. Ungar, E.~Eaton, E.-S. Diffenderfer, S.-T.
  Jensen, C.-B. {Simone II}, J.-H. Friedman, T.-D. Solberg, and G.~Valdes.
\newblock Building more accurate decision trees with the additive tree.
\newblock {\em PNAS}, 116:19887–--19893, 2019.

\bibitem{mmBU}
Y.~Mansour and D.~Mc{A}llester.
\newblock Boosting using branching programs.
\newblock In {\em Proc.\ of the 13$^{~th}$ International Conference on
  Computational Learning Theory}, pages 220--224, 2000.

\bibitem{Mazur:2008aa}
Barry Mazur.
\newblock When is one thing equal to some other thing?
\newblock In Bonnie Gold and Roger~A. Simons, editors, {\em Proof and other
  Dilemmas: Mathematics and Philosophy}, pages 221--241. The Mathematical
  Association of America, 2008.

\bibitem{mTR}
A.-K. Menon.
\newblock The risk of trivial solutions in bipartite top ranking.
\newblock {\em MLJ}, 108(4):627--658, 2019.

\bibitem{mvnLF}
A.-K. Menon, B.~{van Rooyen}, and N.~Natarajan.
\newblock Learning from binary labels with instance-dependent noise.
\newblock {\em MLJ}, 107(8-10):1561--1595, 2018.

\bibitem{msAT}
I.~Mukherjee and R.-E. Schapire.
\newblock A theory of multiclass boosting.
\newblock {\em JMLR}, 14(1):437--497, 2013.

\bibitem{mlUS}
S.~Mussmann and P.~Liang.
\newblock Uncertainty sampling is preconditioned stochastic gradient descent on
  zero-one loss.
\newblock In {\em NeurIPS*31}, pages 6955--6964, 2018.

\bibitem{ndrtLW}
N.~Natarajan, I.-S. Dhillon, P.~Ravikumar, and A.~Tewari.
\newblock Learning with noisy labels.
\newblock In {\em NeurIPS*26}, pages 1196--1204, 2013.

\bibitem{nsAF}
T.~Nguyen and S.~Sanner.
\newblock Algorithms for direct 0-1 loss optimization in binary classification.
\newblock In {\em 30$^{th}$ ICML}, volume~28, pages 1085--1093, 2013.

\bibitem{nbanbGN}
R.~Nock, W.~{Bel Haj Ali}, R.~{D'Ambrosio}, F.~Nielsen, and M.~Barlaud.
\newblock Gentle nearest neighbors boosting over proper scoring rules.
\newblock {\em IEEE Trans.PAMI}, 37(1):80--93, 2015.

\bibitem{nmSL}
R.~Nock and A.~K. Menon.
\newblock Supervised learning: No loss no cry.
\newblock In {\em 37$^{th}$ ICML}, 2020.

\bibitem{nnARj}
R.~Nock and F.~Nielsen.
\newblock {A $\mathbb{R}$eal Generalization of discrete AdaBoost}.
\newblock {\em Artificial Intelligence}, 171:25--41, 2007.

\bibitem{nnOT}
R.~Nock and F.~Nielsen.
\newblock On the efficient minimization of classification-calibrated
  surrogates.
\newblock In {\em NIPS*21}, pages 1201--1208, 2008.

\bibitem{nwLO}
R.~Nock and R.-C. Williamson.
\newblock Lossless or quantized boosting with integer arithmetic.
\newblock In {\em 36$^{th}$ ICML}, pages 4829--4838, 2019.

\bibitem{ncRF}
A.~Noy and K.~Crammer.
\newblock Robust forward algorithms via {PAC}-bayes and laplace distributions.
\newblock In {\em 17$^{th}$ AISTATS}, volume~33, pages 678--686, 2014.

\bibitem{omlSG}
O.~Olabiyi, E.-T. Mueller, and C.~Larson.
\newblock Stochastic gradient boosting for deep neural networks, 2021.
\newblock US patent 10,990,878.

\bibitem{ppIN}
N.-E. Pfetsch and Sebastian Pokutta.
\newblock {IPBoost} - non-convex boosting via integer programming.
\newblock In {\em 37$^{th}$ ICML}, volume 119, pages 7663--7672, 2020.

\bibitem{qC4}
J.~R. Quinlan.
\newblock {\em C4.5 : programs for machine learning}.
\newblock Morgan Kaufmann, 1993.

\bibitem{rwCB}
M.-D. Reid and R.-C. Williamson.
\newblock Composite binary losses.
\newblock {\em JMLR}, 11:2387--2422, 2010.

\bibitem{rwID}
M.-D. Reid and R.-C. Williamson.
\newblock Information, divergence and risk for binary experiments.
\newblock {\em JMLR}, 12:731--817, 2011.

\bibitem{sgplbOM}
A.~Saffari, M.~Godec, T.~Pock, C.~Leistner, and H.~Bischof.
\newblock Online multi-class {LPBoost}.
\newblock In {\em Proc.\ of the 23$^{rd}$ IEEE CVPR}, pages 3570--3577, 2010.

\bibitem{sEO}
L.-J. Savage.
\newblock Elicitation of personal probabilities and expectations.
\newblock {\em J. of the Am. Stat. Assoc.}, pages 783--801, 1971.

\bibitem{sEA}
R.-E. Schapire.
\newblock Explaining adaboost.
\newblock In Bernhard Sch{\"{o}}lkopf, Zhiyuan Luo, and Vladimir Vovk, editors,
  {\em Empirical Inference - Festschrift in Honor of Vladimir N. Vapnik}, pages
  37--52, 2013.

\bibitem{sfblBT}
R.~E. Schapire, Y.~Freund, P.~Bartlett, and W.~S. Lee.
\newblock Boosting the margin : a new explanation for the effectiveness of
  voting methods.
\newblock {\em Annals of statistics}, 26:1651--1686, 1998.

\bibitem{ssIB}
R.~E. Schapire and Y.~Singer.
\newblock Improved boosting algorithms using confidence-rated predictions.
\newblock In {\em 9$^{~th}$ COLT}, pages 80--91, 1998.

\bibitem{sbhCW}
C.~Scott, G.~Blanchard, and G.~Handy.
\newblock Classification with asymmetric label noise: Consistency and maximal
  denoising.
\newblock In {\em 26$^{~th}$ COLT}, volume~30, pages 489--511, 2013.

\bibitem{samAP}
E.~Shuford, A.~Albert, and H.-E. Massengil.
\newblock Admissible probability measurement procedures.
\newblock {\em Psychometrika}, pages 125--145, 1966.

\bibitem{snsBP}
T.~Sypherd, R.~Nock, and L.~Sankar.
\newblock Being properly improper.
\newblock In {\em 39$^{th}$ ICML}, 2022.

\bibitem{tOT}
K.~Talwar.
\newblock On the error resistance of {H}inge-loss minimization.
\newblock In {\em NeurIPS*33}, 2020.

\bibitem{tAP}
M.~Telgarsky.
\newblock A primal-dual convergence analysis of boosting.
\newblock {\em JMLR}, 13:561--606, 2012.

\bibitem{torgersen1991cse}
Erik~N. Torgersen.
\newblock {\em {Comparison of Statistical Experiments}}.
\newblock Cambridge University Press, 1991.

\bibitem{thCS}
S.~Tripathi and N.~Hemachandra.
\newblock Cost sensitive learning in the presence of symmetric label noise.
\newblock In {\em PAKDD'19}, volume 11439, pages 15--28, 2019.

\bibitem{vanerven15a}
Tim van Erven, Peter~D. Gr{{\"u}}nwald, Nishant~A. Mehta, Mark~D. Reid, and
  Robert~C. Williamson.
\newblock Fast rates in statistical and online learning.
\newblock {\em Journal of Machine Learning Research}, 16(54):1793--1861, 2015.

\bibitem{vanerven12a}
Tim van Erven, Mark~D. Reid, and Robert~C. Williamson.
\newblock Mixability is bayes risk curvature relative to log loss.
\newblock {\em Journal of Machine Learning Research}, 13(52):1639--1663, 2012.

\bibitem{vmwLW}
B.~{van Rooyen}, A.~Menon, and R.-C. Williamson.
\newblock Learning with symmetric label noise: The importance of being
  unhinged.
\newblock In {\em NIPS*28}, 2015.

\bibitem{vmAA}
B.~{van Rooyen} and A.-K. Menon.
\newblock An average classification algorithm.
\newblock {\em CoRR}, abs/1506.01520, 2015.

\bibitem{Vovk:1995}
Volodya Vovk.
\newblock A game of prediction with expert advice.
\newblock In {\em Proceedings of the Eighth Annual Conference on Computational
  Learning Theory}, pages 51--60. ACM, 1995.

\bibitem{wwLT}
E.~Walach and L.~Wolf.
\newblock Learning to count with {CNN} boosting.
\newblock In {\em ECCV'16}, volume 9906, pages 660--676, 2016.

\bibitem{Wald1950}
Abraham Wald.
\newblock {\em Statistical Decision Functions}.
\newblock John Wiley \& Sons, New York, 1950.

\bibitem{wEI}
F.~Ward.
\newblock Essays in international macroeconomics and financial crisis
  forecasting, 2017.
\newblock PhD Dissertation, Friedrich-Wilhelms-Universit{\"a}t Bonn.

\bibitem{williamson1995existence}
Robert~C Williamson and Uwe Helmke.
\newblock Existence and uniqueness results for neural network approximations.
\newblock {\em IEEE Transactions on Neural Networks}, 6(1):2--13, 1995.

\bibitem{Vernet2016}
Robert~C. Williamson, Elodie Vernet, and Mark~D. Reid.
\newblock Composite multiclass losses.
\newblock {\em Journal of Machine Learning Research}, 17(222):1--52, 2016.

\bibitem{xhCA}
M.~Xie and S.~Huang.
\newblock {CCMN}: A general framework for learning with class-conditional
  multi-label noise.
\newblock {\em IEEE T. PAMI}, 2022.

\bibitem{zllAS}
Z.~Zhu, T.~Liu, and Y.~Liu.
\newblock A second-order approach to learning with instance-dependent label
  noise.
\newblock In {\em 34$^{th}$ IEEE CVPR}, pages 10113--10123, 2021.

\end{thebibliography}

\newpage
\appendix
\onecolumn
\renewcommand\thesection{\Roman{section}}
\renewcommand\thesubsection{\thesection.\arabic{subsection}}
\renewcommand\thesubsubsection{\thesection.\thesubsection.\arabic{subsubsection}}

\renewcommand*{\thetheorem}{\Alph{theorem}}
\renewcommand*{\thelemma}{\Alph{lemma}}
\renewcommand*{\thecorollary}{\Alph{corollary}}

\renewcommand{\thetable}{A\arabic{table}}

\begin{center}
\Huge{Appendix}
\end{center}

To
differentiate with the numberings in the main file, the numbering of
Theorems, etc. is letter-based (A, B, ...).

\section*{Table of contents}

\noindent \textbf{What the papers say} \hrulefill Pg
\pageref{sec-wtps}\\

\noindent \textbf{Supplementary material on proofs} \hrulefill Pg
\pageref{sec-sup-pro}\\
\noindent $\hookrightarrow$ Proof of Lemma \ref{lem-PHI1} \hrulefill Pg \pageref{proof-lem-PHI1}\\
\noindent $\hookrightarrow$ Proof of Lemma \ref{lem-GEN1} \hrulefill Pg \pageref{proof-lem-GEN1}\\
\noindent $\hookrightarrow$ Proof of Lemma \ref{lem-Rot} \hrulefill Pg \pageref{proof-lem-Rot}\\
\noindent $\hookrightarrow$ A side negative result for \topdowngen~with \cls \hrulefill Pg \pageref{proof-sidenegative}\\
\noindent $\hookrightarrow$ Proof of Lemma \ref{solALPHA} \hrulefill Pg \pageref{proof-lem-solALPHA}\\
\noindent $\hookrightarrow$ Proof of Theorem \ref{th-boost-pbls} \hrulefill Pg \pageref{proof-th-boost-pbls}\\
\noindent $\hookrightarrow$ Proof of Lemma \ref{lem-split-gives-wla} \hrulefill Pg \pageref{proof-lem-split-gives-wla}\\
\noindent $\hookrightarrow$ Proof of Lemma \ref{lem-J} \hrulefill Pg \pageref{proof-lem-J}\\
\noindent $\hookrightarrow$ Proof of Lemma \ref{lem-BOO2} \hrulefill Pg \pageref{proof-lem-BOO2}\\
\noindent $\hookrightarrow$ Proof of Lemma \ref{lem-NoNoise} \hrulefill Pg \pageref{proof-lem-NoNoise}\\

\noindent \textbf{Supplementary material on experiments} \hrulefill Pg
\pageref{sec-sup-exp}\\

\newpage 

\section{What the papers say}\label{sec-wtps}

Disclaimer: these are cut-paste exerpts of many papers citing \cite{lsRC} (or the earlier NeurIPS version)\footnote{Source: \url{https://scholar.google.com/scholar?oi=bibs\&hl=en\&cites=14973709218743030313\&as\_sdt=5}}, with emphasis on (i) most visible venues, (ii) variability (not just papers but also patents, etc.). Apologies for the eventual loss of context due to cut-paste.\\

\noindent "\textit{Servedio and Long [8] proved that, in general, any boosting algorithm that uses a convex
  potential function can be misled by random label noise}''  --- \cite{cefNC}\\

\noindent "\textit{Long and Servedio [2010] prove that any method based on a convex potential is inherently ill-suited to
  random label noise}" --- \cite{ndrtLW}\\

\noindent "\textit{Robustness of risk minimization depends on the loss function. For binary classification, it is shown that 0–1 loss is
robust to symmetric or uniform label noise while most of
the standard convex loss functions are not robust (Long and
Servedio 2010; Manwani and Sastry 2013)}" --- \cite{gksRL}\\

\noindent "\textit{Furthermore, the assumption of sufficient richness among the
weak hypotheses can also be problematic.
Regarding this last point, Long and Servedio [18] presented an example of a
learning problem which shows just how far off a universally consistent algorithm
like AdaBoost can be from optimal when this assumption does not hold, even when
the noise affecting the data is seemingly very mild.}" --- \cite{sEA}\\

\noindent "[...] \textit{it was shown that some boosting algorithms including AdaBoost are extremely
  sensitive to outliers [30].}" --- \cite{wwLT}\\

\noindent "\textit{Long and Servedio [2010] showed that there exist linearly separable $D$ where, when the learner
observes some corruption $\tilde{D}$ with symmetric label noise of any nonzero rate, minimisation of any
convex potential over a linear function class results in classification performance on D that is equivalent to random guessing. Ostensibly, this establishes that convex losses are not “SLN-robust” and
motivates the use of non-convex losses [Stempfel and Ralaivola, 2009, Masnadi-Shirazi et al., 2010,
Ding and Vishwanathan, 2010, Denchev et al., 2012, Manwani and Sastry, 2013].}" --- \cite{vmwLW}\\

\noindent "\textit{Long and Servedio (2008) have shown that boosting with convex potential functions (i.e., convex margin losses) is not robust to random class noise}" --- \cite{rwCB}\\

\noindent "\textit{Negative results for convex risk minimization in the presence of label noise have been
established by Long and Servido (2010) and Manwani and Sastry (2011). These works
demonstrate a lack of noise tolerance for boosting and empirical risk minimization based on
convex losses, respectively, and suggest that any approach based on convex risk minimization
will require modification of the loss, such that the risk minimizer is the optimal classifier
with respect to the uncontaminated distributions}" --- \cite{sbhCW}\\

\noindent "\textit{Boosting with convex loss functions is proven to
  be sensitive to outliers and label noise [19].}" --- \cite{sgplbOM}\\

\noindent "\textit{While
hinge loss used in SVMs (Cortes \& Vapnik, 1995) and
log loss used in logistic regression may be viewed as
convex surrogates of the 0–1 loss that are computationally efficient to globally optimize (Bartlett et al.,
2003), such convex surrogate losses are not robust to
outliers (Wu \& Liu, 2007; Long \& Servedio, 2010; Ding
\& Vishwanathan, 2010)}" --- \cite{nsAF}\\

\noindent "[...] \textit{For Theorem 29 to hold for AdaBoost, the richness assumption (72) is necessary,
since there are examples due to Long and Servedio (2010) showing that the theorem may not hold
when that assumption is violated}" --- \cite{msAT}\\

\noindent "[...] \textit{Long \& Servedio
(2008) essentially establish that if one does not assume
that margin error, $\nu$, of the optimal linear classifier is
small enough then any algorithm minimizing any convex loss $\phi$ (which they think of as a “potential”) can be
forced to suffer a large misclassification error.}" --- \cite{blssMT}\\

\noindent "\textit{The advantage of using a symmetric loss was investigated
in the symmetric label noise scenario (Manwani \& Sastry,
2013; Ghosh et al., 2015; Van Rooyen et al., 2015a). The
results from Long \& Servedio (2010) suggested that convex
losses are non-robust in this scenario}" --- \cite{clsOS}\\

\noindent "\textit{Overall, label noise
is ubiquitous in real-world datasets and will undermine the
performance of many machine learning models (Long \&
Servedio, 2010; Frenay \& Verleysen, 2014).}" --- \cite{clrtLW}\\

\noindent "\textit{Although desirable from an optimization standpoint, convex losses have been shown to be prone
  to outliers [15]}" --- \cite{awakRB}\\

\noindent "\textit{This is in contrast to recent
work by Long and Servedio, showing that convex potential boosters cannot work in the presence of
random classification noise [12].}" --- \cite{kkPB}\\

\noindent "\textit{The second strand has focussed on the design of surrogate losses robust to label noise.
Long and Servedio [2008] showed that even under symmetric label noise, convex potential minimisation
with such scorers will produce classifiers that are akin to random guessing.}" --- \cite{mvnLF}\\

\noindent "\textit{Negative results for convex risk minimization in the presence of label noise
have been established by Long and Servido [26] and Manwani and Sastry [27].
These works demonstrate a lack of noise tolerance for boosting and empirical
risk minimization based on convex losses, and suggest that any approach based
on convex risk minimization will require modification of the loss, [...]}" --- \cite{bfhps}\\

\noindent "\textit{For example, the random noise (Long and Servedio 2010) defeats
  all convex potential boosters [...]}" --- \cite{gwlzRM}\\

\noindent "\textit{Long and Servedio (2010) proved that any convex potential loss is not robust to uniform or symmetric label noise.}" --- \cite{gmsOT}\\

\noindent "\textit{We previously [23] showed that any boosting algorithm that works by stagewise minimization of a
  convex “potential function” cannot tolerate random classification noise}" --- \cite{lsLL}\\

\noindent "\textit{However, the convex loss
functions are shown to be prone to mistakes when outliers
exist [25].}" --- \cite{zllAS}\\

\noindent "\textit{[...] However, Long and Servedio
(2010) pointed out that any boosting algorithm with convex loss functions is highly susceptible to a
random label noise model.}" --- \cite{lbBI}\\

\noindent "\textit{One drawback of many standard boosting techniques, including AdaBoost, is that
  they can perform poorly when run on noisy data [FS96, MO97, Die00, LS08].}" --- \cite{lsAM}\\

\noindent "\textit{Therefore, it has been shown
  that the convex functions are not robust to noise [13].}" --- \cite{awsTT}\\

\noindent "\textit{This is because many boosting algorithms are vulnerable to noise (Dietterich, 2000; Long
  and Servedio, 2008).}" --- \cite{cbcCE}\\

\noindent "\textit{Long
  and Servedio (2010) showed that there is no convex loss that is robust to label noises.}" --- \cite{bssCS}\\

\noindent "\textit{[...] However, as was recently shown by Long and Servedio
  [4], learning algorithms based on convex loss functions are not robust to noise}" --- \cite{dvTL}\\

\noindent "\textit{[...] For instance, several papers show how outliers and noise can cause
linear classifiers learned on convex surrogate losses to suffer high zero-one loss (Nguyen and Sanner,
2013; Wu and Liu, 2007; Long and Servedio, 2010).}" --- \cite{mlUS}\\

\noindent "\textit{This is as opposed to most boosting algorithms that are highly susceptible to outliers [24].}" --- \cite{ncRF}\\

\noindent "\textit{Moreover, in the case of boosting, it has been shown that convex
  boosters are necessarily sensitive to noise (Long and Servedio 2010 [...]}" --- \cite{gSM}\\

\noindent "\textit{Ostensibly, this result establishes that convex losses are not robust to symmetric label noise, and motivates using non-convex losses [40, 31, 17, 15,
  30].}" --- \cite{vmAA}\\

\noindent "\textit{Interestingly, (Long and Servedio,
2010) established a lower bound against potential-based convex boosting techniques in the presence
of RCN.}" --- \cite{diklstBI}\\

\noindent "\textit{However, it was shown in (Long \& Servedio,
2008; 2010) that any convex potential booster can be easily
defeated by a very small amount of label noise}" --- \cite{ppIN}\\

\noindent "\textit{A major
roadblock one has to get around in label noise algorithms is the non-robustness
of linear classifiers from convex potentials as given in [10]. }" ---
\cite{thCS}\\

\noindent "\textit{Coming from the other end, the main argument for non-convexity is that
a convex formulation very often fails to capture fundamental properties of a real problem
(e.g. see [1, 2] for examples of some fundamental limitations of convex loss functions).}" ---
\cite{lpcLA}\\

\noindent "\textit{A theoretical
analysis proposed in [21] proves that any method based on convex surrogate loss is inherently ill-suited to
random label noise.}" ---
\cite{xhCA}\\

\noindent "\textit{It has been observed that application of Friedman's stochastic
gradient boosting to deep neural network training often led to training instability . See , e.g.  Philip M. Long , et al , “ Random Classification Noise Defeats
All Convex Potential Boosters , ” in Proceedings of the 25th International Conference on Machine Learning}" ---
\cite{omlSG}\\

\noindent "\textit{Long and Servedio [2010] showed that random classification
noise already makes a large class of convex boosting-type algorithms fail.}" ---
\cite{tOT}\\

\noindent "\textit{On the other hand, it has been known that boosting methods work rather
poorly when the input data is noisy. In fact, Long and Servedio show that
any convex potential booster suffer from the same problem [6].}" ---
\cite{cLO}\\

\noindent "\textit{Noise-resilience also appears to make CTEs outperform one of their
most prominent competitors – boosting – whose out-of-sample AUC estimates appear to be held back
by the level of noise in macroeconomic data (also see Long and Servedio, 2010) }" ---
\cite{wEI}\\

\noindent "\textit{The brittleness of convex surrogates is not unique to ranking, and plagues their
use in standard binary classification as well (Long and Servedio 2010; Ben-David et al.
2012). }" ---
\cite{mTR}\\

\section{Supplementary material on proofs} \label{sec-sup-pro}

\subsection{Proof of Lemma \ref{lem-PHI1}}\label{proof-lem-PHI1}

Strict convexity follows from its definition. Letting $\mathbb{I}\defeq \poibayesrisk'([0,1])$, we observe:
\begin{eqnarray}
  \philoss(z) \defeq\sup_{u\in [0,1]} \{-zu + \poibayesrisk(u)\} =
                         \left\{
                         \begin{array}{ccl}
                           -z +\poibayesrisk(1)& \mbox{ if } & z\leq \inf \mathbb{I}\\
                           -z \cdot \estposterior(-z) + \poibayesrisk(\estposterior(-z)) & \mbox{ if } & z \in \mathbb{I}\\
                           \poibayesrisk(0) & \mbox{ if } & z\geq \sup \mathbb{I}
                           \end{array}
                         \right. .\label{eqvconvsur}
\end{eqnarray}
This directly establishes $\lim_{+\infty} \philoss(z) = \poibayesrisk(0)$. Strict properness and differentiability ensure $\poibayesrisk'$ strictly decreasing. We also have
  \begin{eqnarray}
\philoss'(z)  = \left\{
                         \begin{array}{ccl}
                           -1 & \mbox{ if } & z\leq \inf \mathbb{I}\\
                           - ({\poibayesrisk'}^{-1})(-z) & \mbox{ if } & z \in \mathbb{I}\\
                           0 & \mbox{ if } & z\geq \sup \mathbb{I}
                           \end{array}\right.,
  \end{eqnarray}
  which shows $\philoss'(z) \leq 0, \forall z \in \mathbb{R}$ and so $\philoss$ is decreasing. The definition of $\mathbb{I}$ ensures $\lim_{\inf \mathbb{I}} \philoss'(z) = -1, \lim_{\sup \mathbb{I}} \philoss'(z) = 0$ so $\philoss$ is differentiable. Convexity follows from the definition of $\philoss$.

  We now note the useful relationship coming from properness condition and \eqref{eqpoirisk} (main file):
  \begin{eqnarray}
  \poibayesrisk'(u) & = & \partialloss{1}(u) - \partialloss{-1}(u).\label{eqPLDiff}
\end{eqnarray}
This relationship brings two observations: first, the partial losses
being differentiable, they are continuous and thus $\poibayesrisk'$ is
continuous as well, which, together with $\mathrm{dom}(\poibayesrisk)
= [0,1]$ brings the continuity of $\philoss'$ and so $\philoss$ is
$C^1$. The second is $\philoss'(0)<0$. We first show $0 \in
\mathrm{int} \mathbb{I}$. Because of \eqref{eqPLDiff}, if $0 \not\in
\mathrm{int} \mathbb{I}$, we either have $\partialloss{1}(0) -
\partialloss{-1}(0) \leq 0$ or $\partialloss{1}(1) -
\partialloss{-1}(1) \geq 0$. The integral representation of proper
losses  \cite{rwCB}(Theorem 1) \cite{nmSL} (Appendix Section 9) yields that there exists a non-negative weight function $w : (0,1) \rightarrow \mathbb{R}_+$ such that
\begin{eqnarray}
\partialloss{1}(u) = \int_{u}^1 (1-t) w(t) \mathrm{d}t & ; & \partialloss{-1}(u) = \int_{0}^u t w(t) \mathrm{d}t .
\end{eqnarray}
The condition $\partialloss{1}(0) - \partialloss{-1}(0) \leq 0$ imposes
\begin{eqnarray}
\lim_{u\rightarrow 0} \int_{u}^1 (1-t) w(t) \mathrm{d}t = \partialloss{1}(0) & \leq & \partialloss{-1}(0) = \lim_{u\rightarrow 0} \int_{0}^u t w(t) \mathrm{d}t  = 0,
\end{eqnarray}
which imposes $w(.) = 0$ almost everywhere and $\partialloss{1}(u) = 0, \forall u$. Similarly, the condition $\partialloss{1}(1) - \partialloss{-1}(1) \geq 0$ imposes
\begin{eqnarray}
\lim_{u\rightarrow 1} \int_{u}^1 (1-t) w(t) \mathrm{d}t = \partialloss{1}(1) & \geq & \partialloss{-1}(1) = \lim_{u\rightarrow 1} \int_{0}^u t w(t) \mathrm{d}t  = 0,
\end{eqnarray}
which also imposes $w(.) = 0$ almost everywhere and $\partialloss{-1}(u) = 0, \forall u$. $w(.) = 0$ almost everywhere implies $\partialloss{1}(u) = \partialloss{-1}(u) = \poibayesrisk(u) = 0, \forall u$, which is impossible given strict properness. So we get $0 \in \mathrm{int} \mathbb{I}$ and since $\poibayesrisk'$ is strictly decreasing, ${\poibayesrisk'}^{-1}(0) > 0$, implying
\begin{eqnarray}
\philoss'(0)  =  - ({-\poibayesrisk'})^{-1}(0) = - ({\poibayesrisk'}^{-1})(0)  < 0,
\end{eqnarray}
and ending the proof of Lemma \ref{lem-PHI1}.

\subsection{Proof of Lemma \ref{lem-GEN1}}\label{proof-lem-GEN1}

 We first simplify \eqref{phitomin} to a criterion equivalent to \cite[eq. 5]{lsRC} (notations follow theirs):
\begin{eqnarray*}
  \tilde{\popsur}(h, \mathcal{S}) & = & (N+1) \philoss(-\alpha_1) - N \alpha_1 + 2(N+1) \philoss(-\alpha_1 \gamma +\alpha_2 \gamma) - 2N (\alpha_1 \gamma -\alpha_2 \gamma)\\
                                  & & + (N+1) \philoss(-\alpha_1\gamma - K\alpha_2 \gamma) - N (\alpha_1\gamma + K\alpha_2 \gamma)\\
                                  & = & (N+1)\cdot \left(\philoss(-\alpha_1)  + 2 \philoss(-\alpha_1 \gamma +\alpha_2 \gamma) + \philoss(-\alpha_1\gamma - K\alpha_2 \gamma)\right) \nonumber\\
  & & - N\cdot \left((1+3\gamma)\alpha_1 + (K-2)\alpha_2\gamma\right)
\end{eqnarray*}
We are interested in the properties of the linear classifier $h$ minimizing that last expression. Denote for short:
\begin{eqnarray*}
  \varphi(z) & \defeq & \philoss'(z) + (1-\etanoise),\\
  \tilde{P}_1(\alpha_1,\alpha_2) & \defeq & \frac{1}{N+1} \cdot \frac{\partial \tilde{\popsur}(h, \mathcal{S})}{\partial \alpha_1},\\
  \tilde{P}_2(\alpha_1,\alpha_2) & \defeq & \frac{1}{\gamma(N+1)} \cdot \frac{\partial \tilde{\popsur}(h, \mathcal{S})}{\partial \alpha_2}.
\end{eqnarray*}
We note $\varphi$ is increasing and satisfies $\lim_{-\infty} \varphi = -\etanoise, \lim_{+\infty} \varphi = 1 - \etanoise$. We get
\begin{eqnarray}
  \tilde{P}_1(\alpha_1,\alpha_2) & = & -\philoss'(-\alpha_1)- \gamma \cdot\left\{ 2 \philoss'((\alpha_2-\alpha_1) \gamma) + \philoss'(-(\alpha_1+K\alpha_2) \gamma) \right\}- \frac{N (1+3\gamma)}{N+1}\nonumber\\
  & = & -\varphi(-\alpha_1) - 2\gamma\varphi((\alpha_2-\alpha_1) \gamma) - \gamma \varphi(-(\alpha_1+K\alpha_2) \gamma),
\end{eqnarray}
and
\begin{eqnarray}
  \tilde{P}_2(\alpha_1,\alpha_2) & = & 2 \philoss'((\alpha_2-\alpha_1) \gamma) - K \philoss'(-(\alpha_1+K\alpha_2) \gamma) - \frac{N (K-2)}{N+1}\nonumber\\
  & = & 2 \varphi((\alpha_2-\alpha_1) \gamma) - K \varphi(-(\alpha_1+K\alpha_2) \gamma).
\end{eqnarray}
\begin{figure}[t]
  \begin{center}
    \begin{tabular}{ccc}
      \includegraphics[trim=5bp 520bp 540bp 10bp,clip,width=0.3\linewidth]{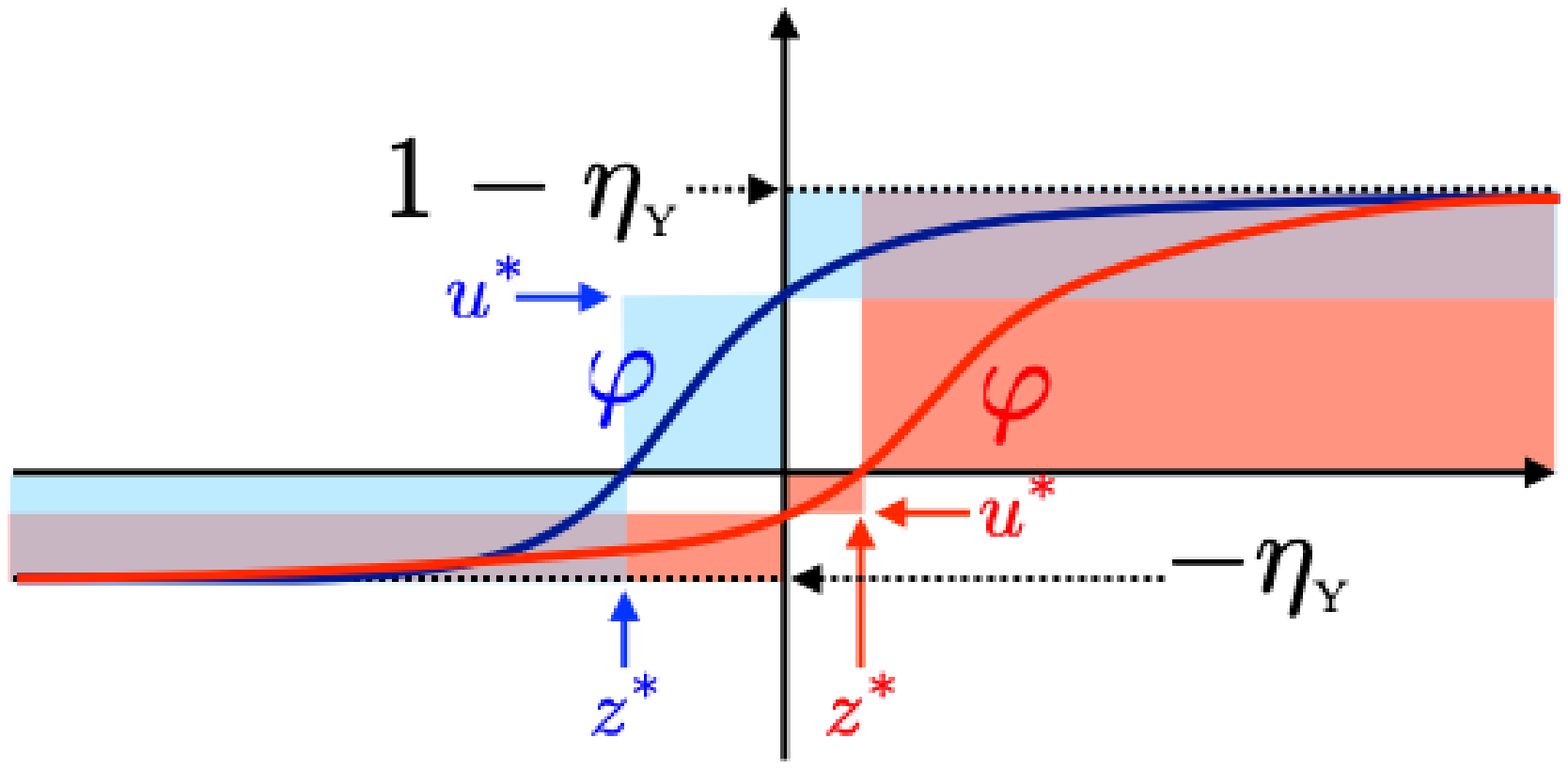} & \includegraphics[trim=5bp 520bp 540bp 10bp,clip,width=0.3\linewidth]{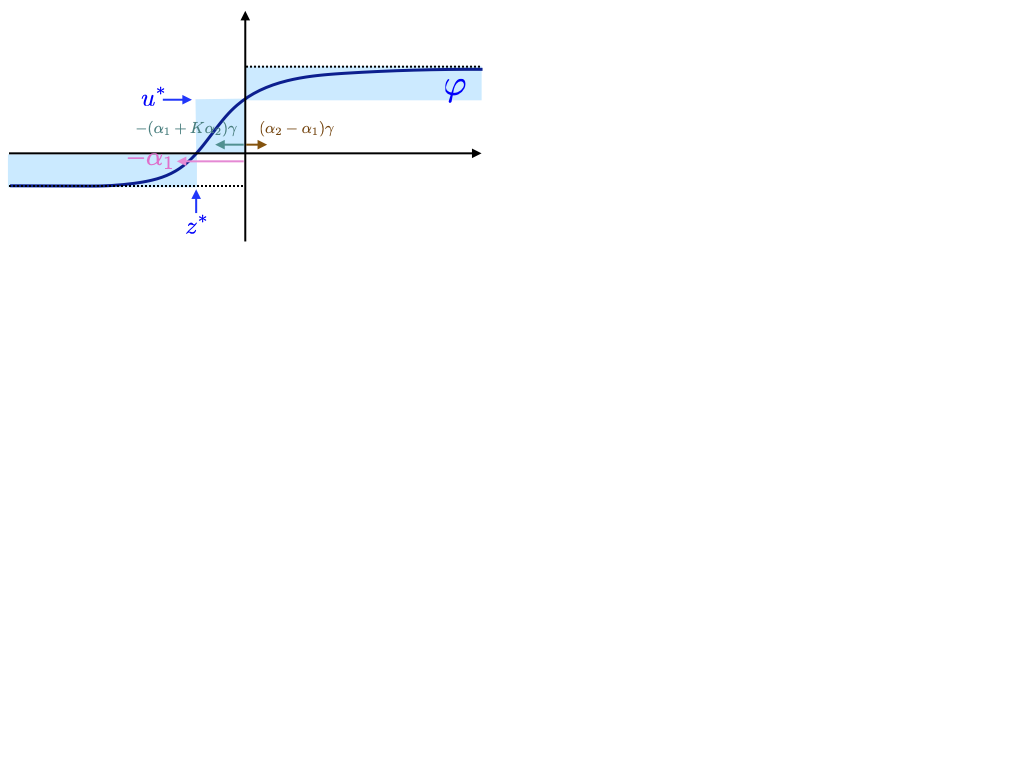} & \includegraphics[trim=5bp 520bp 540bp 10bp,clip,width=0.3\linewidth]{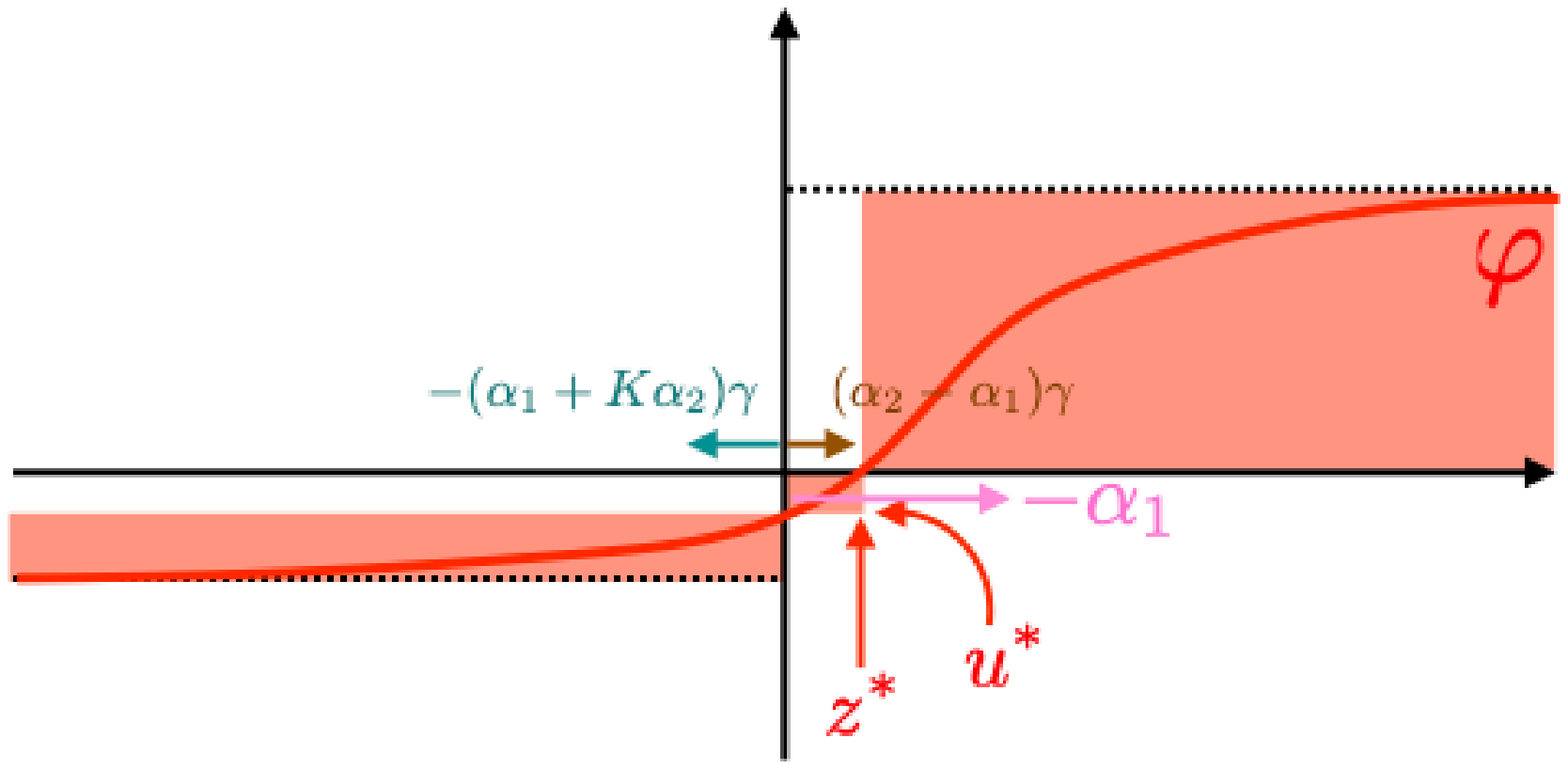}
    \end{tabular}
\end{center}
\caption{The two cases of our analysis for the proof of Lemma \ref{lem-GEN1}. In each case, we show the polarity of each of the arguments of system \eqref{eqSys}.}
  \label{f-varphi}
\end{figure}
The system that zeroes both functions $\tilde{P}_1(\alpha_1,\alpha_2), \tilde{P}_2(\alpha_1,\alpha_2)$ is thus equivalent to having
\begin{eqnarray}
  \left\{ \begin{array}{rrcl}
            (i) & \varphi(-(\alpha_1+K\alpha_2) \gamma) & = & \frac{2}{K} \cdot \varphi((\alpha_2-\alpha_1) \gamma) \\
            (ii) & \frac{-\varphi(-\alpha_1)}{\gamma} & = & \frac{2(K+1)}{K} \cdot \varphi((\alpha_2-\alpha_1) \gamma)
            \end{array}\right..\label{eqSys}
\end{eqnarray}
We have two cases to solve this system, presented in Figure \ref{f-varphi}: a "red" case, representing "high" noise, for which $\varphi(0) \defeq u^* < 0$, and a "blue" case, representing "low" noise, for which $\varphi(0) \defeq u^* > 0$.

\noindent \textbf{Red case}: we solve \eqref{eqSys} for the constraints $\alpha_2 > -\alpha_1, \alpha_1 < -z^*$; we pick $K = 2/(1+\epsilon)$ for some small $0<\epsilon<1$. Pick $\alpha_2  \defeq (1+\epsilon)(1+B)\cdot - \alpha_1 > -\alpha_1$ for $B\geq 0$.  The system \eqref{eqSys} becomes:
  \begin{eqnarray}
  \left\{ \begin{array}{rrcl}
            (i) & \varphi((1+2B)\alpha_1\gamma)& = & (1+\epsilon) \cdot \varphi( (1+(1+\epsilon)(1+B))\cdot - \alpha_1\gamma)\\
            (ii) & \frac{-\varphi(-\alpha_1)}{\gamma} & = & (3+\epsilon) \cdot \varphi( (1+(1+\epsilon)(1+B))\cdot - \alpha_1\gamma)
            \end{array}\right..\label{eqSys3}
  \end{eqnarray}
Suppose
  \begin{eqnarray*}
\alpha_1 \gamma & = & \delta < 0,
  \end{eqnarray*}
 for a small $|\delta|$. For any such constant $\delta > 0$, we see that
  \begin{eqnarray*}
\frac{-\varphi(-\alpha_1)}{\gamma} & = & \frac{\alpha_1 \cdot -\varphi(-\alpha_1)}{\delta} \defeq V(\alpha_1)
  \end{eqnarray*}
  and this time $V$ satisfies $\lim_{-z^*} V = 0, \lim_{-\infty} V = -\infty$ and $V$ is continuous because $\varphi$ is, so for any value of the RHS in $(ii)$ that keeps $(\alpha_2-\alpha_1) \gamma \in [0, z^*)$, the product $\alpha_1\gamma$ can be split in a couple $(\alpha_1, \gamma)$ for which the LHS in $(ii)$ equates its RHS. We then just have to find a solution to $(i)$ that meets our domain constraints. We observe that $(i)$ becomes:
    \begin{eqnarray}
\varphi((1+2B)\delta)& = & (1+\epsilon) \cdot \varphi( (1+(1+\epsilon)(1+B))\cdot - \delta), \label{simplI0}
    \end{eqnarray}
    whose quantities satisfy because of the monotonicity of $\varphi$,
    \begin{eqnarray}
\forall B\geq 0, \forall \delta\leq 0, \varphi((1+2B)\delta) & \leq & \varphi( (2+B)\cdot - \delta), \label{ineqD1}
    \end{eqnarray}
    which is \eqref{simplI0} for $\epsilon = 0$. We now show that there is a triple $(\epsilon, B, \delta)$ with $\delta < 0, 0<(\alpha_2-\alpha_1) \gamma = (1+(1+\epsilon)(1+B))\cdot - \delta < z^*, B\geq 0, \epsilon \geq 0$ which reverses the inequality, showing, by continuity of $\varphi$, a solution to \eqref{simplI0}. Fix small constants $\Delta_x, \Delta_y > 0$ such that we simultaneously have
    \begin{eqnarray}
      (1+2B)\delta & = & -\Delta_x,\label{constD1}\\
      \Delta_y & < & \frac{-u^*}{3}, \label{constD2}\\
      \varphi(-\Delta_x) & \geq & u^* - \Delta_y, \label{constD3}\\
      \varphi(\Delta_x) & \leq & u^* + \Delta_y. \label{constD4}
    \end{eqnarray}
    The RHS of \eqref{simplI0} becomes $(1+\epsilon) \cdot \varphi\left( J(\epsilon, B) \cdot \Delta_x\right)$ with
    \begin{eqnarray}
      J(\epsilon, B) & \defeq & \frac{1+(1+\epsilon)(1+B)}{1+2B}.
    \end{eqnarray}
    $ J(\epsilon, B)$ satisfies the following property (P):
    \begin{eqnarray*}
\forall 0\leq \epsilon < 1, \exists B > 0: J(\epsilon, B) = J(0,0) = 1.
    \end{eqnarray*}
    Thanks to (P) and the continuity of $J$ and $\varphi$, all we need to show for the existence of a solution to $(i)$ is that there exists $\epsilon < 1$ such that the central inequality underscored with "?" can hold,
    \begin{eqnarray}
(1+\epsilon) \cdot \varphi( \underbrace{J(\epsilon, B)}_{=J(0,0)=1} \cdot \Delta_x) \underbrace{\leq}_{\mbox{\eqref{constD4}}} (1+\epsilon) \cdot (u^* + \Delta_y) & \underbrace{\leq}_{?} & u^* - \Delta_y \underbrace{\leq}_{\mbox{\eqref{constD3}}} \varphi(-\Delta_x). \label{ineqD2}
    \end{eqnarray}
    \eqref{constD2} is equivalent to:
    \begin{eqnarray*}
\frac{2\Delta_y}{-(u^* + \Delta_y)} & < & 1,
    \end{eqnarray*}
    so any $2\Delta_y / -(u^* + \Delta_y) \leq \epsilon < 1$ brings equivalently $(1+\epsilon) \cdot (u^* + \Delta_y) \leq u^* - \Delta_y $, which is "?" above.

    Then, to solve $(i)$, we first choose $\Delta_y$ satisfying \eqref{constD2}, then pick $\Delta_x$ so that \eqref{constD3} and \eqref{constD4} are satisfied. This fixes the LHS of \eqref{simplI0}. From its minimal value $\epsilon = 0$, we progressively increase $\epsilon$ while computing $B$ so that (P) holds and getting $\delta$ from \eqref{constD1}; while for $\epsilon = 0$ \eqref{ineqD1} holds, we know that there is an $\epsilon < 1$ such that \eqref{ineqD2} holds, the continuity of $\varphi$ then showing there must be a value in the interval of $\epsilon$s for which equality, and thus $(i)$, holds.

    Then, from the value $\delta = \alpha_1 \gamma$ obtained, we compute the couple $(\alpha_1, \gamma), \alpha_1 <0, \gamma > 0$ such that $(ii)$ holds, and then get $\alpha_2$ from the identity $\alpha_2  \defeq (1+\epsilon)(1+B)\cdot - \alpha_1 > -\alpha_1$.\\
    
\noindent \textbf{Blue case}: we solve \eqref{eqSys} for the constraints $\alpha_2 > \alpha_1 > 0$; we pick $K = 2/(1-\epsilon)$ for some small $0<\epsilon<1$. Pick $\alpha_2  \defeq (1-\epsilon)(1+B)\alpha_1 > \alpha_1$ for $\alpha_1>0, B > \epsilon/(1-\epsilon)$.  The system \eqref{eqSys} becomes:
  \begin{eqnarray}
  \left\{ \begin{array}{rrcl}
            (i) & \varphi(-(1+2(1+B))\alpha_1\gamma)& = & (1-\epsilon) \cdot \varphi( (B(1-\epsilon) - \epsilon) \alpha_1\gamma)\\
            (ii) & \frac{-\varphi(-\alpha_1)}{\gamma} & = & (3-\epsilon) \cdot \varphi( (B(1-\epsilon) - \epsilon) \alpha_1\gamma)
            \end{array}\right..\label{eqSys2}
  \end{eqnarray}
  Suppose
  \begin{eqnarray*}
\alpha_1 \gamma & = & \delta > 0,
  \end{eqnarray*}
  a small constant. For any such constant $\delta > 0$, we see that
  \begin{eqnarray*}
\frac{-\varphi(-\alpha_1)}{\gamma} & = & \frac{\alpha_1 \cdot -\varphi(-\alpha_1)}{\delta} \defeq V(\alpha_1)
  \end{eqnarray*}
  and $V$ satisfies $\lim_{-z^*} V = 0, \lim_{+\infty} V = +\infty$ and $V$ is continuous because $\varphi$ is, so for any value of the RHS in $(ii)$, there exists a solution $(\alpha_1, \gamma)$ to $(ii)$. We just need to figure out a solution to $(i)$ for \textit{some} $\delta > 0$ such that all our domain constraints are met. We observe $(i)$ becomes
  \begin{eqnarray}
\varphi(-(1+2(1+B))\delta)& = & (1-\epsilon) \cdot \varphi( (B(1-\epsilon) - \epsilon) \delta) \defeq W(\epsilon). \label{simplI}
  \end{eqnarray}
  As $\delta \rightarrow 0^+$, the domain of solutions $(\epsilon,B)$ to $(i)$ converges to $\{0\} \times \mathbb{R}$. $B>0$ being fixed, we observe $W$ is continuous and (noting the constraint $\epsilon < B/(1+B)$)
  \begin{eqnarray*}
W(0) = \varphi(B \delta) \geq u^* &;& W\left(\frac{B}{1+B}\right) = \frac{u^*}{1+B}.
  \end{eqnarray*}
  Remark that if we pick $\delta$ such that
  \begin{eqnarray*}
\varphi(-(1+2(1+B))\delta) & \in & \left(\frac{u^*}{1+B}, u^*\right),
  \end{eqnarray*}
  then there exists a solution $0<\epsilon < B/1+B$ to $(i)$ so we get $K > 2$ and ratio $\alpha_2 / \alpha_1 > 1$. Then we solve $(ii)$ for $(\alpha_1, \gamma)$ and get $\alpha_1 > 0$ and $\gamma > 0$.
  
  \noindent \textbf{Summary}: accuracy of the optimal solution on $\mathcal{S}_{\mbox{\tiny{clean}}}$. In the \textbf{blue case}, we see that $\alpha_1 > 0, \alpha_2 > \alpha_1$, thus the accuracy is $50\%$. In the \textbf{red case} however, we see that, because $\alpha_1 < 0, \alpha_2 > -\alpha_1$, three examples of $\mathcal{S}_{\mbox{\tiny{clean}}}$ are badly classified and the accuracy thus falls to $25\%$.

  \subsection{Proof of Lemma \ref{lem-Rot}}\label{proof-lem-Rot}

  The trick we use is the same as in \cite{lsRC}: we rotate the whole sample (which rotates accordingly the optimum and thus does not change its properties, loss-wise) in such a way that any booster would pick a "wrong direction" to start, where the direction picked is the one with the largest edge \eqref{defWLA3}. Let the rotation matrix of angle $\theta$, with $c \defeq \cos \theta, s \defeq \sin \theta$,
  \begin{eqnarray}
    \matrice{R}_{\theta} & \defeq & \left[
                                    \begin{array}{cc}
                                      c & -s\\
                                      s & c
\end{array}
                                    \right].
  \end{eqnarray}
  Denoting the rotated sample
  \begin{mdframed}[style=MyFrame,nobreak=true,align=center]
  \begin{eqnarray}
    \mathcal{S}_{\mbox{\tiny{clean}},\theta} & \defeq & \left\{\left(\left[
                      \begin{array}{c}
                        c\\
                         s
                        \end{array}
  \right],1\right), \left(\left[
                      \begin{array}{c}
                        (c+s)\gamma\\
                         (s-c)\gamma
                        \end{array}
  \right],1\right), \left(\left[
                      \begin{array}{c}
                        (c+s)\gamma\\
                         (s-c)\gamma
                        \end{array}
  \right],1\right), \left(\left[
                      \begin{array}{c}
                        (c-Ks)\gamma\\
                         (s+Kc)\gamma
                        \end{array}
  \right],1\right)\right\}, 
  \end{eqnarray}
\end{mdframed}
We note the sum of weights $W$, letting $L \defeq ({-\poibayesrisk'})^{-1}(0) \in (0,1)$:
\begin{eqnarray}
W & = & 4(1-\etanoise)(1-L) + 4\etanoise L = 4(1-\etanoise-L+2\etanoise L),
  \end{eqnarray}
and we compute both edges \eqref{defWLA3} for both coordinates with the noisy dataset $\mathcal{S}_{\mbox{\tiny{noisy}},\theta}$ by ranging through left to right of the examples' observations in $\mathcal{S}_{\mbox{\tiny{noisy}},\theta}$:
\begin{eqnarray*}
  \texttt{e}_x & = & \frac{\left\{\begin{array}{c}(1-\etanoise)(1-L)c - \etanoise L c + 2(1-\etanoise)(1-L)(c+s)\gamma - 2\etanoise L (c+s)\gamma \\+ (1-\etanoise)(1-L)(c-Ks)\gamma - \etanoise L (c-Ks)\gamma\end{array}\right.}{W}\\
  & = & \frac{(1-\etanoise-L)(1+3\gamma)(c-a\cdot s)}{W},
  \end{eqnarray*}
and
\begin{eqnarray*}
  \texttt{e}_y & = & \frac{(1-\etanoise-L)(1+3\gamma)(a \cdot c + s)}{4},
\end{eqnarray*}
with
\begin{eqnarray*}
a & \defeq & \frac{(K-2)\gamma}{1+3\gamma}.
\end{eqnarray*}
We also remind from Lemma \ref{lem-GEN1} function $\varphi(z) \defeq\philoss'(z) + (1-\etanoise)$ and the proof of Lemma \ref{lem-PHI1} that $\philoss'(0) = - ({-\poibayesrisk'})^{-1}(0)$, so we remark the key identity:
\begin{eqnarray}
1-\etanoise-L & = & 1-\etanoise -  ({-\poibayesrisk'})^{-1}(0) = \philoss'(0) + 1-\etanoise = \varphi(0),
\end{eqnarray}
so the factor takes on two different signs in the \textbf{Blue} and \textbf{Red case} of Lemma \ref{lem-GEN1}. We thus distinguish two cases:\\

\noindent \textbf{Blue case}: we know (proof of Lemma \ref{lem-GEN1}) that $a>0, \varphi(0) > 0$ and we want $\texttt{e}_y > |\texttt{e}_x|$ under the constraint that both $y$ coordinates of the duplicated observations are negative: $(s-c)\gamma < 0$, so that the booster will pick the $y$ coordinate with a positive leveraging coefficient and thus will badly classify the duplicated examples of $\mathcal{S}_{\mbox{\tiny{clean}},\theta}$. We end up with the system (using $\gamma >0$)
\begin{eqnarray}
  \left\{
  \begin{array}{rcl}
    a\cdot c + s & > & \left|c - a\cdot s\right|,\\
    c - s & > & 0,\\
    c^2 + s^2 & = & 1.
    \end{array}
  \right. ,
  \end{eqnarray}
 which can be put in a vector form for graphical solving, letting $\ve{u} \defeq \left[\begin{array}{c}a\\ 1\end{array}\right], \ve{v} \defeq \left[\begin{array}{c}1\\ -a\end{array}\right]$ (note $\|\ve{u}\|_2 = \|\ve{v}\|_2$), $\ve{w} \defeq \left[\begin{array}{c}1\\ -1\end{array}\right]$ and $\ve{\theta} \defeq \left[\begin{array}{c}c\\ s\end{array}\right], $ the vector of unknowns (with a slight abuse of notation), yielding
  \begin{eqnarray}
    \left\{
  \begin{array}{rcl}
    \ve{\theta}^\top \ve{u} & > & |\ve{\theta}^\top \ve{v}|,\\
    \ve{\theta}^\top \ve{w} & > & 0,\\
    \|\ve{\theta}\|_2 & = & 1.
  \end{array}
                            \right.
  \end{eqnarray}
  Figure \ref{f-theta-blue} (left) presents the computation of solutions.
  \begin{figure}[t]
    \begin{center}
      \begin{tabular}{cc}
        \includegraphics[trim=30bp 500bp 660bp 20bp,clip,width=0.5\linewidth]{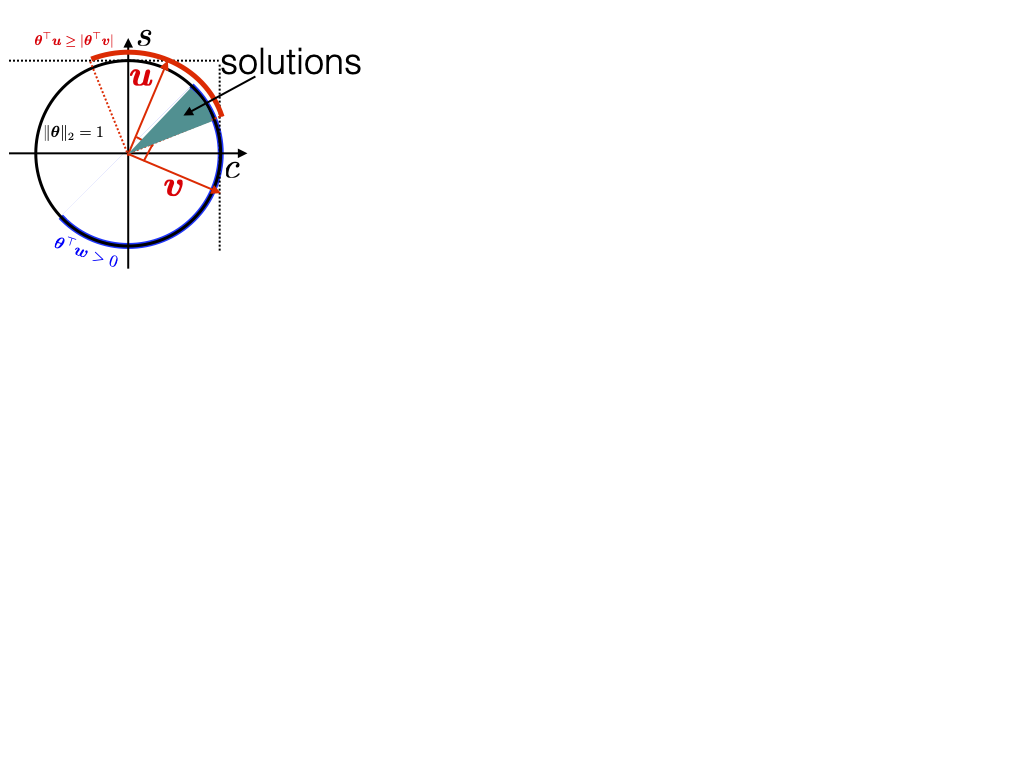} & \includegraphics[trim=30bp 500bp 660bp 20bp,clip,width=0.5\linewidth]{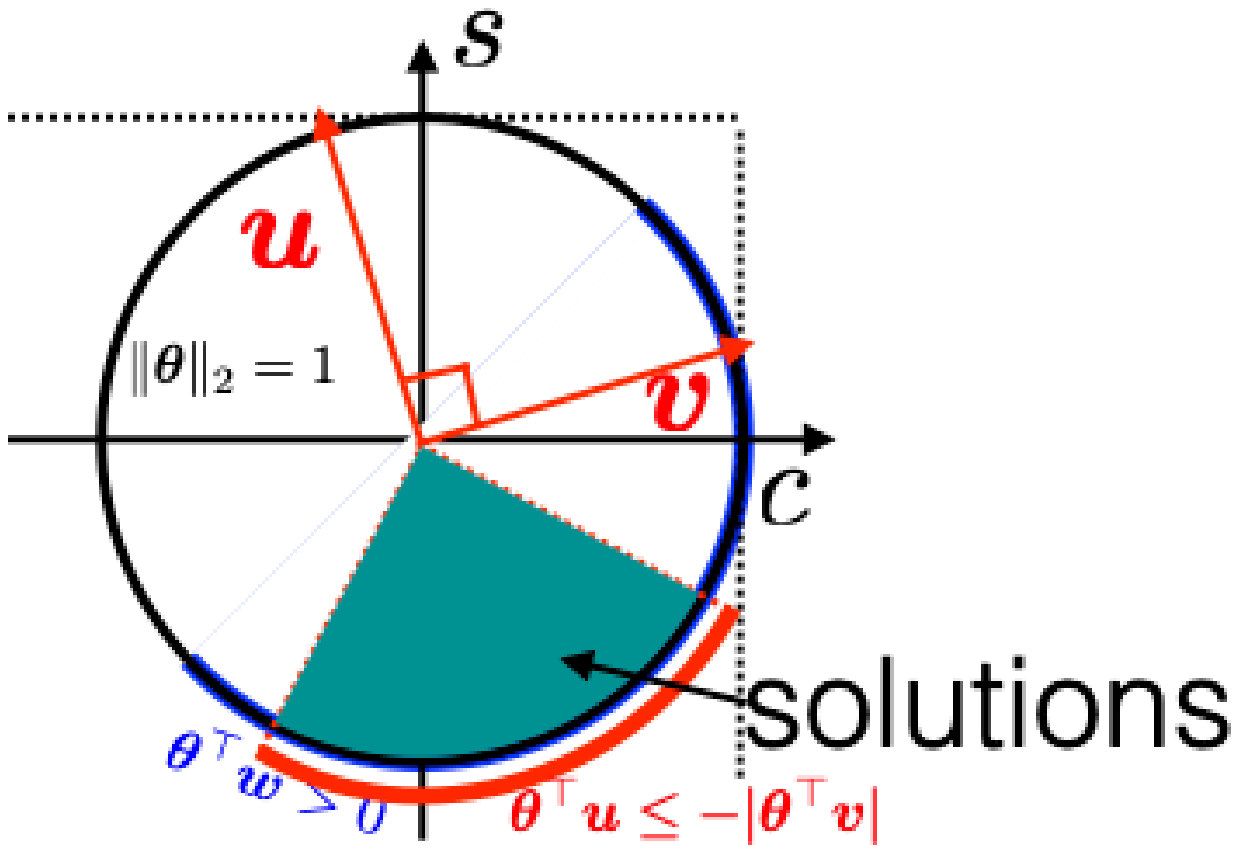}\\
        Blue case & Red case
      \end{tabular}
\end{center}
\caption{Solutions that trick the booster in picking a first update that misclassifies the examples in $\mathcal{S}_{\mbox{\tiny{clean}},\theta}$ sharing the same observation (the "penalizers" in \cite{lsRC}).}
  \label{f-theta-blue}
\end{figure}

\noindent \textbf{Red case}: we now have  (proof of Lemma \ref{lem-GEN1}) that $a<0, \varphi(0) < 0$. We now look to a solution to the following system:
\begin{eqnarray}
    \left\{
  \begin{array}{rcl}
    \ve{\theta}^\top \ve{u} & < & -|\ve{\theta}^\top \ve{v}|,\\
    \ve{\theta}^\top \ve{w} & > & 0,\\
    \|\ve{\theta}\|_2 & = & 1.
  \end{array}
                            \right.
  \end{eqnarray}
  Figure \ref{f-theta-blue} (right) presents the computation of solutions. The reason why it tricks again the booster in making at least $50\%$ error on its first update is that $\ve{\theta}^\top \ve{u} < 0$ and thus $\texttt{e}_y \propto \varphi(0) \cdot \ve{\theta}^\top \ve{u} > 0$ but also $|\texttt{e}_y| > |\texttt{e}_x| \propto |\varphi(0) \cdot \ve{\theta}^\top \ve{v}|$ and we check that because $\ve{\theta}^\top \ve{w} > 0$, the $y$ coordinate of the two examples sharing the same observation (the "penalizers" in \cite{lsRC}) is negative and so they are both misclassified.

  \begin{remark}\label{rem-not-red-blue}
The proof of Lemma \ref{lem-Rot} unveils what happens in the not-blue not-red case, when $\varphi(0) = 0$: in this case, the weak learner is totally "blind" as $\texttt{e}_x = \texttt{e}_y = 0$, so there is no possible update of the classifier as the weak learning assumption breaks down; the final classifier is thus the null vector = unbiased coin.
    \end{remark}

\subsection{A side negative result for \topdowngen~with \cls}\label{proof-sidenegative}

We can show an impeding result for \topdowngen~directly in the setting of Lemma \ref{lem-GEN1}: with the square loss (which allows to compute steps in closed form), \topdowngen~hits a classifier as bad as the fair coin on \cite{lsRC}'s noise-free data in \textit{at most 2 iterations only} for some values of the noise level and parameter $\gamma$ (there is thus no need to use the rotation argument of Lemma \ref{lem-Rot} for the booster to "fail"). 
 \begin{lemma}\label{lem-SQL-run}
Suppose \topdowngen~is run with the square loss to learn a linear separator on $\mathcal{S}_{\mbox{\tiny{noisy}}}$ and \weak~returns a scaled vector from the canonical basis of $\mathbb{R}^2$. Then there exists $N>1, 0<\gamma<1/6$ such that in at most \textbf{two} iterations, \topdowngen~hits a linear separator with 50$\%$ accuracy on $\mathcal{S}_{\mbox{\tiny{clean}}}$.
\end{lemma}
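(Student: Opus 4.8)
The plan is to run \topdowngen~by hand for two iterations, exploiting that the square loss turns every step into a rational function of $\gamma$ and $N$. First I would collect the square-loss data: $\poibayesrisk(u)=u(1-u)$, so $-\poibayesrisk'(u)=2u-1$ and $({-\poibayesrisk'})^{-1}(z)=(1+z)/2$; hence by \eqref{defWEIGHTS} the weight of a clean (positive) example equals $(1-H(\ve{x}))/2$ and that of a label-flipped (negative) example equals $(1+H(\ve{x}))/2$. For \cls~we have $\mathcal{X}_t=\mathcal{X}$, so the only freedom left is which canonical coordinate $h_t\in\{x_1,x_2\}$ the weak learner returns (scale and sign being absorbed into $\alpha_t$), and I take \weak~to return the coordinate of largest absolute edge \eqref{defWLA3}, consistently with \bphiearly.

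\emph{Iteration $1$.} Since $H_0=0$, all $w_{i,1}$ equal $1/2$, and a direct computation on $\mathcal{S}_{\mbox{\tiny{noisy}}}$ gives normalized edges $\tfrac{N-1}{N+1}\cdot\tfrac{1+3\gamma}{4}$ for $x_1$ and $\tfrac{N-1}{N+1}\cdot\tfrac{3}{20}$ for $x_2$; the first strictly dominates for every $\gamma>0$, so $h_1=x_1$. Solving \eqref{defAlpha} in closed form yields $\alpha_1=\frac{(N-1)(1+3\gamma)}{(N+1)(1+3\gamma^2)}>0$ (finite by Lemma \ref{solALPHA}, since the flipped copies prevent $\pm x_1$ from being $100\%$ accurate on $\mathcal{S}_{\mbox{\tiny{noisy}}}$). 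Because every point of $\mathcal{S}_{\mbox{\tiny{clean}}}$ has a positive first coordinate, $H_1=\alpha_1x_1$ still classifies $\mathcal{S}_{\mbox{\tiny{clean}}}$ perfectly, so the failure cannot occur yet.

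\emph{Iteration $2$.} By the very definition of \eqref{defAlpha}, the quantity $\sum_i w_{i,2}\,y_i^*\,x_{i,1}$ (computed with the updated weights $w_{i,2}=w((\ve{x}_i,y_i),H_1)$) vanishes, so the $x_1$-edge is $0$ and the weak learner is forced to return $h_2=x_2$; I would then check that $\sum_i w_{i,2}\,y_i^*\,x_{i,2}=\tfrac{3\gamma}{2}\cdot\frac{(N-1)(1-\gamma)}{1+3\gamma^2}>0$, so the $x_2$-edge is nonzero and the booster does not stop. Solving \eqref{defAlpha} once more (only the three points with nonzero second coordinate contribute) gives $\alpha_2=\frac{(N-1)(1-\gamma)}{9(N+1)\gamma(1+3\gamma^2)}>0$. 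The classifier $H_2=\alpha_1x_1+\alpha_2x_2$ then predicts positively on $(1,0)$ (value $\alpha_1$) and on $(\gamma,5\gamma)$ (value $\gamma(\alpha_1+5\alpha_2)$), while on the two copies of $(\gamma,-\gamma)$ it predicts $\mathrm{sign}\big(\gamma(\alpha_1-\alpha_2)\big)$ — i.e. the second correction may over-shoot and flip the two ``penalizers''.

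\emph{Conclusion and main obstacle.} It thus suffices to force $\alpha_2>\alpha_1$, and the closed forms give $\alpha_2/\alpha_1=\frac{1-\gamma}{9\gamma(1+3\gamma)}$, which exceeds $1$ exactly when $27\gamma^2+10\gamma-1<0$, i.e. for $0<\gamma<\tfrac{\sqrt{208}-10}{54}\approx 0.082$. Since this interval lies inside $(0,1/6)$, choosing any such $\gamma$ and any $N>1$ makes $H_2$ misclassify both copies of $(\gamma,-\gamma)$ and hit exactly $50\%$ accuracy on $\mathcal{S}_{\mbox{\tiny{clean}}}$ after the second iteration, with no rotation trick needed. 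The only genuine work is the weight bookkeeping between the two iterations and the two closed-form solves of \eqref{defAlpha}; I expect the points most in need of care to be that the choice $h_2=x_2$ is \emph{forced} (because iteration $1$ zeroed the $x_1$-edge) and that the $x_2$-edge stays nonzero at iteration $2$ (so the weak learning assumption does not break down before the failure materialises), rather than any conceptually hard step.
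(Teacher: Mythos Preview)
Your approach is essentially the paper's own Case~1: hand-solve \eqref{defAlpha} twice for the square loss and show the two ``penalizers'' $(\gamma,-\gamma)$ flip sign once $\alpha_2>\alpha_1$; your extra observation that the greedy weak learner is \emph{forced} into $h_1=x_1$ and then $h_2=x_2$ (because \eqref{defAlpha} zeroes the $x_1$-edge) is a clean addition that makes the paper's Case~2 redundant. The only overstatement is ``any $N>1$'': for large $N$ your closed-form $\alpha_1=\tfrac{(N-1)(1+3\gamma)}{(N+1)(1+3\gamma^2)}$ can exceed $1$, so the square-loss link saturates at $(1,0)$ and the quadratic solve is no longer the true solution of \eqref{defAlpha}; since the lemma only asks for existence, restrict (as the paper does) to small $N$, or note that for every fixed $N>1$ sufficiently small $\gamma$ keeps all $|H_t(\ve{x}_i)|<1$.
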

\begin{proof}
 We recall the key parameters of the square loss for some
 constant $L > 0$ (unnormalized):
 \begin{itemize}
 \item partial losses: $\partialloss{1}(u) = L(1-u)^2, \partialloss{-1}(u) = Lu^2$, pointwise bayes risk: $\poibayesrisk(u) = Lu(1-u)$, convex surrogate:
   \begin{eqnarray*}
     \philoss(z) & = & \left\{
                     \begin{array}{ccl}
                       -z & \mbox{ if } & z < -L\\
                       \frac{L}{4}\cdot\left(1 - \frac{z}{L}\right)^2 & \mbox{ if } & z \in [-L,L]\\
                       0 & \mbox{ if } & z > L
                       \end{array}
                     \right. .
   \end{eqnarray*}
 \item weight function $w(yH = z)$:
\begin{eqnarray*}
     w(z) & = & \left\{
                     \begin{array}{rcl}
                       1 & \mbox{ if } & z < -L\\
                       \frac{1}{2}\cdot\left(1 - \frac{z}{L}\right) & \mbox{ if } & z \in [-L,L]\\
                       0 & \mbox{ if } & z > L
                       \end{array}
                     \right. .
   \end{eqnarray*}
 \end{itemize}
 We recall and name the noisy examples, for $N>1$:
 \begin{itemize}
 \item $N$ copies of $(\gamma, 5\gamma)$ (call them $A$), $N$ copies of $(1,0)$ (call them $B$), $2N$ copies of $(\gamma, -\gamma)$ (call them $C$), all positive;
 \item $1$ copy of $(\gamma, 5\gamma)$ (call them $D$), $1$ copy of $(1,0)$ (call them $E$), $2$ copies of $(\gamma, -\gamma)$ (call them $F$), all negative;  
 \end{itemize}
 We have two cases:\\
 \noindent \textbf{Case 1}: suppose the first vector output by \weak~is proportional to $(1,0)$.\\
 \noindent \underline{Iteration 1}: \weak~returns vector $h_1 = (U,0)$ for $U>0$, which labels correctly all observations in the noise-free case. The weights all equal $w(0) = 1/2$. The edge of $h_1$ is (we note $\max h_1 = U$):
 \begin{eqnarray*}
   \texttt{e}_1(h_1) \defeq \left|\sum_{i\in [m]} \frac{w_{t,i}}{\sum_{j\in [m]} w_{t,j}} \cdot y^*_i \cdot \frac{h_{t}(\ve{x}_i)}{\max_{j\in [m]}|h_t(\ve{x}_j)|}\right| & = & \frac{\frac{N\gamma}{2} + \frac{N}{2} + \frac{2N\gamma}{2} - \frac{\gamma}{2}- \frac{1}{2} - \frac{2\gamma}{2}}{2(N+1)}\\
   & = & \frac{(1+3\gamma)}{4} \cdot \frac{N-1}{N+1}.
 \end{eqnarray*}
 The leveraging coefficient for $h_1$, $\alpha_1$, is the solution of
\begin{eqnarray*}
  \underbrace{N\left(\frac{1}{2} - \frac{U\gamma \alpha_1}{L}\right)U\gamma}_{A} + \underbrace{N\left(\frac{1}{2} - \frac{U\alpha_1}{L}\right)U}_{B} + \underbrace{2N\left(\frac{1}{2} - \frac{U\gamma \alpha_1}{L}\right)U\gamma}_{C}\\
 - \underbrace{\left(\frac{1}{2} + \frac{U\gamma \alpha_1}{L}\right)U\gamma}_{D}  -\underbrace{\left(\frac{1}{2} + \frac{U\alpha_1}{L}\right)U}_{E}  - \underbrace{2\left(\frac{1}{2} + \frac{U\gamma \alpha_1}{L}\right)U\gamma}_{F} & = & 0,
\end{eqnarray*}
giving
 \begin{eqnarray*}
\alpha_1 & = & \frac{2L U\left(\frac{N\gamma}{2} + \frac{N}{2} + \frac{2N\gamma}{2} - \frac{\gamma}{2}- \frac{1}{2} - \frac{2\gamma}{2}\right)}{U^2\left(\frac{(N+1)\gamma^2}{2} + \frac{N+1}{2} + \frac{2(N+1)\gamma^2}{2}\right)} = \frac{2 L (N-1)(1+3\gamma)}{U(N+1)(1+3\gamma^2)}.
 \end{eqnarray*}
We compute the new weight, with notation simplified to $w_2(.)$. For $A, C, D, F$, we remark 
\begin{eqnarray*}
\frac{|\alpha_1h_1|}{L} & = & \frac{2 \gamma (N-1)(1+3\gamma)}{(N+1)(1+3\gamma^2)} < \frac{1}{2},
\end{eqnarray*}
and so
\begin{eqnarray*}
w_2(A) = w_2(C) & = & \frac{1}{2} -  \frac{\gamma (N-1)(1+3\gamma)}{(N+1)(1+3\gamma^2)} \defeq \frac{1}{2} -  \gamma k_2,\\
  w_2(D) = w_2(F) & = & \frac{1}{2} +  \frac{\gamma (N-1)(1+3\gamma)}{(N+1)(1+3\gamma^2)} \defeq \frac{1}{2} + \gamma k_2,
\end{eqnarray*}
with
\begin{eqnarray*}
k_2 & \defeq & \frac{ (N-1)(1+3\gamma)}{(N+1)(1+3\gamma^2)};
  \end{eqnarray*}
while for $B, E$, we have
\begin{eqnarray*}
\frac{|\alpha_1h_1|}{L} & = & 2k_2.
\end{eqnarray*}
This, together with the fact that
\begin{eqnarray}
\frac{3+6\gamma + 3\gamma^2}{1+6\gamma - 3\gamma^2} \in [2,3], \forall \gamma \in [0,1],
\end{eqnarray}
yields that if $N>3$, then $w_2(B) = 0$, $w_2(E) = 1$ using the extreme expressions of the weight function. Let us assume $N \in \{2,3\}$ to prevent this from happening (this simplifies derivations), so that 
\begin{eqnarray*}
w_2(B) & = & \frac{1}{2} -  k_2,\\
  w_2(E) & = & \frac{1}{2} +  k_2 .
\end{eqnarray*}
\noindent \underline{Iteration 2}: suppose \weak~returns vector $h_2 = (0,U)$ for $U>0$. We note this time $\max h_2 = 5\gamma U$ and the edge is now
   \begin{eqnarray*}
     \texttt{e}_2(h_2) & = & \left| U\cdot \frac{5N\gamma w_2(A) - 2N \gamma w_2(C) - 5N\gamma w_2(D) + 2N\gamma w_2(F)}{20 \gamma N U} \right|\\
                       & = & \left| \frac{-  \frac{10N \gamma^2 (N-1)(1+3\gamma)}{(N+1)(1+3\gamma^2)} + \frac{4N\gamma^2 (N-1)(1+3\gamma)}{(N+1)(1+3\gamma^2)}}{20 \gamma N} \right|\\
     & = & \frac{3 \gamma k_2}{10}.
   \end{eqnarray*}
    The leveraging coefficient for $h_2$, $\alpha_2$, is the solution of
\begin{eqnarray*}
  \underbrace{N\left(\frac{1}{2} - \gamma k_2 - \frac{5U\gamma \alpha_2}{2L}\right)5U\gamma}_{A} + \underbrace{N\left(\frac{1}{2} - k_2\right)\cdot 0}_{B} - \underbrace{2N\left(\frac{1}{2} - \gamma k_2 + \frac{U\gamma \alpha_2}{2L} \right)\cdot (U\gamma)}_{C}\\
 - \underbrace{\left(\frac{1}{2} + \gamma k_2 + \frac{5U\gamma \alpha_2}{2L}\right)5U\gamma}_{D}  -\underbrace{\left(\frac{1}{2} - k_2\right)\cdot 0}_{E}  + \underbrace{2\left(\frac{1}{2} + \gamma k_2 - \frac{U\gamma \alpha_2}{2L} \right)\cdot (U\gamma)}_{F} & = & 0,
\end{eqnarray*}
giving
\begin{eqnarray*}
\alpha_2 & = & \frac{3L(N-1)(1-2\gamma-3\gamma^2)}{27U(N+1)\gamma (1+3\gamma^2)}.
\end{eqnarray*}
We check the new weights for $A, C, D, F$ (others do not change). We remark for $A, D$:
\begin{eqnarray*}
  \frac{|\alpha_1h_1 + \alpha_2 h_2|}{L} & = & \frac{2 \gamma (N-1)(1+3\gamma)}{(N+1)(1+3\gamma^2)} + \frac{15(N-1)(1-2\gamma-3\gamma^2)}{27(N+1) (1+3\gamma^2)}\\
                                         & = & \frac{N-1}{N+1} \cdot \frac{5 +8\gamma +39\gamma^2}{9(1+3\gamma^2)}\\
                                         & \leq & \frac{5 +8\gamma +39\gamma^2}{18(1+3\gamma^2)} \quad (N \in \{2,3\})\\
  & \leq & \frac{1}{2} \quad (\gamma \leq 1/3),
\end{eqnarray*}
while for $C, F$:
\begin{eqnarray*}
  \frac{|\alpha_1h_1 + \alpha_2 h_2|}{L} & = & \left|\frac{2 \gamma (N-1)(1+3\gamma)}{(N+1)(1+3\gamma^2)} - \frac{(N-1)(1-2\gamma-3\gamma^2)}{9(N+1) (1+3\gamma^2)}\right|\\
                                         & = & \frac{N-1}{N+1} \cdot \frac{|-1 +20\gamma +57\gamma^2|}{9(1+3\gamma^2)}\\
                                         & \leq &\frac{|-1 +20\gamma +57\gamma^2|}{18(1+3\gamma^2)}\quad (N \in \{2,3\})\\
                                                    & \leq & \frac{1}{2} \quad (\gamma \leq 1/3),
\end{eqnarray*}
so all the new weights are given not by the "extreme" formulas of the weight function. We check the vector $\ve{\theta}_2$ learned after two iterations:
\begin{eqnarray*}
  \ve{\theta}_2 & = &\left[
                      \begin{array}{c}
                        \frac{2 L (N-1)(1+3\gamma)}{(N+1)(1+3\gamma^2)}\\
                         \frac{L(N-1)(1-2\gamma-3\gamma^2)}{9(N+1)\gamma (1+3\gamma^2)}
                        \end{array}
  \right]\\
  & = & \frac{L(N-1)}{9(N+1)\gamma (1+3\gamma^2)} \cdot \left[
                      \begin{array}{c}
                        18\gamma(1+3\gamma)\\
                         (1-2\gamma-3\gamma^2)
                        \end{array}
  \right],
  \end{eqnarray*}
  and we check that if $\gamma \leq 1/23$, then $\ve{\theta}_2$ misclassifies both positive examples with observation $(\gamma, -\gamma)$ (called the "penalizers" in \cite{lsRC}) in the noise-free dataset, thereby having $50\%$ accuracy.

  \noindent \textbf{Case 2}: suppose the first vector output by \weak~is proportional to $(0,1)$.\\
 \noindent \underline{Iteration 1}: \weak~returns vector $h_1 = (0,U)$ for $U>0$. We note $\max h_1 = 5\gamma U$ and the edge is
   \begin{eqnarray*}
     \texttt{e}_1(h_1) & = & U\cdot \frac{\frac{5\gamma UN}{2} + \frac{2\gamma UN}{2} - \frac{5\gamma U}{2} - \frac{2\gamma U}{2}}{20 \gamma (N+1) U} \\
                       & = & \frac{7(N-1)}{40(N+1)}.
   \end{eqnarray*}
   The leveraging coefficient for $h_1$, $\alpha_1$, is the solution of
\begin{eqnarray*}
  \underbrace{N\left(\frac{1}{2} - \frac{5U\gamma \alpha_1}{2L}\right)5U\gamma}_{A} + \underbrace{0}_{B} - \underbrace{2N\left(\frac{1}{2} + \frac{U\gamma \alpha_1}{2L} \right)\cdot (U\gamma)}_{C}\\
 - \underbrace{\left(\frac{1}{2} + \frac{5U\gamma \alpha_2}{2L}\right)5U\gamma}_{D}  +\underbrace{0}_{E}  + \underbrace{2\left(\frac{1}{2} - \frac{U\gamma \alpha_2}{2L} \right)\cdot (U\gamma)}_{F} & = & 0,
\end{eqnarray*}
giving
\begin{eqnarray*}
\alpha_1 & = & \frac{L(N-1)}{9U(N+1)\gamma},
  \end{eqnarray*}
and we check the new weights for $A, C, D, F$ (others do not change); we remark for $A, D$:
\begin{eqnarray*}
  \frac{|\alpha_1h_1|}{L} & = & \frac{5(N-1)}{9(N+1)} < \frac{1}{2},
\end{eqnarray*}
while for $C, F$:
 \begin{eqnarray*}
  \frac{|\alpha_1h_1|}{L} & = & \frac{(N-1)}{9(N+1)} < \frac{1}{2},
 \end{eqnarray*}
 so after the first iteration, the vector $\ve{\theta}_1$ learned,
\begin{eqnarray*}
  \ve{\theta}_1 & = &\left[
                      \begin{array}{c}
                        0\\
                         \frac{L(N-1)}{9(N+1)\gamma}
                        \end{array}
  \right],
\end{eqnarray*}
misclassifies again both positive examples with observation $(\gamma, -\gamma)$ (called the "penalizers" in \cite{lsRC}) in the noise-free dataset, thereby having $50\%$ accuracy.
\end{proof}
\begin{remark}
  Remark that the edge substantially decreases between two iterations in Case 1 as:
  \begin{eqnarray}
\frac{\texttt{e}_2(h_2)}{\texttt{e}_1(h_1)} & = & \frac{6\gamma}{5(1+3\gamma^2)},
  \end{eqnarray}
  which indicates that if run for longer, the weak learning assumption will eventually end up being rapidly violated in \topdowngen, preventing the application of Theorem \ref{th-boost-pbls}.
  \end{remark}

\subsection{Proof of Lemma \ref{solALPHA}}\label{proof-lem-solALPHA}

The proof relies on five key observations (assuming wlog $h_t$ does not zero over $\mathcal{S}_t$):
\begin{enumerate}
\item [(1)] The equation can be written with $\alpha_t$ explicit as $\sum_{i\in [m]_{t}} (y_i - y_i^* ({-\poibayesrisk'})^{-1}(H_{t-1}(\ve{x}_i) + \alpha_{t} \cdot h_{t}(\ve{x}_i))) \cdot y_i^* h_{t}(\ve{x}_i)  =  0$, that is (since $(y_i^*)^2 = 1$),
  \begin{eqnarray}
\underbrace{\sum_{i\in [m]_{t}} ({-\poibayesrisk'})^{-1}(H_{t-1}(\ve{x}_i) + \alpha_{t} \cdot h_{t}(\ve{x}_i))\cdot h_{t}(\ve{x}_i)}_{\defeq J_t(\alpha_t)} & = & \sum_{i\in [m]_{t}, y^*_i = 1} h_{t}(\ve{x}_i). \label{alphaweightupdate}
  \end{eqnarray}
  \item [(2)] $\lim_{\alpha \nearrow} J_t(\alpha) = \sum_{i\in [m]_{t}, h_{t}(\ve{x}_i) > 0} h_{t}(\ve{x}_i) \defeq J_+$;
  \item [(3)] $\lim_{\alpha \searrow} J_t(\alpha) = \sum_{i\in [m]_{t}, h_{t}(\ve{x}_i) < 0} h_{t}(\ve{x}_i) \defeq J_-$;
  \item [(4)] $\sum_{i\in [m]_{t}, y^*_i = 1} h_{t}(\ve{x}_i) \in [J_-, J_+]$,
  \item [(5)] $\mathrm{Im} ({-\poibayesrisk'})^{-1} = [0,1]$, since if there was an interval of non-zero measure missing then either $\poibayesrisk'$ would not be defined over such an interval (impossible by the differentiability assumption) or it would be constant (impossible given the strict properness condition). The same remarks for a single missing value;
  \end{enumerate}
  which gives the statement of the Lemma. To get rid of infinite values, we remark that this happens only when $\sum_{i\in [m]_{t}, y^*_i = 1} h_{t}(\ve{x}_i) = \sum_{i\in [m]_{t}, h_{t}(\ve{x}_i) < 0} h_{t}(\ve{x}_i) $ ($-h_t$ makes perfect classification over $\mathcal{S}_t$) or $\sum_{i\in [m]_{t}, y^*_i = 1} h_{t}(\ve{x}_i) = \sum_{i\in [m]_{t}, h_{t}(\ve{x}_i) > 0} h_{t}(\ve{x}_i) $ ($h_t$ makes perfect classification over $\mathcal{S}_t$), both of which are not possible. 

\subsection{Proof of Theorem \ref{th-boost-pbls}}\label{proof-th-boost-pbls}

We proceed in several steps. The first shows a general guarantee on the decrease of the surrogate risk.
\begin{lemma}\label{lem-BOO1}
    Let $D_F$ denote the Bregman divergence with (convex) generator $F$. The difference between two successive surrogate risks in \topdowngen~satisfies:
    \begin{eqnarray}
      \popsur(H_{t}, \mathcal{S})-\popsur(H_{t-1}, \mathcal{S}) & = & - p_{t} \cdot \expect_{i\sim [m]_{t}}\left[\left\{
                                                                      \begin{array}{ccl}
                                                                        D_{-\poibayesrisk}(w_{t+1,i}\|w_{t,i}) & \mbox{ if } & y_i = 0\\
                                                                        D_{-\poibayesrisk}(1 - w_{t+1,i}\|1 - w_{t,i}) & \mbox{ if } & y_i = 1 
                                                                      \end{array}
                                                                                                                                       \right.\right],
    \end{eqnarray}
    where $[m]_{t} \subseteq [m]$ is the subset of indices of examples "fed" to the weak learner in $\mathcal{S}_t$ and $p_t \defeq \mathrm{Card}([m]_{t})/m$.
\end{lemma}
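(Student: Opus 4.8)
The plan is to localize the risk difference to $\mathcal{X}_t$, rewrite each per-example increment as a difference of Bregman divergences through the surrogate identity, and then split that difference with a three-point identity whose cross term vanishes by the very definition of $\alpha_t$. Concretely: since $H_t$ and $H_{t-1}$ coincide outside $\mathcal{X}_t$ by \eqref{eqHt}, only indices $i\in [m]_t$ contribute, so $\popsur(H_t,\mathcal{S})-\popsur(H_{t-1},\mathcal{S})=\frac1m\sum_{i\in [m]_t}(M_i(H_t)-M_i(H_{t-1}))$ with $M_i(H)\defeq\philoss(-H(\ve{x}_i))-y_iH(\ve{x}_i)$ the model-dependent summand of \eqref{eqpopsurrisk2}. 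By \eqref{deflossgen} and \eqref{maxliketa}, $M_i(H)=D_{-\poibayesrisk}(y_i\,\|\,\estposterior(H(\ve{x}_i)))+\poibayesrisk(y_i)$; the $H$-independent term $\poibayesrisk(y_i)$ drops out, so writing $v_i\defeq\estposterior(H_t(\ve{x}_i))$, $v'_i\defeq\estposterior(H_{t-1}(\ve{x}_i))$ (legitimate arguments of $D_{-\poibayesrisk}$ since $\mathrm{Im}(\estposterior)=[0,1]$), the increment is $D_{-\poibayesrisk}(y_i\|v_i)-D_{-\poibayesrisk}(y_i\|v'_i)$.

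Next I would invoke the three-point identity $D_F(a\|c)=D_F(a\|b)+D_F(b\|c)-(F'(c)-F'(b))(a-b)$ with $F=-\poibayesrisk$, $a=y_i$, $b=v_i$, $c=v'_i$, which gives $D_{-\poibayesrisk}(y_i\|v_i)-D_{-\poibayesrisk}(y_i\|v'_i)=-D_{-\poibayesrisk}(v_i\|v'_i)+((-\poibayesrisk)'(v'_i)-(-\poibayesrisk)'(v_i))(y_i-v_i)$. Because $\estposterior=({-\poibayesrisk'})^{-1}$, one has $(-\poibayesrisk)'(v_i)=H_t(\ve{x}_i)$ and $(-\poibayesrisk)'(v'_i)=H_{t-1}(\ve{x}_i)$, so the gradient difference equals $H_{t-1}(\ve{x}_i)-H_t(\ve{x}_i)=-\alpha_t h_t(\ve{x}_i)$ on $\mathcal{X}_t$; and from \eqref{defWEIGHTS} together with $y^*=2y-1$ a short check gives $y_i-v_i=y_i^*\,w((\ve{x}_i,y_i),H_t)=y_i^*w_{t+1,i}$. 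Summing over $[m]_t$, the cross terms contribute $-\alpha_t\sum_{i\in [m]_t}w_{t+1,i}\,y_i^*h_t(\ve{x}_i)$, which is exactly $0$ by the choice of $\alpha_t$ in Step 2.3, equation \eqref{defAlpha}. Hence $\popsur(H_t,\mathcal{S})-\popsur(H_{t-1},\mathcal{S})=-\frac1m\sum_{i\in [m]_t}D_{-\poibayesrisk}(v_i\|v'_i)=-p_t\,\expect_{i\sim [m]_t}[D_{-\poibayesrisk}(v_i\|v'_i)]$.

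It remains to recast $D_{-\poibayesrisk}(v_i\|v'_i)$ in weight coordinates, which I would do by splitting on $y_i\in\{0,1\}$ in \eqref{defWEIGHTS}: if $y_i=0$ then $w_{t,i}=v'_i$ and $w_{t+1,i}=v_i$, while if $y_i=1$ then $w_{t,i}=1-v'_i$ and $w_{t+1,i}=1-v_i$; substituting yields the two-case expression in the statement. The one point needing care --- the main obstacle --- is orienting the three-point identity so that the leftover cross term is the one carrying $w_{t+1,i}$ (and hence annihilated by \eqref{defAlpha}) rather than $w_{t,i}$; this forces the base point to be $b=v_i$, i.e.\ the posterior estimate of the \emph{updated} classifier $H_t$. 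Everything else is routine bookkeeping between the $\{0,1\}$ and $\{-1,1\}$ label encodings and the elementary identity $(-\poibayesrisk)'\circ\estposterior=\mathrm{id}$.
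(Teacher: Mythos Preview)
Your argument is correct and follows essentially the same strategy as the paper's proof: localize the risk difference to $[m]_t$, express each per-example increment as $-D_{-\poibayesrisk}(v_i\|v'_i)$ plus a cross term, and kill the cross term with \eqref{defAlpha}. The only cosmetic difference is that the paper reaches this decomposition by expanding $(-\poibayesrisk)^\star(H_t)-(-\poibayesrisk)^\star(H_{t-1})$ as a dual Bregman divergence and then invoking the dual-symmetry relation $D_{(-\poibayesrisk)^\star}(H_{t-1}\|H_t)=D_{-\poibayesrisk}(v_i\|v'_i)$, splitting on $y$ early, whereas you start from the primal identity \eqref{deflossgen} and apply the Bregman three-point formula, splitting on $y$ only at the end; your route is arguably a touch more direct since it reuses \eqref{deflossgen} verbatim.
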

\begin{proof}
  We observe
\begin{eqnarray}
  \lefteqn{\popsur(H_{t}, \mathcal{S})-\popsur(H_{t-1}, \mathcal{S})}\nonumber\\
  & = & p_{t} \cdot \expect_{i\sim [m]_{t}}\left[\philoss(-H_{t}(\ve{x}_i)) - \philoss(-H_{t-1}(\ve{x}_i)) - y_i (H_{t}-H_{t-1})(\ve{x}_i)\right].
\end{eqnarray}
For any example $(\ve{x},y)$, if $y=0$, by the definition of Bregman divergences and their dual symmetry property,
\begin{eqnarray*}
  \lefteqn{\philoss(-H_{t}(\ve{x})) - \philoss(-H_{t-1}(\ve{x}))}\nonumber\\
  & = & (-\poibayesrisk)^\star(H_{t}(\ve{x})) - (-\poibayesrisk)^\star(H_{t-1}(\ve{x})) \\
  & = & - \left[(-\poibayesrisk)^\star(H_{t-1}(\ve{x})) - (-\poibayesrisk)^\star(H_{t}(\ve{x})) - (H_{t-1}-H_{t})(\ve{x}) \cdot ({-\poibayesrisk'})^{-1}(H_{t}(\ve{x})) \right]\\
  & & - (H_{t-1}-H_{t})(\ve{x}) \cdot ({-\poibayesrisk'})^{-1}(H_{t}(\ve{x}))\nonumber\\
  & = & -D_{(-\poibayesrisk)^\star}(H_{t-1}(\ve{x})\|H_{t}(\ve{x})) - (H_{t-1}-H_{t})(\ve{x}) \cdot ({-\poibayesrisk'})^{-1}(H_{t}(\ve{x}))\nonumber\\
  & = & - D_{-\poibayesrisk}(({-\poibayesrisk'})^{-1}(H_{t}(\ve{x}))\|({-\poibayesrisk'})^{-1}(H_{t-1}(\ve{x}))) - (H_{t-1}-H_{t})(\ve{x}) \cdot ({-\poibayesrisk'})^{-1}(H_{t}(\ve{x}))\nonumber\\
  & = & - D_{-\poibayesrisk}(w((\ve{x},y),H_{t})\|w((\ve{x},y),H_{t-1})) + \alpha_{t} \cdot w((\ve{x},y),H_{t}) h_{t}(\ve{x})\nonumber\\
  & = & - D_{-\poibayesrisk}(w((\ve{x},y),H_{t})\|w((\ve{x},y),H_{t-1})) - \alpha_{t} \cdot w((\ve{x},y),H_{t}) \cdot y^* h_{t}(\ve{x}),
  \end{eqnarray*}
  If $y=1$ (we do not replace $y$ by 1 to mark its locations),
  \begin{eqnarray*}
  \lefteqn{\philoss(-H_{t}(\ve{x})) - \philoss(-H_{t-1}(\ve{x}))- y (H_{t}-H_{t-1})(\ve{x})}\nonumber\\
    & = & (-\poibayesrisk)^\star(H_{t}(\ve{x})) - (-\poibayesrisk)^\star(H_{t-1}(\ve{x})) - y (H_{t}-H_{t-1})(\ve{x})\nonumber\\
    & = & - D_{-\poibayesrisk}(({-\poibayesrisk'})^{-1}(H_{t}(\ve{x}))\|({-\poibayesrisk'})^{-1}(H_{t-1}(\ve{x}))) - (H_{t-1}-H_{t})(\ve{x}) \cdot ({-\poibayesrisk'})^{-1}(H_{t}(\ve{x}))\nonumber\\
    & & - y (H_{t}-H_{t-1})(\ve{x})\nonumber\\
    & = &   - D_{-\poibayesrisk}(({-\poibayesrisk'})^{-1}(H_{t}(\ve{x}))\|({-\poibayesrisk'})^{-1}(H_{t-1}(\ve{x}))) + (({-\poibayesrisk'})^{-1}(H_{t}(\ve{x}))-y) \cdot \alpha_{t} h_{t}(\ve{x})\nonumber\\
    & = &   - D_{-\poibayesrisk}(({-\poibayesrisk'})^{-1}(H_{t}(\ve{x}))\|({-\poibayesrisk'})^{-1}(H_{t-1}(\ve{x}))) - (y-({-\poibayesrisk'})^{-1}(H_{t}(\ve{x}))) \cdot \alpha_{t} \cdot y^* h_{t}(\ve{x})\nonumber\\
    & = &   - D_{-\poibayesrisk}(y - w((\ve{x},y),H_{t})\|y - w((\ve{x},y),H_{t-1})) - \alpha_{t} \cdot w((\ve{x},y),H_{t}) \cdot y^* h_{t}(\ve{x}),
  \end{eqnarray*}
  and thus, we get for \topdowngen~the relationship between successive surrogate risks,
  \begin{eqnarray}
  \lefteqn{\popsur(H_{t}, \mathcal{S})-\popsur(H_{t-1}, \mathcal{S})}\nonumber\\
  & = & - p_{t} \cdot \expect_{i\sim [m]_{t}}\left[\left\{
                                                                      \begin{array}{ccl}
                                                                        D_{-\poibayesrisk}(w_{t+1,i}\|w_{t,i}) & \mbox{ if } & y_i = 0\\
                                                                        D_{-\poibayesrisk}(1 - w_{t+1,i}\|1 - w_{t,i}) & \mbox{ if } & y_i = 1 
                                                                      \end{array}
                                                                                                                                       \right.\right] - \alpha_{t} \cdot \sum_{i\in [m]_{t}} w_{i,t+1} \cdot y_i^* h_{t}(\ve{x}_i)\nonumber\\
    & = & - p_{t} \cdot \expect_{i\sim [m]_{t}}\left[\left\{
                                                                      \begin{array}{ccl}
                                                                        D_{-\poibayesrisk}(w_{t+1,i}\|w_{t,i}) & \mbox{ if } & y_i = 0\\
                                                                        D_{-\poibayesrisk}(1 - w_{t+1,i}\|1 - w_{t,i}) & \mbox{ if } & y_i = 1 
                                                                      \end{array}
                                                                                                                                       \right.\right],
  \end{eqnarray}
  by \eqref{defAlpha}.
\end{proof}
The following Theorem established a general boosting-compliant convergence bound, the central piece of our proof.
\begin{theorem}\label{th-boost-1}
  Define the expected and normalized weights at iteration $t$:
  \begin{eqnarray}
\overline{w^*}_{t} \defeq \frac{\sum_{i\in [m]_t} w_{t,i}}{\mathrm{Card}([m]_t)} & ; & w^{\mbox{\tiny{norm}}}_{t,i} \defeq \frac{w_{t,i}}{\sum_{j\in [m]_t} w_{t,j}},
  \end{eqnarray}
  and the following two assumptions (\textbf{LOSS0}, \textbf{WLA}):
  \begin{enumerate}
    \item [\textbf{LOSS0}] The loss chosen $\loss$ is strictly proper, differentiable and satisfies $\inf \{\partialloss{-1}'-\partialloss{1}' \}\geq \kappa$ for some $\kappa > 0$;
    \item [\textbf{WLA}] There exists a constant $\gammawla>0$ such that at each iteration $t\in [T]$, the weak hypothesis $h_t$ returned by $\weak$ satisfies
  \begin{eqnarray}
\left|\sum_{i\in [m]_t} w^{\mbox{\tiny{norm}}}_{t,i} \cdot y^*_i \cdot \frac{h_{t}(\ve{x}_i)}{\max_{i\in [m]_t}|h_t(\ve{x}_i)|}\right| & \geq & \gammawla.\label{defWLA2}
  \end{eqnarray}
\end{enumerate}
Then under \textbf{LOSS0} and \textbf{WLA} the following holds:
\begin{eqnarray}
\forall \Phi \in \mathbb{R}, \left(\sum_{t=1}^T p_t \overline{w^*}^2_{t} \geq \frac{2(\popsur(H_{0}, \mathcal{S}) - \Phi)}{\kappa \gamma^2}\right) & \Rightarrow & \left(\popsur(H_{T}, \mathcal{S}) \leq \Phi\right).\label{eqWithWTmain}
\end{eqnarray}
  \end{theorem}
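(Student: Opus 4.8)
The plan is to show that $\popsur$ decreases by a controlled amount at every iteration and to sum these decreases. Concretely, I would prove that for every $t$ one has $\popsur(H_{t-1}, \mathcal{S}) - \popsur(H_{t}, \mathcal{S}) \geq \frac{\kappa \gammawla^2}{2} \cdot p_t \cdot \overline{w^*}_t^2$. Summing over $t=1,\ldots,T$ telescopes to $\popsur(H_0, \mathcal{S}) - \popsur(H_T, \mathcal{S}) \geq \frac{\kappa \gammawla^2}{2} \sum_{t=1}^T p_t \overline{w^*}_t^2$; if the premise of \eqref{eqWithWTmain} holds, the right-hand side is at least $\popsur(H_0, \mathcal{S}) - \Phi$, hence $\popsur(H_T, \mathcal{S}) \leq \Phi$.

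For the per-step bound I would first record that $-\poibayesrisk$ is $\kappa$-strongly convex on $[0,1]$: from the properness identity \eqref{eqPLDiff} we have $(-\poibayesrisk)' = \partialloss{-1} - \partialloss{1}$, which is differentiable since the partial losses are (under \textbf{LOSS0}), with $(-\poibayesrisk)'' = \partialloss{-1}' - \partialloss{1}' \geq \kappa$. Consequently $D_{-\poibayesrisk}(a\|b) \geq \frac{\kappa}{2}(a-b)^2$ for all $a,b\in[0,1]$. Feeding this into Lemma \ref{lem-BOO1}, and noting that in both the $y_i=0$ branch and the $y_i=1$ branch the two arguments of the Bregman divergence differ by $\pm(w_{t+1,i}-w_{t,i})$ (so the squared difference is $(w_{t+1,i}-w_{t,i})^2$ either way), yields $\popsur(H_{t-1}, \mathcal{S}) - \popsur(H_{t}, \mathcal{S}) \geq \frac{\kappa}{2}\, p_t\, \expect_{i\sim[m]_t}\left[(w_{t+1,i}-w_{t,i})^2\right]$.

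The heart of the argument, and the step I expect to be the main obstacle, is the inequality $\expect_{i\sim[m]_t}[(w_{t+1,i}-w_{t,i})^2] \geq \gammawla^2 \overline{w^*}_t^2$, where \textbf{WLA} enters. The leveraging equation \eqref{defAlpha} says the \emph{new} weighted edge vanishes, $\sum_{i\in[m]_t} w_{t+1,i} y_i^* h_t(\ve{x}_i)=0$, so $\sum_{i\in[m]_t}(w_{t,i}-w_{t+1,i}) y_i^* h_t(\ve{x}_i) = \sum_{i\in[m]_t} w_{t,i} y_i^* h_t(\ve{x}_i)$; \textbf{WLA} says the \emph{old} weighted edge is large, $\left|\sum_{i\in[m]_t} w_{t,i} y_i^* h_t(\ve{x}_i)\right| \geq \gammawla \cdot \left(\sum_{j\in[m]_t} w_{t,j}\right)\cdot \max_{j\in[m]_t}|h_t(\ve{x}_j)|$. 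Applying Cauchy--Schwarz to $\sum_{i\in[m]_t}(w_{t,i}-w_{t+1,i}) y_i^* h_t(\ve{x}_i)$, using $(y_i^*)^2=1$ and $\sum_{i\in[m]_t} h_t(\ve{x}_i)^2 \leq \mathrm{Card}([m]_t)\cdot \max_{j\in[m]_t} h_t(\ve{x}_j)^2$, and then cancelling the common, nonzero and finite factor $\max_{j\in[m]_t}|h_t(\ve{x}_j)|$ (nonzero since $\pm h_t$ is not perfectly accurate on $\mathcal{S}_t$, finite by the standing assumption) gives $\sum_{i\in[m]_t}(w_{t,i}-w_{t+1,i})^2 \geq \gammawla^2 \left(\sum_{j\in[m]_t} w_{t,j}\right)^2 / \mathrm{Card}([m]_t)$; dividing once more by $\mathrm{Card}([m]_t)$ and recognising $\overline{w^*}_t = \left(\sum_{j\in[m]_t} w_{t,j}\right)/\mathrm{Card}([m]_t)$ delivers the claim. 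The remaining points are routine: $\alpha_t$, hence $H_t$ and all weights, is finite by Lemma \ref{solALPHA}, and all weights lie in $[0,1]$ so the strong-convexity bound is applied on the correct domain.
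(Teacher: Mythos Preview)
Your proposal is correct and follows essentially the same route as the paper: Lemma~\ref{lem-BOO1} for the per-step Bregman decrease, $(-\poibayesrisk)''=\partialloss{-1}'-\partialloss{1}'\geq\kappa$ to turn the Bregman term into $\tfrac{\kappa}{2}(w_{t+1,i}-w_{t,i})^2$, then \eqref{defAlpha} plus Cauchy--Schwarz and \textbf{WLA} to get $\expect_{i\sim[m]_t}[(w_{t+1,i}-w_{t,i})^2]\geq\gammawla^{2}\,\overline{w^*}_t^{\,2}$, and finally telescoping. The only cosmetic difference is that you phrase the quadratic lower bound as $\kappa$-strong convexity of $-\poibayesrisk$ whereas the paper writes it via the mean-value form of the Bregman divergence; the content is identical.
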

\begin{proof}
  Assuming second-order differentiability, we have the classical Taylor approximation of Bregman divergences \cite[Appendix II]{nmSL}: for $t \in [T], i\in [m]_t$, $\exists u_{t,i}, v_{t,i} \in [0,1]$ such that:
  \begin{eqnarray*}
    D_{-\poibayesrisk}(w_{t+1,i}\|w_{t,i}) & = & \frac{(-\poibayesrisk)''(u_{t,i}) (w_{t+1,i} - w_{t,i})^2 }{2},\\
    D_{-\poibayesrisk}(1-w_{t+1,i}\|1-w_{t,i}) & = & \frac{(-\poibayesrisk)''(v_{t,i}) (w_{t+1,i} - w_{t,i})^2 }{2},
  \end{eqnarray*}
It follows from \cite[Lemma 12]{snsBP} and assumption \textbf{LOSS0} that $\loss$ being strictly proper, we have $(-\poibayesrisk)'' = \partialloss{-1}'-\partialloss{1}' \geq \inf \{\partialloss{-1}'-\partialloss{1}' \}\geq \kappa$, so we get
\begin{eqnarray*}
p_{t} \cdot \expect_{i\sim [m]_{t}}\left[\left\{
                                                                      \begin{array}{ccl}
                                                                        D_{-\poibayesrisk}(w_{t+1,i}\|w_{t,i}) & \mbox{ if } & y_i = 0\\
                                                                        D_{-\poibayesrisk}(1 - w_{t+1,i}\|1 - w_{t,i}) & \mbox{ if } & y_i = 1 
                                                                      \end{array}
                                                                                                                                       \right.\right] & \geq & \frac{p_{t}\kappa}{2}\cdot \expect_{i\sim [m]_{t}}\left[(w_{t+1,i} - w_{t,i})^2\right],
\end{eqnarray*}
We remark
\begin{eqnarray}
  \left(\sum_{i\in [m]_t} w_{t,i} \cdot y^*_i h_{t}(\ve{x}_i)\right)^2 & = & \left(\sum_{i\in [m]_t} w_{t,i} \cdot y^*_i h_{t}(\ve{x}_i) - \sum_{i\in [m]_t} w_{t+1,i} \cdot y^*_i h_{t}(\ve{x}_i)\right)^2\\
                                                                         & = & \left(\sum_{i\in [m]_t} (w_{t,i} - w_{t+1,i}) \cdot y^*_i h_{t}(\ve{x}_i) \right)^2\\
                                                                         & \leq & \sum_{i\in [m]_t} (w_{t,i} - w_{t+1,i})^2 \cdot \sum_{i\in [m]_t} h^2_{t}(\ve{x}_i) \\
  & \leq & \mathrm{Card}([m]_t)^2 M_t^2 \cdot \expect_{i\sim [m]_t}\left[(w_{t+1,i} - w_{t,i})^2\right],
\end{eqnarray}
where we have used \eqref{defAlpha}, Cauchy-Schwartz, the assumption that the distribution is uniform and let $M_t \defeq \max_{i\in [m]_t}|h(\ve{x}_i)|$. Thus,
\begin{eqnarray}
\lefteqn{p_{t} \cdot \expect_{i\sim [m]_{t}}\left[\left\{
                                                                      \begin{array}{ccl}
                                                                        D_{-\poibayesrisk}(w_{t+1,i}\|w_{t,i}) & \mbox{ if } & y_i = 0\\
                                                                        D_{-\poibayesrisk}(1 - w_{t+1,i}\|1 - w_{t,i}) & \mbox{ if } & y_i = 1 
                                                                      \end{array}
                                                                                                                                       \right.\right]}\nonumber\\
                                                                                                               & \geq & \frac{p_{t}\kappa}{2}\cdot \frac{\left(\sum_{i\in [m]_t} w_{t,i} \cdot y^*_i \cdot \frac{h_{t}(\ve{x}_i)}{M}\right)^2}{\mathrm{Card}([m]_t)^2}\nonumber\\
  & & =  \frac{p_{t}\kappa}{2}\cdot
      \underbrace{\left(\frac{\sum_{i\in [m]_t}
      w_{t,i}}{\mathrm{Card}([m]_t)}\right)^2}_{\defeq \overline{w^*}^2_t}
      \cdot \underbrace{\left(\sum_{i\in [m]_t}
      \frac{w_{t,i}}{\sum_{j\in [m]_t} w_{t,j}} \cdot y^*_i \cdot
      \frac{h_{t}(\ve{x}_i)}{M_t}\right)^2}_{\geq \upgamma^2\mbox{ from \textbf{WLA}}}.
\end{eqnarray}
Assumptions \textbf{LOSS0} and \textbf{WLA} thus imply the guaranteed decrease between two successive risks
\begin{eqnarray}
\popsur(H_{t}, \mathcal{S}) & \leq & \popsur(H_{t-1}, \mathcal{S}) - \frac{p_{t}\kappa \gamma^2 \overline{w^*}^2_{t}}{2},
\end{eqnarray}
and we have, after collapsing summing inequalities for $t=1, 2, ..., T$, the guarantee that as long as the WLA holds,
\begin{eqnarray}
\forall \Phi \in \mathbb{R}, \left(\sum_{t=1}^T p_t \overline{w^*}^2_{t} \geq \frac{2(\popsur(H_{0}, \mathcal{S}) - \Phi)}{\kappa \gamma^2}\right) & \Rightarrow & \left(\popsur(H_{T}, \mathcal{S}) \leq \Phi\right),\label{eqWithWT}
\end{eqnarray}
which is the statement of Theorem \ref{th-boost-1}.
\end{proof}
Because it involves $\overline{w^*}_{t}$, this bound is not fully
readable, but there is a simple way to remove its dependence as
$\overline{w^*}_{t}$ is also linked to the quality of the classifier
$H$: roughly speaking, the smaller it is, the worse is the dependence
in \eqref{eqWithWT} \textit{but} the better is $H$ since weights tends
to decrease as $H$ gives the right class with increased confidence
($|H|$). The trick is thus to find a value of $\overline{w^*}_{t}$ below
which $H$ is "satisfying" (boosting-wise) and then plug this bound in \eqref{eqWithWT}, which then gives a number of iterations after which $H$ becomes satisfying anyway.

We need the following definition.
  \begin{definition}(after \cite{bcsEN})
    Weights at iteration $t$ are called $\zeta$-dense if $\overline{w}_t \geq \zeta$, where
    \begin{eqnarray*}
\overline{w}_t & \defeq & \frac{\sum_{i\in [m]} w_{t,i}}{m}.
      \end{eqnarray*}
    \end{definition}
Notice that the expected weight here, $\overline{w}_t$, spans all the training sample, which is \textbf{not} the case for $\overline{w^*}_{t}$ (which relies on the examples "fed" to the weak learner). 
  We make precise the notion of being "satisfying" when weights are "small".
  \begin{lemma}\label{lemEDGE1}
    For any $t\geq 1, \zeta \in [0,1]$, suppose the weights at iteration $t+1$ are not $\zeta$-dense. Then $\forall \theta \in \mathbb{R}$,
    \begin{eqnarray}
\pr_{i\sim [m]}[y^*_i H_{t}(\ve{x}_i) \leq \theta] & < & \frac{\zeta}{\underline{w}(\theta)},
    \end{eqnarray}
    where we let $\underline{w}(\theta) \defeq \min\{1 - ({-\poibayesrisk'})^{-1}(\theta), ({-\poibayesrisk'})^{-1}(-\theta)\}$.
    \end{lemma}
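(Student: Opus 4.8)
\textbf{Proof proposal for Lemma \ref{lemEDGE1}.}

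The plan is a one-line first-moment (Markov-type) argument once we have established the right pointwise fact about weights. The key observation is that an example whose (unnormalised) edge $y^* H_t(\ve{x})$ is small must carry a \emph{large} weight $w_{t+1,i}=w((\ve{x}_i,y_i),H_t)$; since the average of all weights at iteration $t+1$ is small (the weights are not $\zeta$-dense), only a small fraction of examples can be in that situation.

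Concretely, I would proceed in three steps. \textbf{Step 1 (pointwise bound on weights).} Using the explicit form $w((\ve{x},y),H)=y-y^\star\cdot({-\poibayesrisk'})^{-1}(H(\ve{x}))$ and the fact that $({-\poibayesrisk'})^{-1}$ is strictly increasing (because $\poibayesrisk'$ is strictly decreasing under strict properness and differentiability, exactly as used in the proof of Lemma \ref{lem-PHI1}) with image $[0,1]$, I split on the binary label. If $y_i=1$ then $y^\star_i H_t(\ve{x}_i)=H_t(\ve{x}_i)\leq\theta$ gives $w_{t+1,i}=1-({-\poibayesrisk'})^{-1}(H_t(\ve{x}_i))\geq 1-({-\poibayesrisk'})^{-1}(\theta)$; if $y_i=0$ then $y^\star_i H_t(\ve{x}_i)=-H_t(\ve{x}_i)\leq\theta$, i.e.\ $H_t(\ve{x}_i)\geq-\theta$, gives $w_{t+1,i}=({-\poibayesrisk'})^{-1}(H_t(\ve{x}_i))\geq({-\poibayesrisk'})^{-1}(-\theta)$. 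In both cases $w_{t+1,i}\geq\underline{w}(\theta)$ whenever $y^\star_i H_t(\ve{x}_i)\leq\theta$.

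\textbf{Step 2 (averaging).} Restricting the sum defining $\overline{w}_{t+1}$ to the "bad-edge" index set $\mathcal{B}_\theta\defeq\{i\in[m]:y^\star_i H_t(\ve{x}_i)\leq\theta\}$ and applying Step 1,
\[
\overline{w}_{t+1}=\frac1m\sum_{i\in[m]}w_{t+1,i}\ \geq\ \frac1m\sum_{i\in\mathcal{B}_\theta}w_{t+1,i}\ \geq\ \underline{w}(\theta)\cdot\frac{\mathrm{Card}(\mathcal{B}_\theta)}{m}=\underline{w}(\theta)\cdot\pr_{i\sim[m]}[y^\star_i H_t(\ve{x}_i)\leq\theta].
\]
\textbf{Step 3 (conclusion).} By hypothesis the weights at iteration $t+1$ are not $\zeta$-dense, i.e.\ $\overline{w}_{t+1}<\zeta$; combining with Step 2 and dividing by $\underline{w}(\theta)$ yields $\pr_{i\sim[m]}[y^\star_i H_t(\ve{x}_i)\leq\theta]<\zeta/\underline{w}(\theta)$. (If $\underline{w}(\theta)=0$ the right-hand side is $+\infty$ and the statement is vacuous, so we may assume $\underline{w}(\theta)>0$.)

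The only step requiring genuine care is Step 1: getting the monotonicity direction of $({-\poibayesrisk'})^{-1}$ correct and making sure the two label cases pick out precisely the two terms inside the $\min$ that defines $\underline{w}(\theta)$ — everything after that is an elementary restriction-and-average estimate.
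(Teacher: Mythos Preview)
Your proof is correct and follows essentially the same route as the paper's: both establish the pointwise lower bound $w_{t+1,i}\geq\underline{w}(\theta)$ on the ``bad-edge'' set by splitting on the label and using monotonicity of $({-\poibayesrisk'})^{-1}$, then average and invoke the failure of $\zeta$-density. The paper organises the averaging step as two separate conditional-expectation bounds (one over $[m]_+$, one over $[m]_-$) before recombining, whereas you go straight to the restriction $\sum_{i\in\mathcal{B}_\theta}$; this is a purely cosmetic difference and yields the same inequality $\overline{w}_{t+1}\geq\underline{w}(\theta)\cdot\pr_{i\sim[m]}[y_i^\star H_t(\ve{x}_i)\leq\theta]$.
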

  \begin{proof}
    We denote $[m]_{+}$ the set of indices whose examples have positive class. Let $z^+_1, z^+_2, ..., z^+_{\mathrm{Card}([m]_{+})}$ some associated reals and $w_+(z) \defeq 1 - ({-\poibayesrisk'})^{-1}(z)$ the positive examples' weight function. Being non-increasing and with range in $[0,1]$, we have $\forall \theta \in \mathbb{R}$
    \begin{eqnarray}
      \expect_{i\sim [m]_{+}}[w_+(z^+_i)] & \geq & \pr_{[m]_{+}} [z^+_i \leq \theta] \cdot w_+(\theta) +   \pr_{[m]_{+}} [z^+_i > \theta]   \cdot \inf w_+ \\
      & & = \pr_{[m]_{+}} [z^+_i \leq \theta] \cdot w_+(\theta),
    \end{eqnarray}
    so using this with $z^+_i  = H_{t}(\ve{x}_i) = y^*_i H_{t}(\ve{x}_i)$ yields $w_+(z^+_i) = w_{t+1,i}$ and $\expect_{i\sim [m]_{+}}[w_{t+1,i}] \geq \pr_{[m]_{+}} [y^*_i H_{t}(\ve{x}_i)\leq \theta]\cdot  w_+(\theta)$.\\
    
    Similarly, let $[m]_{-}$ the set of negative indices for iteration $t$. Let $z^-_1, z^-_2, ..., z^-_{\mathrm{Card}([m]_{-})}$ some associated reals and $w_-(z) \defeq ({-\poibayesrisk'})^{-1}(z)$ the negative examples' weight function. Being non-decreasing and with range in $[0,1]$, we have $\forall \theta \in \mathbb{R}$
    \begin{eqnarray}
      \expect_{i\sim [m]_{-}}[w_-(z^-_i)] & \geq & \pr_{[m]_{-}} [z^-_i \geq -\theta] \cdot w_-(-\theta) -   \pr_{[m]_{-}} [z^-_i < -\theta]   \cdot \inf w_- \\
      & & = \pr_{[m]_{-}} [-z^-_i \leq \theta] \cdot w_-(-\theta),
    \end{eqnarray}
so using this with $z^-_i  = H_{t}(\ve{x}_i) = -y^*_i H_{t}(\ve{x}_i)$ yields $w_+(z^-_i) = w_{t+1,i}$ and $\expect_{i\sim [m]_{-}}[w_{t+1,i}] \geq \pr_{[m]_{-}} [y^*_i H_{t}(\ve{x}_i) \leq \theta] \cdot w_-(-\theta)$.\\
    
Denote $c(i) \in \{+,-\}$ the label of index $i$ in $[m]$ and $p^+, p^-$ the proportion of positive and negative examples in $[m]$. With a slight abuse of notation in indices, we have $p^+ \expect_{i\sim [m]_{+}}[w_+(z^+_i)] + p^- \expect_{i\sim [m]_{-}}[w_-(z^-_i)] = \expect_{i\sim [m]_{}}[w_{c(i)}(z^{c(i)}_i)] = \overline{w}_{t+1}$ where the last identity holds for the choices of the $z_i^\bullet$s made above. We thus have the lower-bound on $\overline{w}_{t+1}$:
    \begin{eqnarray}
      \overline{w}_{t+1} & \geq & p^+ \pr_{[m]_{+}} [y^*_i H_{t}(\ve{x}_i) \leq \theta] \cdot w_+(\theta) + p^- \pr_{[m]_{-}} [y^*_i H_{t}(\ve{x}_i) \leq \theta] \cdot w_-(-\theta)\\
          & \geq & (p^+ \pr_{[m]_{+}} [y^*_i H_{t}(\ve{x}_i) \leq \theta]  + p^- \pr_{[m]_{-}} [y^*_i H_{t}(\ve{x}_i) \leq \theta] ) \cdot \min\{w_+(\theta), w_-(-\theta)\}\\
      & & = \pr_{i\sim [m]}[y^*_i H_{t}(\ve{x}_i) \leq \theta] \cdot \min\{1 - ({-\poibayesrisk'})^{-1}(\theta), ({-\poibayesrisk'})^{-1}(-\theta)\},
    \end{eqnarray}
    so for any $\varepsilon \in [0,1]$,
    \begin{eqnarray}
(\pr_{i\sim [m]}[y^*_i H_{t}(\ve{x}_i) \leq \theta] \geq \varepsilon) & \Rightarrow & (\overline{w}_{t+1} \geq \varepsilon \cdot \min\{1 - ({-\poibayesrisk'})^{-1}(\theta), ({-\poibayesrisk'})^{-1}(-\theta)\}),
      \end{eqnarray}
      so if $\overline{w}_{t+1} < \zeta$, then by contraposition
      \begin{eqnarray}
        \pr_{i\sim [m]}[y^*_i H_{t}(\ve{x}_i) \leq \theta] & < & \frac{\zeta}{\min\{1 - ({-\poibayesrisk'})^{-1}(\theta), ({-\poibayesrisk'})^{-1}(-\theta)\}}\\
        & & = \frac{\zeta}{\underline{w}(\theta)},
      \end{eqnarray}
      as claimed.
    \end{proof}
    Fix from now on
    \begin{eqnarray}
\zeta & \defeq & \varepsilon \cdot \underline{w}(\theta)\label{defZETA}.
    \end{eqnarray}
    We have two cases to conclude on our main result.\\

    \noindent \textbf{Case 1}: sometimes during the induction, the weights for the "next iteration" ($t+1$) fail to be $\zeta$-dense. By Lemma \ref{lemEDGE1}, $\pr_{i\sim [m]}[y^*_i H_{t}(\ve{x}_i) \leq \theta] < \varepsilon$ and we are done.\\

    \noindent \textbf{Case 2}: weights are always $\zeta$-dense:
    \begin{eqnarray*}
\overline{w}^2_t & \geq & \varepsilon^2 \cdot \underline{w}(\theta)^2, \forall t = 1, 2, ...
      \end{eqnarray*}
      Recall the key statement of Theorem \ref{th-boost-1}:
      \begin{eqnarray*}
\forall \Phi \in \mathbb{R}, \left(\sum_{t=1}^T p_t \overline{w^*}^2_{t} \geq \frac{2(\popsur(H_{0}, \mathcal{S}) - \Phi)}{\kappa \gamma^2}\right) & \Rightarrow & \left(\popsur(H_{T}, \mathcal{S}) \leq \Phi\right).
\end{eqnarray*}
Provided we can assume a lowerbound of the form\footnote{Note that this is equivalent to $u_t$ compliance in Definition \ref{defUT-COMP}.}
\begin{eqnarray}
  p_t \overline{w^*}^2_{t} & \geq & u_t \overline{w}^2_t, \forall t = 1, 2, ... \label{condUT-pf}
\end{eqnarray}
where $u_t>0$ is not too small, we see that $\zeta$-denseness thus enforces a decrease of $\popsur(H, \mathcal{S})$ via Theorem \ref{th-boost-1}, and we only need a link between this and $\pr_{i\sim [m]}[y^*_i H_t(\ve{x}_i) \leq \theta]$, reminding
\begin{eqnarray*}
  \popsur(H, \mathcal{S}) \defeq \expect_{i\sim [m]}\left[\philoss(-H(\ve{x}_i)) - y_i H(\ve{x}_i)\right] & , & \philoss(z) \defeq (-\poibayesrisk)^\star(-z).
\end{eqnarray*}
\begin{lemma}\label{lemLINK-1}
Let $\underline{\philoss}(z) \defeq \min\{\philoss(z), \philoss(-z)-z\}$.  For any $t\geq 1$ and any $\theta \in \mathbb{R}$ such that:
    \begin{eqnarray*}
\underline{\philoss}(\theta) & > & \min_{i\in [m]} \underline{\philoss}(y^*_i H_t(\ve{x}_i)),
    \end{eqnarray*}
    we have for any $u\in [0,1]$,
    \begin{eqnarray*}
(\pr_{i\sim [m]}[y^*_i H_t(\ve{x}_i)\leq \theta] > u ) & \Rightarrow & \left(\popsur(H_t, \mathcal{S}_t) \geq u\underline{\philoss}(\theta) + (1-u) \min_{i\in [m]} \underline{\philoss}(y^*_i H_t(\ve{x}_i))\right).
      \end{eqnarray*}
  \end{lemma}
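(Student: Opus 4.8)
The plan is to bound the surrogate risk below by an average of per-example margin penalties, exploit the monotonicity of $\underline{\philoss}$, and then finish by an elementary averaging argument (no convexity of $\philoss$ is needed here). Throughout, abbreviate the unnormalized edge on example $i$ by $z_i \defeq y^*_i H_t(\ve{x}_i)$, and recall $y^*_i = 2y_i - 1$.

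First I would prove a per-example lower bound: the contribution of example $i$ to $\popsur(H_t,\mathcal{S}) = \expect_{i\sim[m]}[\philoss(-H_t(\ve{x}_i)) - y_i H_t(\ve{x}_i)]$ is at least $\underline{\philoss}(z_i)$. This is a short case analysis on the binary label: if $y_i = 0$ (so $y^*_i = -1$) the contribution is $\philoss(-H_t(\ve{x}_i)) = \philoss(z_i)$, and if $y_i = 1$ (so $y^*_i = 1$) it is $\philoss(-H_t(\ve{x}_i)) - H_t(\ve{x}_i) = \philoss(-z_i) - z_i$; in both cases it dominates $\min\{\philoss(z_i),\philoss(-z_i)-z_i\} = \underline{\philoss}(z_i)$. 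Averaging over $i\sim[m]$ gives $\popsur(H_t,\mathcal{S}) \geq \expect_{i\sim[m]}[\underline{\philoss}(z_i)]$.

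Next I would record that $\underline{\philoss}$ is non-increasing. By Lemma \ref{lem-PHI1}, $\philoss$ is decreasing and (from its proof) $\philoss'\in[-1,0]$; hence $z\mapsto \philoss(-z)-z$ has derivative $-\philoss'(-z)-1\leq 0$ and is non-increasing as well, so $\underline{\philoss}$, a pointwise minimum of two non-increasing functions, is non-increasing. Then, writing $q \defeq \pr_{i\sim[m]}[z_i\leq\theta]$ and $\mu \defeq \min_{i\in[m]}\underline{\philoss}(z_i)$, I would split the expectation over $\{i: z_i\leq\theta\}$ and its complement, using monotonicity on the first block ($\underline{\philoss}(z_i)\geq\underline{\philoss}(\theta)$) and the trivial bound $\underline{\philoss}(z_i)\geq\mu$ on the second, to obtain $\popsur(H_t,\mathcal{S})\geq q\,\underline{\philoss}(\theta)+(1-q)\,\mu = \mu + q(\underline{\philoss}(\theta)-\mu)$. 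Since the hypothesis gives $\underline{\philoss}(\theta)>\mu$, the right-hand side is strictly increasing in $q$, so the premise $q>u$ yields $\popsur(H_t,\mathcal{S}) > u\,\underline{\philoss}(\theta)+(1-u)\,\mu$, which implies the stated inequality.

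I do not expect any genuine obstacle: the whole argument is bookkeeping once the two structural inputs are in place. The only points needing a little care are getting the label case analysis right in the per-example bound, and confirming the sign of the derivative of $z\mapsto\philoss(-z)-z$, both of which follow immediately from the properties of $\philoss$ established in Lemma \ref{lem-PHI1}.
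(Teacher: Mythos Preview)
Your argument is correct and follows essentially the same route as the paper's proof: both rely on the monotonicity of $\philoss$ and of $z\mapsto\philoss(-z)-z$, split the sample according to whether the margin $y_i^*H_t(\ve{x}_i)$ falls below $\theta$, and finish by the same convex-combination argument. Your only (minor) organizational difference is that you first collapse the two label cases into the single per-example lower bound $\underline{\philoss}(z_i)$ and then split by margin, whereas the paper keeps the positive and negative classes separate, applies the monotonicity argument within each class, and only then passes to $\underline{\philoss}$ when recombining; your ordering is a bit cleaner but not a different idea.
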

  \begin{proof}
    We reuse some notations from Lemma \ref{lemEDGE1}. We first note
\begin{eqnarray}
  \expect_{i\sim [m]}\left[\philoss(-H_t(\ve{x}_i)) - y_i H_t(\ve{x}_i)\right] & = & p^- \expect_{i\sim [m]_{-}}\left[\philoss(-H_t(\ve{x}_i))\right] \nonumber\\
  & & + p^+ \expect_{i\sim [m]_{+}}\left[\philoss(-H_t(\ve{x}_i)) - H_t(\ve{x}_i)\right].\label{decompEXPECT}
    \end{eqnarray}
    Let us analyse the term for negative examples and have  $z^-_i \leftarrow H(\ve{x}_i)$ for short. Because $\philoss(-z)$ is non-decreasing, for any $\theta \in \mathbb{R}$,
    \begin{eqnarray*}
      \lefteqn{\expect_{[m]_{-}}[\philoss(-z^-_i)]}\\
      & \geq & \pr_{[m]_{-}}[z^-_i \geq -\theta] \cdot \philoss(\theta) + (1-\pr_{[m]_{-}}[z^-_i \geq -\theta]) \cdot \min_{i\in [m]_{-}} \philoss(-z^-_i)\\
                                       & & = \pr_{[m]_{-}}[y^*_i H_t(\ve{x}_i) \leq \theta] \cdot \philoss(\theta) + (1-\pr_{[m]_{-}}[y^*_i H_t(\ve{x}_i) \leq \theta]) \cdot \min_{i\in [m]_{-}} \philoss(y^*_i H_t(\ve{x}_i))\\
      & \geq & \pr_{[m]_{-}}[y^*_i H_t(\ve{x}_i) \leq \theta] \cdot \underline{\philoss}(\theta) + (1-\pr_{[m]_{-}}[y^*_i H_t(\ve{x}_i) \leq \theta]) \cdot \min_{i\in [m]_{}} \underline{\philoss}(y^*_i H_t(\ve{x}_i)).
  \end{eqnarray*}
  Similarly for positive examples, letting $z^+_i \leftarrow H(\ve{x}_i)$ for short, we remark that $\philoss(-z)-z$ is non-increasing and so for any $\theta \in \mathbb{R}$,
  \begin{eqnarray*}
    \lefteqn{\expect_{[m]_{+}}[\philoss(-z^+_i)-z^+_i]}\\
    & \geq & \pr_{[m]_{+}}[z^+_i \leq \theta]  \cdot (\philoss(-\theta)-\theta) + (1-\pr_{[m]_{+}}[z^+_i \leq \theta] )\cdot \min_{i\in [m]_{+}} \philoss(-z^+_i)-z^+_i\\
    & & = \pr_{[m]_{+}}[y^*_i H_t(\ve{x}_i) \leq \theta]  \cdot (\philoss(-\theta)-\theta) \\
    & & + (1-\pr_{[m]_{+}}[y^*_i H_t(\ve{x}_i) \leq \theta] )\cdot \min_{i\in [m]_{+}} \philoss(-y^*_i H_t(\ve{x}_i))-y^*_i H_t(\ve{x}_i)\\
    & \geq & \pr_{[m]_{+}}[y^*_i H_t(\ve{x}_i) \leq \theta]  \cdot \underline{\philoss}(\theta) + (1-\pr_{[m]_{+}}[y^*_i H_t(\ve{x}_i) \leq \theta] )\cdot \min_{i\in [m]_{}} \underline{\philoss}(y^*_i H_t(\ve{x}_i)).
    \end{eqnarray*}
    Hence we get from \eqref{decompEXPECT} that for any $\theta \in \mathbb{R}$,
    \begin{eqnarray*}
      \lefteqn{\expect_{i\sim [m]}\left[\philoss(-H_t(\ve{x}_i)) - y_i H_t(\ve{x}_i)\right]}\\
      & = & p^- \expect_{i\sim [m]_{-}}\left[\philoss(-H_t(\ve{x}_i))\right] + p^+ \expect_{i\sim [m]_{+}}\left[\philoss(-H_t(\ve{x}_i)) - H_t(\ve{x}_i)\right]\\
      & \geq & ( p^-  \pr_{[m]_{-}}[y^*_i H_t(\ve{x}_i)\leq \theta]   + p^+  \pr_{[m]_{+}}[y^*_i H_t(\ve{x}_i) \leq \theta] ) \cdot \underline{\philoss}(\theta)\\
      & & + ((p^-+p^+) - (p^- \pr_{[m]_{-}}[y^*_i H_t(\ve{x}_i)\leq \theta]   + p^+  \pr_{[m]_{+}}[y^*_i H_t(\ve{x}_i) \leq \theta] )) \cdot \min_{i\in [m]_{}} \underline{\philoss}(y^*_i H_t(\ve{x}_i))\\
      & & = \pr_{i\sim [m]_{}}[y^*_i H_t(\ve{x}_i)\leq \theta] \cdot \underline{\philoss}(\theta) + (1 - \pr_{i\sim [m]_{}}[y^*_i H_t(\ve{x}_i)\leq \theta]) \cdot \min_{i\in [m]_{}} \underline{\philoss}(y^*_i H_t(\ve{x}_i)).
    \end{eqnarray*}
    We get that for any $\theta \in \mathbb{R}$ such that:
    \begin{eqnarray*}
\underline{\philoss}(\theta) & > & \min_{i\in [m]_{}} \underline{\philoss}(y^*_i H_t(\ve{x}_i)),
    \end{eqnarray*}
    we have for any $u\in [0,1]$,
    \begin{eqnarray*}
      \lefteqn{(\pr_{i\sim [m]_{}}[y^*_i H_t(\ve{x}_i)\leq \theta] > u )}\\
      &\Rightarrow & \left(\underbrace{\expect_{i\sim [m]}\left[\philoss(-H_t(\ve{x}_i)) - y_i H_t(\ve{x}_i)\right]}_{\defeq \popsur(H_t, \mathcal{S}_t) } \geq u\underline{\philoss}(\theta) + (1-u) \min_{i\in [m]_{}} \underline{\philoss}(y^*_i H_t(\ve{x}_i))\right),
      \end{eqnarray*}
    as claimed.
  \end{proof}
If all weights at iterations $t$ are $\zeta \defeq \varepsilon \cdot \underline{w}(\theta)$-dense for $t=1, 2, ...$, then, letting $u\defeq \varepsilon$ in Lemma \ref{lemLINK-1} and
  \begin{eqnarray*}
    \Phi & \defeq & \varepsilon \underline{\philoss}(\theta) + (1-
                      \varepsilon) \philoss_*
  \end{eqnarray*}
  in Theorem \ref{th-boost-1}, for some $\philoss_*$ to be made precise, then a sufficient condition to get $\popsur(H_{T}, \mathcal{S}) \leq \Phi$ is $\sum_{t=1}^T u_t \overline{w}^2_t  \geq 2(\popsur(H_{0}, \mathcal{S}) - \varepsilon \underline{\philoss}(\theta) - (1-\varepsilon) \philoss_*)/(\kappa \gamma^2)$ (using \eqref{condUT-pf}), and integrating the
$\zeta$-denseness of weights, this condition becomes the sufficient condition:
  \begin{eqnarray}
\sum_{t=1}^T u_t & \geq & \frac{2(\popsur(H_{0}, \mathcal{S}) - \varepsilon \underline{\philoss}(\theta) - (1-\varepsilon) \philoss_*)}{\kappa \varepsilon^2 \underline{w}(\theta)^2 \gamma^2}.\label{eqSUMP-suite}
  \end{eqnarray}
  So, if we pick $\philoss_* \defeq \min_{i\in [m]} \underline{\philoss}(y^*_i H_T(\ve{x}_i))$, then from Lemma \ref{lemLINK-1} we get
\begin{eqnarray*}
\pr_{i\sim [m]}[y^*_i H_T(\ve{x}_i) \leq \theta] & < & \varepsilon,
\end{eqnarray*}
which is what we want. We wrap up in two last steps. We first simplify the RHS of \eqref{eqSUMP-suite} by replacing it by a more readable sufficient condition: if the loss' partial losses satisfy
\begin{eqnarray}
\partialloss{-1}(0), \partialloss{1}(1) & \geq & C \label{condPARTIAL}
  \end{eqnarray}
  for some $C \in \mathbb{R}$ (such as if the loss is fair: $C=0$), then we remark that for any $H\in \mathbb{R}$ and $y\in \{0,1\}$,
  \begin{eqnarray}
    \philoss(-H) - y H & = & \sup_{u\in [0,1]} \{(u-y)H + \underline{L}(u)\} \nonumber\\
                        & = & \sup_{u\in [0,1]} \{(u-y)H + u\partialloss{1}(u) + (1-u)\partialloss{-1}(u)\}\nonumber\\
    & \geq & y\partialloss{1}(y) + (1-y)\partialloss{-1}(y).\label{binfUPPHI}
  \end{eqnarray}
The integral representation of proper losses \cite[Theorem 1]{rwCB} \cite[Appendix Section 9]{nmSL},
\begin{eqnarray*}
\partialloss{1}(u) = \int_{u}^1 (1-t) w(t) \mathrm{d}t & , & \partialloss{-1}(u) = \int_{0}^u t w(t) \mathrm{d}t ,
\end{eqnarray*}
where $(0,1) \rightarrow \mathbb{R}_+$, shows that $\partialloss{1}$ is non-increasing and $\partialloss{-1}$ is non-decreasing, so $\inf \partialloss{-1} \geq C \neq \pm\infty$ and $\inf \partialloss{1} \geq C \neq \pm\infty$,
so \eqref{binfUPPHI} yields
\begin{eqnarray}
  \philoss(-H) - y H & \geq & \left\{
                               \begin{array}{rcl}
                                 \partialloss{1}(1) \geq C & \mbox{ if } & y = 1,\\
                                 \partialloss{-1}(0) \geq C & \mbox{ if } & y = 0
                                 \end{array}
                               \right. ,
\end{eqnarray}
so $\min_{i\in [m]} {\philoss}(y^*_i H_T(\ve{x}_i)) \geq C$ and $\underline{\philoss}(\theta) \geq C$, which allows us to replace \eqref{eqSUMP-suite} by the sufficient condition:
\begin{eqnarray}
\sum_{t=1}^T u_t & \geq & \frac{2(\popsur(H_{0}, \mathcal{S}) - C)}{\kappa \varepsilon^2 \underline{w}(\theta)^2 \gamma^2}\label{eqSUMP2}.
\end{eqnarray}
In our second step to wrap-up, if we have $\sum_{t=1}^T u_t \geq U(T)$ (for some $U$ strictly increasing and thus invertible), then under the three conditions:
\begin{itemize}
\item \textbf{LOSS0} and \eqref{condPARTIAL} on the loss,
\item \textbf{WLA} on the weak learner,
  \item \eqref{condUT-pf} on Step 2.1 of \topdowngen~($u_t$ is chosen such that the choice of $\mathcal{X}_t$ is $u_t$ compliant),
  \end{itemize}
we are guaranteed that anytime we have
\begin{eqnarray}
T & \geq & U^{-1}\left(\frac{2\left(\popsur(H_{0}, \mathcal{S}) - C\right)}{\kappa \cdot \varepsilon^2 \underline{w}(\theta)^2 \gamma^2}\right),\label{eqWITHPHIXpf}
\end{eqnarray}
we are guaranteed
\begin{eqnarray*}
\pr_{i\sim [m]}[y^*_i H_T(\ve{x}_i) \leq \theta] & < & \varepsilon,
\end{eqnarray*}
which is the statement of the Theorem.

 \subsection{Proof of Lemma \ref{lem-split-gives-wla}}\label{proof-lem-split-gives-wla}

\begin{figure}[t]
\begin{center}
\includegraphics[trim=30bp 480bp 500bp 20bp,clip,width=0.45\linewidth]{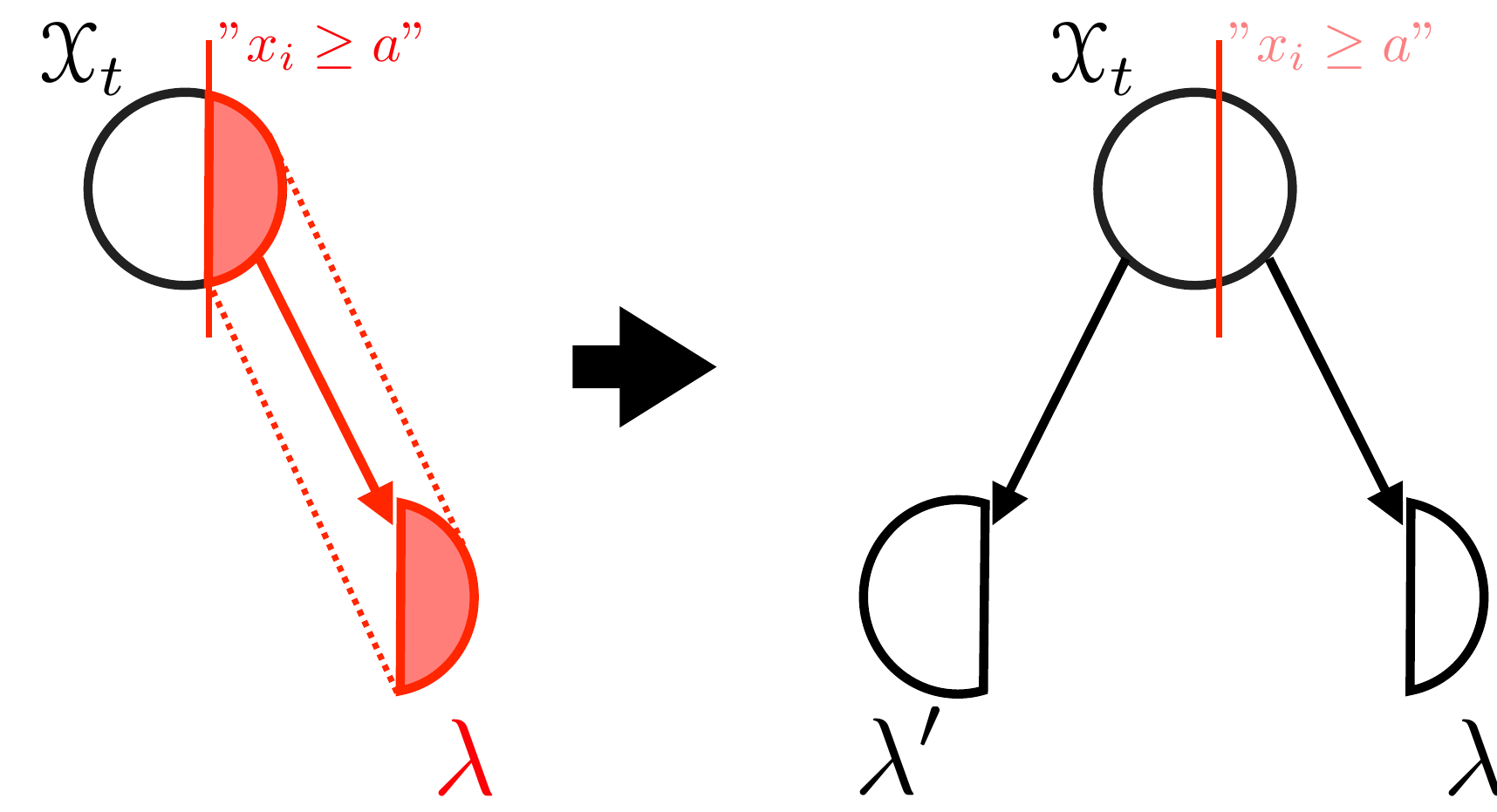} 
\end{center}
\caption{When \topdowngen~learns a decision trees, $\mathcal{X}_t$ in Step 2.1 is the domain corresponding to a leaf $\leaf$. The weak learner gives the split and fits the prediction of one leaf only to guarantee the WLA. The WLA at this split guarantees that the other split also complies with the WLA (the red parts are chosen by the weak learner, see text).}
  \label{f-split}
\end{figure}

Denote $W^+, W^-$ the total sum of (unnormalized) boosting weights in $\mathcal{S}_t$ before the call for splitting the node. Denote $W^+_{\mathrm{r}}, W^-_{\mathrm{r}}$ the corresponding weights at the new leaf $\leaf$ and $W^+_{\mathrm{l}}, W^-_{\mathrm{l}}$ the corresponding weights at the next leaf $\leaf'$, completing the split (See Figure \ref{f-split}). 
The proof of the Lemma relies on the following observations:
\begin{itemize}
\item [(1)] $W^+ = W^-$ (before split, the current leaf is balanced); also, $W^+ = W^+_{\mathrm{l}} + W^+_{\mathrm{r}}$ and $W^- = W^-_{\mathrm{l}} + W^-_{\mathrm{r}}$ (every example in $\mathcal{X}_t$ goes to exactly one new leaf);
  \item [(2)] the weak learner predicts $\mathrm{1}_{x_i \geq a} \cdot h_t \in \{-1,0,1\}$ wlog (call it the "prediction at the right node of the split at $\mathcal{X}_t$");
  \end{itemize}
We then derive from the quantity in absolute value of the \textbf{WLA} \eqref{defWLA3}:
  \begin{eqnarray}
    \sum_{i\in [m]_t} \frac{w_{t,i}}{\sum_{j\in [m]_t} w_{t,j}} \cdot y^*_i \cdot \frac{h_{t}(\ve{x}_i)}{\max_{j\in [m]_t}|h_t(\ve{x}_j)|}  & = &  \sum_{i\in [m]_t: x_i \geq a} \frac{w_{t,i}}{\sum_{j\in [m]_t} w_{t,j}} \cdot y^*_i \cdot h_t\nonumber\\
                                                                                                                                           & = &  \left(\frac{W^+_{\mathrm{r}}}{W^++W^-}-\frac{W^-_{\mathrm{r}}}{W^++W^-}\right) h_t \nonumber \\ 
                                                                                                                                           & = & \left(\frac{W^+ - W^+_{\mathrm{l}}}{W^++W^-}-\frac{W^- - W^-_{\mathrm{l}}}{W^++W^-}\right) h_t\nonumber\\
    & = & \left(\frac{W^+-W^-}{W^++W^-}+ \frac{W^-_{\mathrm{l}}}{W^++W^-} - \frac{W^+_{\mathrm{l}}}{W^++W^-}\right) h_t\nonumber\\
    & = & \left(\frac{W^-_{\mathrm{l}}}{W^++W^-} - \frac{W^+_{\mathrm{l}}}{W^++W^-}\right) h_t\nonumber\\
                                                                                                                                            & = & \sum_{i\in [m]_t: x_i < a} \frac{w_{t,i}}{\sum_{j\in [m]_t} w_{t,j}} \cdot y^*_i \cdot (-h_t)\nonumber\\
    & = & \sum_{i\in [m]_t} \frac{w_{t,i}}{\sum_{j\in [m]_t} w_{t,j}} \cdot y^*_i \cdot \frac{h'_{t}(\ve{x}_i)}{\max_{j\in [m]_t}|h'_t(\ve{x}_j)|} ,
  \end{eqnarray}
  with
  \begin{eqnarray}
h'_t(\ve{x}) & \defeq & \mathrm{1}_{x_i < a} \cdot (-h_t),
  \end{eqnarray}
  which is both (i) a function computing a prediction for the left node of the split at $\mathcal{X}_t$ and (ii) satisfying the \textbf{WLA} since $\mathrm{1}_{x_i \geq a} \cdot h_t$ does satisfy the \textbf{WLA}.
  
\subsection{Proof of Lemma \ref{lem-J}}\label{proof-lem-J}

        Let $\leaf$ denote a leaf of the decision tree. We have
        \begin{eqnarray}
          J(\leaf) & = & m_\leaf \cdot \left(\frac{\sum_{i\sim \leaf} y_i - y^*_i\cdot ({-\poibayesrisk'})^{-1}(H_\leaf)}{m_\leaf}\right)^2\nonumber\\
          & = &  m_\leaf \cdot \left(\frac{m^+_\leaf - (m^+_\leaf - m^-_\leaf) \cdot \frac{m^+_\leaf}{m_\leaf}}{m_\leaf}\right)^2\label{eqHsimp}\\
                   & = &  \frac{(m^+_\leaf)^2}{m^3_\leaf} \cdot \left(m_\leaf - m^+_\leaf + m^-_\leaf\right)^2\nonumber\\
                   & = & 4 \cdot m_\leaf \cdot \left(\frac{m^+_\leaf}{m_\leaf}\right)^2\cdot \left(\frac{m^-_\leaf}{m_\leaf}\right)^2\nonumber\\
                  & = & 4 m \cdot \frac{m_\leaf}{m} \cdot \left(\frac{m^+_\leaf}{m_\leaf}\right)^2\cdot \left(\frac{m^-_\leaf}{m_\leaf}\right)^2\nonumber\\
                   & \propto & p_\leaf \cdot (2 p^+_\leaf (1-p^+_\leaf))^2\nonumber,
        \end{eqnarray}
        as claimed. In \eqref{eqHsimp}, we have made use of the expression in \eqref{eqvalH}.

\subsection{Proof of Lemma \ref{lem-BOO2}}\label{proof-lem-BOO2}

We proceed in three steps. Suppose a new leaf $\leaf$ has been put, with prediction $h_\leaf$, by the weak learner (this is in fact "half a split" as usually described for \cdt s). We compute the leveraging coefficient $\alpha_\leaf$ in Step 2.3 of \topdowngen. Denote $\parent(\node)$ the parent node of $\node$ in $H$.
  \begin{eqnarray}
H_{\parent(\leaf)} & \defeq & \sum_{\node \in \mathrm{path}(\leaf) \backslash \{\leaf_t\}} \alpha_\node h_\node
  \end{eqnarray}
  is the prediction computed from the root of the tree up to the parent of $\leaf$. Given a constant prediction $h_\leaf$ at leaf $\leaf$, We wish to find $\alpha_\leaf$ so that \eqref{defAlpha} holds. We reuse notations from Lemma \ref{lem-J} and its proof.  We note that \eqref{defAlpha} is equivalent to
  \begin{eqnarray}
m_\leaf^+ - (m_\leaf - m_\leaf^+) \cdot ({-\poibayesrisk'})^{-1}(\alpha_\leaf h_\leaf + H_{\parent(\leaf)} ) - m_\leaf^+ \cdot ({-\poibayesrisk'})^{-1}(\alpha_\leaf h_\leaf + H_{\parent(\leaf)} ) & = & 0,
  \end{eqnarray}
  which gives, since $p^+_\leaf = m_\leaf^+ / m_\leaf$,
  \begin{eqnarray}
\alpha_\leaf & = & \frac{1}{h_\leaf} \cdot \left( ({-\poibayesrisk'}) \left(p^+_\leaf\right) - H_{\parent(\leaf)} \right),
  \end{eqnarray}
  Our second step computes the final decision tree prediction at the new leaf $\leaf$, which is trivially:
  \begin{eqnarray}
    H_{\leaf} & \defeq & H_{\parent(\leaf)} + \alpha_\leaf h_\leaf\nonumber\\
    & = & ({-\poibayesrisk'}) \left(p^+_\leaf\right).\label{eqvalH}
  \end{eqnarray}
  Plugging this prediction in the \spd, it simplifies as
  \begin{eqnarray}
\popsur(H, \mathcal{S}) & = & \expect_{\leaf \sim \leafset(h)}\left[(-\poibayesrisk)^\star(-\poibayesrisk'(p^+_\leaf))  + p^+_\leaf \poibayesrisk'(p^+_\leaf) \right]\label{eqLoss1} \\
                                               & = & \expect_{\leaf \sim \leafset(h)}\left[-\poibayesrisk'(p^+_\leaf) \cdot {({-\poibayesrisk}'})^{-1} (-\poibayesrisk'(p^+_\leaf)) + \poibayesrisk \circ {({-\poibayesrisk}')}^{-1} (-\poibayesrisk'(p^+_\leaf)) + p^+_\leaf \poibayesrisk'(p^+_\leaf) \right]\nonumber\\
  & = & \expect_{\leaf \sim \leafset(h)}\left[-p^+_\leaf \poibayesrisk'(p^+_\leaf) + \poibayesrisk (p^+_\leaf) + p^+_\leaf \poibayesrisk'(p^+_\leaf) \right]\nonumber\\
  & = & \expect_{\leaf \sim \leafset(h)}\left[\poibayesrisk (p^+_\leaf) \right],\label{eqLoss2}
\end{eqnarray}
as claimed.

  \subsection{Proof of Lemma \ref{lem-NoNoise}}\label{proof-lem-NoNoise}

If the loss is symmetric, then $p^* = 1/2$. In this case, the sign of $\tilde{p}_\leaf^+ - 1/2$ is the same as the sign of $p_\leaf^+ - 1/2$. Assuming we know the noise rate in advance (like in \cite{ksBI}) and do not have generalisation issues (like with the dataset of \cite{lsRC}), the sign of $H_{\leaf}$ with and without noise are the same and thus \topdowngen~is not affected by noise when inducing decision trees.

If the loss is not symmetric, the picture changes. To have a sign flip between no noise and noise, we need either:
\begin{eqnarray}
p_\leaf^+> p^* > \etanoise + (1-2\etanoise) p_\leaf^+, \label{ineqR}\\
p_\leaf^+< p^* < \etanoise + (1-2\etanoise) p_\leaf^+.\label{ineqL}
\end{eqnarray}
Note that looking at the extremes, we see that \eqref{ineqR} implies $p_\leaf^+ > 1/2$ while \eqref{ineqL} implies $p_\leaf^+ < 1/2$. We have two cases:
\begin{enumerate}
\item [] \textbf{Case 1}: we reach the point where
  \begin{eqnarray*}
p^+_\leaf & \leq & \min \left\{\frac{p^*-\etanoise}{1-2\etanoise}, \frac{1}{2}\right\}.
  \end{eqnarray*}
  We note that the $1/2$ upperbound prevents \eqref{ineqR}, while the other one can be reformulated as $\etanoise + (1-2\etanoise) p_\leaf^+ \leq p^*$, preventing \eqref{ineqL}.
  \item [] \textbf{Case 2}: we reach the point where
  \begin{eqnarray*}
1-p^+_\leaf & \leq & \min \left\{\frac{(1-p^*)-\etanoise}{1-2\etanoise}, \frac{1}{2}\right\}.
  \end{eqnarray*}
  We note that the $1/2$ upperbound prevents \eqref{ineqL}, while the other one can be reformulated as $p^* \leq \etanoise + (1-2\etanoise)p^+_\leaf$, preventing \eqref{ineqR}.
  \end{enumerate}
This ends the proof of Lemma \ref{lem-NoNoise}

\section{Supplementary material on experiments} \label{sec-sup-exp}

  \newcommand{\picwidthsi}{0.28}
\begin{table*}
  \centering
  \begin{tabular}{rccc}\hline\hline
    & \cls \hspace{-0.3cm} & \hspace{-0.3cm} \cdt \hspace{-0.3cm} & \hspace{-0.3cm} 1-\cnn \\
    \rotatebox{90}{{\footnotesize \texttt{Matusita loss}}} & \hspace{-0.3cm} \includegraphics[trim=0bp 0bp 0bp 0bp,clip,width=\picwidthsi\textwidth]{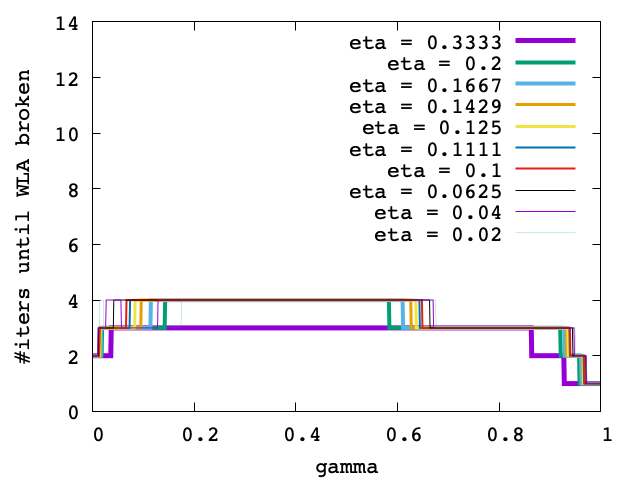} \hspace{-0.3cm} & \hspace{-0.3cm} \includegraphics[trim=0bp 0bp 0bp 0bp,clip,width=\picwidthsi\textwidth]{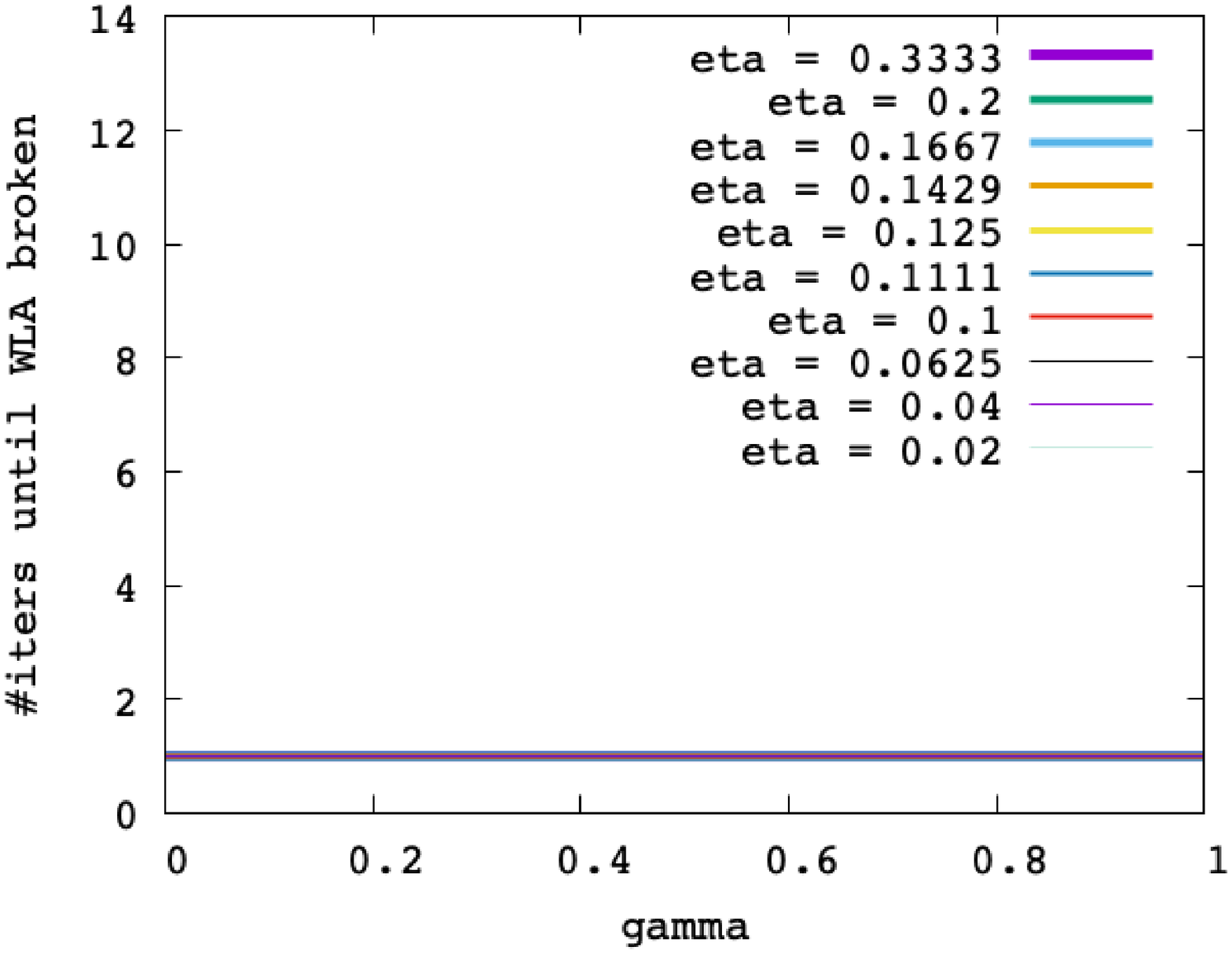} \hspace{-0.3cm} & \hspace{-0.3cm} \includegraphics[trim=0bp 0bp 0bp 0bp,clip,width=\picwidthsi\textwidth]{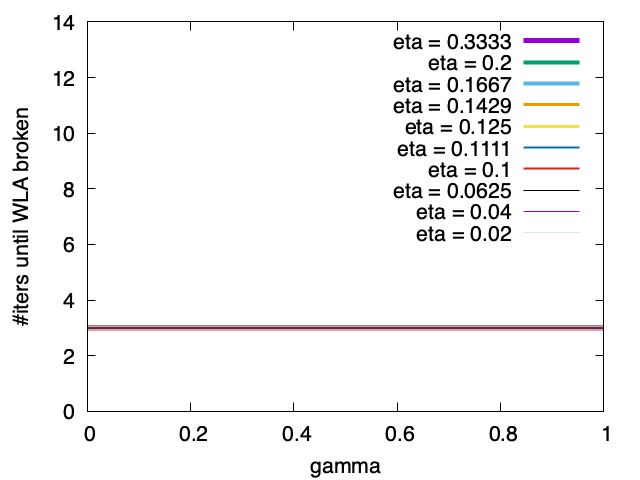} \hspace{-0.3cm} \\
    \rotatebox{90}{{\footnotesize \texttt{Log loss}}} & \hspace{-0.3cm} \includegraphics[trim=0bp 0bp 0bp 0bp,clip,width=\picwidthsi\textwidth]{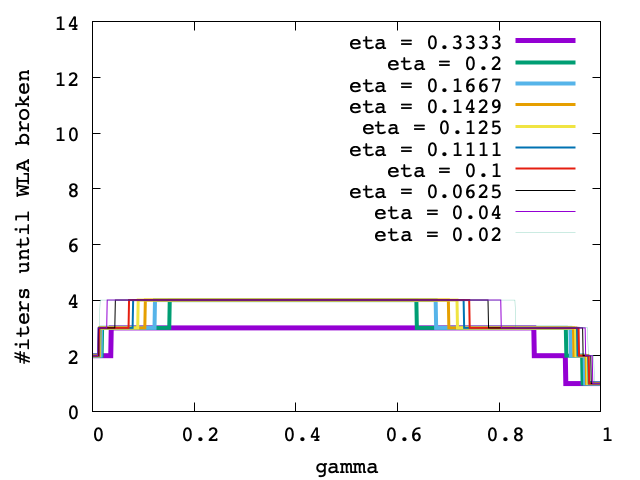} \hspace{-0.3cm} & \hspace{-0.3cm} \includegraphics[trim=0bp 0bp 0bp 0bp,clip,width=\picwidthsi\textwidth]{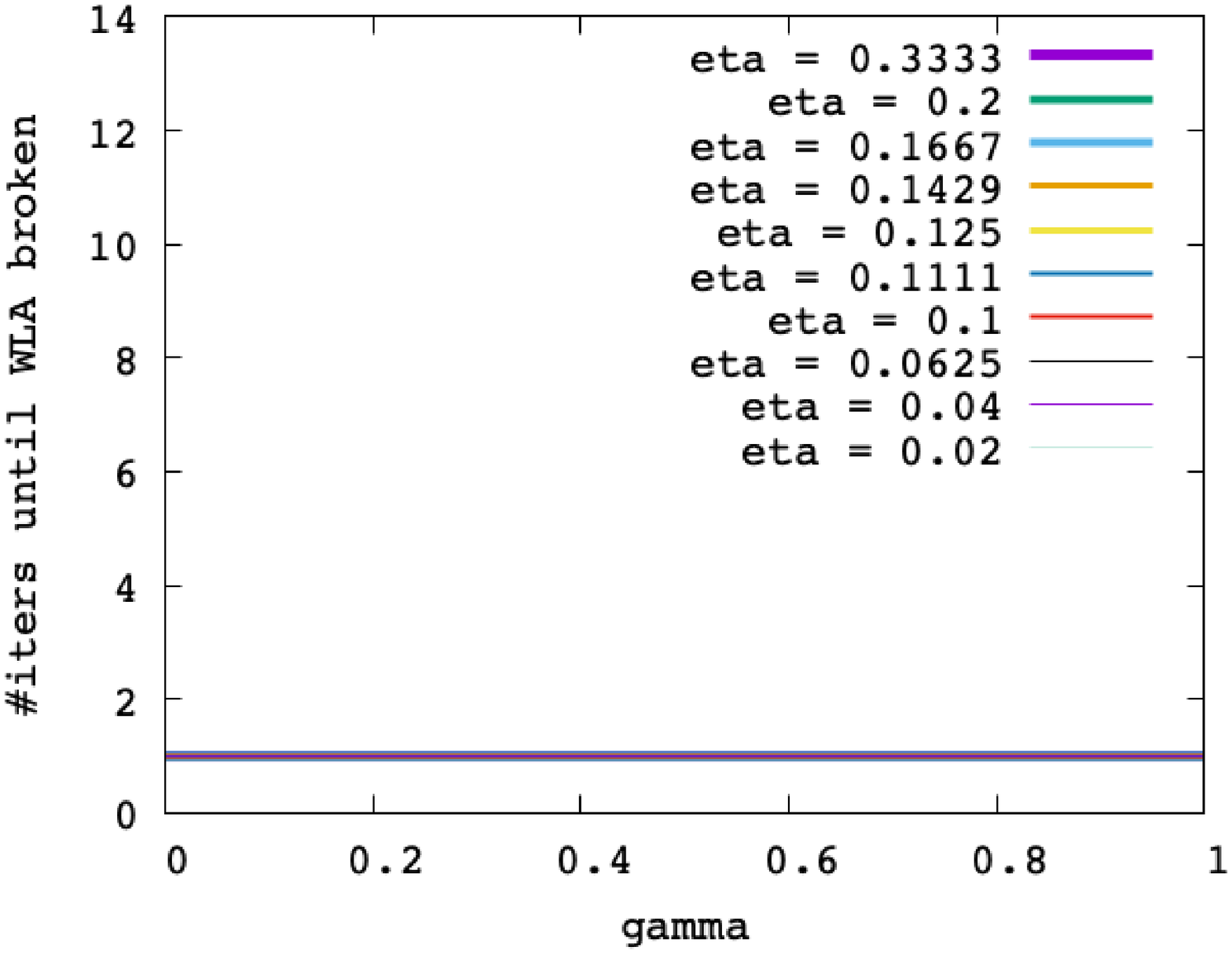} \hspace{-0.3cm} & \hspace{-0.3cm} \includegraphics[trim=0bp 0bp 0bp 0bp,clip,width=\picwidthsi\textwidth]{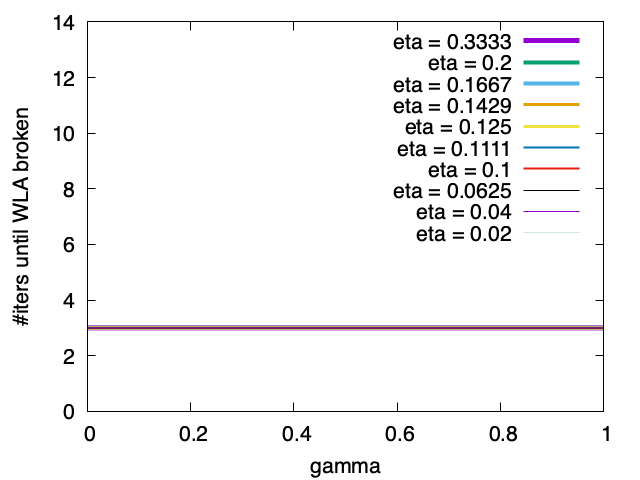} \hspace{-0.3cm} \\
    \rotatebox{90}{{\footnotesize \texttt{Square loss}}} & \hspace{-0.3cm} \includegraphics[trim=0bp 0bp 0bp 0bp,clip,width=\picwidthsi\textwidth]{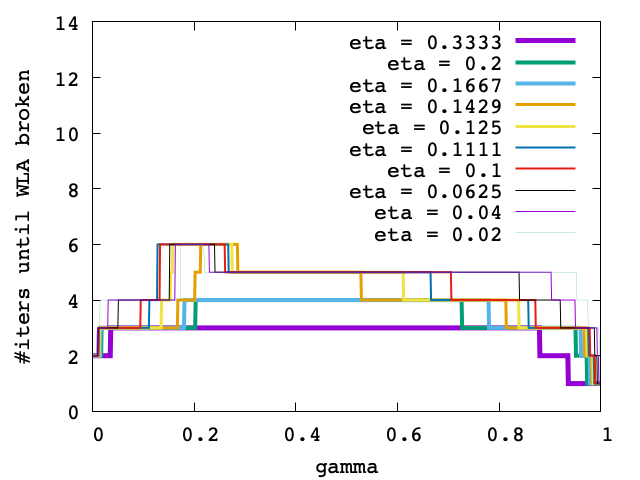} \hspace{-0.3cm} & \hspace{-0.3cm} \includegraphics[trim=0bp 0bp 0bp 0bp,clip,width=\picwidthsi\textwidth]{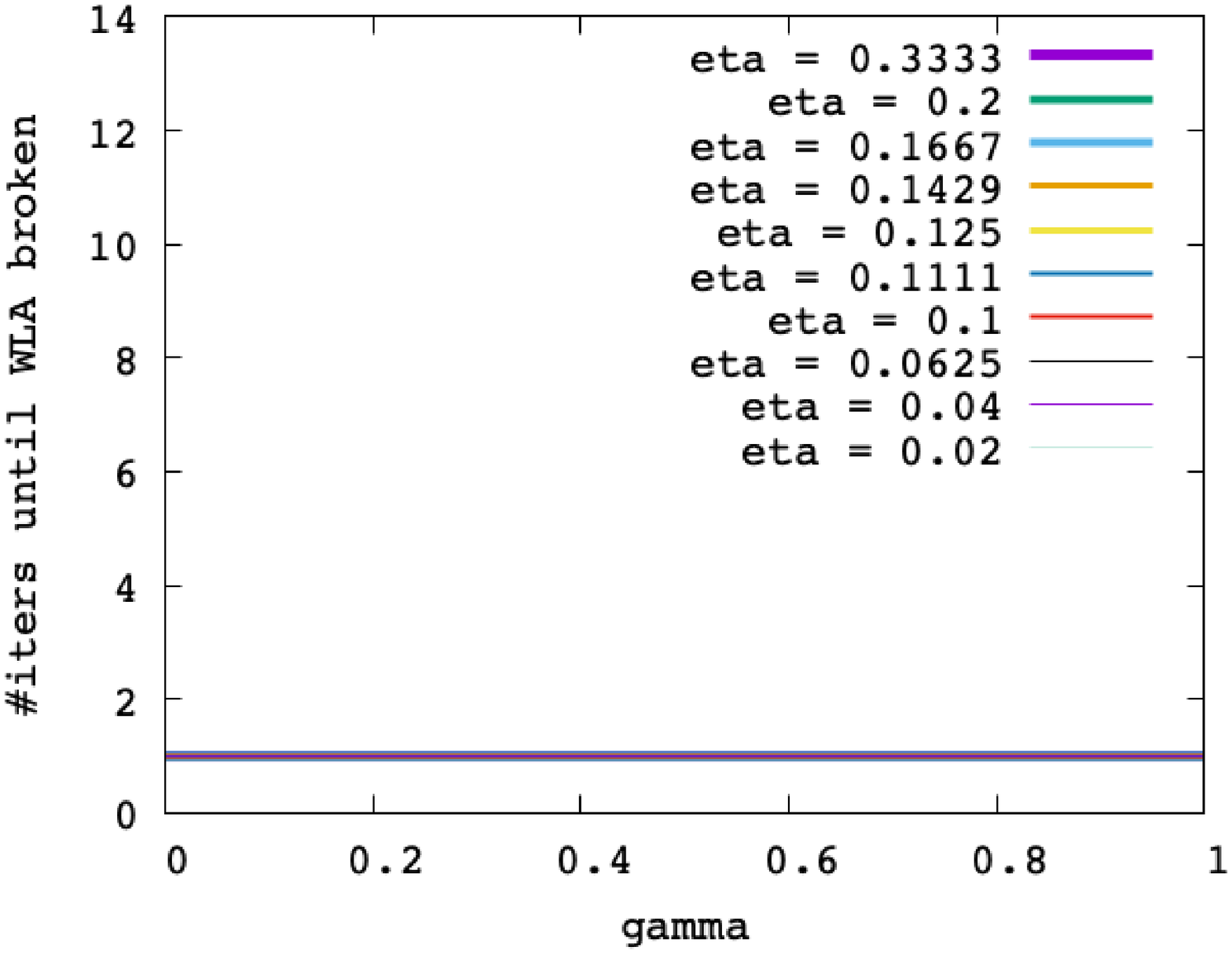} \hspace{-0.3cm} & \hspace{-0.3cm} \includegraphics[trim=0bp 0bp 0bp 0bp,clip,width=\picwidthsi\textwidth]{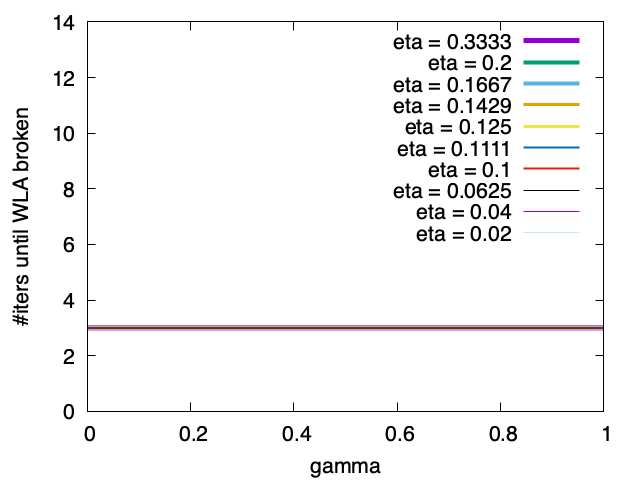} \hspace{-0.3cm} \\
    \rotatebox{90}{{\footnotesize \texttt{Asymmetric loss 1}}} & \hspace{-0.3cm} \includegraphics[trim=0bp 0bp 0bp 0bp,clip,width=\picwidthsi\textwidth]{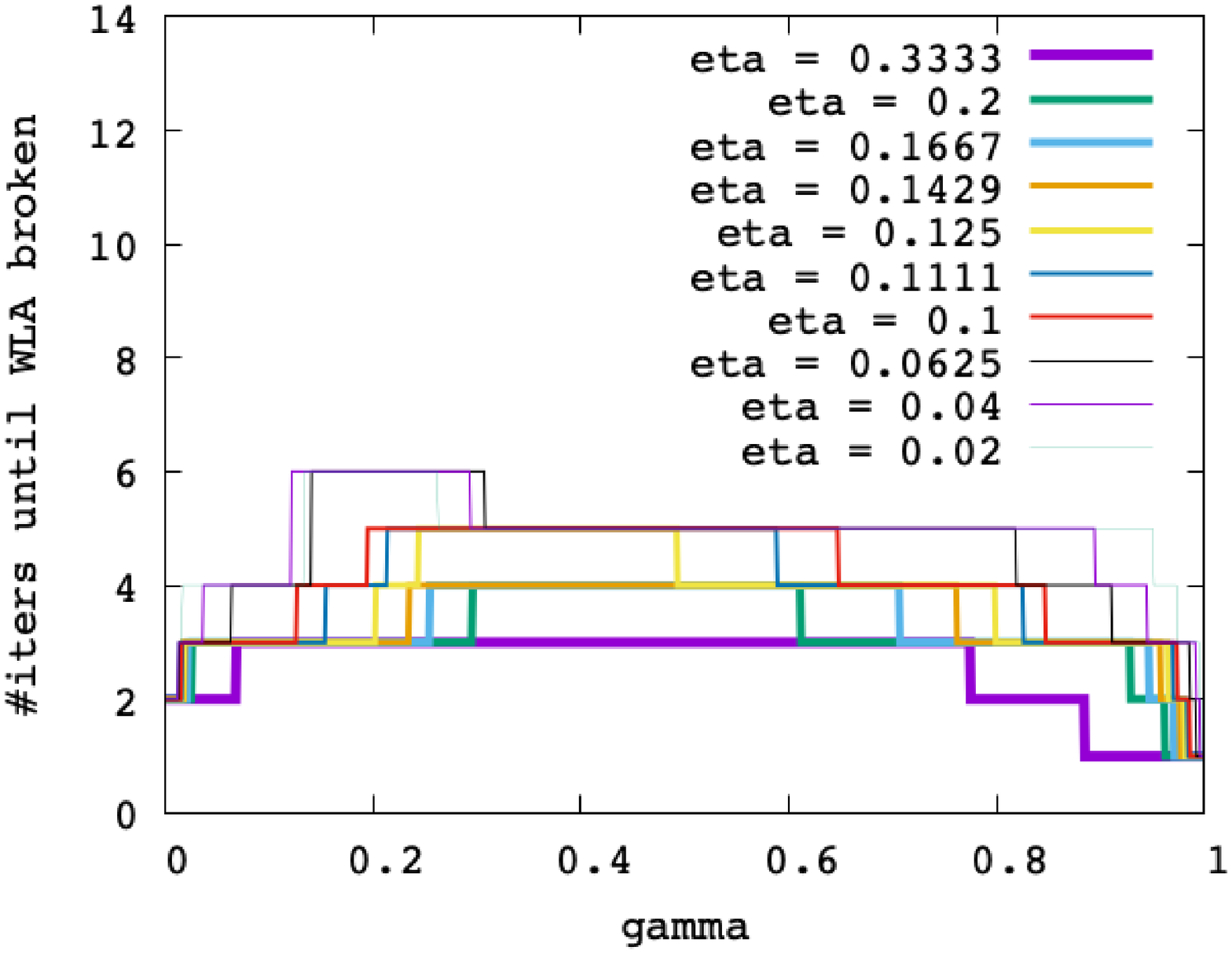} \hspace{-0.3cm} & \hspace{-0.3cm} \includegraphics[trim=0bp 0bp 0bp 0bp,clip,width=\picwidthsi\textwidth]{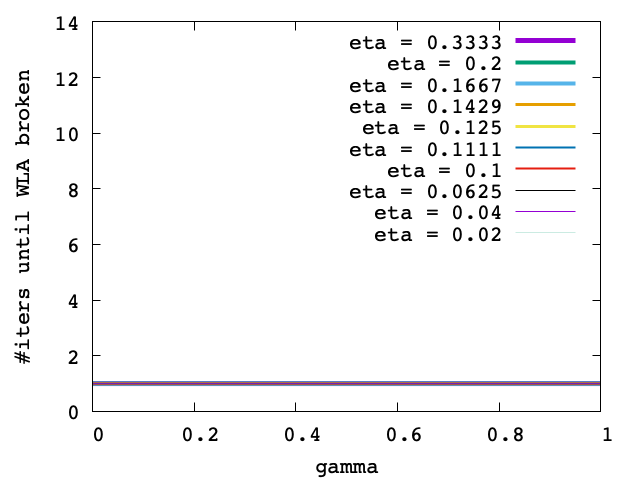} \hspace{-0.3cm} & \hspace{-0.3cm} \includegraphics[trim=0bp 0bp 0bp 0bp,clip,width=\picwidthsi\textwidth]{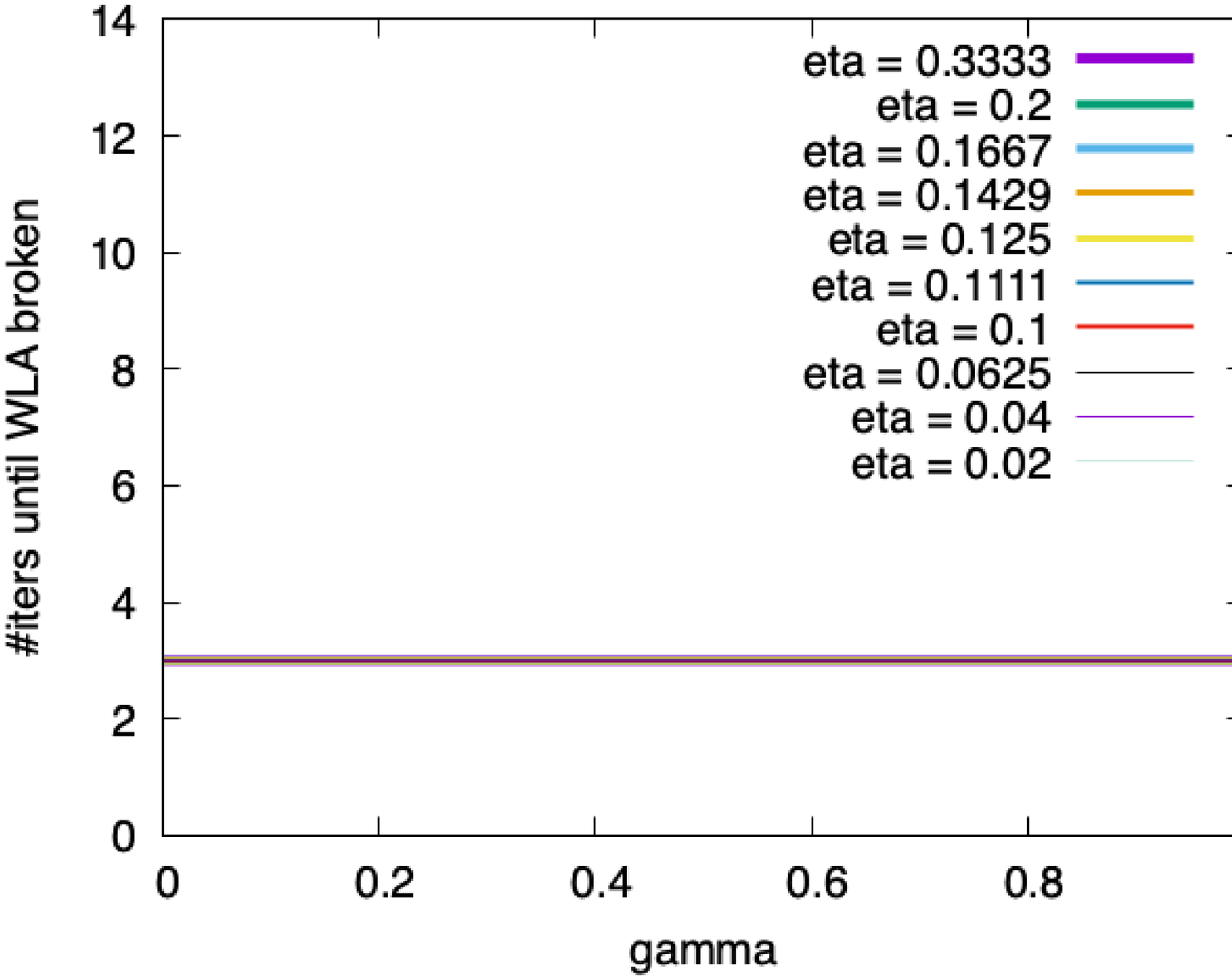} \hspace{-0.3cm} \\ \hline\hline
\end{tabular}
\caption{Number of calls to the Weak Learner in \topdowngen~until the
  Weak Learning Assumption does not hold anymore, for a minimal $\gammawla = 0.001$. Predictably, on Long and Servedio's data, it takes only a single root prediction for a \cdt~to achieve Bayes optimal prediction after computing its leveraging coefficient, and it takes 3 leveraging iterations = number of distinct observations for the \cnn~classifier to achieve Bayes optimal prediction.}
    \label{tab:iter-not-WLA}
  \end{table*}

\begin{table*}
  \centering
  \begin{tabular}{rccc}\hline\hline
    & \cls \hspace{-0.3cm} & \hspace{-0.3cm} \cdt  & \hspace{-0.3cm} 1-\cnn \\
    \rotatebox{90}{{\footnotesize \texttt{Matusita loss}}} & \hspace{-0.3cm} \includegraphics[trim=0bp 0bp 0bp 0bp,clip,width=\picwidthsi\textwidth]{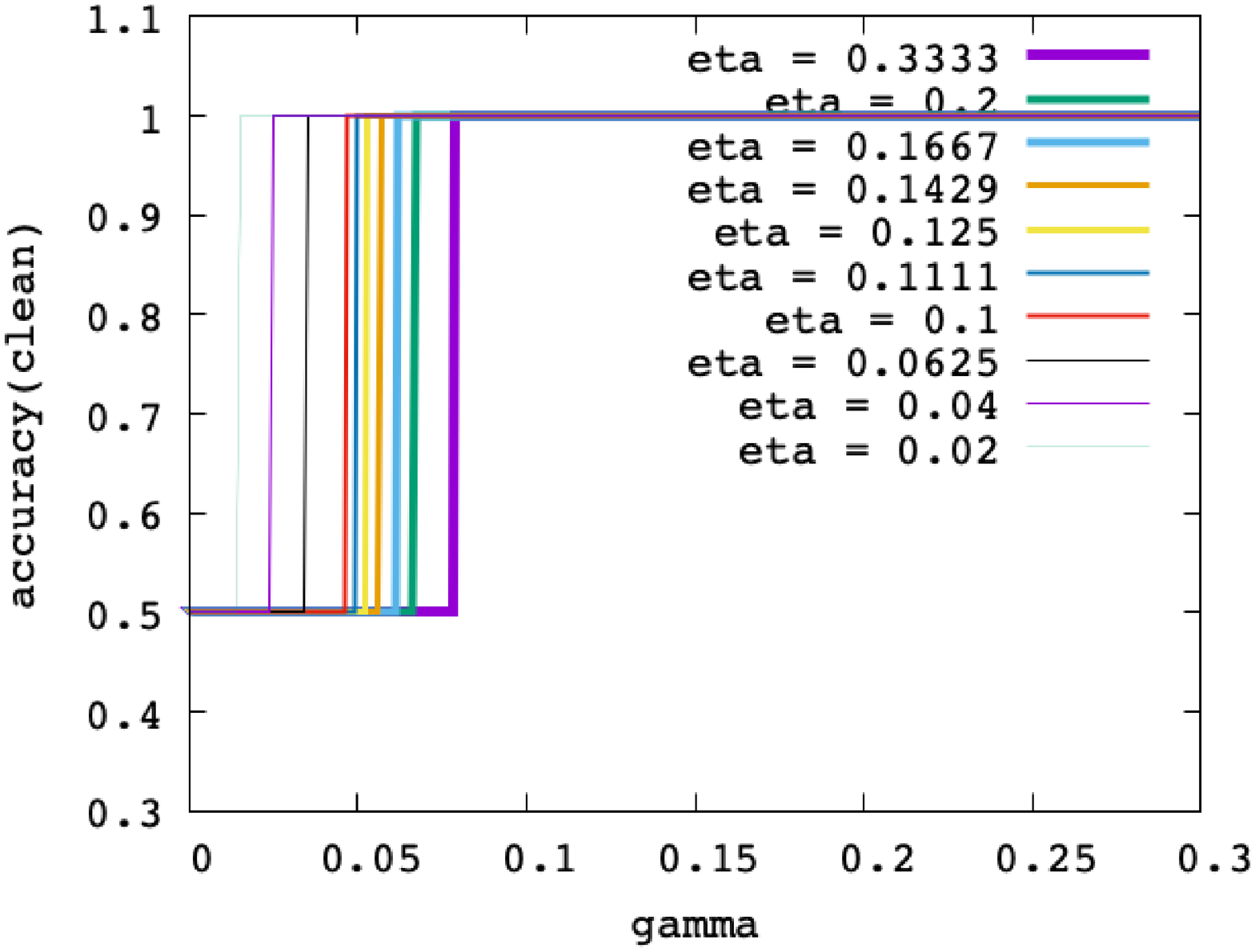} \hspace{-0.3cm} & \hspace{-0.3cm} \includegraphics[trim=0bp 0bp 0bp 0bp,clip,width=\picwidthsi\textwidth]{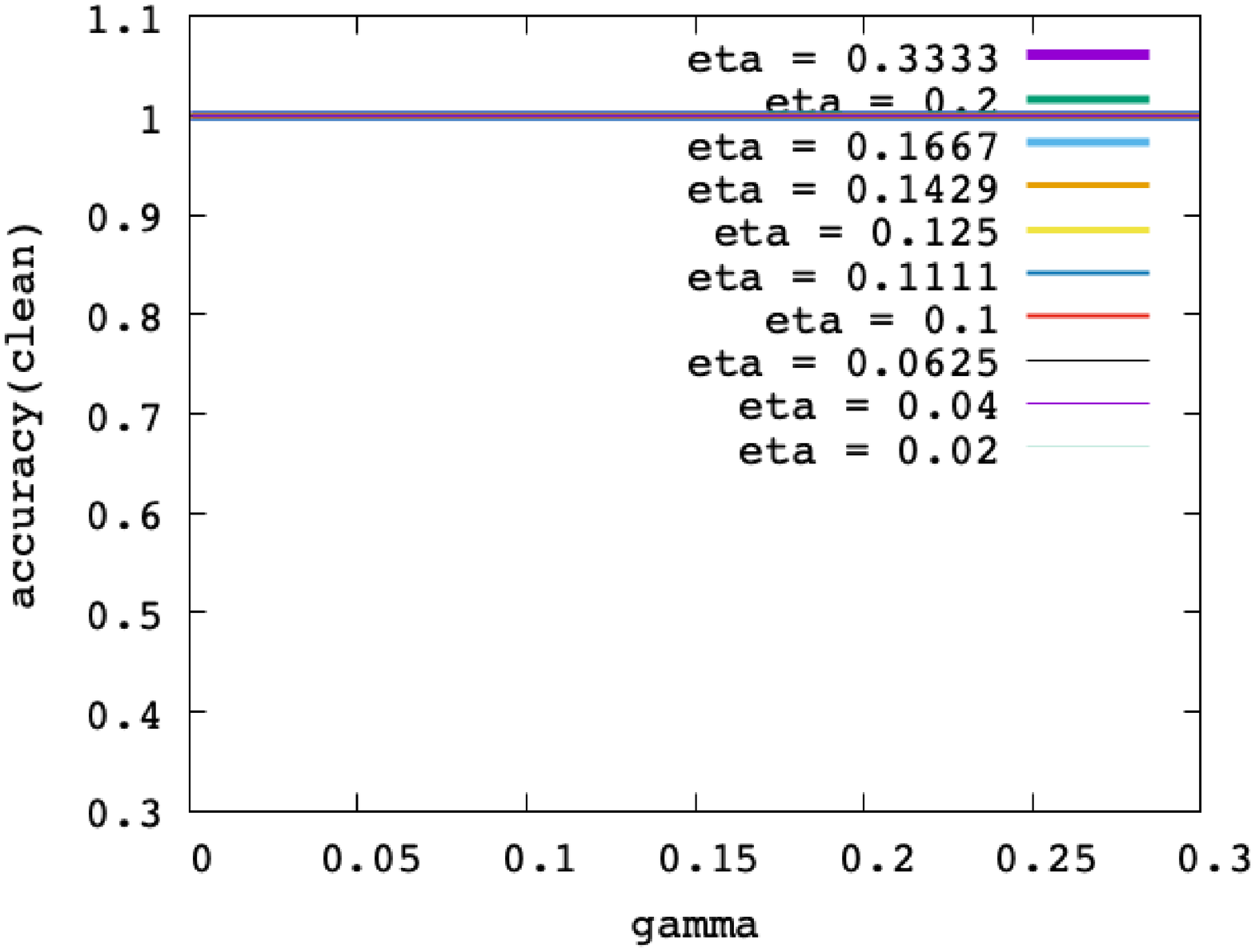} \hspace{-0.3cm} & \hspace{-0.3cm} \includegraphics[trim=0bp 0bp 0bp 0bp,clip,width=\picwidthsi\textwidth]{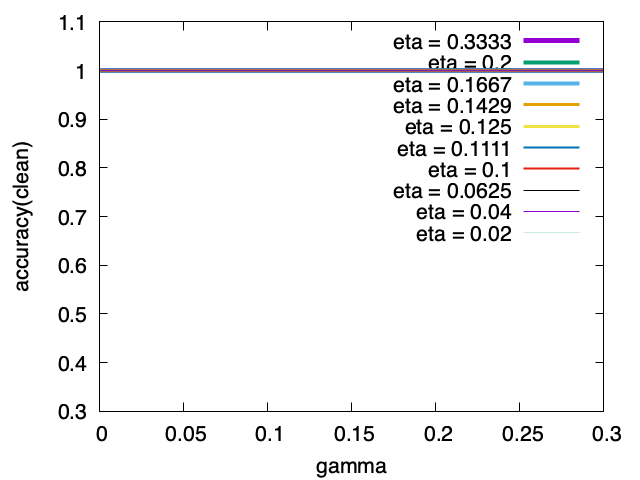} \hspace{-0.3cm} \\
    \rotatebox{90}{{\footnotesize \texttt{Log loss}}} & \hspace{-0.3cm} \includegraphics[trim=0bp 0bp 0bp 0bp,clip,width=\picwidthsi\textwidth]{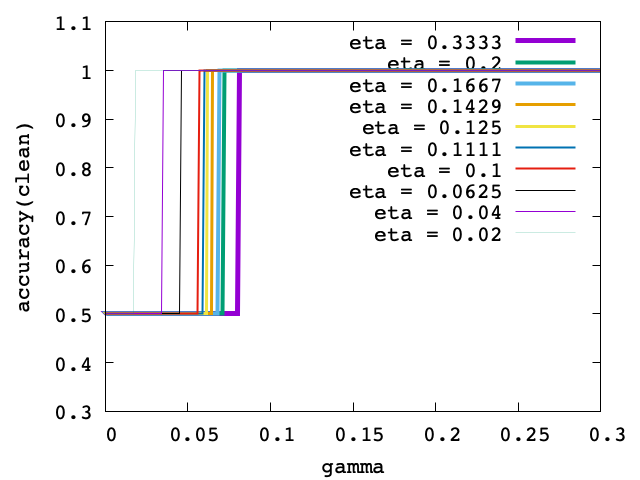} \hspace{-0.3cm} & \hspace{-0.3cm} \includegraphics[trim=0bp 0bp 0bp 0bp,clip,width=\picwidthsi\textwidth]{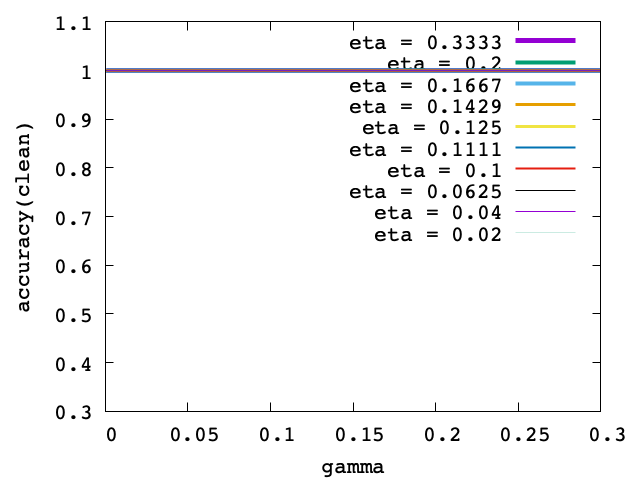} \hspace{-0.3cm}  & \hspace{-0.3cm} \includegraphics[trim=0bp 0bp 0bp 0bp,clip,width=\picwidthsi\textwidth]{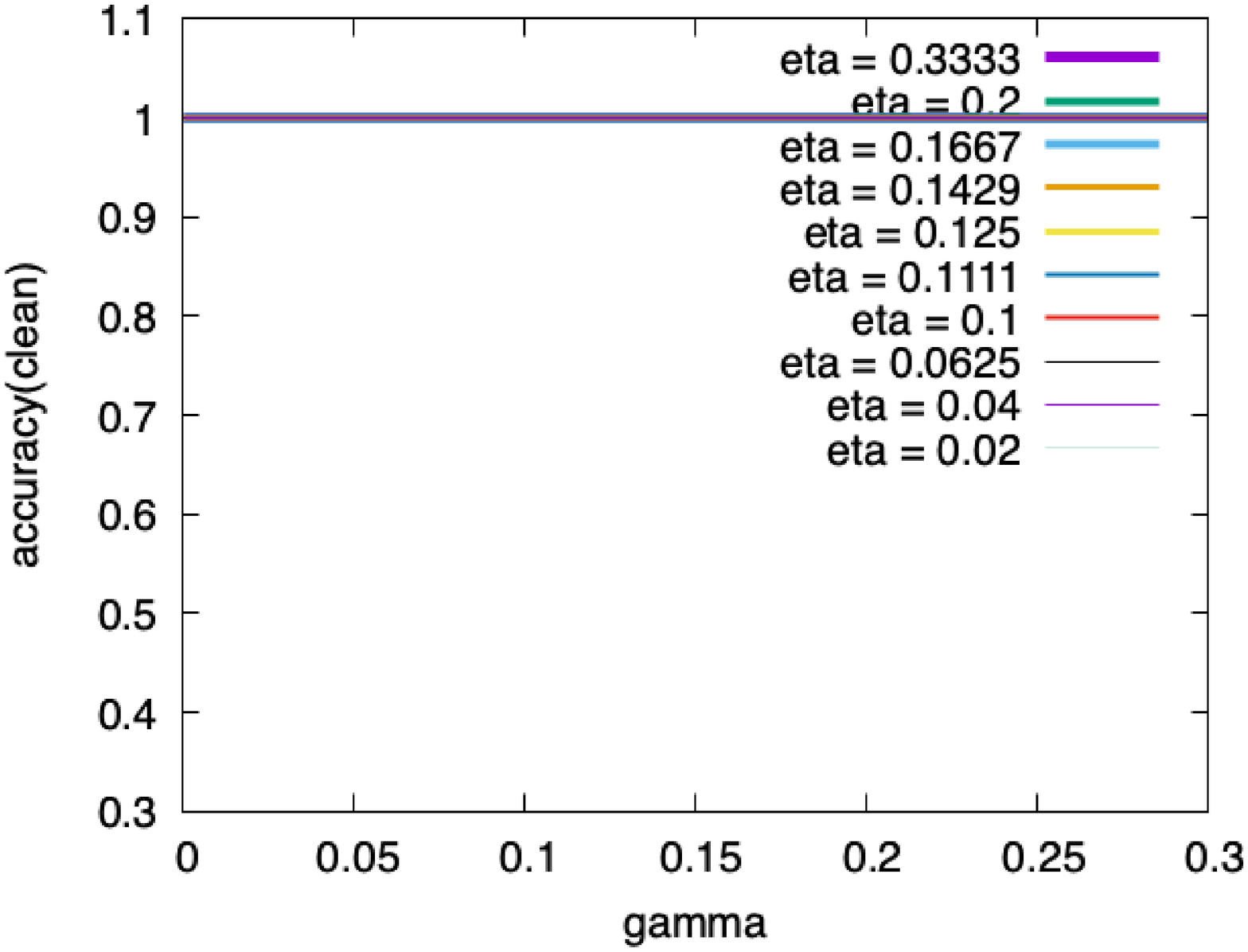} \hspace{-0.3cm} \\
    \rotatebox{90}{{\footnotesize \texttt{Square loss}}} & \hspace{-0.3cm} \includegraphics[trim=0bp 0bp 0bp 0bp,clip,width=\picwidthsi\textwidth]{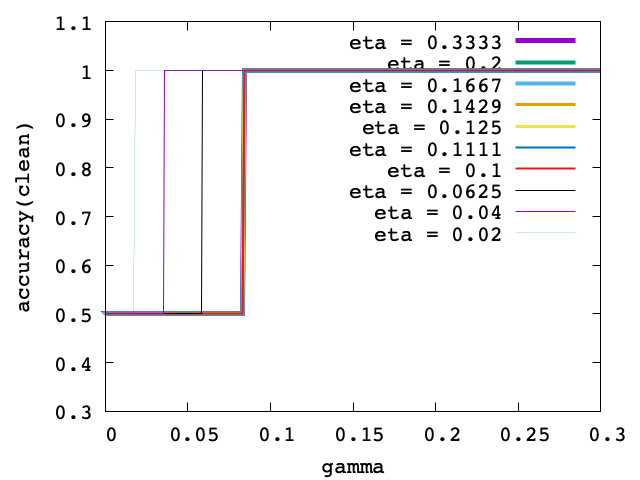} \hspace{-0.3cm} & \hspace{-0.3cm} \includegraphics[trim=0bp 0bp 0bp 0bp,clip,width=\picwidthsi\textwidth]{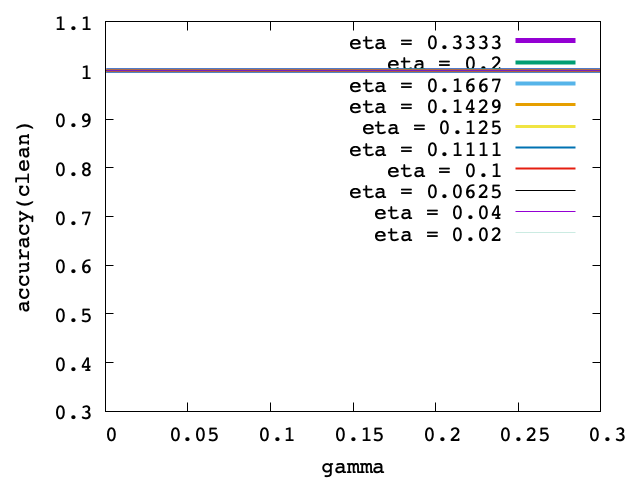} \hspace{-0.3cm}  & \hspace{-0.3cm} \includegraphics[trim=0bp 0bp 0bp 0bp,clip,width=\picwidthsi\textwidth]{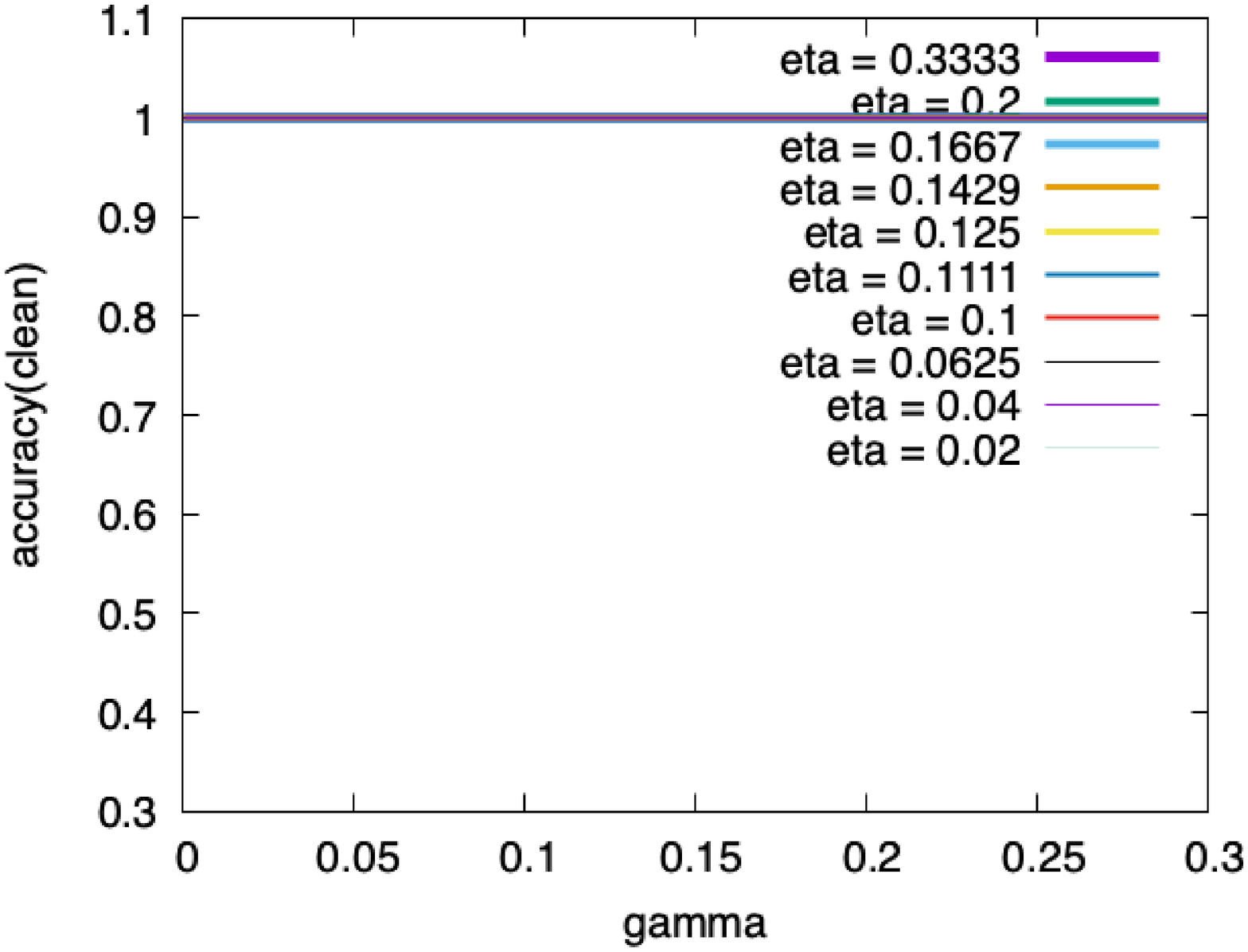} \hspace{-0.3cm} \\
    \rotatebox{90}{{\footnotesize \texttt{Asymmetric loss 1}}} & \hspace{-0.3cm} \includegraphics[trim=0bp 0bp 0bp 0bp,clip,width=\picwidthsi\textwidth]{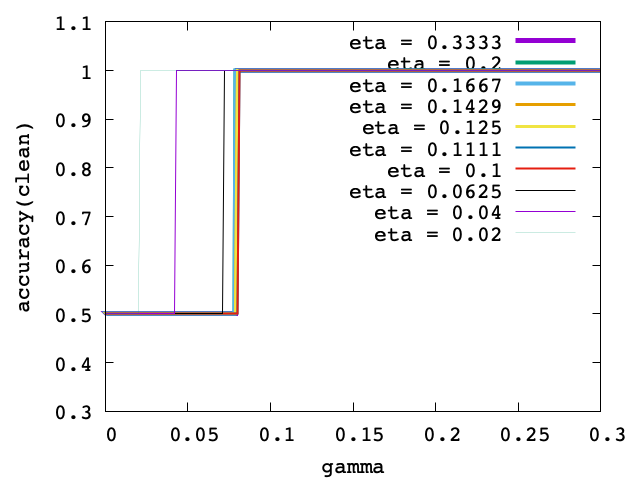} \hspace{-0.3cm} & \hspace{-0.3cm} \includegraphics[trim=0bp 0bp 0bp 0bp,clip,width=\picwidthsi\textwidth]{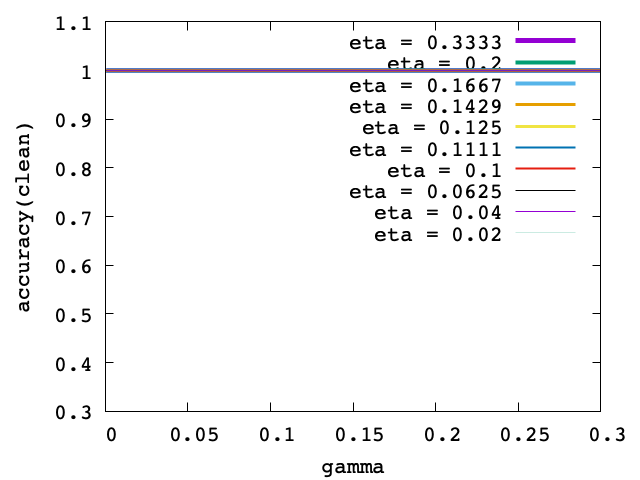} & \hspace{-0.3cm} \includegraphics[trim=0bp 0bp 0bp 0bp,clip,width=\picwidthsi\textwidth]{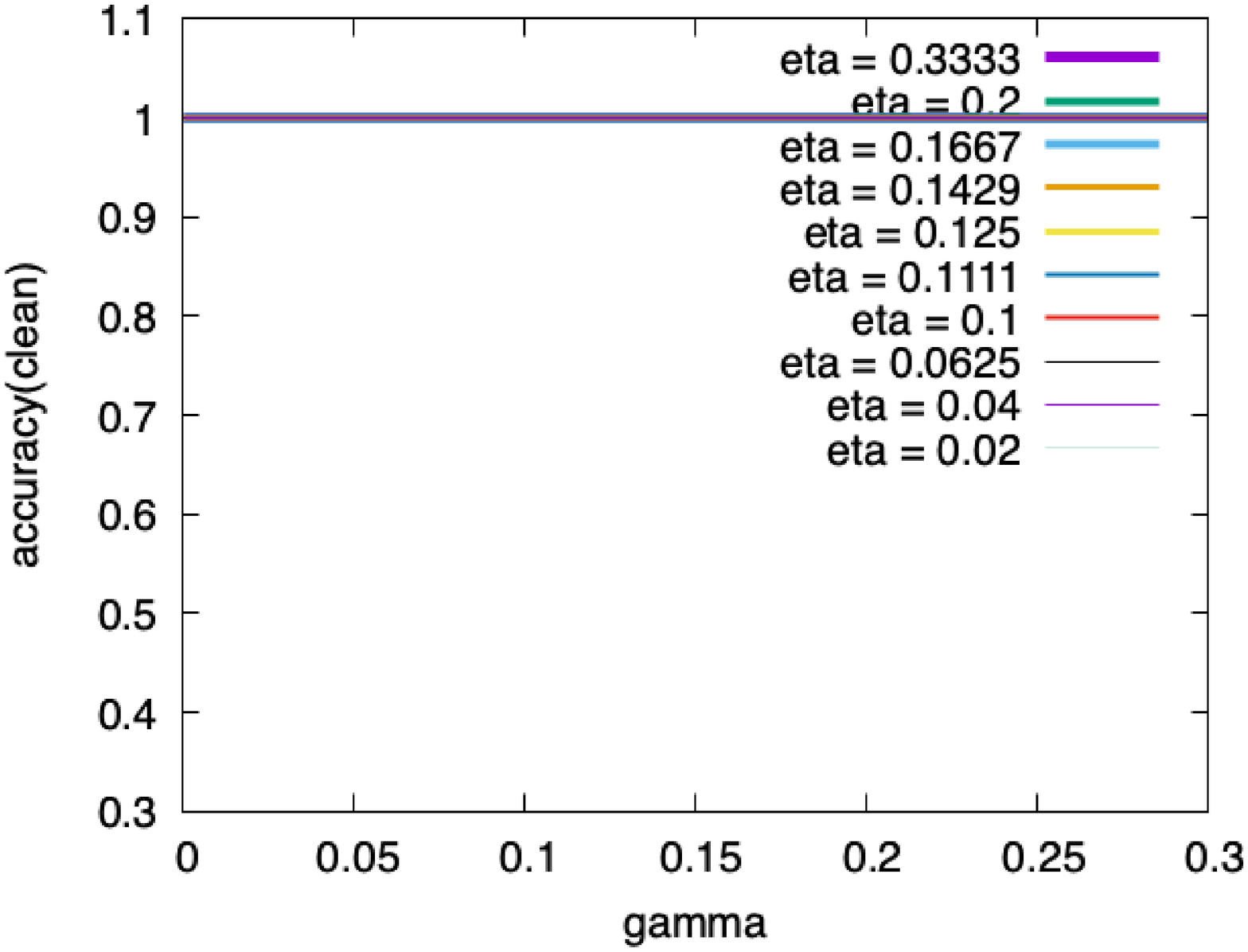} \hspace{-0.3cm} \\ \hline\hline
\end{tabular}
\caption{Accuracy of models obtained on
  $\mathcal{S}_{\mbox{\tiny{clean}}}$. As predicted by theory,
  \cdt~and \cnn~consistently get maximal accuracy while it falls for \cls~to that of the unbiased coin below a threshold value for $\gamma$. Notice that all losses display a similar pattern of phase transition.}
    \label{tab:accuracy}
  \end{table*}

  \begin{table*}
  \centering
  \begin{tabular}{rccc}\hline\hline
    & \cls \hspace{-0.3cm} & \hspace{-0.3cm} \cdt  & \hspace{-0.3cm} 1-\cnn \\
    \rotatebox{90}{{\footnotesize \texttt{Matusita loss}}} & \hspace{-0.3cm} \includegraphics[trim=0bp 0bp 0bp 0bp,clip,width=\picwidthsi\textwidth]{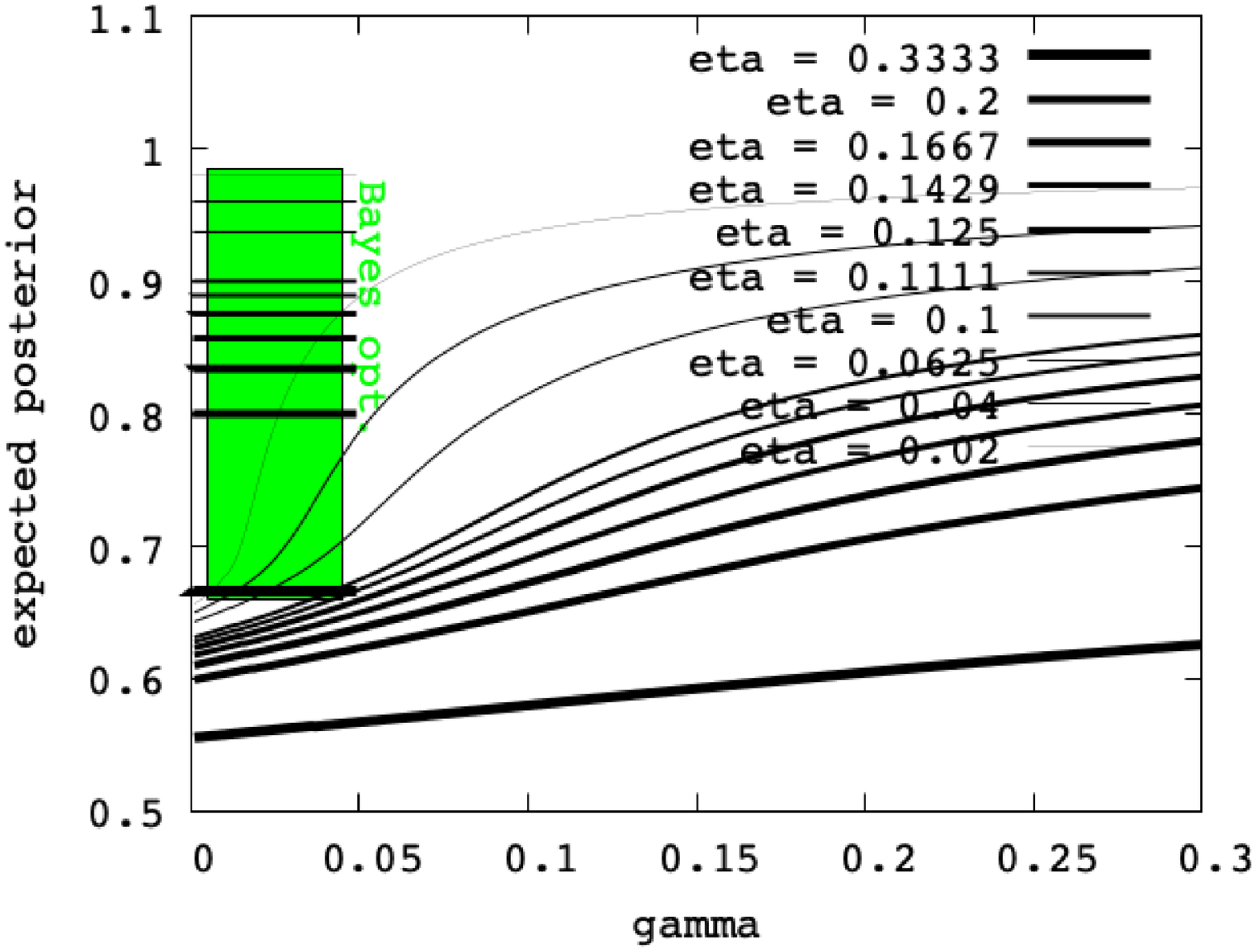} \hspace{-0.3cm} & \hspace{-0.3cm} \includegraphics[trim=0bp 0bp 0bp 0bp,clip,width=\picwidthsi\textwidth]{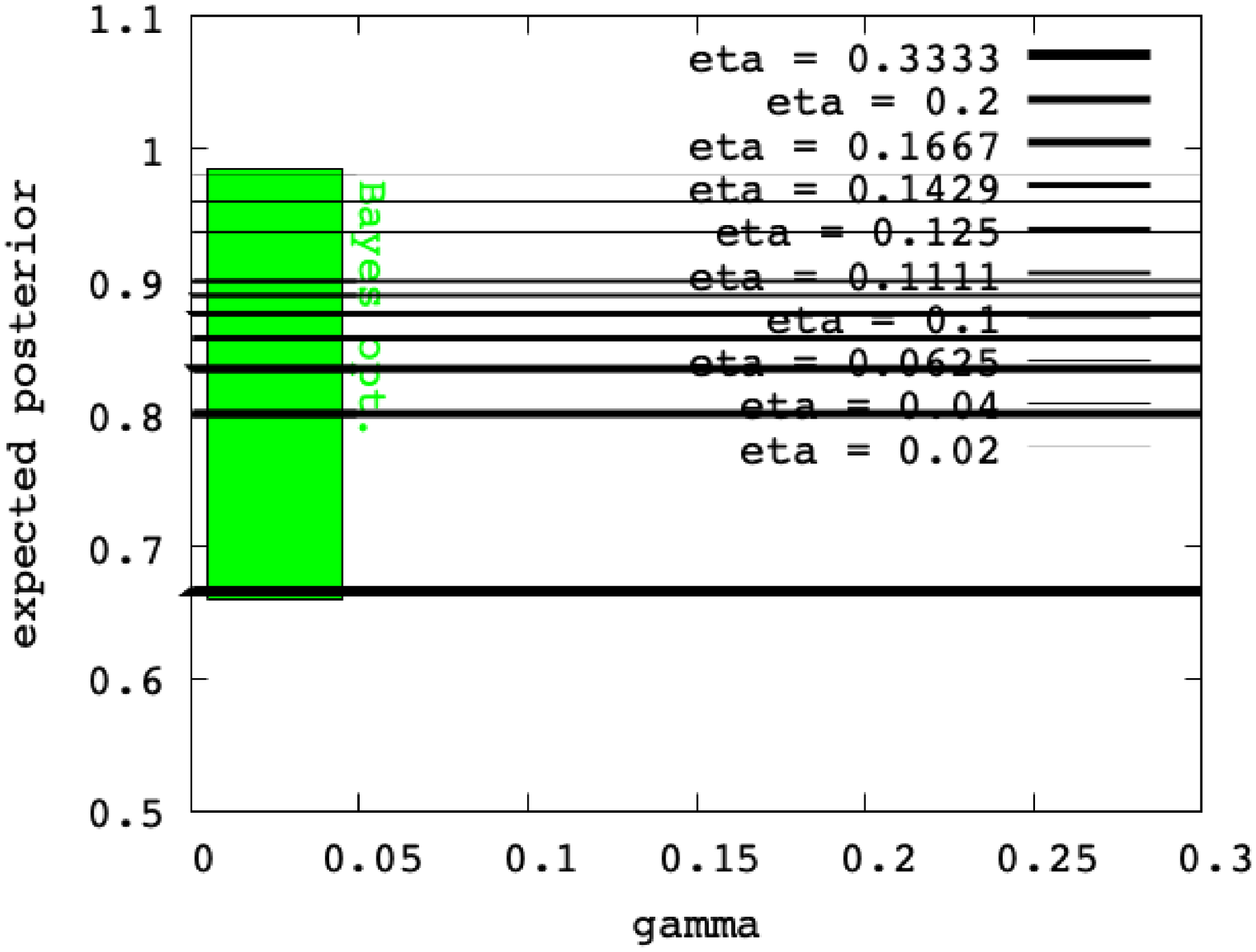} \hspace{-0.3cm} & \hspace{-0.3cm} \includegraphics[trim=0bp 0bp 0bp 0bp,clip,width=\picwidthsi\textwidth]{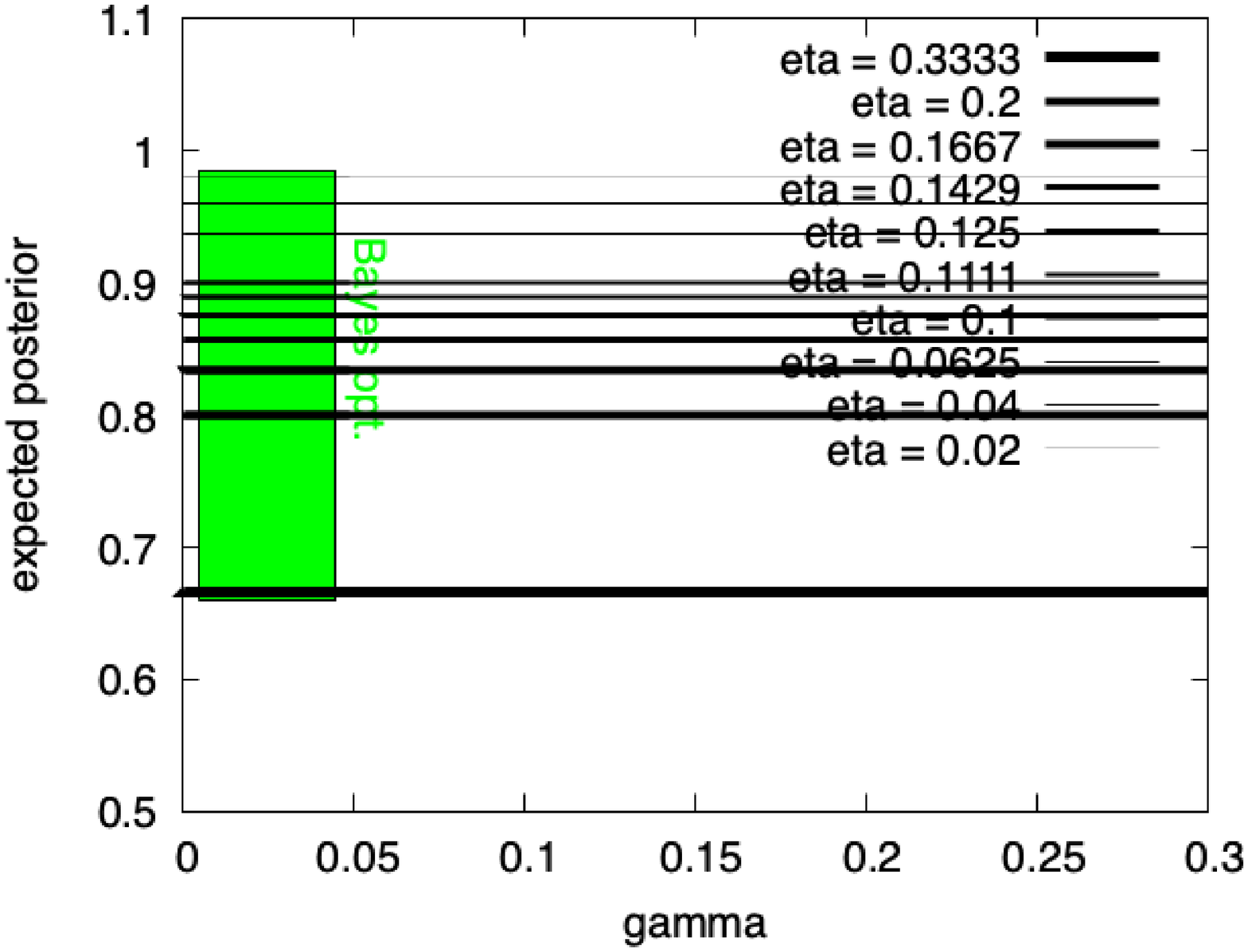} \hspace{-0.3cm} \\
    \rotatebox{90}{{\footnotesize \texttt{Log loss}}} & \hspace{-0.3cm} \includegraphics[trim=0bp 0bp 0bp 0bp,clip,width=\picwidthsi\textwidth]{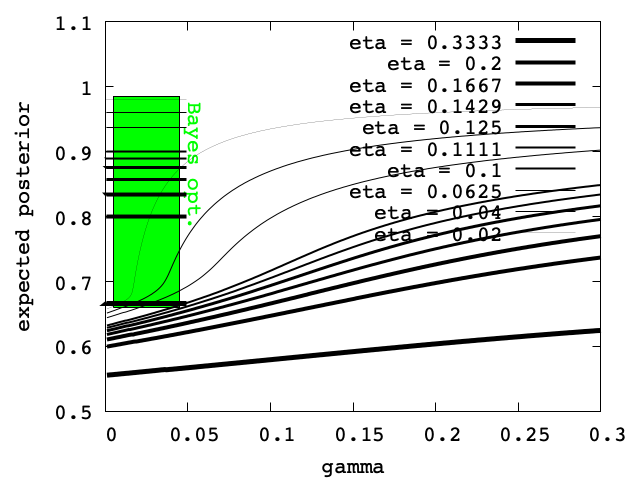} \hspace{-0.3cm} & \hspace{-0.3cm} \includegraphics[trim=0bp 0bp 0bp 0bp,clip,width=\picwidthsi\textwidth]{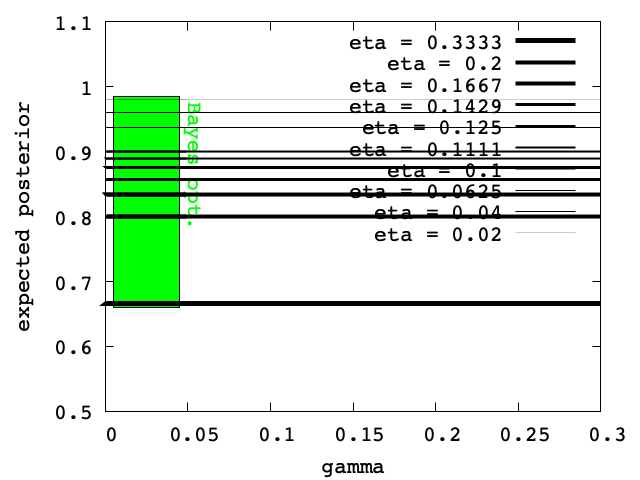} \hspace{-0.3cm} & \hspace{-0.3cm} \includegraphics[trim=0bp 0bp 0bp 0bp,clip,width=\picwidthsi\textwidth]{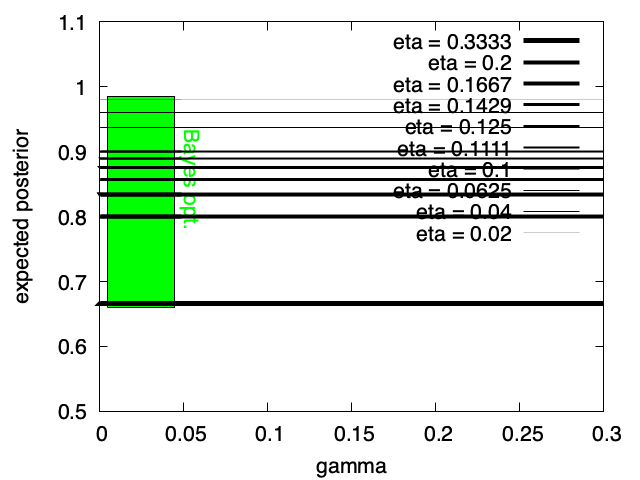} \hspace{-0.3cm} \\
    \rotatebox{90}{{\footnotesize \texttt{Square loss}}} & \hspace{-0.3cm} \includegraphics[trim=0bp 0bp 0bp 0bp,clip,width=\picwidthsi\textwidth]{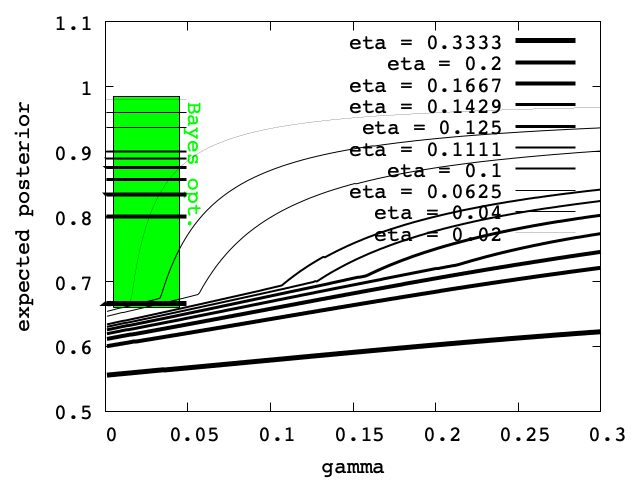} \hspace{-0.3cm} & \hspace{-0.3cm} \includegraphics[trim=0bp 0bp 0bp 0bp,clip,width=\picwidthsi\textwidth]{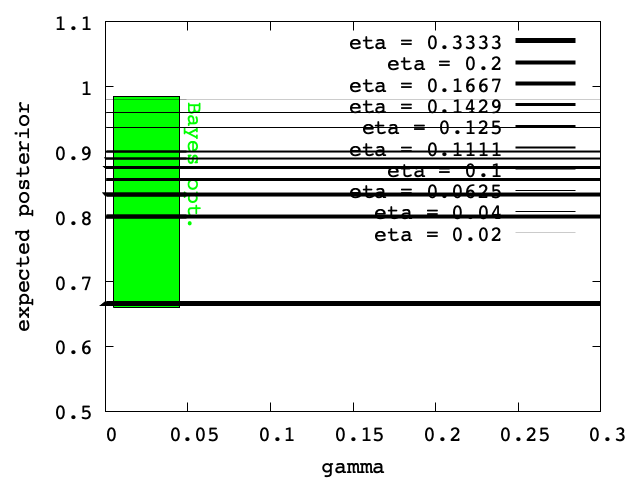} \hspace{-0.3cm}  & \hspace{-0.3cm} \includegraphics[trim=0bp 0bp 0bp 0bp,clip,width=\picwidthsi\textwidth]{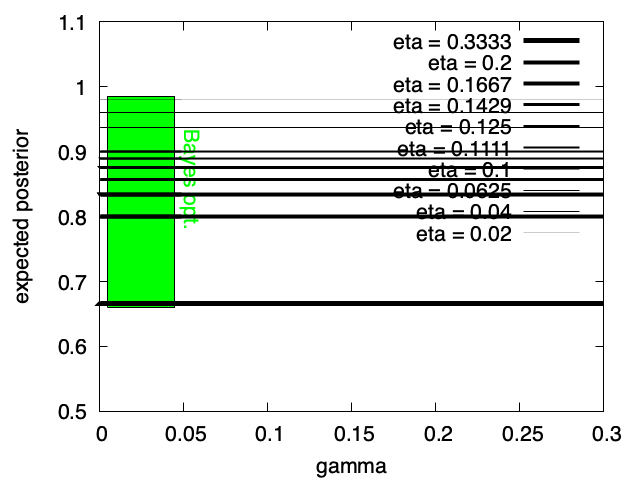} \hspace{-0.3cm} \\
    \rotatebox{90}{{\footnotesize \texttt{Asymmetric loss 1}}} & \hspace{-0.3cm} \includegraphics[trim=0bp 0bp 0bp 0bp,clip,width=\picwidthsi\textwidth]{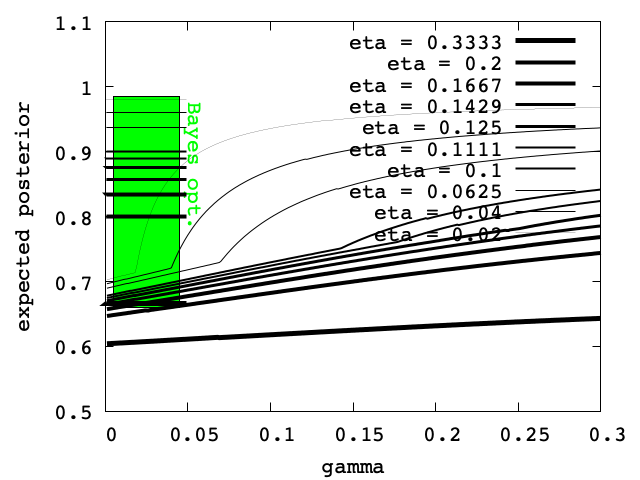} \hspace{-0.3cm} & \hspace{-0.3cm} \includegraphics[trim=0bp 0bp 0bp 0bp,clip,width=\picwidthsi\textwidth]{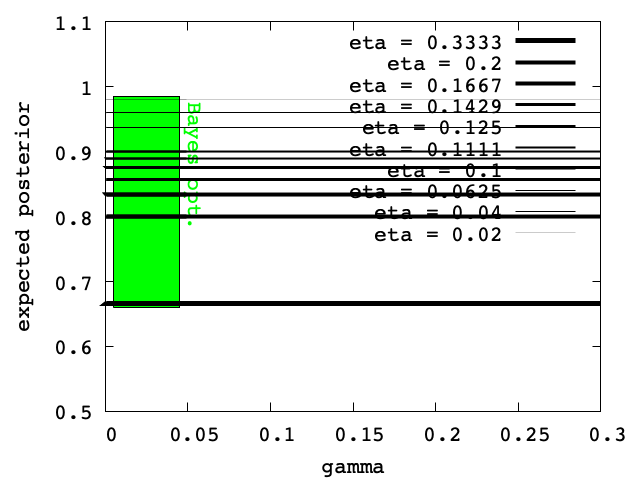} \hspace{-0.3cm} & \hspace{-0.3cm} \includegraphics[trim=0bp 0bp 0bp 0bp,clip,width=\picwidthsi\textwidth]{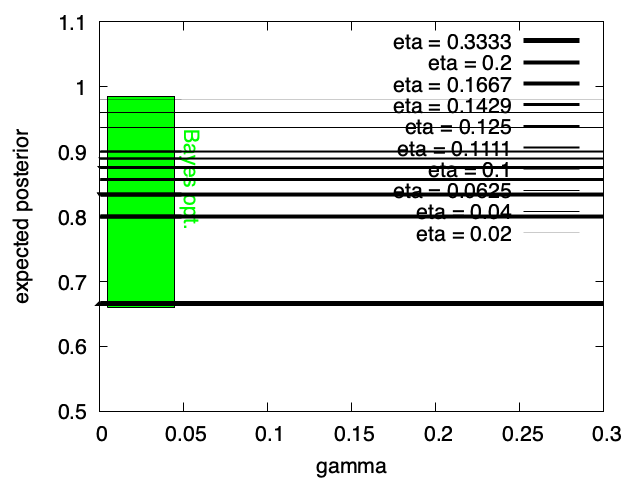} \hspace{-0.3cm} \\ \hline\hline
\end{tabular}
\caption{Expected posterior of models obtained from
  $\mathcal{S}_{\mbox{\tiny{noisy}}}$ (Bayes' optimal value is shown with a line segment on the left of each plot, in a green square). As predicted by theory,
  \cdt~consistently get Bayes'optimal prediction with just a single node \cdt~(Table \ref{tab:iter-not-WLA}), and \cnn~gets Bayes'optimal prediction in 3 iterations, which displays that the rate in \eqref{boost-rate-NN} is pessimistic for its dependence in $m$. \cls, on the other hand, get a substantial worsening of their estimation in the $\gamma$ window corresponding to the phase transition in accuracy (Table \ref{tab:accuracy}).}
    \label{tab:postest}
  \end{table*}

\begin{table*}
  \centering
  {\small
  \begin{tabular}{cc|c}\hline\hline
    \multirow{5}{*}{\rotatebox{90}{{\footnotesize \texttt{Matusita loss}}}} & $\partialloss{1}(u)$ & $\sqrt{\frac{1-u}{u}}$\\
                                                                            & $\partialloss{-1}(u)$ & $=\partialloss{1}(1-u)$\\
                                                                            & $\poibayesrisk(u)$ & $2\sqrt{u(1-u)}$\\
                                                                            & $ ({-\poibayesrisk'})^{-1}(z)$ & $\frac{1}{2}\cdot\left(1+\frac{z}{\sqrt{1+z^2}}\right)$\\
    & $\philoss(z)$ & $\frac{-x+\sqrt{1+x^2}}{2}$\\ \hline
   \multirow{5}{*}{\rotatebox{90}{{\footnotesize \texttt{Log loss}}}} & $\partialloss{1}(u)$ & $-\log u$ \\
                                                                            & $\partialloss{-1}(u)$ & $=\partialloss{1}(1-u)$\\
                                                                            & $\poibayesrisk(u)$ & $-u\log u - (1-u) \log (1-u)$ \\
                                                                            & $ ({-\poibayesrisk'})^{-1}(z)$ & $\frac{1}{1+\exp(-z)}$\\
                                                                            & $\philoss(z)$ & $\log(1+\exp(-z))$ \\ \hline
       \multirow{5}{*}{\rotatebox{90}{{\footnotesize \texttt{Square loss}}}} & $\partialloss{1}(u)$ & $(1-u)^2$\\
                                                                            & $\partialloss{-1}(u)$ & $=\partialloss{1}(1-u)$ \\
                                                                            & $\poibayesrisk(u)$ & $u(1-u)$ \\
                                                                            & $ ({-\poibayesrisk'})^{-1}(z)$ & $\left\{ \begin{array}{ccl}
                                                                                                                          0 & \mbox{ if } & z < -1\\
                                                                                                                          \frac{1+z}{2} & \mbox{ if } & z \in [-1,1]\\
                                                                                                                          1 & \mbox{ if } & z > 1
                                                                                                                          \end{array}\right.$\\
    & $\philoss(z)$ & $\left\{ \begin{array}{ccl}
                                                                                                                          -z & \mbox{ if } & z < -1\\
                                                                                                                          \frac{(1-z)^2}{4} & \mbox{ if } & z \in [-1,1]\\
                                                                                                                          0 & \mbox{ if } & z > 1
                                                                                                                          \end{array}\right.$\\ \hline
       \multirow{6}{*}{\rotatebox{90}{{\footnotesize \texttt{Asymmetric loss 1}}}} & $\partialloss{1}(u)$ & $\log(5u^2-8u+4) + \arctan\left(\frac{1}{2}\right) - \arctan\left(\frac{5u-4}{2}\right)$ \\
                                                                            & $\partialloss{-1}(u)$ & $\log\left(\frac{5u^2-8u+4}{4}\right) + 4\arctan\left(2\right) - 4\arctan\left(\frac{4-5u}{2}\right)$\\
                                                                            & $\poibayesrisk(u)$ & $\log(5u^2-8u+4) + A u + 4\arctan\left(2\right) - \log(4) + (4-5u) \arctan\left(\frac{5u-4}{2}\right)$ \\
                                                                            & $ ({-\poibayesrisk'})^{-1}(z)$ & $\left\{ \begin{array}{ccl}
                                                                                                                          0 & \mbox{ if } & z < -B\\
                                                                                                                          \frac{2}{5}\cdot \left(2-\tan\left(-\frac{z+A}{5}\right)\right) & \mbox{ if } & z \in [-B,C]\\
                                                                                                                          1 & \mbox{ if } & z > C
                                                                                                                          \end{array}\right.$ \\
    & $\philoss(z)$ & $\left\{ \begin{array}{ccl}
                                                                                                                          -z & \mbox{ if } & z < -B\\
                                                                                                                          2\log\left(\frac{\cos\left(\frac{A-B}{5}\right)}{\cos\left(\frac{A-z}{5}\right)}\right) + 4\cdot\frac{B-z}{5} & \mbox{ if } & z \in [-B,C]\\
                                                                                                                          0 & \mbox{ if } & z > C
                             \end{array}\right.$\\ 
                                                                            &  & $A \defeq \log(4) -4\arctan\left(2\right)+\arctan\left(\frac{1}{2}\right)$; $B\defeq \frac{\pi}{2}+\log(4)$; $C\defeq 2\pi - \log(4)$\\ \hline\hline
\end{tabular}
}
\caption{Definitions of strictly proper losses used in the Experiments (Section \ref{sec-toy-exp}, main file).}
    \label{tab:all-losses}
  \end{table*}

\begin{figure*}
  \centering
\includegraphics[trim=0bp 0bp 0bp 0bp,clip,width=0.8\textwidth]{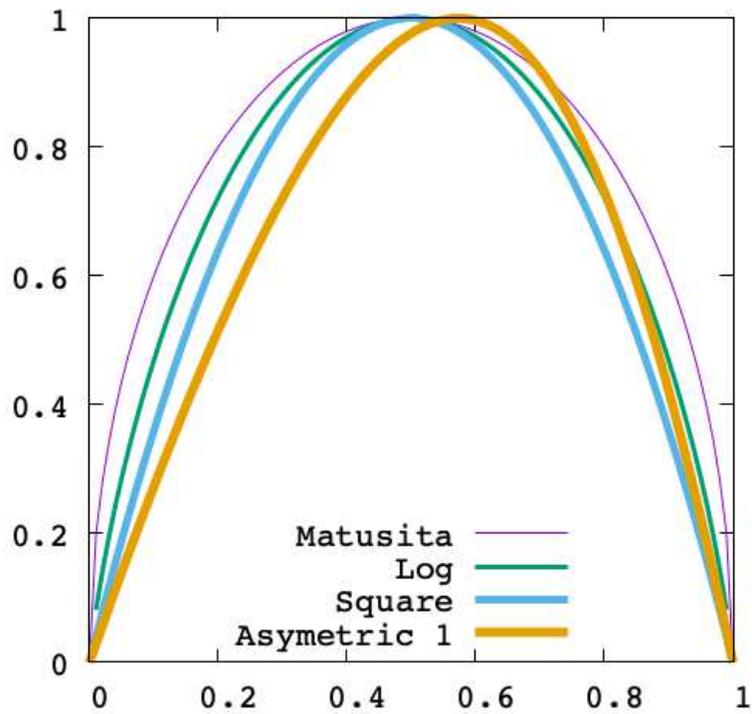} 
\caption{Pointwise Bayes risk ($\poibayesrisk(u)$, normalized so that $\max \poibayesrisk = 1$) for all losses in Table \ref{tab:all-losses}. Remark that all, except for \texttt{Asymetric loss 1}, are symmetric with respect to $u=1/2$.}
    \label{fig:all-cbr}
  \end{figure*}

\end{document}